\DeclareMathAlphabet{\mathdutchcal}{U}{dutchcal}{m}{n}
\SetMathAlphabet{\mathdutchcal}{bold}{U}{dutchcal}{b}{n}
\DeclareMathAlphabet{\mathdutchbcal}{U}{dutchcal}{b}{n}
\theoremstyle{plain}
\newtheorem{theorem}{Theorem}[section]
\newtheorem{lemma}{Lemma}
\newtheorem{proposition}[theorem]{Proposition}
\newtheorem{corollary}[theorem]{Corollary}
\theoremstyle{definition}
\newtheorem{definition}{Definition}[section]
\newtheorem{remark}{Remark}[section] 
\theoremstyle{remark}
\title{Learning from Noisy Labels via Conditional Distributionally Robust Optimization}
\author{%
  Hui Guo\\
  Department of Computer Science\\
  University of Western Ontario\\
  \texttt{hguo288@uwo.ca} \\
  \And
  Grace Y. Yi \thanks{Corresponding author.} \\ 
  Department of Statistical and Actuarial Sciences\\
  Department of Computer Science\\
  University of Western Ontario\\
  \texttt{gyi5@uwo.ca} \\
  \And
  Boyu Wang \\ 
  Department of Computer Science\\
  University of Western Ontario\\
  \texttt{bwang@csd.uwo.ca} \\
}
\begin{document}

\maketitle

\begin{abstract}
  While crowdsourcing has emerged as a practical solution for labeling large datasets, it presents a significant challenge in learning accurate models due to noisy labels from annotators with varying levels of expertise. Existing methods typically estimate the true label posterior, conditioned on the instance and noisy annotations, to infer true labels or adjust loss functions. These estimates, however, often overlook potential misspecification in the true label posterior, which can degrade model performances, especially in high-noise scenarios. To address this issue, we investigate learning from noisy annotations with an estimated true label posterior through the framework of \emph{conditional distributionally robust optimization} (CDRO). We propose formulating the problem as minimizing the worst-case risk within a distance-based ambiguity set centered around a reference distribution. By examining the strong duality of the formulation, we derive upper bounds for the worst-case risk and develop an analytical solution for the dual robust risk for each data point. This leads to a novel robust pseudo-labeling algorithm that leverages the likelihood ratio test to construct a pseudo-empirical distribution, providing a robust reference probability distribution in CDRO. Moreover, to devise an efficient algorithm for CDRO, we derive a closed-form expression for the empirical robust risk and the optimal Lagrange multiplier of the dual problem, facilitating a principled balance between robustness and model fitting. Our experimental results on both synthetic and real-world datasets demonstrate the superiority of our method. 
\end{abstract}

\section{Introduction}\label{sec.intro}


Recent advancements in supervised learning have spurred a growing demand for large labeled datasets \cite{goodfellow2016deep, schmidhuber2015deep}. However, acquiring accurately annotated datasets is typically costly and time-consuming, often requiring a pool of annotators with adequate domain expertise to manually label the data. Crowdsourcing has emerged as an efficient and cost-effective solution for annotating large datasets. On crowdsourcing platforms, multiple annotators with varying levels of labeling skills are employed to gather extensive labeled data. However, this approach introduces a significant challenge: the labels collected through crowdsourcing are often subject to unavoidable noise, especially in fields requiring substantial domain knowledge, such as medical imaging.  Consequently, models trained on noisy labels are prone to error, including overfitting, since deep models can memorize vast amounts of data \cite{arpit2017closer}. In addition to statistical research on label noise (often termed response measurement error, e.g., \cite{carroll2006measurement, grace2016statistical, grace2021handbook}), a growing body of recent machine learning literature has focused on developing effective algorithms capable of training accurate classifiers using noisy data, e.g., \cite{dawid1979maximum, whitehill2009whose, khetan2017learning, guo2023label}. Many of these methods seek to approximate the \emph{posterior distribution of the underlying true labels} using the observed data. 

Let $\mathbf{X}$ be an instance, $\mathrm{Y}$ denote the unobserved true label for $\mathbf{X}$, and $\widetilde{\mathbf{Y}}$ represent a vector of crowdsourced noisy labels for $\mathbf{X}$. The data-generating distribution, $P^*_{\mathbf{x}, \mathrm{y}, \widetilde{\mathbf{y}}}$, can be factorized in two ways: $P^*_{\mathbf{x}} P^*_{\mathrm{y}|\mathbf{x}} P^*_{\widetilde{\mathbf{y}}|\mathbf{x}, \mathrm{y}}$ or $P^*_{\mathbf{x},\widetilde{\mathbf{y}}} P^*_{\mathrm{y}|\mathbf{x},\widetilde{\mathbf{y}}}$, with $P^*$ denoting the (conditional) distribution for the variables indicated by the corresponding subscripts. These factorizations have inspired research that trains models by estimating the posterior distribution of the true labels, $P^*_{\mathrm{y}|\mathbf{x},\widetilde{\mathbf{y}}}$, in the latter factorization. 

Previous work \cite{ khetan2017learning, guo2023label, cao2019max, tanno2019learning} introduced various algorithms for estimating the \emph{annotator confusions}, also known as \emph{noise transition probabilities}, which yield an approximated conditional distribution of $\widetilde{\mathbf{Y}}$, given $\mathrm{Y}$ and $\mathbf{X}$, denoted $P_{\widetilde{\mathbf{y}}|\mathrm{y}, \mathbf{x}}$. 
For ease of reference, we use $P^*$ and $P$ to denote the true distribution and an approximate distribution for the variables indicated by the corresponding subscripts. Given the observed data $\mathbf{X}$ and $\widetilde{\mathbf{Y}}$, along with an approximated conditional distribution $P_{\widetilde{\mathbf{y}}|\mathrm{y}, \mathbf{x}}$ and a prior for $\mathrm{Y}$ given $\mathbf{X}$, denoted $P_{\mathrm{y}|\mathbf{x}}$, the true label posterior is then computed as $P_{\mathrm{y}|\mathbf{x},\widetilde{\mathbf{y}}}\propto P_{\mathrm{y}|\mathbf{x}}\cdot P_{\widetilde{\mathbf{y}}|\mathrm{y}, \mathbf{x}}$ by Bayes's theorem \cite{guo2023label, shwartz2022pre}. This estimated true label posterior is often used to infer the underlying true labels or to weight the loss functions \cite{dawid1979maximum, khetan2017learning, guo2023label, xia2019anchor}. 

However, accurately computing the posterior of the true label is challenging, and the estimated posterior $P_{\mathrm{y}|\mathbf{x},\widetilde{\mathbf{y}}}$ may deviate from the underlying true distribution $P^*_{\mathrm{y}|\mathbf{x},\widetilde{\mathbf{y}}}$ due to potential misspecifications in the  prior belief and the conditional noise transition probabilities \cite{husain2022adversarial}. To address this issue, we introduce a robust scheme for handling crowdsourced noisy labels through conditional distributionally robust optimization (CDRO), as discussed in \cite{shapiro2023conditional}. Specifically, we frame the problem as minimizing the worst-case risk within a distance-based \emph{ambiguity set}, which constrains the degree of conditional distributional uncertainty around a \emph{reference distribution}. By leveraging the strong duality in linear programming, we derive the dual form of the robust risk and establish informative upper bounds for the worst-case risk. Additionally, for each data point, we develop an analytical solution to the robust risk minimization problem, which encompasses existing approaches as special cases \cite{khetan2017learning}. This solution is presented in a likelihood ratio format and inspires a robust approach that assigns pseudo-labels only to instances with high confidence, with uncertain data filtered out. These pseudo-labels also enable us to construct a pseudo-empirical distribution that serves as a robust reference probability distribution in CDRO under potential model misspecifications. Moreover, we derive a closed-form expression for the empirical robust risk by identifying the optimal Lagrange multiplier in the dual form. Building on this, we ultimately develop an algorithm for learning from noisy labels via \underline{c}onditional \underline{d}istributionally \underline{r}obust true label \underline{p}osterior with an \underline{adap}tive Lagrange multiplier (AdaptCDRP). 


Our contributions are summarized as follows: (1) We formulate learning with noisy labels as a CDRO problem and develop its dual form to tackle the challenge of potential misspecification in estimating the true label posterior from noisy data. (2) We derive an analytical solution to the dual problem for each data point, and propose a novel algorithm that constructs a robust reference distribution for this problem. (3) By deriving the optimal Lagrange multiplier for the empirical robust risk, we develop an efficient one-step update method for the Lagrange multiplier, allowing for a principled balance between robustness and model fitting. Code is available at \href{https://github.com/hguo1728/AdaptCDRP}{https://github.com/hguo1728/AdaptCDRP}.

\textbf{Notations.} We use $[k]$ to denote $\{1,\ldots,k\}$ for any positive integer $k$, and $\mathbf{1}(\cdot)$ to denote the indicator function. For a vector $\boldsymbol{v}$, $v_j$ stands for its $j$th element, and $\boldsymbol{v}^{\top}$ denotes its transpose. For $\boldsymbol{v}=(v_1,...,v_p)^{\top}$ and $q\in[1,+\infty]$, the $L^q$ norm is defined as $\|\boldsymbol{v}\|_q=(\sum_{j=1}^p|v_j|^q)^{1/q}$ if $1\le q<\infty$, and $\|\boldsymbol{v}\|_{\infty}=\max_{j}|v_j|$ if $q=+\infty$. For a matrix $\boldsymbol{V}$, we use $V_{i,j}$ to represent its $(i,j)$ element. Furthermore, let $(\Omega, \mathcal{G}, \mu)$ denote the measure space under consideration, where $\Omega$ is a set, $\mathcal{G}$ is the $\sigma$-field of subsets of $\Omega$, and $\mu$ is the associated measure. For $q>0$, let $L^q(\mu)$ represent the collection of Borel-measurable functions $f:\Omega\rightarrow \mathbb{R}$ such that $\int|f|^qd\mu<\infty$. Let $\mathsf{d}(\cdot,\cdot)$ denote a metric on $\Omega$. We call $f$ $L$-Lipschitz with respect to $\mathsf{d}(\cdot,\cdot)$ if $|f(u_1)-f(u_2)|\le L \cdot \mathsf{d}(u_1,u_2)$ for all $u_1,u_2\in\Omega$, where $L$ is a positive constant. 





\section{Proposed Framework}

\subsection{Problem Formulation}\label{sec.setup}

Consider a classification task with feature space $\mathcal{X}\subset\mathbb{R}^d$ and label space $\mathcal{Y}$, where $d$ is the feature dimension. Here $\mathcal{Y}$ is taken as $\{0,1\}$ for binary classification and $[K]$ for multi-class classification with $K > 2$. Let $\mathbf{X}\in\mathcal{X}$ denote an instance and $\mathrm{Y}\in\mathcal{Y}$ denote its true label. Let $\Psi$ denote the considered hypothesis class consisting of functions $\psi$ defined over $\mathcal{X}$, which, for example, can be neural networks that output predicted label probabilities for each $\mathbf{x}\in\mathcal{X}$. Specifically, for binary classification, $\psi:\mathcal{X}\rightarrow[0,1]$, with $\psi(\mathbf{x})$ representing $P(\mathrm{Y} = 1|\mathbf{X} = \mathbf{x})$, and the classified value is given by $\mathbf{1}(\psi(\mathbf{x}) > 0.5)$. For multi-class classification, $\psi:\mathcal{X}\rightarrow\Delta^{K-1}$ with $K > 2$ and $\Delta^{K-1}$ denoting the $K$-simplex, where {the $j$th component of $\psi(\mathbf{x})$, denoted $\psi(\mathbf{x})_j$,} represents the conditional probability $P(\mathrm{Y} = j|\mathbf{X} = \mathbf{x})$ for $j\in[K]$, with the classified value defined as $\arg\max_{j\in[K]}\psi(\mathbf{x})_j$. 

In applications, the true label $\mathrm{Y}$ is often unobserved, and instead, a set of crowdsourced noisy labels $\widetilde{\mathbf{Y}}\triangleq \{\widetilde{\mathrm{Y}}^{(r)}\}_{r=1}^{R}$ is collected, where $\widetilde{\mathrm{Y}}^{(r)}\in\mathcal{Y}$, {denoting} the label provided by annotator $r$ out of $R$ annotators. Let $\mathcal{D}\triangleq\{\mathbf{X}_i,\widetilde{\mathbf{Y}}_i\}_{i=1}^{n}$ denote the observed data of size $n$, where $\widetilde{\mathbf{Y}}_i$ contains noisy labels provided by $R$ annotators for instance $\mathbf{X}_i$, which may differ from the true label $\mathrm{Y}_i$ for each $i\in[n]$. Our goal is to train a classifier using $\mathcal{D}$ to accurately predict the true label for a future instance.


A common assumption in supervised learning is that the data points $\{\mathbf{X}_i,\mathrm{Y}_i,\widetilde{\mathbf{Y}}_i\}$ for $i\in[n]$ are independently drawn from a probability measure $P^*_{\mathbf{x},\mathrm{y},\widetilde{\mathbf{y}}}$ for $\{\mathbf{X},\mathrm{Y},\widetilde{\mathbf{Y}}\}$, defined over the space $\mathcal{Z}\triangleq\mathcal{X}\times\mathcal{Y}\times\mathcal{Y}^R$. Under this assumption, many existing methods aim to approximate the posterior distribution of the underlying true label $\mathrm{Y}$, given the observed data $\mathbf{X}$ and $\widetilde{\mathbf{Y}}$ \cite{dawid1979maximum, khetan2017learning, guo2023label, cao2019max}. The estimated true label posterior, denoted $P_{\mathrm{y}|\mathbf{x},\widetilde{\mathbf{y}}}$, is then applied to either infer the true labels or to weight the loss functions. For example, \cite{khetan2017learning} utilized $P_{\mathrm{y}|\mathbf{x},\widetilde{\mathbf{y}}}$ as a weight in the loss functions, without considering potential misspecification of the associated model. However, such strategies typically ignore the variability induced in estimating the true label posterior.

To mitigate the effects of potential misspecifications, we propose a conditional distributionally robust risk optimization problem:
\begin{align}\label{eq.minimax}
    \inf_{\psi\in\Psi}\mathsf{R}_{\epsilon}(\psi;P_{\mathrm{y}|\mathbf{x},\widetilde{\mathbf{y}}}),\ \text{with}\ \mathsf{R}_{\epsilon}(\psi;P_{\mathrm{y}|\mathbf{x},\widetilde{\mathbf{y}}})\triangleq\mathbb{E}_{\mathbf{x},\widetilde{\mathbf{y}}}\Big[\sup_{{Q_{\mathrm{y}|\mathbf{x},\widetilde{\mathbf{y}}}}\in\Gamma_\epsilon(P_{\mathrm{y}|\mathbf{x},\widetilde{\mathbf{y}}})}\mathbb{E}_{Q_{\mathrm{y}|\mathbf{x},\widetilde{\mathbf{y}}}}\left\{\ell(\psi(\mathbf{X}), \mathrm{Y})\right\}\Big],
\end{align}
where $\ell(\cdot,\cdot)$ is a loss function, the expectation $\mathbb{E}_{\mathbf{x},\widetilde{\mathbf{y}}}$ is taken with respect to the joint distribution of the observed data $\mathbf{X}$ and $\widetilde{\mathbf{Y}}$, and the expectation $\mathbb{E}_{Q_{\mathrm{y}|\mathbf{x},\widetilde{\mathbf{y}}}}$ is evaluated under the conditional distribution model, denoted $Q_{\mathrm{y}|\mathbf{x},\widetilde{\mathbf{y}}}$, of the true label $\mathrm{Y}$, given $\mathbf{X}$ and $\widetilde{\mathbf{Y}}$. Here, $\Gamma_\epsilon(P_{\mathrm{y}|\mathbf{x},\widetilde{\mathbf{y}}})$ is an \emph{ambiguity set} of probability measures centered around the \emph{reference probability distribution} $P_{\mathrm{y}|\mathbf{x},\widetilde{\mathbf{y}}}$, indexed by $\epsilon>0$ {\color{black}\cite{shapiro2023conditional, blanchet2019quantifying, gao2023distributionally}}. For instance, $\Gamma_\epsilon(P_{\mathrm{y}|\mathbf{x},\widetilde{\mathbf{y}}})$ can be conceptualized as a ``ball'' with $P_{\mathrm{y}|\mathbf{x},\widetilde{\mathbf{y}}}$ at its center and $\epsilon$ as the radius, where elements in the ball represent possible distribution models for $P^*_{\mathrm{y}|\mathbf{x},\widetilde{\mathbf{y}}}$, and the distance between two points is measured using a standard metric for distributions. Specifically,
\begin{align}\label{eq.set}
    \Gamma_\epsilon(P_{\mathrm{y}|\mathbf{x},\widetilde{\mathbf{y}}})=\left\{Q_{\mathrm{y}|\mathbf{x},\widetilde{\mathbf{y}}}\in\mathcal{P}(\mathcal{Y}): \mathscr{d}(Q_{\mathrm{y}|\mathbf{x},\widetilde{\mathbf{y}}},P_{\mathrm{y}|\mathbf{x},\widetilde{\mathbf{y}}})\le\epsilon\right\},
\end{align}
where $\mathcal{P}(\mathcal{Y})$ denotes all Borel probability measures on $\mathcal{Y}$, and $\mathscr{d}$ is a discrepancy metric of probability measures. In this paper, we employ the  Wasserstein distance in Definition \ref{def.Wasserstein} to define the ambiguity set. By taking the supremum in (\ref{eq.minimax}) over the ambiguity set (\ref{eq.set}), we aim to minimize the worst-case risk around the reference distribution, thereby mitigating the impact of potential model misspecifications.

One main obstacle in solving (\ref{eq.minimax}) is constructing a reliable reference distribution $P_{\mathrm{y}|\mathbf{x},\widetilde{\mathbf{y}}}$, which typically depends on an empirical distribution that requires true labels in conventional distributionally robust optimization (DRO). We address this issue by investigating the dual form of the robust risk presented in (\ref{eq.minimax}), which enables us to create a robust pseudo-empirical distribution using a likelihood ratio test, as detailed in Section \ref{sec.optimal_action}. An additional advantage of our approach is that it provides informative upper bounds for the worst-case risk in Section \ref{sec.upper_bound} via the dual formulation.

\begin{remark}\label{remark.2.1}
    For simplicity in theoretical presentation, we assume access to all annotations from all $R$ annotators. However, the theoretical framework presented in this paper is applicable to both single-annotator ($R=1$) and multiple-annotator ($R>1$) scenarios. In our experiments in Section \ref{sec.experiment}, we also consider the scenario of sparse labeling, where we generate a total of $R$ annotators and then randomly select one annotation per instance from these $R$ annotators. We also conduct experiments with varying numbers of annotators for a comprehensive analysis. 
\end{remark}

\subsection{Duality Result and Relaxed Problem}\label{sec.dual}

To derive the pseudo-label generation algorithm and establish a reliable reference distribution, we analyze the dual form of (\ref{eq.minimax}). We first define the Wasserstein distance of order $p$ for $p\in[1,+\infty)$.

\begin{definition}[$p$-Wasserstein distance, \cite{blanchet2019quantifying}]\label{def.Wasserstein}
    For a Polish space $\mathcal{S}$ (i.e., a complete separable metric space) endowed with a metric $c:\mathcal{S}\times\mathcal{S}\rightarrow\mathbb{R}_{\ge 0}$, also called a cost function, let $\mathcal{P}(\mathcal{S})$ represent the set of all Borel probability measures on $\mathcal{S}$, where $\mathbb{R}_{\ge 0}$ represents the set of all nonnegative real values. For $p\ge 1$, let $\mathcal{P}_{p}(\mathcal{S})$ stand for the subset of $\mathcal{P}(\mathcal{S})$ with finite $p$th moments.  Then, for $P_1,P_2\in \mathcal{P}_p(\mathcal{S})$, the Wasserstein distance of order $p$ is defined as
    \begin{align*}
        W_p(P_1,P_2)\triangleq\inf_{\Pi\in\text{Cpl}(P_1,P_2)}{\big[\mathbb{E}_{(S_1,S_2)\sim\Pi}\left\{c^p(S_1,S_2)\right\}\big]^{1/p}},
    \end{align*}
    where $\text{Cpl}(P_1,P_2)$ comprises all probability measures on the product space $\mathcal{S}\times \mathcal{S}$ such that their marginal measures are $P_1(\cdot)$ and $P_2(\cdot)$. {Here, $c^{p}(\cdot,\cdot)$ represents $\{c(\cdot,\cdot)\}^p$.}
\end{definition}

In (\ref{eq.set}), we set $\mathscr{d}(\cdot,\cdot)$ as the $p$-Wasserstein distance and incorporate the constraint $\mathscr{d}(Q_{\mathrm{y}|\mathbf{x},\widetilde{\mathbf{y}}},P_{\mathrm{y}|\mathbf{x},\widetilde{\mathbf{y}}})\le\epsilon$ using the Lagrange formulation, and then establish the strong duality result for (\ref{eq.minimax}) as follows.

\begin{proposition}[dual problem]\label{thm.dual}
    Assume that for every given $\mathbf{x}\in\mathcal{X}$, $\widetilde{\mathbf{y}}\in\mathcal{Y}^R$ and $\psi\in\Psi$, $\ell(\psi(\mathbf{x}),\cdot)\in L^{1}(P_{\mathrm{y}|\mathbf{x},\widetilde{\mathbf{y}}})$, where $L^{1}(\cdot)$ is defined in Section \ref{sec.intro}. Consider $\mathscr{d}(\cdot,\cdot)$ in (\ref{eq.set}) as the Wasserstein distance of order $p$. Then, for any $\epsilon>0$, $\mathsf{R}_{\epsilon}(\psi;P_{\mathrm{y}|\mathbf{x},\widetilde{\mathbf{y}}})$ in (\ref{eq.minimax}) becomes:
    \begin{align}\label{eq.rob_risk_dual}
        \mathsf{R}_{\epsilon}(\psi;P_{\mathrm{y}|\mathbf{x},\widetilde{\mathbf{y}}})=\mathbb{E}_{\mathbf{x},\widetilde{\mathbf{y}}}\left\{\inf_{\gamma\ge 0}\bigg(\gamma\epsilon^p +\mathbb{E}_{P_{\mathrm{y}|\mathbf{x},\widetilde{\mathbf{y}}}}\Big[\sup_{y'\in\mathcal{Y}}\left\{\ell(\psi(\mathbf{X}),y')-\gamma c^p(y',\mathrm{Y})\right\}\Big]\bigg)\right\}.
    \end{align}
\end{proposition}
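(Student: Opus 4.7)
The plan is to establish the identity pointwise in $(\mathbf{x},\widetilde{\mathbf{y}})$ and then take the outer expectation $\mathbb{E}_{\mathbf{x},\widetilde{\mathbf{y}}}$. Fix $\mathbf{x}\in\mathcal{X}$, $\widetilde{\mathbf{y}}\in\mathcal{Y}^R$, and write $P\triangleq P_{\mathrm{y}|\mathbf{x},\widetilde{\mathbf{y}}}$. The task is to show
\[
V(\psi;P)\triangleq\sup_{Q\in\Gamma_\epsilon(P)}\mathbb{E}_{Q}[\ell(\psi(\mathbf{x}),\mathrm{Y})]=\inf_{\gamma\ge 0}\Big\{\gamma\epsilon^p+\mathbb{E}_{P}\big[\sup_{y'\in\mathcal{Y}}\{\ell(\psi(\mathbf{x}),y')-\gamma c^p(y',\mathrm{Y})\}\big]\Big\}.
\]
By Definition~\ref{def.Wasserstein}, the constraint $W_p(Q,P)\le\epsilon$ is equivalent to the existence of a coupling $\Pi\in\mathrm{Cpl}(Q,P)$ with $\mathbb{E}_{\Pi}[c^p(y',y)]\le\epsilon^p$, so I would first lift the primal to an optimization over couplings whose second marginal is $P$:
\[
V(\psi;P)=\sup\Big\{\int \ell(\psi(\mathbf{x}),y')\,d\Pi(y',y):\ \Pi_{(2)}=P,\ \int c^p\,d\Pi\le\epsilon^p\Big\}.
\]

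Next, I would attach a Lagrange multiplier $\gamma\ge 0$ to the moment constraint. Weak duality yields the $\le$ inequality immediately; for the matching lower bound I would invoke the Wasserstein DRO strong duality result of Blanchet and Murthy \cite{blanchet2019quantifying}, whose hypotheses are satisfied here because $\mathcal{Y}$ is a Polish space (in fact finite), the cost $c^p$ is lower semicontinuous, and $\ell(\psi(\mathbf{x}),\cdot)\in L^1(P)$ by the standing assumption. This gives
\[
V(\psi;P)=\inf_{\gamma\ge 0}\Big\{\gamma\epsilon^p+\sup_{\Pi_{(2)}=P}\mathbb{E}_{\Pi}\big[\ell(\psi(\mathbf{x}),y')-\gamma c^p(y',y)\big]\Big\}.
\]
The inner supremum then disintegrates: conditioning on $y$, the conditional $\Pi(\cdot\mid y)\in\mathcal{P}(\mathcal{Y})$ may be chosen independently for each $y$ to maximize the integrand, and since the optimization is linear in the conditional it is attained by a Dirac mass at any argmax. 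Thus
\[
\sup_{\Pi_{(2)}=P}\mathbb{E}_{\Pi}\big[\ell(\psi(\mathbf{x}),y')-\gamma c^p(y',y)\big]=\mathbb{E}_{P}\Big[\sup_{y'\in\mathcal{Y}}\big\{\ell(\psi(\mathbf{x}),y')-\gamma c^p(y',\mathrm{Y})\big\}\Big],
\]
yielding the desired pointwise identity. Finally, taking $\mathbb{E}_{\mathbf{x},\widetilde{\mathbf{y}}}$ on both sides and interchanging the expectation with the pointwise identity produces (\ref{eq.rob_risk_dual}).

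The two potentially delicate steps are (i) ruling out a duality gap and (ii) justifying the interchange of supremum and expectation in the disintegration. Both are standard in the Wasserstein DRO literature, and both simplify dramatically here because $\mathcal{Y}$ is finite: the inner supremum reduces to a maximum over finitely many terms, measurable selection of the argmax is automatic, and the outer infimum over $\gamma\ge 0$ is a one-dimensional convex program whose strong duality with the primal follows from a direct saddle-point argument (or as a specialization of \cite{blanchet2019quantifying}). The remaining bookkeeping—verifying joint measurability of the inner value in $(\mathbf{x},\widetilde{\mathbf{y}})$ so that the outer expectation is well-defined—is routine given the finiteness of $\mathcal{Y}$ and the measurability of $\psi$.
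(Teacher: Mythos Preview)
Your proposal is correct. The paper, however, takes a more elementary route: rather than invoking the general Wasserstein DRO strong duality of \cite{blanchet2019quantifying} (or \cite{gao2023distributionally}, which the paper cites as the alternative), it exploits the finiteness of $\mathcal{Y}$ from the outset to write the primal as a finite-dimensional linear program in the variables $\pi_{jk}=\Pi(\mathrm{Y}=j,\mathrm{Y}'=k)$, and then applies classical LP duality directly. The dual variables are $\gamma\ge 0$ for the moment constraint and $\tau_k$ for each marginal constraint $\sum_j\pi_{jk}=P_{\mathrm{y}|\mathbf{x},\widetilde{\mathbf{y}}}(k)$; minimizing over $\tau_k$ then forces $\tau_k=\max_{j\in[K]}\{\ell(\psi(\mathbf{x}),j)-\gamma c^p(j,k)\}$, which is exactly your disintegration step written in finite form. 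Your argument is shorter to state but leans on a heavier external theorem; the paper's argument is fully self-contained at the cost of writing out the LP explicitly. The paper in fact opens its proof by noting that the general DRO duality route is available but that the discrete structure permits this more concise derivation.
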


To avoid solving nested optimization problems, we consider an alternative formulation by swapping the infimum and the first expectation operations: 
\begin{align}\label{eq.re_rob_risk}
    \mathfrak{R}_{\epsilon}(\psi;P_{\mathrm{y}|\mathbf{x},\widetilde{\mathbf{y}}})\triangleq\inf_{\gamma\ge 0}\mathbb{E}_{\mathbf{x},\widetilde{\mathbf{y}}}\left(\gamma\epsilon^p +\mathbb{E}_{P_{\mathrm{y}|\mathbf{x},\widetilde{\mathbf{y}}}}\Big[\sup_{y'\in\mathcal{Y}}\left\{\ell(\psi(\mathbf{X}),y')-\gamma c^p(y',\mathrm{Y})\right\}\Big]\right),
\end{align}
which is an upper bound of $\mathsf{R}_{\epsilon}(\psi;P_{\mathrm{y}|\mathbf{x},\widetilde{\mathbf{y}}})$ according to Proposition \ref{thm.dual}, and hence, (\ref{eq.re_rob_risk}) can be regarded as an relaxation of (\ref{eq.rob_risk_dual}). The empirical counterpart of $\mathfrak{R}_{\epsilon}(\psi;P_{\mathrm{y}|\mathbf{x},\widetilde{\mathbf{y}}})$ is given by
\begin{align}\label{eq.re_rob_risk_emp}
    \widehat{\mathfrak{R}}_{\epsilon}(\psi;P_{\mathrm{y}|\mathbf{x},\widetilde{\mathbf{y}}})\triangleq\inf_{\gamma\ge 0}\mathbb{E}_{P^{(n)}_{\mathbf{x},\widetilde{\mathbf{y}}}}\left(\gamma\epsilon^p +\mathbb{E}_{P_{\mathrm{y}|\mathbf{x},\widetilde{\mathbf{y}}}}\Big[\sup_{y'\in\mathcal{Y}}\left\{\ell(\psi(\mathbf{X}),y')-\gamma c^p(y',\mathrm{Y})\right\}\Big]\right),
\end{align}
where $P^{(n)}_{\mathbf{x},\widetilde{\mathbf{y}}}\triangleq\frac{1}{n}\sum_{i=1}^{n}\delta_{\mathbf{X}_i,\widetilde{\mathbf{Y}}_i}$ is the empirical distribution of $(\mathbf{X},\widetilde{\mathbf{Y}})$ based on the dataset $\mathcal{D}$ defined in Section \ref{sec.setup}. Here, for any $\mathbf{v}\in \mathcal{X}\times\mathcal{Y}^R$, $\delta_{\mathbf{v}}$ represents the Dirac measure on $\mathcal{X}\times\mathcal{Y}^R$, defined as $\delta_{\mathbf{v}}(A)\triangleq\mathbf{1}\{\mathbf{v}\in A\}$ for any $A\subset \mathcal{X}\times\mathcal{Y}^R$. 

\begin{remark}\label{remark.lagrangian}
The Lagrange multiplier $\gamma$ in (\ref{eq.re_rob_risk}) and (\ref{eq.re_rob_risk_emp}) captures the trade-off between robustness and model fitting in the presence of label noise and potential model misspecifications. When {the solution in} $\gamma$ is large, the inner supremum tends to favor $\mathrm{y}'=\mathrm{Y}$, thus encouraging the minimization of the natural risk using the reference distribution directly. In contrast, a small {solution in} $\gamma$ introduces perturbations to the data, pushing the classifier away from the sample instances weighted by the reference distribution. 
\end{remark}

\begin{remark}\label{remark.dual_emp}
    When $p=1$, (\ref{eq.re_rob_risk_emp}) represents the dual form of the following problem:
    \begin{align}\label{eq.re_rob_risk_emp_prim}
        \sup_{Q_{\mathrm{y}|\mathbf{x},\widetilde{\mathbf{y}}}\in\overline{\Gamma}_\epsilon(P_{\mathrm{y}|\mathbf{x},\widetilde{\mathbf{y}}})}\mathbb{E}_{P^{(n)}_{\mathbf{x},\widetilde{\mathbf{y}}}}\left[\mathbb{E}_{Q_{\mathrm{y}|\mathbf{x},\widetilde{\mathbf{y}}}}\left\{\ell(\psi(\mathbf{X}), \mathrm{Y})\right\}\right],
    \end{align}
    where $\overline{\Gamma}_\epsilon(P_{\mathrm{y}|\mathbf{x},\widetilde{\mathbf{y}}})=\Big\{Q_{\mathrm{y}|\mathbf{x},\widetilde{\mathbf{y}}}\in\mathcal{P}(\mathcal{Y}): \mathbb{E}_{P^{(n)}_{\mathbf{x},\widetilde{\mathbf{y}}}}\mathscr{d}(Q_{\mathrm{y}|\mathbf{x},\widetilde{\mathbf{y}}},P_{\mathrm{y}|\mathbf{x},\widetilde{\mathbf{y}}})\le\epsilon\Big\}$. The proof of this statement is deferred to Appendix \ref{appdx.pf_remark_dual_emp}. This result indicates that the empirical robust risk in the relaxed problem (\ref{eq.re_rob_risk_emp}) corresponds to the worst-case risk within an ambiguity set that constrains the size of the average conditional distributional uncertainty.
\end{remark}


\subsection{Generalization Bounds}\label{sec.upper_bound}

With the duality result in Proposition \ref{thm.dual} and the derivations of the alternative formulations (\ref{eq.re_rob_risk}) and (\ref{eq.re_rob_risk_emp}), we now characterize the difference between $\widehat{\mathfrak{R}}_{\epsilon}(\psi;P_{\mathrm{y}|\mathbf{x},\widetilde{\mathbf{y}}})$ and its population counterpart $\mathfrak{R}_{\epsilon}(\psi;P_{\mathrm{y}|\mathbf{x},\widetilde{\mathbf{y}}})$.

\begin{theorem}\label{thm.bias}
Consider the loss function $\ell(\cdot,\cdot)$ in Proposition \ref{thm.dual}, and let the cost function $c(y,y')=\kappa\mathbf{1}(y\ne y')$ for $y,y'\in\mathcal{Y}$, where $\kappa$ is a positive constant.
Assume that there exists a positive constant $M$ such that $\ell(\psi(\mathbf{x}),\mathrm{y})\in[0,M]$ for all $\mathbf{x}\in\mathcal{X}$, $\mathrm{y}\in\mathcal{Y}$, and $\psi\in\Psi$, and that $\ell(\psi(\mathbf{x}),\mathrm{y})$ is $L$-Lipschitz in the second argument with respect to the cost function $c(\cdot,\cdot)$. Then, there exists a positive constant $C_1$ such that for any given $\epsilon>0$, $\psi\in\Psi$, and $0<\eta<1$, with probability at least $1-\eta$: 
\begin{align*}
    \left|\mathfrak{R}_{\epsilon}(\psi;P_{\mathrm{y}|\mathbf{x},\widetilde{\mathbf{y}}})-\widehat{\mathfrak{R}}_{\epsilon}(\psi;P_{\mathrm{y}|\mathbf{x},\widetilde{\mathbf{y}}})\right|\le \frac{C_1 L\kappa^p}{\epsilon^{p-1}\sqrt{n}}+M\sqrt{\frac{\log (1/\eta)}{2n}}.
\end{align*}
\end{theorem}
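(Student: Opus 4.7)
The plan is to bound $|\mathfrak{R}_{\epsilon} - \widehat{\mathfrak{R}}_{\epsilon}|$ by reducing to a uniform estimate over the Lagrange multiplier $\gamma$, confining $\gamma$ to a compact interval via the Lipschitz hypothesis, and controlling the uniform error by pairing a covering argument with a concentration inequality. First, since the term $\gamma\epsilon^p$ is deterministic and appears in both infima, the elementary bound $|\inf_{\gamma} A(\gamma) - \inf_{\gamma} B(\gamma)| \le \sup_{\gamma} |A(\gamma) - B(\gamma)|$ reduces the task to controlling
\[
\sup_{\gamma \ge 0}\big|\mathbb{E}_{P^{*}_{\mathbf{x},\widetilde{\mathbf{y}}}}[\phi_{\gamma}] - \mathbb{E}_{P^{(n)}_{\mathbf{x},\widetilde{\mathbf{y}}}}[\phi_{\gamma}]\big|,
\]
where $\phi_{\gamma}(\mathbf{x},\widetilde{\mathbf{y}}) := \mathbb{E}_{P_{\mathrm{y}|\mathbf{x},\widetilde{\mathbf{y}}}}[\sup_{y' \in \mathcal{Y}}\{\ell(\psi(\mathbf{x}), y') - \gamma c^p(y', \mathrm{Y})\}]$ is the inner dual integrand, taking values in $[0,M]$ by the boundedness of $\ell$.

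Next, I would restrict attention to $\gamma \in [0,\bar{\gamma}]$ with $\bar{\gamma} := L/\kappa^{p-1}$. Using $L$-Lipschitzness of $\ell$ in its second argument w.r.t. $c(y',y) = \kappa\mathbf{1}(y'\ne y)$, for any $y'\ne y$ and any $\gamma \ge \bar{\gamma}$,
\[
\ell(\psi(\mathbf{x}), y') - \gamma\kappa^p \le \ell(\psi(\mathbf{x}), y) + L\kappa - \gamma\kappa^p \le \ell(\psi(\mathbf{x}), y),
\]
so the inner supremum collapses to $\ell(\psi(\mathbf{x}),y)$ and $\phi_{\gamma}$ is constant beyond $\bar{\gamma}$, making $\gamma\epsilon^p + \mathbb{E}[\phi_{\gamma}]$ strictly increasing there. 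On $[0,\bar{\gamma}]$, $\phi_{\gamma}$ inherits $\kappa^p$-Lipschitz continuity in $\gamma$ since the inner sup is a maximum of affine-in-$\gamma$ functions with slopes in $[-\kappa^p, 0]$.

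The third step is to establish uniform convergence on $[0,\bar{\gamma}]$ by splitting the error into a concentration piece and a bias piece. For the concentration piece, a McDiarmid (bounded-differences) argument applied to $\widehat{\mathfrak{R}}_{\epsilon}$ yields the $M\sqrt{\log(1/\eta)/(2n)}$ term: replacing a single observation shifts the empirical integrand uniformly in $\gamma$ by at most $M/n$, and the infimum is $1$-Lipschitz in the underlying function. For the bias piece $|\mathfrak{R}_{\epsilon} - \mathbb{E}[\widehat{\mathfrak{R}}_{\epsilon}]|$, I would use symmetrization to pass to the Rademacher complexity of the one-parameter class $\{\phi_{\gamma} : \gamma \in [0,\bar{\gamma}]\}$, then bound that complexity by a covering argument on the interval: a $\delta$-net has cardinality $O(L/(\kappa^{p-1}\delta))$, and combining the Lipschitz bias $\kappa^p\delta$ with a Hoeffding-type fluctuation at the net points delivers the first term $C_1 L\kappa^p/(\epsilon^{p-1}\sqrt{n})$ after optimizing $\delta$.

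The hard part will be recovering the exact $\epsilon^{p-1}$ factor in the uniform-convergence term. A plain one-shot covering of $[0,L/\kappa^{p-1}]$ or a direct Dudley integration yields only $O(L\kappa/\sqrt{n})$, which lacks the $\epsilon$-dependence in the claimed bound. To sharpen this, I would either localize the cover around the optimal $\gamma^{*}$ using the KKT condition $\epsilon^p = \mathbb{E}_{P^{*}}[\mathbb{E}_{P_{\mathrm{y}|\mathbf{x},\widetilde{\mathbf{y}}}}[c^p(y^{*}_{\gamma^{*}}, \mathrm{Y})]]$ to calibrate the effective scale of perturbations that matter, or exploit the piecewise-linear convexity of $\gamma \mapsto \mathbb{E}[\phi_{\gamma}]$ near $\gamma^{*}$ via a curvature-type estimate that scales inversely in $\epsilon^{p-1}$. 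A secondary delicate point is keeping the concentration and bias contributions cleanly decoupled, so that the former retains the crisp $\log(1/\eta)$ factor without a $\log n$ leaking out of the covering step into the final bound.
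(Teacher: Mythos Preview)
Your overall architecture---reduce to $\sup_{\gamma}|\mathbb{E}[\phi_\gamma]-\mathbb{E}_{P^{(n)}}[\phi_\gamma]|$, confine $\gamma$ to a compact interval, apply bounded differences for the $M\sqrt{\log(1/\eta)/(2n)}$ fluctuation, and control the expected supremum via symmetrization and Dudley---matches the paper exactly. The only real discrepancy is the interval you restrict to, and this is precisely the source of the ``hard part'' you flag.

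You cap $\gamma$ at $\bar\gamma=L/\kappa^{p-1}$, which is correct but yields (as you observe) a Dudley bound of order $\kappa^p\bar\gamma/\sqrt{n}=L\kappa/\sqrt{n}$, with no $\epsilon$-dependence. The paper instead caps $\gamma$ at $M^{*}=L/\epsilon^{p-1}$, and the Dudley integral over $[0,M^*]$ with metric $\kappa^p|\cdot|$ gives the stated $L\kappa^p/(\epsilon^{p-1}\sqrt{n})$ directly. No localization around $\gamma^*$, no curvature estimate, and no KKT-based refinement is needed; the $\epsilon^{-(p-1)}$ enters purely through the size of the $\gamma$-interval.

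The bound $\gamma^*\le L/\epsilon^{p-1}$ is obtained by a short optimality argument rather than the pointwise collapse you used. Since $\sup_{y'}\{\ell(\psi(\mathbf{x}),y')-\ell(\psi(\mathbf{x}),\mathrm{Y})-\gamma^* c^p(y',\mathrm{Y})\}\ge 0$ (take $y'=\mathrm{Y}$), one has
\[
\gamma^*\epsilon^p \;\le\; \gamma\epsilon^p + \mathbb{E}\Big[\sup_{y'}\big\{\ell(\psi(\mathbf{X}),y')-\ell(\psi(\mathbf{X}),\mathrm{Y})-\gamma c^p(y',\mathrm{Y})\big\}\Big]\;\le\;\gamma\epsilon^p+\sup_{t\ge 0}\{Lt-\gamma t^p\}
\]
for every $\gamma\ge 0$, using the $L$-Lipschitz hypothesis in the last step. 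Minimizing the right-hand side in $\gamma$ (take $\gamma=L$ when $p=1$, and $\gamma=L/(p\epsilon^{p-1})$ when $p>1$) gives $\gamma^*\epsilon^p\le L\epsilon$, i.e.\ $\gamma^*\le L/\epsilon^{p-1}$. This holds verbatim for the empirical problem as well, so both infima live in $[0,L/\epsilon^{p-1}]$. Plugging this interval into your Dudley step finishes the proof exactly as stated; your ``secondary delicate point'' about $\log n$ leakage does not arise, because the entropy integral over a bounded interval is finite and contributes only to the constant $C_1$.
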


Theorem \ref{thm.bias} suggests that the {empirical counterpart, $\widehat{\mathfrak{R}}_{\epsilon}(\psi;P_{\mathrm{y}|\mathbf{x},\widetilde{\mathbf{y}}})$,} is a useful approximation for the risk function $\mathfrak{R}_{\epsilon}(\psi;P_{\mathrm{y}|\mathbf{x},\widetilde{\mathbf{y}}})$, as their disparity is bounded and cannot grow indefinitely large. For a finite sample size $n$, this disparity is upper bounded by a finite value depending on the characteristics of the cost and loss functions, as reflected by $\kappa$, $L$, and $M$. As the sample size $n\rightarrow\infty$, the difference tends to zero with high probability, and specifically, the difference is of order $O(n^{-1/2})$.

Next, we establish an informative bound for the empirical robust risk minimizer.
For $\psi_1,\psi_2\in\Psi$ and for any given norm $\|\cdot\|$, let $\|\psi_1-\psi_2\|_\infty\triangleq\sup_{\mathbf{x}\in\mathcal{X}}\|\psi_1(\mathbf{x})-\psi_2(\mathbf{x})\|$. 
Here, $\|\cdot\|$ can be taken as any specific norms, including the $L^q$ norm with $q\ge 1$ that is defined in Section \ref{sec.intro}.

\begin{corollary}[Empirical Robust Minimizer]\label{thm.regret_empirical}
Let $\widehat{\psi}_{\epsilon,n}\in\inf_{\psi\in\Psi}\widehat{\mathfrak{R}}_{\epsilon}(\psi;P_{\mathrm{y}|\mathbf{x},\widetilde{\mathbf{y}}})$. Under the assumptions in Theorem \ref{thm.bias}, if we further assume that the loss function $\ell(\cdot,\cdot)$ is $L'$-Lipschitz in terms of the first argument with respect to the supremum metric $\|\cdot\|_\infty$, then there exists a positive constant $C_2$ such that for any $\epsilon>0$ and $0<\eta<1$, with probability at least $1-\eta$, {\color{black}the empirical robust risk minimizer $\widehat{\psi}_{\epsilon,n}$} satisfies:
\begin{align*}
    &\mathsf{R}_{\epsilon}(\widehat{\psi}_{\epsilon,n};P_{\mathrm{y}|\mathbf{x},\widetilde{\mathbf{y}}})\le\mathfrak{R}_{\epsilon}(\widehat{\psi}_{\epsilon,n};P_{\mathrm{y}|\mathbf{x},\widetilde{\mathbf{y}}})\\
    \le&\inf_{\psi\in\Psi}\mathfrak{R}_{\epsilon}(\psi;P_{\mathrm{y}|\mathbf{x},\widetilde{\mathbf{y}}})+C_2\left\{\frac{L\kappa^p}{\epsilon^{p-1}}+L'\int_{0}^{\infty}\sqrt{\log N(s; \Psi, \|\cdot\|_\infty)}ds\right\}\cdot\frac{1}{\sqrt{n}}+2M\sqrt{\frac{\log (1/\eta)}{2n}},
\end{align*}
where $N(s; \Psi, \|\cdot\|_\infty)$ denotes the $s$-covering number of $\Psi$ with respect to the supremum metric.
\end{corollary}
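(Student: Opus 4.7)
The first inequality $\mathsf{R}_{\epsilon}(\widehat{\psi}_{\epsilon,n}) \le \mathfrak{R}_{\epsilon}(\widehat{\psi}_{\epsilon,n})$ is immediate from the construction of $\mathfrak{R}_\epsilon$ in (\ref{eq.re_rob_risk}) as the $\inf_{\gamma\ge 0}$–$\mathbb{E}_{\mathbf{x},\widetilde{\mathbf{y}}}$ swap of the dual expression in (\ref{eq.rob_risk_dual}): because $\mathbb{E}[\inf]\le\inf\mathbb{E}$, the swap can only enlarge the quantity. The main work is therefore the upper bound on $\mathfrak{R}_\epsilon(\widehat{\psi}_{\epsilon,n})$. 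For this I would use the standard ERM decomposition: for any $\psi^{*}$ approximately attaining $\inf_{\psi\in\Psi}\mathfrak{R}_\epsilon(\psi)$,
\begin{align*}
\mathfrak{R}_\epsilon(\widehat{\psi}_{\epsilon,n}) - \mathfrak{R}_\epsilon(\psi^{*})
&= \bigl[\mathfrak{R}_\epsilon(\widehat{\psi}_{\epsilon,n}) - \widehat{\mathfrak{R}}_\epsilon(\widehat{\psi}_{\epsilon,n})\bigr] + \bigl[\widehat{\mathfrak{R}}_\epsilon(\widehat{\psi}_{\epsilon,n}) - \widehat{\mathfrak{R}}_\epsilon(\psi^{*})\bigr] \\
&\quad + \bigl[\widehat{\mathfrak{R}}_\epsilon(\psi^{*}) - \mathfrak{R}_\epsilon(\psi^{*})\bigr].
\end{align*}
The middle bracket is non-positive by the definition of $\widehat{\psi}_{\epsilon,n}$, and the other two are bounded by $\sup_{\psi\in\Psi}|\mathfrak{R}_\epsilon(\psi) - \widehat{\mathfrak{R}}_\epsilon(\psi)|$, so everything reduces to a uniform-deviation bound on this supremum.

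For concentration around its expectation, let $G(\mathcal{D})\triangleq\sup_{\psi\in\Psi}|\mathfrak{R}_\epsilon(\psi) - \widehat{\mathfrak{R}}_\epsilon(\psi)|$, and set $\phi_\gamma(\psi,\mathbf{x},\widetilde{\mathbf{y}})\triangleq\gamma\epsilon^p+\mathbb{E}_{P_{\mathrm{y}|\mathbf{x},\widetilde{\mathbf{y}}}}\sup_{y'\in\mathcal{Y}}\{\ell(\psi(\mathbf{x}),y')-\gamma c^p(y',\mathrm{Y})\}$. As in the proof of Theorem \ref{thm.bias}, the optimal $\gamma$ in (\ref{eq.re_rob_risk_emp}) lies in a bounded interval on which $\phi_\gamma$ takes values in $[0,M]$; hence replacing a single $(\mathbf{X}_i,\widetilde{\mathbf{Y}}_i)$ perturbs $\widehat{\mathfrak{R}}_\epsilon(\psi)$ by at most $M/n$ uniformly in $\psi$. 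McDiarmid's inequality then yields $G(\mathcal{D})\le\mathbb{E}[G(\mathcal{D})] + M\sqrt{\log(1/\eta)/(2n)}$ with probability at least $1-\eta$, producing—after the factor of two from the ERM decomposition—the $2M\sqrt{\log(1/\eta)/(2n)}$ term in the statement.

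To control $\mathbb{E}[G(\mathcal{D})]$ I would combine two ingredients. First, reusing the analysis behind Theorem \ref{thm.bias}: since $\ell$ is $L$-Lipschitz in its second argument with respect to $c$, the optimizer $\gamma^{*}$ lies in an interval of length $O(L\kappa^{1-p})$ on which $\phi_\gamma$ is $\epsilon^p$-Lipschitz in $\gamma$; after symmetrization and a $\gamma$-cover, this yields the $\frac{L\kappa^p}{\epsilon^{p-1}\sqrt{n}}$ contribution. Second, the new hypothesis that $\ell$ is $L'$-Lipschitz in its first argument renders $\phi_\gamma(\cdot,\mathbf{x},\widetilde{\mathbf{y}})$ itself $L'$-Lipschitz with respect to $\|\cdot\|_\infty$, uniformly in $\gamma,\mathbf{x},\widetilde{\mathbf{y}}$. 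Applying Dudley's entropy integral (via symmetrization and Rademacher complexity) to the induced function class $\{\phi_\gamma(\psi,\cdot,\cdot):\psi\in\Psi\}$ transfers a $\|\cdot\|_\infty$-cover of $\Psi$ into a cover of the loss class, contributing the $\frac{L'}{\sqrt{n}}\int_{0}^{\infty}\sqrt{\log N(s;\Psi,\|\cdot\|_\infty)}\,ds$ term. Collecting numerical constants into $C_2$ gives the stated bound.

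The main obstacle I anticipate is the joint treatment of the $\gamma$-infimum and the $\psi$-supremum inside the uniform deviation: one must show that passing to $\inf_{\gamma\ge 0}$ preserves the chaining bound in $\psi$ up to a universal constant, and that the combined complexity splits additively into a $\gamma$-part (giving $L\kappa^p/\epsilon^{p-1}$) plus a $\psi$-part (giving the Dudley integral), rather than multiplicatively. Justifying this decoupling while keeping the explicit dependence on $\epsilon,\kappa,L,L'$ clean—and absorbing the remaining numerical factors into $C_2$—is the technically demanding piece of the argument.
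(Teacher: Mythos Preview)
Your proposal is correct and follows essentially the same route as the paper's proof: ERM decomposition, bounded-differences concentration, symmetrization, and Dudley chaining. The obstacle you flag is resolved exactly as you would hope---the paper chains on the product space $[0,M^*]\times\Psi$ with the additive metric $\rho((\gamma_1,\psi_1),(\gamma_2,\psi_2))=\kappa^p|\gamma_1-\gamma_2|+L'\|\psi_1-\psi_2\|_\infty$, bounds the product covering number by $N(t/(2\kappa^p);[0,M^*],|\cdot|)\cdot N(t/(2L');\Psi,\|\cdot\|_\infty)$, and then uses $\sqrt{\log(ab)}\le\sqrt{\log a}+\sqrt{\log b}$ to obtain the additive split into the $L\kappa^p/\epsilon^{p-1}$ term and the Dudley integral in $\Psi$.
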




\section{Implementation Algorithm}\label{sec.theory}

In Section \ref{sec.optimal_action}, we {derive} the analytical solution to the dual robust risk minimization problem {in (\ref{eq.re_rob_risk_emp})}, which leads to the development of a novel approach for assigning pseudo-labels using the likelihood ratio test. These pseudo-labels facilitate the construction of a \emph{pseudo-empirical distribution}, {serving} as a robust reference distribution in using (\ref{eq.re_rob_risk_emp}). In Section \ref{sec.close_em_risk}, we derive the optimal value {in $\gamma$} for the empirical robust risk (\ref{eq.re_rob_risk_emp}) and establish its closed-form expression. This analysis provides a principled framework for balancing the trade-off between robustness and model fitting and also motivates an efficient one-step update technique in solving the robust empirical risk minimization problem.

\subsection{Optimal Solution for Single Data Point}\label{sec.optimal_action}

In this subsection, we determine the optimal value of $\psi(\mathbf{x})$ in (\ref{eq.re_rob_risk_emp}) for a single data point $(\mathbf{x}, \widetilde{\mathbf{y}})$. To simplify the analysis, we first focus on the binary classification problem with $\mathcal{Y}=\{0,1\}$ and consider a broad family of loss functions of the form: 
\begin{align}\label{eq.loss_bianry}
    \ell(\psi(\mathbf{x}),\mathrm{y})=(1-\mathrm{y})\mathcal{T}(1-\psi(\mathbf{x}))+\mathrm{y}\mathcal{T}(\psi(\mathbf{x})), 
\end{align}
where $\psi(\mathbf{x})$ represents the conditional distribution $P(\mathrm{Y}=1|\mathbf{X}=\mathbf{x})$ as described in Section \ref{sec.setup}, $\mathcal{T}:[0,1]\rightarrow \mathcal{I}$ is a bounded, decreasing, and twice differentiable function, and $\mathcal{I}$ is a compact subset of $\mathbb{R}$. 

For any given $\mathbf{x}\in\mathcal{X}$ and $\widetilde{\mathbf{y}}\in\mathcal{Y}^{R}$, let $P_j(\mathbf{x},\widetilde{\mathbf{y}})\triangleq P(\mathrm{Y}=j|\mathbf{X}=\mathbf{x},\widetilde{\mathbf{Y}}=\widetilde{\mathbf{y}})$ for $j=0,1$. With the loss function in (\ref{eq.loss_bianry}) and the metric $c(\cdot,\cdot)$ considered in Theorem \ref{thm.bias}, minimizing (\ref{eq.re_rob_risk_emp}) with respect to $\psi\in\Psi$ becomes: 
\begin{align}\label{eq.binary_problem}
    &\inf_{\psi\in\Psi}\inf_{\gamma\ge 0}\big[\gamma\epsilon^p+P_0(\mathbf{x},\widetilde{\mathbf{y}})\max\{\mathcal{T}(1-\psi(\mathbf{x})), \mathcal{T}(\psi(\mathbf{x}))-\gamma\kappa^p\}\notag\\
    &\hspace{2.7cm}+P_1(\mathbf{x},\widetilde{\mathbf{y}})\max\{\mathcal{T}(1-\psi(\mathbf{x}))-\gamma\kappa^p, \mathcal{T}(\psi(\mathbf{x}))\}\big],
\end{align}
and let $\psi^\star$ denote the solution of (\ref{eq.binary_problem}). For $\epsilon$ and $\kappa$ described in Theorem \ref{thm.bias}, let $\varrho(\epsilon)\triangleq\epsilon^p/\kappa^p$. The following theorem shows that (\ref{eq.binary_problem}) has a closed-form solution, with its form varying based on whether $\mathcal{T}$ is concave or convex.

\begin{theorem}[Optimal Action for Single Data Point: Binary Case]\label{thm.optimal_action}
Let $\mathbf{x}\in\mathcal{X}$ and $\widetilde{\mathbf{y}}\in\mathcal{Y}^R$ be given. Then, for a concave function $\mathcal{T}$, the optimal solution for (\ref{eq.binary_problem}) is given by:
\begin{align*}
    \psi^\star(\mathbf{x})
    =& \left\{
    \begin{aligned}
        &\ j,\ \text{if}\ P_j(\mathbf{x},\widetilde{\mathbf{y}})\ge\varrho(\epsilon)+\varpi_1\ \ \text{for}\ j=0,1;\\
        &\ 1/2, \ \text{otherwise},
    \end{aligned}
    \right.
\end{align*}
with $\varpi_1=\{\mathcal{T}(0)-\mathcal{T}(1/2)\}/\{\mathcal{T}(0)-\mathcal{T}(1)\}\in(0,1/2]$; and for a convex function $\mathcal{T}$, the optimal solution of (\ref{eq.binary_problem}) is given by:
\begin{align*}
    \psi^\star(\mathbf{x})
    =& \left\{
    \begin{aligned}
        &\ j,\ \text{if}\ P_j(\mathbf{x},\widetilde{\mathbf{y}})\ge \varrho(\epsilon)+\varpi_2\ \text{for}\ \ j=0,1,\\
        &\ \mathdutchcal{t}^*_j,\ \text{if}\ \varrho(\epsilon)+1/2< P_j(\mathbf{x},\widetilde{\mathbf{y}})< \varrho(\epsilon)+\varpi_2\ \text{for}\ j=0,1,\\
        &\ 1/2, \ \text{otherwise},
    \end{aligned}\right.
\end{align*}
where $\varpi_2=\{\mathcal{T}'(0)\}/\{\mathcal{T}'(0)+\mathcal{T}'(1)\}\in[1/2, 1)$, $\mathdutchcal{t}^*_0$ is the unique solution of $\{P_0(\mathbf{x},\widetilde{\mathbf{y}})-\varrho(\epsilon)\}\mathcal{T}'(1-\mathdutchcal{t})=\{P_1(\mathbf{x},\widetilde{\mathbf{y}})+\varrho(\epsilon)\}\mathcal{T}'(\mathdutchcal{t})$ for $\mathdutchcal{t}\in(0,\frac{1}{2})$, and $\mathdutchcal{t}^*_1$ is the unique solution of $\{P_0(\mathbf{x},\widetilde{\mathbf{y}})+\varrho(\epsilon)\}\mathcal{T}'(1-\mathdutchcal{t})=\{P_1(\mathbf{x},\widetilde{\mathbf{y}})-\varrho(\epsilon)\}\mathcal{T}'(\mathdutchcal{t})$ for $\mathdutchcal{t}\in(\frac{1}{2}, 1)$.
\end{theorem}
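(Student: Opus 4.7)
The plan is to recast (\ref{eq.binary_problem}) as a two-variable optimization in $(t,g)$ with $t=\psi(\mathbf{x})\in[0,1]$ and $g=\gamma\kappa^p\ge 0$, using the identity $\gamma\epsilon^p=g\varrho(\epsilon)$, and then exploit the symmetry of the inner objective under the involution $(t,P_0,P_1)\mapsto(1-t,P_1,P_0)$ to reduce the analysis to $t\in[0,1/2]$. Writing $P_j$ for $P_j(\mathbf{x},\widetilde{\mathbf{y}})$, on this half-interval $\mathcal{T}(1-t)\le\mathcal{T}(t)$ by monotonicity of $\mathcal{T}$, so the second $\max$ in (\ref{eq.binary_problem}) collapses to $\mathcal{T}(t)$ while the first becomes a hinge with kink at $g^{\star}=\mathcal{T}(t)-\mathcal{T}(1-t)\ge 0$.

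First I would carry out the inner minimization in $g$. The objective is piecewise linear in $g$ with slopes $\varrho(\epsilon)-P_0$ on $[0,g^{\star}]$ and $\varrho(\epsilon)$ on $[g^{\star},\infty)$, so the minimizer is $g=0$ when $P_0\le\varrho(\epsilon)$, giving value $\mathcal{T}(t)$, and $g=g^{\star}$ when $P_0>\varrho(\epsilon)$, giving $\Phi(t):=(P_1+\varrho(\epsilon))\mathcal{T}(t)+(P_0-\varrho(\epsilon))\mathcal{T}(1-t)$. I would then minimize $\Phi$ over $t\in[0,1/2]$. When $P_0\le\varrho(\epsilon)$, $\Phi=\mathcal{T}$ is strictly decreasing and the minimum is $\mathcal{T}(1/2)$ at $t=1/2$. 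When $P_0>\varrho(\epsilon)$, I would split by the curvature of $\mathcal{T}$. For concave $\mathcal{T}$, $\Phi$ is concave (a nonnegative combination of the concave maps $t\mapsto\mathcal{T}(t)$ and $t\mapsto\mathcal{T}(1-t)$), so the minimum sits at an endpoint of $[0,1/2]$; an algebraic comparison shows $\Phi(0)\le\Phi(1/2)=\mathcal{T}(1/2)$ precisely when $P_0\ge\varrho(\epsilon)+\varpi_1$, with $\varpi_1\in(0,1/2]$ following from $\mathcal{T}(1/2)\ge(\mathcal{T}(0)+\mathcal{T}(1))/2$. For convex $\mathcal{T}$, $\Phi$ is convex and twice differentiable; setting $\Phi'(t)=(P_1+\varrho(\epsilon))\mathcal{T}'(t)-(P_0-\varrho(\epsilon))\mathcal{T}'(1-t)=0$ produces an interior critical point in $(0,1/2)$ iff $\Phi'(1/2)>0$ (equivalent to $P_0>\varrho(\epsilon)+1/2$) and $\Phi'(0)<0$ (equivalent to $P_0<\varrho(\epsilon)+\varpi_2$), with $\varpi_2\in[1/2,1)$ coming from the monotonicity of $\mathcal{T}'$. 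Uniqueness of the root $\mathdutchcal{t}^*_0$ follows from the strict monotonicity of $\Phi'$, and the boundary cases $P_0\ge\varrho(\epsilon)+\varpi_2$ and $P_0\le\varrho(\epsilon)+1/2$ place the minimum at $t=0$ and $t=1/2$ respectively.

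Finally, I would mirror the analysis to $t\in[1/2,1]$ via the swap $(P_0,P_1)\leftrightarrow(P_1,P_0)$ together with $t\leftrightarrow 1-t$, producing the analogous thresholds on $P_1$ and the interior minimizer $\mathdutchcal{t}^*_1\in(1/2,1)$. Assembling the two half-interval minima by the active threshold then recovers the piecewise description of $\psi^{\star}(\mathbf{x})$ stated in the theorem. The main obstacle is the case analysis in the convex branch: one must verify that the endpoint-derivative signs produce exactly the constants $1/2+\varrho(\epsilon)$ and $\varpi_2+\varrho(\epsilon)$ (and not some other function of $\mathcal{T}'$), that the first-order equation admits a unique solution in the asserted open interval, and that comparing $\Phi$ across the candidate points $\{0,\mathdutchcal{t}^*_0,1/2,\mathdutchcal{t}^*_1,1\}$ recovers the stated classification rule without missing a regime.
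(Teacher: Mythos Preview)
Your proposal is correct and follows essentially the same approach as the paper: the paper also splits into $\psi\in[0,1/2]$ versus $\psi\in[1/2,1]$, solves the inner minimization in $\gamma$ as a piecewise-linear problem with kink at $\gamma_0=\{\mathcal{T}(\psi)-\mathcal{T}(1-\psi)\}/\kappa^p$, obtains the same reduced objective $(P_1+\varrho(\epsilon))\mathcal{T}(\psi)+(P_0-\varrho(\epsilon))\mathcal{T}(1-\psi)$ when $P_0\ge\varrho(\epsilon)$, and then branches on concave versus convex $\mathcal{T}$ using exactly the endpoint-derivative sign analysis you describe, before handling $\psi\in[1/2,1]$ via the substitution $\overline{\psi}=1-\psi$ with $P_0\leftrightarrow P_1$. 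Your concavity shortcut (minimum of a concave function on a compact interval is at an endpoint) is slightly cleaner than the paper's explicit three-subcase check of the derivative sign, but the logic and thresholds coincide.
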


\begin{remark} \label{remark.LRT_binary}
The optimal solution $\psi^\star(\mathbf{x})$ in Theorem \ref{thm.optimal_action} can also be expressed in a likelihood ratio format, which naturally leads to a novel algorithm for assigning \emph{robust pseudo-labels}. Specifically, when $\mathcal{T}$ is concave, the optimal solution can be expressed as: $\psi^\star(\mathbf{x})=0$ if $P_0(\mathbf{x},\widetilde{\mathbf{y}})/P_1(\mathbf{x},\widetilde{\mathbf{y}})\ge\mathdutchcal{C}_1$; $\psi^\star(\mathbf{x})=1$ if $P_1(\mathbf{x},\widetilde{\mathbf{y}})/P_0(\mathbf{x},\widetilde{\mathbf{y}})\ge\mathdutchcal{C}_1$; and $\psi^\star(\mathbf{x})=1/2$ otherwise, where $\mathdutchcal{C}_1\triangleq (\varrho(\epsilon)+\varpi_1)/\{1-(\varrho(\epsilon)+\varpi_1)\}>1$ serves as a threshold for the likelihood ratio test. Consequently, for a data point $(\mathbf{x},\widetilde{\mathbf{y}})$, if $P_1(\mathbf{x},\widetilde{\mathbf{y}})/P_0(\mathbf{x},\widetilde{\mathbf{y}})\ge\mathdutchcal{C}_1$, we assign a robust pseudo-label $\mathrm{y}^\star=1$; if $P_0(\mathbf{x},\widetilde{\mathbf{y}})/P_1(\mathbf{x},\widetilde{\mathbf{y}})\ge\mathdutchcal{C}_1$, we assign $\mathrm{y}^\star=0$. Leveraging the likelihood ratio format also facilitates extending the robust pseudo-label selection method to the multi-class case by considering pairwise comparisons. Specifically, if $P_{k^\star}(\mathbf{x},\widetilde{\mathbf{y}})/\max_{j\ne k^\star}P_j(\mathbf{x},\widetilde{\mathbf{y}})\ge\mathdutchcal{C}_1$, we assign the pseudo-label $\mathrm{y}^\star=k^\star$ to the instance.
\end{remark}

\begin{remark}
Existing pseudo-labeling methods \cite{khetan2017learning, tanaka2018joint} typically identify the underlying true label as the one with the highest probability in the approximated true label posterior. In contrast, the proposed approach in Remark \ref{remark.LRT_binary} considers both the highest and second-highest predicted probabilities. A pseudo-label is assigned only if the ratio of these probabilities exceeds a specified threshold. This strategy ensures that pseudo-labels are assigned to instances with high confidence, effectively filtering out uncertain data.
\end{remark}

\begin{remark}\label{remark.LRT_theory}
In the special case where $P_j(\mathbf{x},\widetilde{\mathbf{y}})\propto \tau_j(\widetilde{\mathbf{y}};\mathbf{x})P_j(\mathbf{x})$, with $\tau_j(\widetilde{\mathbf{y}};\mathbf{x})=P^*(\widetilde{\mathbf{Y}}=\widetilde{\mathbf{y}}|\mathrm{Y}=j,\mathbf{X}=\mathbf{x})$ denoting the noisy label transition probability and $P_j(\mathbf{x})$ representing a proper prior for $\mathrm{Y}=j$ conditional on $\mathbf{x}$ for $j=0,1$, previous studies have indicated the existence of a Chernoff information-type bound on the probability of error for robust pseudo-label selection, as described in Remark \ref{remark.LRT_binary} \cite{guo2023label, cover1999elements}. Specifically, for a fixed instance $\mathbf{x}$, let a pseudo-label $\mathrm{Y}^\star$ be generated as described in Remark \ref{remark.LRT_binary}, which depends on $\mathbf{x}$ and the corresponding noisy label vector $\widetilde{\mathbf{Y}}$. Consider the Bayes error, defined as $\Re_{\text{Bayes}}\triangleq \sum_{j=0,1}P_j(\mathbf{x})P^*(\mathrm{Y}^\star\ne j|\mathrm{Y}=j,\mathbf{x})$. According to Section 11.9 of \cite{cover1999elements}, $\Re_{\text{Bayes}}\le \exp\{-C(\tau_0(\cdot;\mathbf{x}), \tau_1(\cdot;\mathbf{x}))\}$, where $C(\cdot, \cdot)$ represents the Chernoff information between two distributions.



\end{remark}

\begin{remark}
In practice, one can use either uninformative priors, such as a uniform prior for each class, or informative priors derived from pre-trained or concurrently trained models for $P_j(\mathbf{x})$ as discussed in Remark \ref{remark.LRT_theory} \cite{guo2023label, shwartz2022pre}. Moreover, the estimation of $P_j(\mathbf{x},\widetilde{\mathbf{y}})$ is not limited to Bayes's rule. For example, \cite{cao2019max} proposed aggregating data and noisy label information by maximizing the $f$-mutual information gain.
\end{remark}

\begin{remark}
Theorem \ref{thm.optimal_action} is developed based on the assumption that the function $\mathcal{T}$ is convex or concave. In our experiments, we use the cross-entropy loss for $\ell$, meaning $\mathcal{T}(\mathdutchcal{t})=-\log\mathdutchcal{t}$ for $\mathdutchcal{t}>0$. To meet the required conditions, we clip its input to $[0.01, 1 - 0.01]$ to ensure $\mathcal{T}(\cdot)$ remains bounded.
\end{remark}

Next, we extend the preceding development for binary classification to multi-class scenarios with $K>2$. Letting $\mathcal{T}(\cdot)$ in (\ref{eq.loss_bianry}) be specified as $\mathcal{T}(t)=1-t$, we extend loss function form (\ref{eq.loss_bianry}) to facilitate the worst-case misclassification probability in multi-class scenarios: $\ell(\psi(\mathbf{x}),\mathrm{y})=\sum_{j=1}^{K}\mathbf{1}(\mathrm{y}=j)\{1-\psi(\mathbf{x})_j\}$. For ease of presentation, we sometimes omit the dependence on $\mathbf{x}$ and $\widetilde{\mathbf{y}}$ in the notation. Specifically, for $j\in[K]$, we let $P_j\triangleq P_j(\mathbf{x}, \widetilde{\mathbf{y}})\triangleq P(\mathrm{Y}=j|\mathbf{x}, \widetilde{\mathbf{y}})$ and $\psi_j\triangleq\psi(\mathbf{x})_j$. In a manner similar to deriving (\ref{eq.binary_problem}), given $\mathbf{x}$, minimizing (\ref{eq.re_rob_risk_emp}) with respect to $\psi(\mathbf{x})$ can be expressed as:
\begin{align}\label{eq.multi_problem}
    \inf_{\psi\in\Psi}\inf_{\gamma\ge 0}\Big[\gamma\epsilon^p+\sum_{j=1}^{K}P_j&\max\{1-\psi_1-\gamma\kappa^p,\ldots,1-\psi_{j-1}-\gamma\kappa^p,\notag\\
    &1-\psi_j,1-\psi_{j+1}-\gamma\kappa^p,\ldots,1-\psi_K-\gamma\kappa^p\}\Big].
\end{align}

\begin{theorem}[Optimal Action for Single Data Point: Multi-class Case]\label{thm.optimal_action_multi}
Let $\{P_1,\ldots,P_K\}$ be arranged in decreasing order, denoted $P^{(1)}\ge\ldots\ge P^{(K)}$, with the associated indexes denoted $\chi(1),\ldots,\chi(K)$. Let $\psi^\star$ denote the solution of the outer optimization problem in (\ref{eq.multi_problem}). {For $j\in[K]$, let $\psi^{\star(j)}$ denote the $\chi(j)$-th component of $\psi(\mathbf{x})$ corresponding to $P^{(j)}$. Then, the elements of $\psi^\star$ are given as follows:}
\begin{itemize}
    \item[(a).] If $\frac{1}{K}\ge \frac{1}{k}\sum_{j=1}^{k}P^{(j)}-\frac{1}{k}\varrho(\epsilon)$ for all ${k}\in[K-1]$, then ${\psi^{\star(j)}}=\frac{1}{K}$ for all $j\in[K]$.
    \item[(b).] If there exists some $k_0\in[K-1]$ such that $\frac{1}{k_0}\sum_{j=1}^{k_0}P^{(j)}-\frac{1}{k_0}\varrho(\epsilon)>\frac{1}{K}$, and $\frac{1}{k_0}\sum_{j=1}^{k_0}P^{(j)}-\frac{1}{k_0}\varrho(\epsilon)\ge\frac{1}{k}\sum_{j=1}^{k}P^{(j)}-\frac{1}{k}\varrho(\epsilon)$ for all ${k}\in[K-1]$, then $\psi^{\star(j)}=\frac{1}{k_0}$ for $j\in[k_0]$ and $\psi^{\star(j)}=0$ for $j=k_0+1,\ldots,K$.  
\end{itemize}
\end{theorem}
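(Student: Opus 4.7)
The plan is to convert the minimization in (\ref{eq.multi_problem}) into a tractable maximization and exploit its piecewise-linear structure. First, setting $\mu\triangleq\gamma\kappa^p$ and $a\triangleq\min_l\psi_l$, I observe that $\max_l\{1-\psi_l-\mu\mathbf{1}(l\ne j)\}=1-\min\{\psi_j,\mu+a\}$ for every $j\in[K]$: when $j$ is not the unique minimizer of $\psi$, $\min_{l\ne j}\psi_l=a$; when $\psi_j=a$ is the unique minimizer, one checks $\min\{\psi_j,\mu+\min_{l\ne j}\psi_l\}=a=\min\{\psi_j,\mu+a\}$. Hence the objective in (\ref{eq.multi_problem}) equals $1-G(\psi,\mu)$ for $G(\psi,\mu)\triangleq-\mu\varrho(\epsilon)+\sum_{j=1}^K P_j\min\{\psi_j,\mu+a\}$, and the problem reduces to $\max_{\psi\in\Delta^{K-1},\,\mu\ge 0}G(\psi,\mu)$.

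Next, a rearrangement argument establishes that, without loss of generality, $\psi_{\chi(1)}\ge\cdots\ge\psi_{\chi(K)}$: swapping $\psi_i$ and $\psi_l$ whenever $P_i>P_l$ but $\psi_i<\psi_l$ preserves $a$ and weakly increases $G$ because $\min\{\cdot,\mu+a\}$ is non-decreasing. Under this ordering, substituting $z_j\triangleq\psi_j-a\ge 0$ gives $G=-\mu\varrho(\epsilon)+a+\sum_j P_j\min\{z_j,\mu\}$ subject to $\sum_j z_j=1-Ka$ and $\min_j z_j=0$. For fixed $(a,\mu)$ with $\mu>0$, maximizing $\sum_j P_j\min\{z_j,\mu\}$ is a water-filling linear program: set $z_{\chi(j)}=\mu$ for $j\in[m]$, where $m$ is the largest integer with $m\mu\le 1-Ka$; place the residual $r\triangleq 1-Ka-m\mu\in[0,\mu)$ at $z_{\chi(m+1)}$; and set the remaining entries to $0$.

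Within each region in which $m$ is constant, $G$ is affine in $(a,\mu)$ and hence attains its maximum on the boundary $r=0$. This singles out the two-level \emph{step structure} $\psi_{\chi(j)}=\mu+a$ for $j\in[m]$ and $\psi_{\chi(j)}=a$ for $j>m$, with $m\mu+Ka=1$ coupling the parameters. Writing $S_m\triangleq\sum_{j=1}^m P^{(j)}$ and substituting, the objective simplifies to
\[
G=\frac{1}{K}+\mu\Big(S_m-\varrho(\epsilon)-\frac{m}{K}\Big),
\]
a linear function of $\mu$ on $[0,1/m]$. Hence $G_m^\star=(S_m-\varrho(\epsilon))/m$ (attained at $\mu=1/m$, $a=0$) when $(S_m-\varrho(\epsilon))/m>1/K$, and $G_m^\star=1/K$ (attained at $\mu=0$, $a=1/K$) otherwise.

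Finally, enumerating $m\in[K]$ and noting $(S_K-\varrho(\epsilon))/K=(1-\varrho(\epsilon))/K\le 1/K$, the overall optimum is $G^\star=\max\bigl\{1/K,\,\max_{m\in[K-1]}(S_m-\varrho(\epsilon))/m\bigr\}$. Under the condition in case (a), $G^\star=1/K$ with optimizer $\mu=0$ and $\psi=(1/K)\mathbf{1}$, giving $\psi^{\star(j)}=1/K$. Under case (b), $G^\star=(S_{k_0}-\varrho(\epsilon))/k_0$ with optimizer $\mu=1/k_0$ and $\psi_{\chi(j)}=1/k_0$ for $j\in[k_0]$ and $0$ otherwise, matching the stated form. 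The key technical step—ruling out the fractional residual $r>0$ in the reduction to the step structure—is the main obstacle and is handled by the boundary/linearity argument above; the remainder is direct enumeration and algebra.
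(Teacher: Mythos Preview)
Your approach is correct and takes a genuinely different route from the paper. The paper first solves the inner minimization in $\gamma$ for fixed $\psi$, then recasts the outer problem as a linear program in $z_j=1-\psi^{(j)}$ over the ordered simplex, and characterizes the extreme points of that feasible polytope \emph{by hand}---explicitly exhibiting the $K$ step vectors $\mathbf{z}_k=(1-\tfrac{1}{k},\ldots,1-\tfrac{1}{k},1,\ldots,1)^\top$ and proving, via an explicit perturbation construction, that no other feasible point is extreme. Your identity $\max_l\{1-\psi_l-\mu\mathbf{1}(l\ne j)\}=1-\min\{\psi_j,\mu+a\}$ is a clean simplification that the paper never isolates; it exposes the separable structure and lets a rearrangement plus water-filling argument replace the LP vertex enumeration entirely. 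The paper's approach buys a self-contained identification of the candidate optima without analyzing the $(a,\mu)$ parameter space; yours buys a shorter, more constructive path to the same endpoints.

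Two small points deserve tightening. First, your water-filling description (``residual at $z_{\chi(m+1)}$, remaining entries to $0$'') tacitly assumes $m\le K-2$, so that some slot is forced to $0$ and the constraint $\min_j z_j=0$ is met; when the budget $1-Ka$ exceeds $(K-1)\mu$ the constraint becomes binding and the constrained optimum is $\mu(1-P^{(K)})$ rather than your formula. This region is easily seen to be dominated by the step-$(K-1)$ boundary or by the uniform point, so the conclusion is unaffected, but it should be acknowledged. Second, ``attains its maximum on the boundary $r=0$'' is slightly loose: an affine function on a polygonal $m$-region maximizes at a vertex, which could equally lie on $r=\mu$ or on $a=0$. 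The argument still goes through because those boundaries either coincide with the adjacent region's step-structure line or meet it at the vertices $(a,\mu)=(0,1/m)$ and $(1/K,0)$, but you should say so rather than singling out $r=0$.
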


\begin{remark}
The robust pseudo-labeling method described in Remark \ref{remark.LRT_binary} can also be extended from Theorem \ref{thm.optimal_action_multi}. Specifically, by Theorem \ref{thm.optimal_action_multi}, if $P^{(1)}\ge\max\{\frac{1}{K}+\varrho(\epsilon),P^{(2)}+\varrho(\epsilon)\}$, then the optimal solution is: $\psi^{\star(1)}=1$ and $\psi^{\star(j)}=0$ for $j=2,\ldots, K$, which can also be expressed in a likelihood ratio format and applied to assign robust pseudo-labels. 
\end{remark}

\subsection{Closed-Form Robust Risk }\label{sec.close_em_risk}

We investigate the empirical robust risk (\ref{eq.re_rob_risk_emp}) by examining its closed form expression. For $i\in[K]$ and  $j\in[K]$, we let $P_{i,j}\triangleq P_j(\mathbf{x}_i, \widetilde{\mathbf{y}}_i)\triangleq {P(\mathrm{Y}=j|\mathbf{X}=\mathbf{x}_i, \widetilde{\mathbf{Y}}=\widetilde{\mathbf{y}}_i)}$ and $\psi_{i,j}\triangleq\psi(\mathbf{x}_i)_j$. For simplicity, we denote the Wasserstein robust loss in (\ref{eq.re_rob_risk_emp}) and the nominal loss respectively as:
\begin{align}
    \widehat{\mathfrak{R}}_\epsilon&=\inf_{\gamma\ge 0}\Big[\gamma\epsilon^p+\frac{1}{n}\sum_{i=1}^{n}\sum_{j=1}^{K}P_{i,j}\max\big\{\mathcal{T}(\psi_{i,1})-\gamma\kappa^p,\ldots,\mathcal{T}(\psi_{i,j-1})-\gamma\kappa^p\notag\\
    &\hspace{1.5cm}\mathcal{T}(\psi_{i,j}),\mathcal{T}(\psi_{i,j+1})-\gamma\kappa^p,\ldots,
    \mathcal{T}(\psi_{i,K})-\gamma\kappa^p\big\}\Big]\ \text{for}\ \epsilon>0;\label{eq.w_robust_loss}\\
    \widehat{\mathfrak{R}}&=\frac{1}{n}\sum_{i=1}^{n}\sum_{j=1}^{K}P_{i,j}\mathcal{T}(\psi_{i,j}).\label{eq.nominal_loss}
\end{align}
For given $\mathbf{x}_i$, we sort $\{\psi_{i,1},\ldots,\psi_{i,K}\}$ in decreasing order, denoted as $\psi^{(1)}_i\ge\ldots\ge\psi^{(K)}_i$. Let $\alpha_{i,j}\triangleq\mathcal{T}(\psi_{i}^{(K)})-\mathcal{T}(\psi_{i,j})$ for $i\in[n]$ and $j\in[K]$, and sort $\{\alpha_{i,j}:i\in[n],j\in[K]\}$ in decreasing order, denoted as $\alpha^{(1)}\ge\ldots\ge\alpha^{(nK)}$. Correspondingly, the $P_{i,j}$ values with the associated indexes are denoted as $P^{(1)},\ldots,P^{(nK)}$. For any $\varrho(\epsilon)$, define an associated positive integer $s^*\in[nK+1]$ as follows: if $\frac{1}{n}P^{(1)}<\varrho(\epsilon)<\frac{1}{n}\sum_{t=1}^{nK}P^{(t)}$, then there exists $s^*\in\{2,\ldots,nK\}$ such that $\frac{1}{n}\sum_{t=1}^{s}P^{(t)}< \varrho(\epsilon)$ for $s< s^*$, and $\frac{1}{n}\sum_{t=1}^{s}P^{(t)}\ge \varrho(\epsilon)$ for $s\ge s^*$; if $\varrho(\epsilon)\le\frac{1}{n}P^{(1)}$, then $s^*$ is set as 1; if $\varrho(\epsilon)\ge\frac{1}{n}\sum_{t=1}^{nK}P^{(t)}$, then $s^*$ is set as $nK+1$.

Let $\gamma^\star_{\psi}$ denote the optimal value of $\gamma$ in $\widehat{\mathfrak{R}}_\epsilon$ in (\ref{eq.w_robust_loss}), where its dependence on $\epsilon$ is implicit, but its dependence on $\psi$ is explicit. The following theorem presents this value, based on which we demonstrate that the Wasserstein robust loss $\widehat{\mathfrak{R}}_\epsilon$ can be expressed as the nominal loss plus an additional term $\frac{1}{n}\sum_{t=1}^{s^*-1}P^{(t)}\alpha^{(t)}\mathbf{1}(s^*>1)$ that prevents the classifier from becoming {overly} certain on the data. 


\begin{theorem}[Closed-Form Robust Risk]\label{thm.robust_risk_form}
The optimal value of $\gamma$ in (\ref{eq.w_robust_loss}) is given by $\gamma^{\star}_{\psi}\triangleq\alpha^{(s^*)}/\kappa^p$, and the resulting robust risk is expressed as 
\begin{align*}
    \widehat{\mathfrak{R}}_\epsilon=\widehat{\mathfrak{R}}+\frac{1}{n}\sum_{t=1}^{s^*-1}P^{(t)}\alpha^{(t)}\mathbf{1}(s^*>1)+O\left(\frac{1}{n}\right)\alpha^{(s^*)}.
\end{align*}
\end{theorem}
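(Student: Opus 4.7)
The plan is to reduce the inner optimization in (\ref{eq.w_robust_loss}) to a one-dimensional piecewise-linear convex program whose optimum can be read off from the sorted sequence $\alpha^{(1)}\ge\ldots\ge\alpha^{(nK)}$. First I would simplify the inner maximum appearing in (\ref{eq.w_robust_loss}). Fix $i$ and $j$. Because $\mathcal{T}$ is decreasing and $\psi_i^{(K)}=\min_k\psi_{i,k}$, one has $\max_{k\ne j}\mathcal{T}(\psi_{i,k})=\mathcal{T}(\psi_i^{(K)})$ whenever $\psi_{i,j}\ne\psi_i^{(K)}$; in the edge case $\psi_{i,j}=\psi_i^{(K)}$ the penalized alternative has value $\mathcal{T}(\psi_i^{(K-1)})\le\mathcal{T}(\psi_{i,j})$ and never binds. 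A short case split then gives, for every $\gamma\ge 0$,
\begin{align*}
    \max\big\{\mathcal{T}(\psi_{i,1})-\gamma\kappa^p,\ldots,\mathcal{T}(\psi_{i,j}),\ldots,\mathcal{T}(\psi_{i,K})-\gamma\kappa^p\big\}=\mathcal{T}(\psi_{i,j})+\max\{0,\alpha_{i,j}-\gamma\kappa^p\},
\end{align*}
which covers both situations because $\alpha_{i,j}=0$ when $\psi_{i,j}=\psi_i^{(K)}$. Inserting this back and peeling off $\widehat{\mathfrak{R}}$ reduces (\ref{eq.w_robust_loss}) to
\begin{align*}
    \widehat{\mathfrak{R}}_\epsilon=\widehat{\mathfrak{R}}+\inf_{\gamma\ge 0}\Big\{\gamma\epsilon^p+\frac{1}{n}\sum_{i=1}^{n}\sum_{j=1}^{K}P_{i,j}\max\{0,\alpha_{i,j}-\gamma\kappa^p\}\Big\}.
\end{align*}

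Next, I would substitute $\beta=\gamma\kappa^p$, so that $\gamma\epsilon^p=\beta\varrho(\epsilon)$, and re-index using the sorted order to obtain the scalar program $\min_{\beta\ge 0}g(\beta)$ with $g(\beta)\triangleq\beta\varrho(\epsilon)+\frac{1}{n}\sum_{t=1}^{nK}P^{(t)}\max\{0,\alpha^{(t)}-\beta\}$. The function $g$ is nonnegative, convex, and piecewise linear, with kinks at $\beta=\alpha^{(s)}$ and right/left derivatives
\begin{align*}
    g'(\alpha^{(s)+})=\varrho(\epsilon)-\frac{1}{n}\sum_{t=1}^{s-1}P^{(t)},\qquad g'(\alpha^{(s)-})=\varrho(\epsilon)-\frac{1}{n}\sum_{t=1}^{s}P^{(t)}.
\end{align*}
Convexity forces the minimum at the kink $\beta^\star=\alpha^{(s^*)}$ at which the left derivative is nonpositive and the right derivative is nonnegative, i.e., $\frac{1}{n}\sum_{t=1}^{s^*-1}P^{(t)}\le\varrho(\epsilon)\le\frac{1}{n}\sum_{t=1}^{s^*}P^{(t)}$, which is exactly the definition of $s^*$. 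The two boundary cases $s^*=1$ (where $g$ is nondecreasing on $[\alpha^{(1)},\infty)$) and $s^*=nK+1$ (where $g$ is nondecreasing everywhere and $\beta^\star=0$, with the convention $\alpha^{(nK+1)}=0$) follow by inspection. This yields $\gamma^\star_\psi=\alpha^{(s^*)}/\kappa^p$.

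The closed-form expression then emerges by evaluating $g$ at $\beta^\star$. Since $\alpha^{(t)}>\alpha^{(s^*)}$ for $t<s^*$ and $\alpha^{(t)}\le\alpha^{(s^*)}$ for $t\ge s^*$, the positive part keeps only the first $s^*-1$ summands, giving
\begin{align*}
    g(\beta^\star)=\frac{1}{n}\sum_{t=1}^{s^*-1}P^{(t)}\alpha^{(t)}+\alpha^{(s^*)}\Big(\varrho(\epsilon)-\frac{1}{n}\sum_{t=1}^{s^*-1}P^{(t)}\Big).
\end{align*}
The bracketed factor is nonnegative by the lower bound in the definition of $s^*$, and bounded above by $\frac{1}{n}P^{(s^*)}\le \frac{1}{n}$ by the upper bound; hence it is $O(1/n)$, producing the residual $O(1/n)\alpha^{(s^*)}$. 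Adding $\widehat{\mathfrak{R}}$ and using $\mathbf{1}(s^*>1)$ to absorb the empty sum when $s^*=1$ yields the claimed identity.

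The main obstacle is the first step: establishing the compact identity for the inner maximum with the understanding that $\alpha_{i,j}=0$ at the arg-min index, which requires a careful but short case analysis depending on whether $j$ itself indexes $\psi_i^{(K)}$. Once this reduction is in place, the remainder is standard piecewise-linear convex optimization plus a direct substitution, and the $O(1/n)$ bound on the residual follows from the interlacing inequalities that define $s^*$.
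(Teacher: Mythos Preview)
Your proposal is correct and follows essentially the same route as the paper's proof: both subtract the nominal loss to reduce (\ref{eq.w_robust_loss}) to $\inf_{\gamma\ge 0}\big\{\gamma\epsilon^p+\frac{1}{n}\sum_{i,j}P_{i,j}\max\{0,\alpha_{i,j}-\gamma\kappa^p\}\big\}$, then analyze this one-dimensional piecewise-linear function over the sorted $\alpha^{(t)}$'s and evaluate at the optimal kink. The only stylistic difference is that the paper argues via explicit piecewise monotonicity (verifying continuity at each knot and splitting into three cases on $\varrho(\epsilon)$), whereas you invoke convexity and read off the optimum from the sign change in the one-sided derivatives; the latter is slightly more compact but yields the same optimizer $\gamma^\star_\psi=\alpha^{(s^*)}/\kappa^p$ and the same residual bound $0\le \varrho(\epsilon)-\tfrac{1}{n}\sum_{t=1}^{s^*-1}P^{(t)}\le \tfrac{1}{n}P^{(s^*)}$.
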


\begin{remark}
    Theorem \ref{thm.robust_risk_form} shows that minimizing the Wasserstein robust loss $\widehat{\mathfrak{R}}_\epsilon$ in (\ref{eq.w_robust_loss}) effectively minimizes the nominal loss $\widehat{\mathfrak{R}}$ in (\ref{eq.nominal_loss}) while simultaneously penalizing terms associated with $|\alpha_i|$ values {exceeding} a certain threshold $|\alpha^{s^*}|$, weighted by the corresponding reference probability values. This minimization prevents the classifier from becoming overly confident in certain data points, particularly when there are potential misspecifications in the approximated true label posterior. 
\end{remark}

\begin{remark}
    As suggested by Remark \ref{remark.lagrangian}, Theorem \ref{thm.robust_risk_form} provides a guideline for balancing robustness and model fitting by deriving the optimal value for $\gamma$ in (\ref{eq.w_robust_loss}). In Section \ref{sec.algorithm}, we develop a one-step update method for determining $\gamma^\star_\psi$.
\end{remark}

\subsection{Training using Conditional Distributionally Robust True Label Posterior}\label{sec.algorithm}

In this subsection, we outline the steps for approximating the true label posterior, constructing the pseudo-empirical distribution as the reference distribution for solving the robust risk minimization problem (\ref{eq.re_rob_risk_emp}), and subsequently training classifiers robustly. The pseudo code for the training process is provided in Algorithm \ref{algorithm_AdatpGamma} in Appendix \ref{appdx.exp_setup}. Here we elaborate on the details.

\textbf{Approximating noise transitions probabilities.}  We begin by warming up the classifiers on the noisy training data, denoted $\check{\mathcal{D}}=\{\mathbf{x}_i,\check{\mathrm{y}}_i\}_{i=1}^{n}$, where $\check{\mathrm{y}}_i$ represents the majority vote label for instance $\mathbf{x}_i$, determined by the label that receives the highest number of votes from the annotators. After warming up the classifiers for 20-30 epochs, {we sort the dataset by the cross-entropy loss values and collect a subset of size $m$ with the smallest $m$ losses}, denoted as $\mathcal{D}^\star_0=\{\mathbf{x}_i,\check{\mathrm{y}}_i\}_{i=1}^{m}$, where $m\ll n$, {and the ratio of $m$ to $n$ is set to 1 minus the estimated noise rate}. Next, we estimate the noise transition probabilities by $\widehat{\tau}_j(\widetilde{\mathbf{y}})=\sum_{i=1}^{m}\mathbf{1}(\widetilde{\mathbf{y}}=\widetilde{\mathbf{y}}_i, \check{\mathrm{y}}_i=j)/\sum_{i=1}^{m}\mathbf{1}(\check{\mathrm{y}}_i=j)$ for $\widetilde{\mathbf{y}}\in[K]^R$ and $j\in[K]$ (Line 1 of Algorithm \ref{algorithm_AdatpGamma}). With $\widehat{\tau}_j(\widetilde{\mathbf{y}})$, we then iteratively update the approximated true label posterior, construct the pseudo-empirical distribution, and robustly train the classifiers (Lines 2-13 of Algorithm \ref{algorithm_AdatpGamma}). Here, we employ the straightforward frequency-counting method for noise transition estimation for simplicity. However, our approach is versatile and can be integrated with various methods for estimating the noise transition matrices or the true label posterior. Additional experimental results using more advanced transition matrix estimation methods are provided in Appendix \ref{appdx.experiment}.

\textbf{Constructing a pseudo-empirical distribution.} We train two classifiers, $\psi^{(1)}$ and $\psi^{(2)}$, in parallel each serving as an informative prior for the other. In the $t$th epoch, the approximated true label posterior with prior $\psi^{(\iota)}$ is updated as $\widehat{P}^{(\iota)}_j(\mathbf{x},\widetilde{\mathbf{y}})\triangleq\widehat{P}^{(\iota)}(\mathrm{Y}=j|\widetilde{\mathbf{Y}}=\widetilde{\mathbf{y}},\mathbf{X}=\mathbf{x})\propto \psi^{(\iota)}_j(\mathbf{x})\cdot \widehat{\tau}_j(\widetilde{\mathbf{y}})$ (Line 4 of Algorithm \ref{algorithm_AdatpGamma}), where $\psi^{(\iota)}_j(\mathbf{x})$ denotes the $j$th element of the vector-valued function $\psi^{(\iota)}(\mathbf{x})$ for $j\in[K]$ and $\iota=1,2$. As described in Remark \ref{remark.LRT_binary}, for $i\in[n]$, if $\widehat{P}^{(\iota)}_{k^\star}(\mathbf{x}_i,\widetilde{\mathbf{y}}_i)/\max_{j\ne k^\star}\widehat{P}^{(\iota)}_j(\mathbf{x},\widetilde{\mathbf{y}})\ge\mathdutchcal{C}$ for a pre-specified threshold $\mathdutchcal{C}>1$, we assign the robust pseudo-label $\mathrm{y}^\star_i=k^\star$ to the instance and collect it into $\mathcal{D}^{\star}_{t,\iota}$ (Lines 5-7 of Algorithm \ref{algorithm_AdatpGamma}). The pseudo-empirical distribution $P^\star_{t,\iota}$ is updated based on $\mathcal{D}^{\star}_{t,\iota}$ (Line 8 of Algorithm \ref{algorithm_AdatpGamma}).

\textbf{Robustly training the classifiers.} For $\iota=1,2$, let $\backslash\iota\triangleq 1$ if $\iota=2$; and $\backslash\iota\triangleq 2$ if $\iota=1$. With the updated pseudo-empirical distribution, the classifier $\psi^{(\iota)}$ is then trained by minimizing the empirical robust risk (\ref{eq.re_rob_risk_emp}) with the reference distribution $P^\star_{t,\backslash\iota}$ (Line 9 of Algorithm \ref{algorithm_AdatpGamma}). After updating the classifier $\psi^{(\iota)}$ with $\gamma^{(\iota)}_{t-1}$ from the previous iteration, we take one step to update the $\gamma$ value $\gamma^{(\iota)}_{t}$. In particular, as suggested by Theorem \ref{thm.robust_risk_form}, we use $\gamma_{0,t}=|\alpha^{(s^*)}|/\kappa^p$ as a reference value for $\gamma$ (Lines 11-12 f Algorithm \ref{algorithm_AdatpGamma}). We then update $\gamma^{(\iota)}_{t}$ by minimizing $\left[\gamma\{\epsilon^p -\mathbb{E}_{P^\star_{t,\backslash\iota}}c^p(y',\mathrm{Y})\}+\frac{\lambda}{2}(\gamma-\gamma_0)^2\right]$ (Line 13 of Algorithm \ref{algorithm_AdatpGamma}) with respect to $\gamma$, where $y'$ is determined by (\ref{eq.re_rob_risk_emp}) after updating $\psi^{(\iota)}$, and $\lambda>0$ is a positive constant that determines the learning rate of $\gamma$.

\section{Experimental Results}\label{sec.experiment}

\textbf{Datasets and model architectures.} We evaluate the performance of the proposed AdaptCDRP on two datasets, CIFAR-10 and CIFAR-100 \cite{krizhevsky2009learning}, by generating synthetic noisy labels (details provided below), as well as four datasets, CIFAR-10N \cite{wei2022learning}, CIFAR-100N \cite{wei2022learning}, LabelMe \cite{rodrigues2017learning, torralba2010labelme}, and Animal-10N \cite{song2019selfie}, which contain human annotations. For all datasets except LabelMe, we set aside 10\% of the original data, together with the corresponding synthetic or human annotated noisy labels, to validate the model selection procedure. We use the ResNet-18 architecture \cite{he2016deep} for CIFAR-10 and CIFAR-10N, and the ResNet-34 architecture \cite{he2016deep} for CIFAR-100 and CIFAR-100N. Following \cite{rodrigues2018deep}, we employ a pretrained VGG-16 model with a 50\% dropout rate for the LabelMe dataset. In line with \cite{song2019selfie}, the VGG19-BN architecture \cite{simonyan2014very} is used for the Animal-10N dataset. {Further details on the datasets and experimental setup are provided in Appendix \ref{appdx.exp_setup}.} 

\textbf{Noise generation.} We generate synthetic annotations on the CIFAR-10 and CIFAR-100 datasets using Algorithm 2 from \cite{xia2020part}. Three groups of annotators, labeled as IDN-LOW, IDN-MID, and IDN-HIGH, are considered, with average labeling error rates of approximately 20\%, 35\%, and 50\%, respectively, representing low, intermediate, and high error rates. Each group consists of $R=5$ annotators. To assess the algorithms in an incomplete labeling setting, we randomly select only one annotation per instance from the $R$ annotators for the training dataset rather than using all available annotations \cite{guo2023label}. {Further details on noise generation are provided in Appendix \ref{appdx.exp_setup}.}


\textbf{Comparison with SOTA methods.} We compare our method with a comprehensive set of state-of-the-art approaches, including: (1) CE (Clean) with clean labels; (2) CE (MV) with majority vote labels; (3) CE (EM) \cite{dawid1979maximum}; (4) Co-teaching \cite{han2018co}; (5) Co-teaching+ \cite{yu2019does}; (6) CoDis \cite{xia2023combating}; (7) LogitClip \cite{wei2023mitigating}; (8) DoctorNet \cite{guan2018said}; (9) MBEM \cite{khetan2017learning}; (10) CrowdLayer \cite{rodrigues2018deep}; (11) TraceReg \cite{tanno2019learning}; (12) Max-MIG \cite{cao2019max}; (13) CoNAL \cite{chu2021learning}; and (14) CCC \cite{zhang2024coupled}. We report the average test accuracy over five repeated experiments, each with a different random seed, on synthetic datasets, CIFAR-10 and CIFAR-100, with instance-dependent label noise introduced at low, intermediate, and high error rates. Standard errors are shown following the plus/minus sign ($\pm$), and the two highest accuraries are highlighted in bold. Table \ref{table_real} presents evaluation results on four real-world datasets. As shown, our AdaptCDRP consistently outperforms competing methods across all scenarios. To further explore the impact of annotation sparsity, we conduct additional experiments with the number of annotators ranging from 5 to 100, with each instance labeled only once. Figure \ref{Fig.ACC_cifar10} illustrates the average accuracy across different numbers of annotators on CIFAR-10, highlighting the advantages of the proposed method under diverse settings. The results for the CIFAR-100 dataset are shown in Figure \ref{Fig.ACC_cifar100} in Appendix \ref{appdx.exp_result}.
 
\textbf{Hyper-parameter analysis.} We investigate the impact of the hyperparameter $\epsilon$ in the empirical robust risk (\ref{eq.re_rob_risk_emp}). Under our experiment setup, $\epsilon$ should be chosen within $(0,1/K)$ for a $K$-class classification problem with $K\ge 2$ as demonstrated in the proof of Theorem \ref{thm.robust_risk_form} in Appendix \ref{appdx.pf_thm_robust_risk_form}. Hence, we take $\epsilon\in(0, 0.1)$ for CIFAR-10 and $\epsilon\in(0, 0.01)$ for CIFAR-100, with the results presented in Figure \ref{figure_gamma}. The results suggest that setting $\epsilon$ near zero leads to relatively low test accuracies, highlighting the importance of CDRO under model specification when handling noisy labels. Furthermore, continually increasing $\epsilon$ eventually results in a drop in accuracy due to excessive noise injection into the data.

\textbf{Additional experimental results.} To further evaluate the performance of the proposed method across various scenarios, we conducted additional experiments, detailed in Appendix \ref{appdx.exp_result}. Specifically, we compare different annotation aggregation methods, present average test accuracies and robust pseudo-label accuracies during training, assess sensitivity to the number of warm-up epochs, explore different noise transition estimation methods, and examine the impact of sparse annotation.

\begin{table}\footnotesize\fontsize{8}{0.0}
  \renewcommand\arraystretch{0.83}
  \setlength\tabcolsep{2.0pt}
  \caption{Average accuracies (with associated standard errors expressed after the $\pm$ signs) for learning the CIFAR-10 and CIFAR-100 datasets ($R=5$).}
  \label{table_cifar}
  \centering
  \begin{tabular}{ccccccc}
    \toprule 
    \multirow{2}{*}{Method} & \multicolumn{3}{c}{CIFAR-10} & \multicolumn{3}{c}{CIFAR-100} \\
    \cmidrule(lr){2-4}\cmidrule(lr){5-7}
    & IDN-LOW & IDN-MID & IDN-HIGH  & IDN-LOW & IDN-MID & IDN-HIGH  \\
    \midrule
    CE (Clean) & \multicolumn{3}{c}{$88.60_{\pm 0.79}$} & \multicolumn{3}{c}{$58.75_{\pm 0.55}$} \\
    \cmidrule(lr){1-7}
    CE (MV) & $80.90_{\pm 0.88}$ & $76.05_{\pm 0.70}$ & $69.65_{\pm 1.73}$ & $50.96_{\pm 0.49}$ & $44.80_{\pm0.99}$ & $38.51_{\pm0.66}$ \\
    CE (EM) \cite{dawid1979maximum} & $81.15_{\pm 0.74}$ & $75.84_{\pm 0.97}$ & $69.85_{\pm 1.43}$ & $51.29_{\pm 1.00}$ & $45.24_{\pm 0.41}$ & $38.01_{\pm 0.90}$ \\
    {Co-teaching} \cite{han2018co} & $83.08_{\pm 0.52}$ & $80.58_{\pm 0.36}$ & $\mathbf{81.30}_{\pm 0.82}$ & $53.10_{\pm 0.98}$ & $47.25_{\pm 0.82}$ & ${44.11}_{\pm 0.31}$ \\
    {Co-teaching+} \cite{yu2019does} & $81.17_{\pm 0.55}$ & $78.23_{\pm 0.43}$ & ${71.84}_{\pm 1.13}$ & $53.10_{\pm 0.64}$ & $47.92_{\pm 0.76}$ & $41.33_{\pm 0.81}$ \\
    {CoDis \cite{xia2023combating}}& $85.33_{\pm 0.39}$ & $\mathbf{82.02}_{\pm 0.41}$ & ${78.67}_{\pm 0.46}$ & $\mathbf{58.66}_{\pm 0.44}$ &$ 52.27_{\pm 0.64}$ & $46.12_{\pm 0.66}$\\
    {LogitClip \cite{wei2023mitigating}} & $\mathbf{85.39}_{\pm 0.35}$ & $80.87_{\pm 0.42}$ & $75.36_{\pm 0.79}$ & $57.79_{\pm 0.77}$ & $\mathbf{53.14}_{\pm 0.37}$ & $\mathbf{49.00}_{\pm 0.35}$\\
    DoctorNet \cite{guan2018said} & $81.85_{\pm 0.41}$ & $78.69_{\pm 0.75}$ & $76.26_{\pm 1.28}$ & $52.61_{\pm 0.70}$ & $47.80_{\pm 0.86}$ & $43.50_{\pm 0.53}$ \\
    MBEM \cite{khetan2017learning}  & $82.37_{\pm 0.77}$ & $78.05_{\pm 0.83}$ & $71.43_{\pm 2.43}$ & $52.20_{\pm 0.07}$ & $45.26_{\pm 0.50}$ & $38.92_{\pm 0.69}$  \\
    CrowdLayer \cite{rodrigues2018deep} & $83.98_{\pm 0.35}$ & $77.76_{\pm 1.06}$ & $67.77_{\pm 1.69}$ & $51.28_{\pm 0.64}$ & $45.28_{\pm 0.64}$ & $38.93_{\pm 0.76}$  \\
    {TraceReg} \cite{tanno2019learning} & $80.72_{\pm 0.79}$ & $77.71_{\pm 1.36}$ & $67.86_{\pm 1.77}$ & $51.43_{\pm 0.61}$ & $45.08_{\pm 0.57}$ & $38.69_{\pm 1.01}$  \\
    Max-MIG \cite{cao2019max} & $81.00_{\pm 0.72}$ & $75.90_{\pm 0.52}$ & $70.96_{\pm 0.96}$ & $51.76_{\pm 1.11}$ & $44.93_{\pm 0.71}$ & $38.70_{\pm 0.49}$ \\
    CoNAL \cite{chu2021learning} & $81.60_{\pm 0.82}$ & $76.02_{\pm 0.79}$ & $69.50_{\pm 1.89}$ & $51.61_{\pm 1.14}$ & $44.19_{\pm 0.62}$ & $38.24_{\pm 0.29}$  \\
    {CCC \cite{zhang2024coupled}} & $84.81_{\pm 0.89}$ & $81.29_{\pm 0.66}$ & $77.28_{\pm 1.05}$ & $56.65_{\pm 0.55}$ & $50.68_{\pm 0.40}$ & $43.94_{\pm 0.95}$ \\
    \cmidrule(lr){1-7}
    Ours (AdaptCDRP) & $\mathbf{88.09}_{\pm 0.37}$ & $\mathbf{87.37}_{\pm 0.29}$ & $\mathbf{86.62}_{\pm 0.45}$ & $\mathbf{60.20}_{\pm 0.15}$ & $\mathbf{56.65}_{\pm 1.03}$ & $\mathbf{54.24}_{\pm 0.99}$  \\
    \bottomrule
  \end{tabular}
\end{table}

\begin{figure}[H]
\centering 
\subfigure[IDN-LOW]{
\label{Fig.ACC_cifar10_sub.1}
\includegraphics[width=0.323\textwidth]{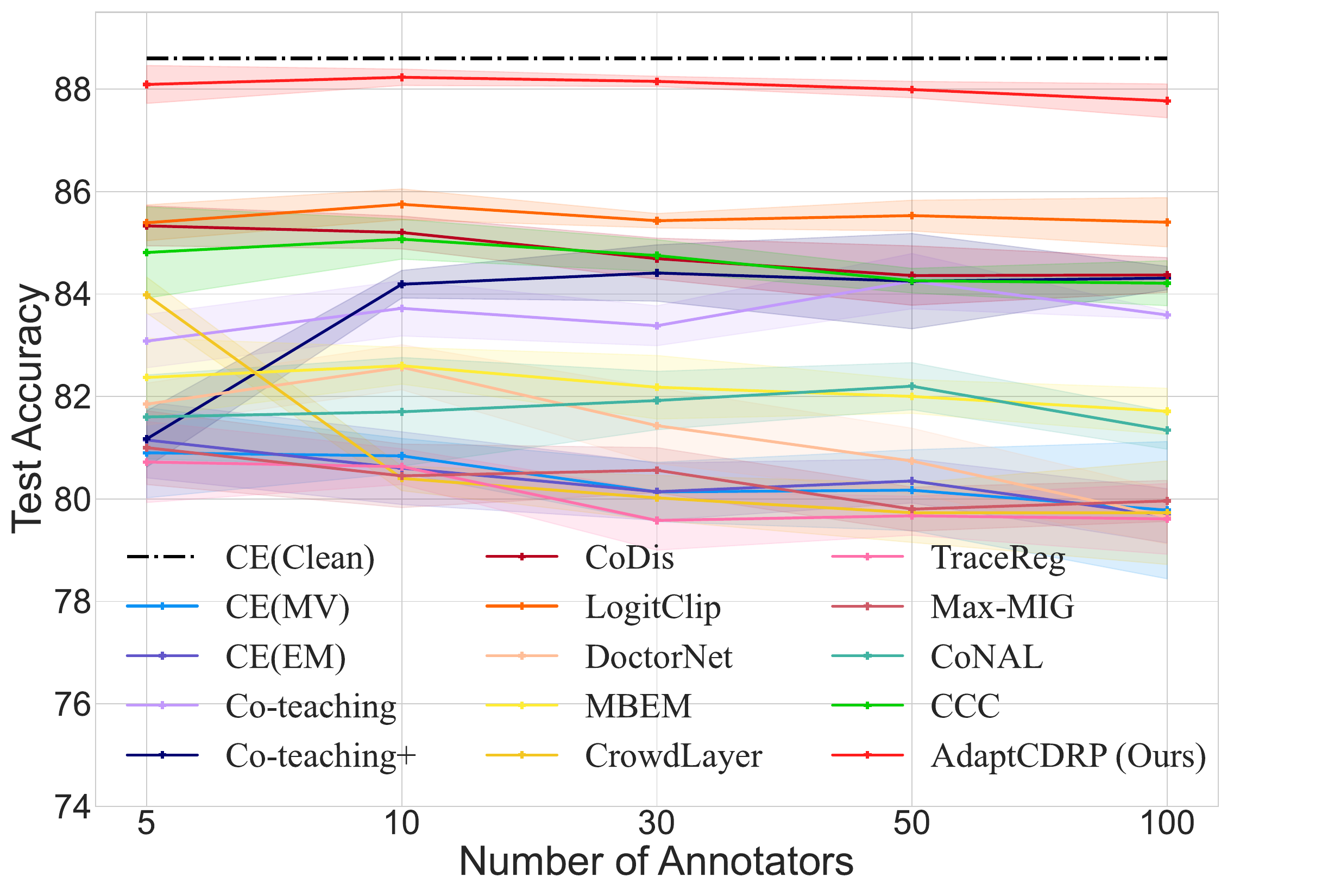}}
\subfigure[IDN-MID]{
\label{Fig.ACC_cifar10_sub.2}
\includegraphics[width=0.323\textwidth]{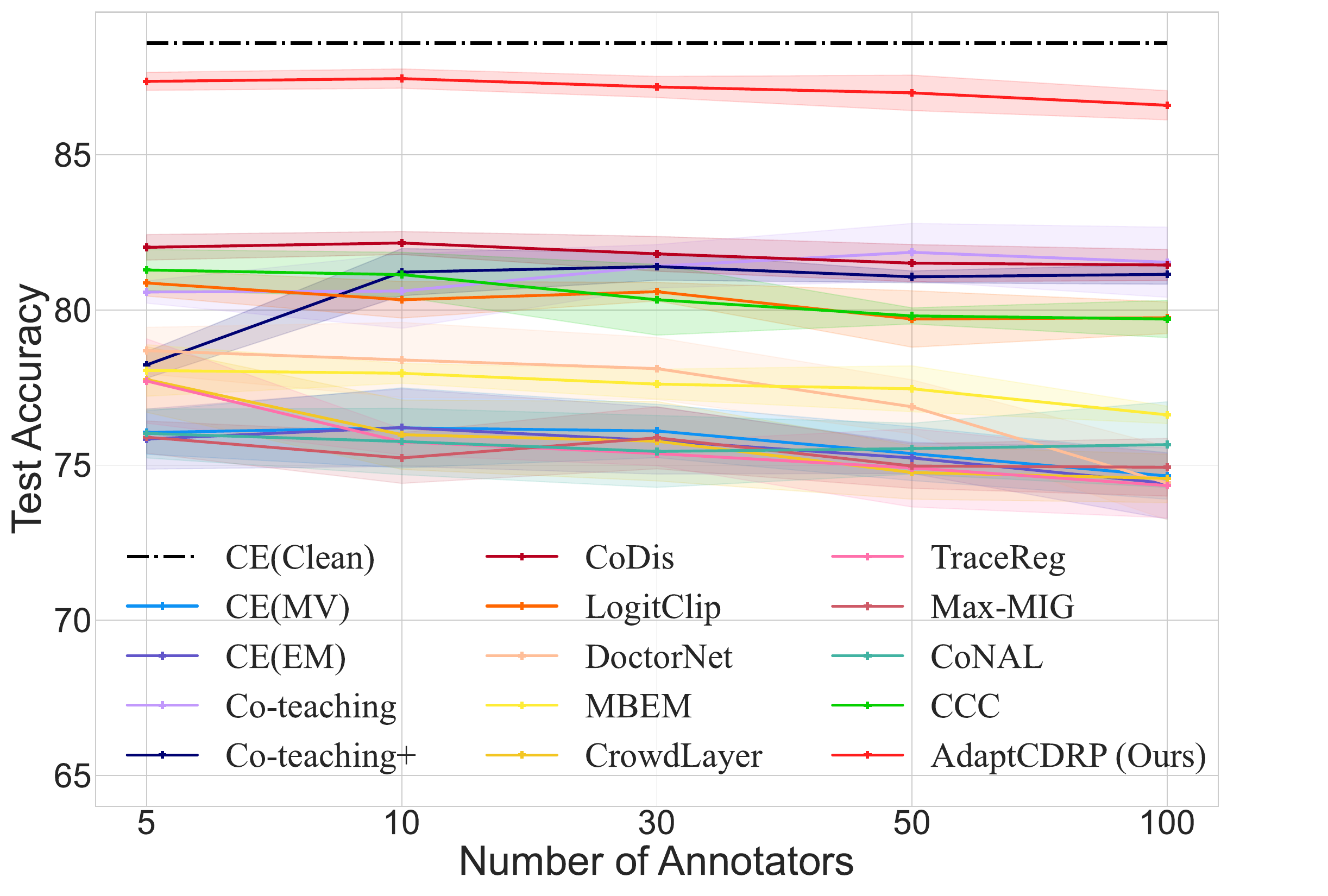}}
\subfigure[IDN-HIGH]{
\label{Fig.ACC_cifar10_sub.3}
\includegraphics[width=0.323\textwidth]{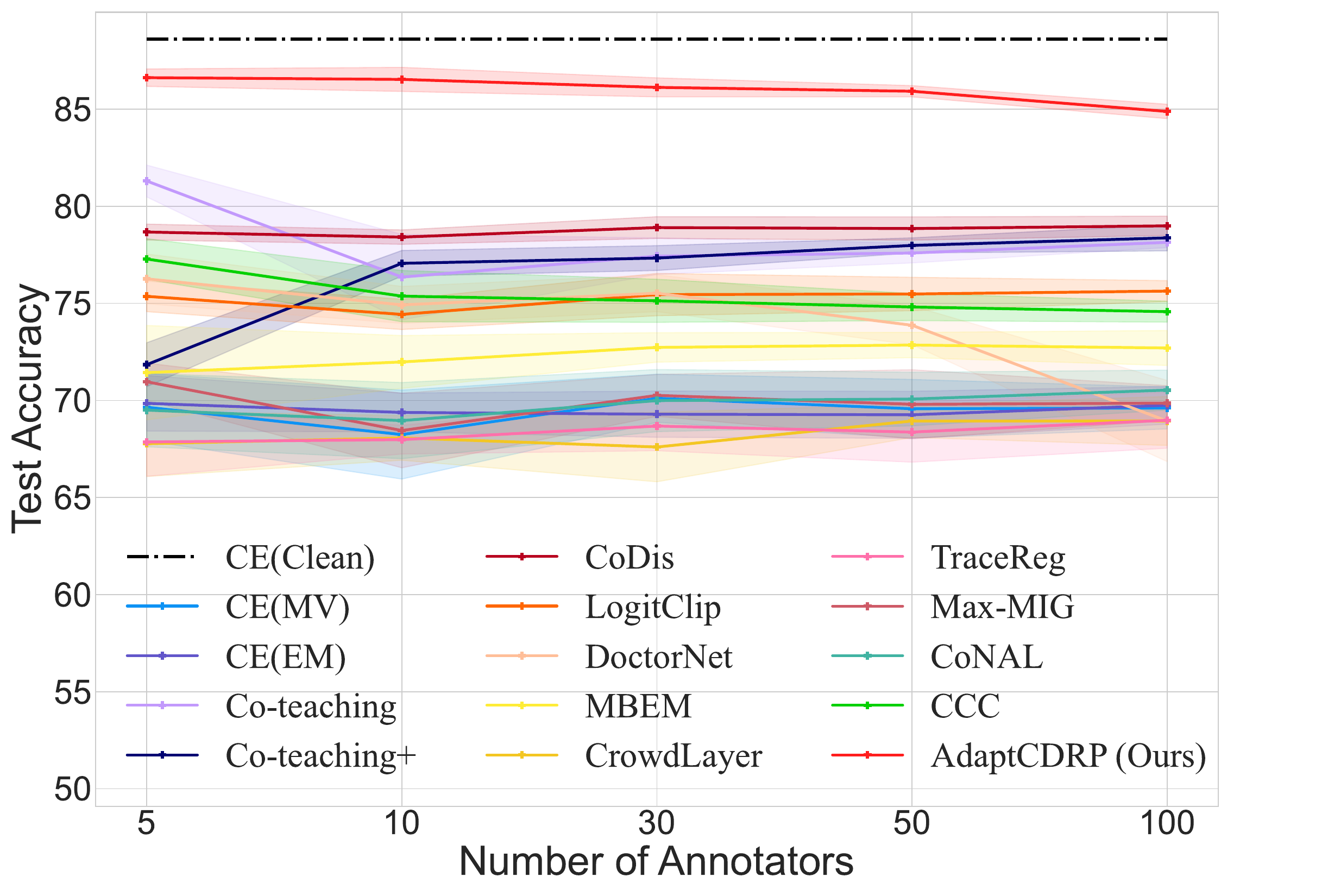}}
\caption{Average test accuracy on the CIFAR-10 dataset with varying numbers of annotators. The shaded areas are constructed using the associated standard deviations.}
\label{Fig.ACC_cifar10}
\end{figure}

\begin{table}\footnotesize\fontsize{8.0}{0.0}
\begin{minipage}{0.63\linewidth}
\renewcommand\arraystretch{0.9}
  \setlength\tabcolsep{1.0pt}
  \caption{Average accuracies (with associated standard errors expressed after the $\pm$ signs) for learning the CIFAR-10N, CIFAR-100N, LabelMe, and Animal-10N datasets.}
  \label{table_real}
  \centering
  \begin{tabular}{ccccc}
    \toprule 
    Method & CIFAR-10N & CIFAR-100N & LabelMe & Animal-10N\\
    \midrule
    CE (MV) & $82.82_{\pm 0.05}$ & $46.26_{\pm 0.81}$ & $79.49_{\pm 0.48}$ & $79.88_{\pm 0.38}$ \\
    CE (EM) \cite{dawid1979maximum} &  $83.14_{\pm 0.80}$ & $46.14_{\pm 0.69}$ & $80.64_{\pm 0.55}$  & $80.18_{\pm 0.34}$ \\
    {Co-teaching} \cite{han2018co} & $85.66_{\pm 0.54}$ &  $52.34_{\pm 0.31}$ & $79.71_{\pm 0.55}$ &  $\mathbf{81.96}_{\pm 0.48}$\\
    {Co-teaching+} \cite{yu2019does} & $82.25_{\pm 0.21}$ & $50.52_{\pm 0.40}$ & $81.55_{\pm 0.92}$ & $81.24_{\pm 0.22}$\\
    {CoDis \cite{xia2023combating}}& $\mathbf{87.23}_{\pm 0.45}$ & $\mathbf{52.66}_{\pm 0.44}$ & $81.85_{\pm 0.49}$ & $73.08_{\pm 0.35}$ \\
    {LogitClip \cite{wei2023mitigating}} & $86.37_{\pm 0.43}$ & $51.50_{\pm 0.58}$ & $81.75_{\pm 0.90}$ & $70.89_{\pm 0.65}$ \\
    DoctorNet \cite{guan2018said} & $84.52_{\pm 0.69}$ & $46.21_{\pm 0.81}$ & $79.09_{\pm 0.40}$ & $79.96_{\pm 0.55}$ \\
    MBEM \cite{khetan2017learning}  & $85.49_{\pm 0.43}$ & $46.74_{\pm 0.69}$ & $80.10_{\pm 1.09}$ & $76.96_{\pm 3.17}$ \\
    CrowdLayer \cite{rodrigues2018deep} & $82.84_{\pm 0.24}$ & $47.43_{\pm 0.59}$ & $82.95_{\pm 0.21}$ & $79.70_{\pm 0.35}$ \\
    {TraceReg} \cite{tanno2019learning} & $82.94_{\pm 0.27}$ & $47.71_{\pm 0.70}$ & $83.10_{\pm 0.15}$ & $80.34_{\pm 0.66}$ \\
    Max-MIG \cite{cao2019max} & $85.12_{\pm 0.36}$ & $46.56_{\pm 0.64}$ & $\mathbf{83.25}_{\pm 0.26}$ & $79.78_{\pm 0.80}$ \\
    CoNAL \cite{chu2021learning} & $83.01_{\pm 0.21}$ & $49.37_{\pm 0.48}$ & $82.96_{\pm 0.30}$ & $80.45_{\pm 0.49}$ \\
    {CCC \cite{zhang2024coupled}} & $86.45_{\pm 0.53}$ & $48.57_{\pm 0.58}$ & $83.18_{\pm 0.38}$ & $78.36_{\pm 0.35}$ \\
    \cmidrule(lr){1-5}
    Ours (AdaptCDRP) & $\mathbf{88.25}_{\pm 0.34}$ & $\mathbf{53.42}_{\pm 0.64}$ & $\mathbf{83.36}_{\pm 0.68}$ & $\mathbf{83.08}_{\pm 0.39}$ \\
    \bottomrule
  \end{tabular}
\end{minipage}\hfill
\begin{minipage}{0.32\linewidth}
    \vspace{0.5cm}
    \centering
    \includegraphics[width=0.96\textwidth]{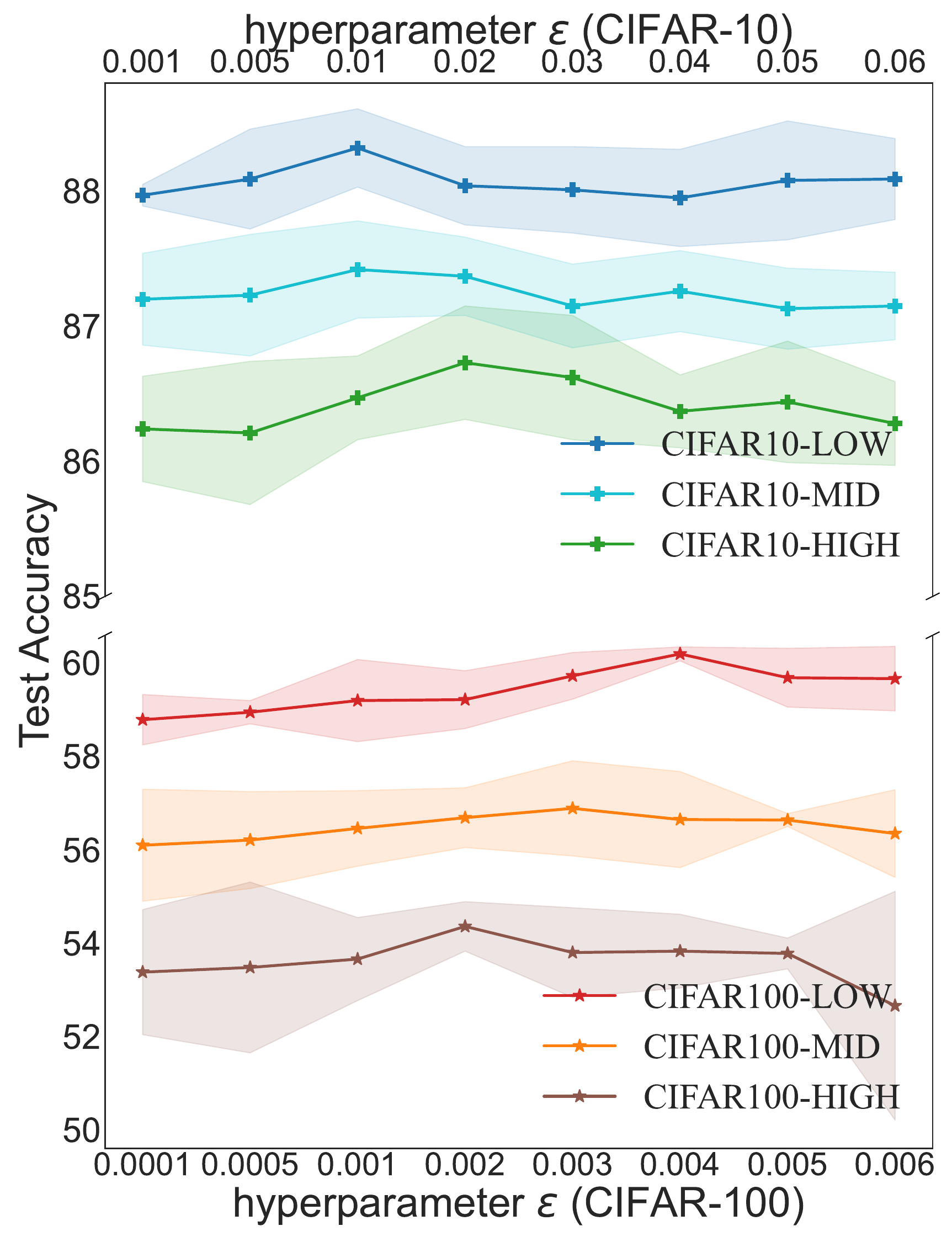}
    \captionof{figure}{\small Average accuracy on the CIFAR-10 and CIFAR-100 datasets ($R=5$) for different $\epsilon$ values.}
    \label{figure_gamma}
\end{minipage}
\end{table}

\section{Conclusion}

In this paper, we address the challenge of learning from noisy annotations by estimating true label posteriors using the CDRO framework. We formulate the problem as minimizing the worst-case risk within a distance-based ambiguity set, which constrains the conditional distributional uncertainty around a reference distribution. By deriving the dual form of the worst-case risk and finding the analytical solution to the robust risk minimization problem for each data point, we propose a novel approach for determining robust pseudo-labels using the likelihood ratio test. This approach further leads to the construction of a pseudo-empirical distribution that serves as a robust reference probability distribution in CDRO. We also derive a closed-form expression of the empirical robust risk and identify the optimal Lagrange multiplier for the dual problem. This leads to a guideline for balancing robustness and model fitting in a principled way and inspires an efficient one-step update method for the Lagrange multiplier. 

\paragraph{Limitations and Extensions.}
Our development here does not focus on precisely estimating the noise transition matrix or the true label posterior. Further research may be conducted to address the sparse annotation problem and improve estimates of the true label posterior. This can be accomplished through several approaches: (1) employing regularization techniques to mitigate the impact of small sample sizes by smoothing estimates and reducing sensitivity to outliers; (2) leveraging subgroup structures among annotators to capture additional nuances; and (3) directly modeling the true label posterior by integrating both data and noisy label information, moving beyond the limitation of purely applying Bayes’s rule.

\section*{Acknowledgements}
Yi is the Canada Research Chair in Data Science (Tier 1). Her research was supported by the Canada Research Chairs Program and the Natural Sciences and Engineering Research Council of Canada (NSERC).





{
\small
\bibliographystyle{unsrt}
\bibliography{main}
}

\newpage
\appendix



\begin{center}
{\large\bf SUPPLEMENTARY MATERIAL}
\end{center}

{
\renewcommand{\theequation}{\thesection\arabic{equation}}

\setcounter{equation}{0}



\section{Technical Details}

\subsection{Preliminaries about Linear Programming and Concentration Bounds}

A \emph{linear program} (LP) is an optimization problem of the form
\begin{equation}\label{appdx.LP_def_1}
\begin{split}
    \max_{\mathbf{x}\in\mathbb{R}^n}&\ \mathbf{c}^\top \mathbf{x}\\
    s.t.&\ \ \mathbf{A}\mathbf{x}\le\mathbf{b}\\
    &\ \ \mathbf{x}\ge 0,
\end{split}
\end{equation}
where $\mathbf{c}\in\mathbb{R}^{n}$ and $\mathbf{b}\in\mathbb{R}^{m}$ are given, and $\mathbf{A}$ is a specified $m\times n$ matrix. Here, ``$\le$'' represents elementwise inequality for vectors. The expression $\mathbf{c}^\top \mathbf{x}$ is called the \emph{objective function}, and the set $\{\mathbf{x}\in\mathbb{R}^{n}: \mathbf{A}\mathbf{x}\le\mathbf{b}, \mathbf{x}\ge 0\}$ defines the \emph{feasible region} of the linear program. By introducing slack variables, any linear program can be converted to the following \emph{standard form}:
\begin{equation}\label{appdx.LP_def_2}
\begin{split}
    \max_{\mathbf{x}\in\mathbb{R}^n}&\ \mathbf{c}^\top \mathbf{x}\\
    s.t.&\ \ \mathbf{A}\mathbf{x}=\mathbf{b}\\
    &\ \ \mathbf{x}\ge 0.
\end{split}
\end{equation}

We begin by introducing the concept of extreme point of related properties.  
\begin{definition}[{\cite[Chapter~2]{luenberger1984linear}}]\label{def.extreme}
   A point $\mathbf{z}$ in a convex set $\Theta$ is called an \emph{extreme point} of $\Theta$ if there do not exist two distinct points $\mathbf{z}', \mathbf{z}''\in\Theta$ and a scalar $\nu$ with $0<\nu<1$ such that $\mathbf{z}=\nu \mathbf{z}'+(1-\nu)\mathbf{z}''$. 
\end{definition}

The following lemma shows that optimal solutions of a linear program are located among the extreme points.
\begin{lemma}[{\cite[Chapter 2]{luenberger1984linear}}]\label{lemma.extreme}
If a linear programming problem has a finite optimal solution (i.e., a feasible solution that optimizes the objective function), then there is a finite optimal solution that is an extreme point of the constraint set.
\end{lemma}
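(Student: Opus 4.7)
The plan is to reduce any finite optimal solution to one that is an extreme point, via a finite combinatorial argument built on the algebraic characterization of extreme points of a polyhedron. Without loss of generality, I would first invoke the standard-form conversion from (\ref{appdx.LP_def_1}) to (\ref{appdx.LP_def_2}) so that the feasible region can be taken as $\Theta = \{x \in \mathbb{R}^n : Ax = b,\ x \ge 0\}$. Let $x^* \in \Theta$ be any finite optimal solution guaranteed by hypothesis, and let $S(x^*) = \{j : x^*_j > 0\}$ denote its support; let $A_S$ denote the submatrix formed by the columns of $A$ indexed by a set $S$.

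Next I would establish the algebraic characterization: a point $x \in \Theta$ is an extreme point of $\Theta$ if and only if the columns of $A_{S(x)}$ are linearly independent. The ``if'' direction: if $x = \nu x' + (1-\nu) x''$ for $x', x'' \in \Theta$ and $\nu \in (0,1)$, then non-negativity of $x', x''$ forces $x'_j = x''_j = 0$ for $j \notin S(x)$; combined with $A x' = A x'' = b$, this gives $A_{S(x)}(x' - x'')_{S(x)} = 0$, and linear independence yields $x' = x''$. The ``only if'' direction is the contrapositive: given a nonzero $d \in \mathbb{R}^n$ supported on $S(x)$ with $A d = 0$, the decomposition $x = \tfrac{1}{2}(x + \epsilon d) + \tfrac{1}{2}(x - \epsilon d)$ with sufficiently small $\epsilon > 0$ places both $x \pm \epsilon d$ in $\Theta$, contradicting extremality.

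Using this characterization, I would then pivot $x^*$ towards an extreme optimal point. If $x^*$ is already an extreme point, we are done. Otherwise, pick a nonzero $d$ supported on $S(x^*)$ with $Ad = 0$. For all sufficiently small $\epsilon > 0$, the perturbations $x^* \pm \epsilon d$ are feasible, and optimality of $x^*$ together with linearity of the objective forces $c^\top d = 0$; otherwise one of $\pm d$ would strictly improve the value. Replacing $d$ with $-d$ if necessary so that $d$ has at least one strictly negative component, set $\epsilon^* = \min\{x^*_j / |d_j| : d_j < 0\} > 0$ and define $x^{**} := x^* + \epsilon^* d$. Then $x^{**}$ is feasible, optimal, and has strictly smaller support than $x^*$. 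Iterating this reduction, the support strictly shrinks at each step; since it lies in the finite set $\{1, \ldots, n\}$, the process terminates at an optimal solution whose positive coordinates index linearly independent columns of $A$, which by the characterization is an extreme point of $\Theta$.

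The main obstacle is more bookkeeping than mathematical depth: one must verify that $d$ can always be chosen so that moving along some sign of $d$ drives at least one positive coordinate to zero (ruled out only if $d = 0$, contradicting its construction from the null space of $A_{S(x^*)}$), and that $c^\top d = 0$ so that every intermediate iterate remains optimal. Both are immediate from the setup, so the argument proceeds cleanly to an optimal extreme point.
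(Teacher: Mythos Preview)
Your argument is correct and is essentially the standard textbook proof (the one found in Luenberger or Bertsimas--Tsitsiklis): characterize extreme points of $\{x:Ax=b,\ x\ge 0\}$ via linear independence of the columns indexed by the support, then pivot an optimal non-extreme point along a null-space direction $d$ with $c^\top d=0$ until a coordinate vanishes, strictly shrinking the support at each step. The bookkeeping you flag in the final paragraph is handled correctly.

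By way of comparison: the paper does not actually prove this lemma. It is stated as a background result with a citation to \cite[Chapter~2]{luenberger1984linear} and invoked later in the proof of Theorem~\ref{thm.optimal_action_multi}. So you have supplied a complete proof where the paper simply defers to the literature; your approach matches the one in the cited reference.
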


The following lemma on strong duality in linear programming will be used in the proof of Proposition \ref{thm.dual} and Remark \ref{remark.dual_emp}.
\begin{lemma}[Duality Theorem of Linear Programming; {\cite[Chapter 4]{bradley1977applied}}]\label{lemma.dual_linear}
    Let $\mathbf{c}=(c_1,\ldots,c_n)^\top$ and $\mathbf{b}=(b_1,\ldots,b_m)^\top$ be given vectors, and let $\mathbf{A}=[a_{ij}]$ be a given $m\times n$ matrix with $a_{ij}$ being its $(i, j)$ element. Define the \textbf{primal} problem as:
    \begin{equation*}
        \left\{
        \begin{aligned}
            &\hspace{-0.4cm}\max_{x_1,\ldots,x_n\in\mathbb{R}}\ \mathsf{P},\ \text{with}\ \mathsf{P}\triangleq\sum_{j=1}^{n} c_j x_j,\\
            s.t.\ &\sum_{j=1}^{n}a_{ij}x_j\le b_i\ \ \text{for}\ \ i\in[m],\\
           &x_j\ge 0\ \ \text{for}\ \ j\in[n],\ 
        \end{aligned}
        \right.
    \end{equation*}
    or equivalently written in compact form:
    \begin{equation*}
        \left\{
        \begin{aligned}
            &\hspace{-0.1cm}\max_{\mathbf{x}\in\mathbb{R}^n}\ \mathsf{P},\ \text{with}\ \mathsf{P}\triangleq\mathbf{c}^\top\mathbf{x}\ \text{and}\ \mathbf{x}=(x_1,\ldots,x_n)^\top,\\
            s.t.\ &\mathbf{Ax}\ge\mathbf{b},\\
            &\mathbf{x}\ge0.
        \end{aligned}
        \right.
    \end{equation*}
    The corresponding \textbf{dual} linear problem is:
    \begin{equation*}
        \left\{
        \begin{aligned}
            &\hspace{-0.4cm}\min_{y_1,\ldots,y_m\in\mathbb{R}}\ \mathsf{D},\ \text{with}\ \mathsf{D}\triangleq\sum_{i=1}^{m} b_i y_i,\\
            s.t.\ &\sum_{i=1}^{m}a_{ij}y_i\ge c_j\ \ \text{for}\ \ j\in[n],\\
            &y_i\ge 0\ \ \text{for}\ \ i\in[m],
        \end{aligned}
        \right.
    \end{equation*}
    or equivalently,
    \begin{equation*}
        \left\{
        \begin{aligned}
            &\hspace{-0.15cm}\min_{\mathbf{y}\in\mathbb{R}^m}\ \mathsf{D},\ \text{with}\ \mathsf{D}\triangleq\mathbf{y}^\top\mathbf{b}\ \text{and}\ \mathbf{y}=(y_1,\ldots, y_m)^\top,\\
            s.t.\ &\mathbf{y}^\top\mathbf{A}\le\mathbf{c}^\top,\\
            &\mathbf{y}\ge0.
        \end{aligned}
        \right.
    \end{equation*}
    If the primal (or dual) problem has a finite optimal solution, then the dual (or primal) problem also has a finite solution, and the optimal values of the primal and dual problems are equal.
\end{lemma}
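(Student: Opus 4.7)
My plan is to split the argument into two classical steps: first establish \emph{weak duality}, which provides the relevant inequality $\mathsf{P}\le\mathsf{D}$ for every primal-feasible $\mathbf{x}$ and dual-feasible $\mathbf{y}$, and then establish \emph{strong duality}, i.e., the existence of dual-feasible $\mathbf{y}^\star$ attaining $\mathbf{b}^\top\mathbf{y}^\star=\mathbf{c}^\top\mathbf{x}^\star$ whenever the primal has a finite optimum $\mathbf{x}^\star$. Weak duality is a one-line algebraic manipulation, so essentially all the work is in strong duality. For that, I would use either (i) a Farkas-lemma / separating-hyperplane argument, or (ii) a constructive argument that exploits Lemma A.1 (extreme-point optimality) and the simplex method. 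I will sketch both and choose (i) as the main route since it is the shortest and does not require a full development of pivot rules.

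\textbf{Step 1: Weak duality.} For any primal-feasible $\mathbf{x}\ge 0$ with $\mathbf{A}\mathbf{x}\ge\mathbf{b}$ and any dual-feasible $\mathbf{y}\ge 0$ with $\mathbf{y}^\top\mathbf{A}\le\mathbf{c}^\top$ (taking the statement as written, up to the $\le$/$\ge$ sign conventions that must be made consistent), the chain
\begin{equation*}
\mathbf{c}^\top\mathbf{x}\;\ge\;(\mathbf{y}^\top\mathbf{A})\mathbf{x}\;=\;\mathbf{y}^\top(\mathbf{A}\mathbf{x})\;\ge\;\mathbf{y}^\top\mathbf{b}
\end{equation*}
uses only the feasibility inequalities and the sign constraint $\mathbf{x},\mathbf{y}\ge 0$. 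This immediately gives the one-sided bound between the two optimal values and, as a corollary, shows that if the primal is unbounded then the dual is infeasible, and vice versa.

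\textbf{Step 2: Strong duality.} Suppose $\mathbf{x}^\star$ is a finite primal optimum with value $\mathsf{P}^\star=\mathbf{c}^\top\mathbf{x}^\star$. I would introduce slack variables to rewrite the primal constraints in equality form $\mathbf{A}'\mathbf{x}'=\mathbf{b}$, $\mathbf{x}'\ge 0$, and then invoke Farkas' lemma: the system
\begin{equation*}
\mathbf{A}'\mathbf{x}'=\mathbf{b},\ \ \mathbf{x}'\ge 0,\ \ \mathbf{c}'^\top\mathbf{x}'>\mathsf{P}^\star
\end{equation*}
has no solution by optimality of $\mathbf{x}^\star$, and Farkas' lemma therefore yields a dual vector $\mathbf{y}^\star$ with $\mathbf{y}^{\star\top}\mathbf{A}'\le\mathbf{c}'^\top$ and $\mathbf{y}^{\star\top}\mathbf{b}\le\mathsf{P}^\star$. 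Sign restrictions on the slack components translate back into $\mathbf{y}^\star\ge 0$ (or the appropriate sign pattern). Combined with weak duality $\mathsf{D}^\star\ge\mathsf{P}^\star$, this forces $\mathbf{b}^\top\mathbf{y}^\star=\mathsf{P}^\star$, proving strong duality. As an alternative route I would use Lemma A.1 to pick a basic optimal $\mathbf{x}^\star$, define $\mathbf{y}^{\star\top}:=\mathbf{c}_B^\top \mathbf{B}^{-1}$ where $\mathbf{B}$ is the corresponding optimal basis, and verify dual feasibility from the simplex optimality conditions ($\mathbf{c}^\top-\mathbf{y}^{\star\top}\mathbf{A}\le 0$ at termination) and equality of objective values from $\mathbf{B}\mathbf{x}^\star_B=\mathbf{b}$.

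\textbf{Main obstacle.} The hard part is strong duality, and within it the non-trivial ingredient is Farkas' lemma (or, equivalently, the separating hyperplane theorem applied to a closed convex cone). Once Farkas is in hand, converting ``no improving feasible direction'' into ``existence of a certificate $\mathbf{y}^\star$'' is essentially mechanical. A secondary care is the bookkeeping around the inconsistent $\le$ vs.\ $\ge$ conventions in the statement: I would fix one convention, introduce slacks/sign-flipping variables to bring the problem to the standard form \eqref{appdx.LP_def_2}, and check that the dual of the standard form, when translated back, matches the dual written in the lemma. With those conventions pinned down, the argument goes through verbatim.
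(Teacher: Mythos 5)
The paper does not prove this lemma at all: it is imported as a classical black-box result, with the citation to \cite{bradley1977applied} standing in for the argument, and is then applied directly in the proofs of Proposition \ref{thm.dual} and Remark \ref{remark.dual_emp}. So there is no in-paper proof to compare yours against; the relevant comparison is with the cited textbook, which develops duality constructively through the simplex method. That means your \emph{alternative} route (ii) --- take a basic optimal solution (Lemma \ref{lemma.extreme} guarantees one exists), set $\mathbf{y}^{\star\top}=\mathbf{c}_B^\top\mathbf{B}^{-1}$ for the optimal basis $\mathbf{B}$, and read dual feasibility off the simplex termination condition --- is essentially the source's own proof, while your primary route (weak duality plus Farkas/separating hyperplane) is the other standard argument; it is shorter here precisely because it avoids developing pivot rules, at the cost of assuming Farkas' lemma as the external ingredient.

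Two points in your sketch need tightening before it is a proof. First, the sign bookkeeping is not merely inherited from the statement's own typo (the compact form says $\mathbf{A}\mathbf{x}\ge\mathbf{b}$ while the componentwise form says $\sum_{j}a_{ij}x_j\le b_i$): your Step 1 derives $\mathbf{c}^\top\mathbf{x}\ge\mathbf{y}^\top\mathbf{b}$, placing primal values \emph{above} dual values, whereas Step 2 invokes weak duality in the opposite direction $\mathsf{D}^\star\ge\mathsf{P}^\star$, which is the correct one for a max primal; under the consistent convention the chain must read $\mathbf{c}^\top\mathbf{x}\le(\mathbf{y}^\top\mathbf{A})\mathbf{x}=\mathbf{y}^\top(\mathbf{A}\mathbf{x})\le\mathbf{y}^\top\mathbf{b}$, and the two steps have to agree. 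Second, and more substantively, plain Farkas (either $\mathbf{A}'\mathbf{x}'=\mathbf{b},\ \mathbf{x}'\ge 0$ is solvable, or there exists $\mathbf{y}$ with $\mathbf{y}^\top\mathbf{A}'\ge 0$ and $\mathbf{y}^\top\mathbf{b}<0$) does not apply directly to a system containing the strict inequality $\mathbf{c}'^\top\mathbf{x}'>\mathsf{P}^\star$: you need the nonhomogeneous alternative (Gale's variant), or homogenization with an extra multiplier $\lambda\ge 0$ in which the degenerate case $\lambda=0$ must be ruled out using primal feasibility, or the $\varepsilon$-argument (infeasibility of $\mathbf{c}'^\top\mathbf{x}'\ge\mathsf{P}^\star+\varepsilon$ for every $\varepsilon>0$, then a limit). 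This is exactly the step you dismissed as mechanical, and it is the one place a careless write-up genuinely fails, so it should be spelled out. Finally, the lemma asserts both directions (primal finite implies dual finite and vice versa); your Step 2 handles primal-to-dual, and you should record that the converse follows in one line from the involution that the dual of the dual is the primal.
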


Next we introduce a concentration bound along with associated concepts, which will be used in the proof of Theorem \ref{thm.bias}.

Let $\Omega$ denote a subset of $\mathbb{R}$ and $f:\Omega^n\rightarrow\mathbb{R}$. We say that a function $f$ satisfies the \emph{bounded difference inequality} \cite{wainwright2019high} with parameters $\{L_1,\ldots,L_n\}$ if for any $k\in[n]$,
\begin{align}\label{appdx.bounded_diff_def}
    \sup_{s_1,\ldots,s_n,s'_k\in\Omega}|f(s_1,\ldots,s_k,\ldots,s_n)-f(s_1,\ldots,s'_k,\ldots,s_n)|\le L_k.
\end{align}
That is, for any $k\in[n]$, if we substitute $s_k$ with $s'_k$ while keeping other $s_j$ fixed for all $j\ne k$, the function $f$ changes by at most $L_k$.



\begin{lemma}[Bounded Differences Inequality; {\cite[Corollary 2.21]{wainwright2019high}}]\label{lemma.concentration}
    Let $\boldsymbol{S}=(S_1,\ldots,S_n)^\top$ represent a random vector with independent components defined on a sample space $\Omega^n$, and $f(\boldsymbol{S})\triangleq f(S_1,\ldots,S_n)$ for a function $f:\Omega^n\rightarrow\mathbb{R}$. Suppose that $f$ satisfies the bounded difference property with parameters $\{L_1,\ldots,L_n\}$. Then, for any $t\ge 0$,
    \begin{align*}
        &\mathbb{P}\left[f(\boldsymbol{S})-\mathbb{E}\left\{f(\boldsymbol{S})\right\}\ge t\right]\le \exp\left\{-\frac{2t^2}{\sum_{i=1}^{n}L^2_i}\right\};\\
        &\mathbb{P}\left[f(\boldsymbol{S})-\mathbb{E}\left\{f(\boldsymbol{S})\right\}\le -t\right]\le \exp\left\{-\frac{2t^2}{\sum_{i=1}^{n}L^2_i}\right\}.
    \end{align*}
\end{lemma}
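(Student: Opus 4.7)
The plan is to prove the bounded differences inequality via the classical martingale method (Azuma--Hoeffding applied to a Doob martingale). First I would construct the Doob martingale by setting $\mathcal{F}_k = \sigma(S_1,\ldots,S_k)$, $M_0 = \mathbb{E}\{f(\boldsymbol{S})\}$, and $M_k = \mathbb{E}\{f(\boldsymbol{S}) \mid \mathcal{F}_k\}$, so that $M_n = f(\boldsymbol{S})$ and
\begin{align*}
f(\boldsymbol{S}) - \mathbb{E}\{f(\boldsymbol{S})\} = \sum_{k=1}^{n} D_k, \qquad D_k \triangleq M_k - M_{k-1}.
\end{align*}
By construction, $\{D_k\}$ is a martingale difference sequence with respect to the filtration $\{\mathcal{F}_k\}$.

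Second, I would show that, conditional on $\mathcal{F}_{k-1}$, each increment $D_k$ is supported in an interval of length at most $L_k$. Using independence of the components of $\boldsymbol{S}$, one can write
\begin{align*}
M_k = g_k(S_1,\ldots,S_k), \qquad M_{k-1} = \int g_k(S_1,\ldots,S_{k-1},s)\, dP_{S_k}(s),
\end{align*}
where $g_k(s_1,\ldots,s_k) = \mathbb{E}\{f(s_1,\ldots,s_k, S_{k+1},\ldots, S_n)\}$. Define
\begin{align*}
A_k \triangleq \inf_{s\in\Omega} g_k(S_1,\ldots,S_{k-1},s) - M_{k-1}, \quad B_k \triangleq \sup_{s\in\Omega} g_k(S_1,\ldots,S_{k-1},s) - M_{k-1}.
\end{align*}
Then $D_k \in [A_k, B_k]$ conditional on $\mathcal{F}_{k-1}$. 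The bounded difference property (\ref{appdx.bounded_diff_def}) together with the fact that $g_k$ is obtained from $f$ by integrating out the independent variables $S_{k+1},\ldots,S_n$ yields $B_k - A_k \le L_k$.

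Third, I would invoke Hoeffding's lemma on each conditional moment-generating function: for any $\lambda\in\mathbb{R}$,
\begin{align*}
\mathbb{E}\left\{ e^{\lambda D_k} \mid \mathcal{F}_{k-1} \right\} \le \exp\!\left( \frac{\lambda^2 L_k^2}{8} \right).
\end{align*}
Iteratively applying the tower property and conditioning on $\mathcal{F}_{n-1}, \mathcal{F}_{n-2},\ldots,\mathcal{F}_0$ gives
\begin{align*}
\mathbb{E}\!\left[ \exp\!\big( \lambda (f(\boldsymbol{S}) - \mathbb{E}\{f(\boldsymbol{S})\}) \big) \right] \le \exp\!\left( \frac{\lambda^2 \sum_{k=1}^n L_k^2}{8} \right).
\end{align*}
Finally, Markov's inequality (the Chernoff bound) with the choice $\lambda^\star = 4t / \sum_{k=1}^n L_k^2$ produces
\begin{align*}
\mathbb{P}\!\left[ f(\boldsymbol{S}) - \mathbb{E}\{f(\boldsymbol{S})\} \ge t \right] \le \exp\!\left( -\frac{2 t^2}{\sum_{k=1}^n L_k^2} \right),
\end{align*}
which is the upper-tail bound. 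The lower-tail bound follows by applying exactly the same argument to $-f$, which also satisfies (\ref{appdx.bounded_diff_def}) with the same parameters.

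The main obstacle is the second step: establishing rigorously that the conditional range of $D_k$ is bounded by $L_k$. This requires carefully combining the independence of the $S_i$'s with the pointwise bounded-differences hypothesis, and in particular showing that integrating out $S_{k+1},\ldots,S_n$ preserves the $L_k$-bound on the coordinate-$k$ variation. The remaining steps (Hoeffding's lemma, iterated conditioning, optimization of $\lambda$) are standard calculations.
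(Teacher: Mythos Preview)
Your proof is correct and follows the standard martingale/Azuma--Hoeffding route for McDiarmid's inequality. Note, however, that the paper does not actually prove this lemma: it is stated as a cited result from \cite[Corollary 2.21]{wainwright2019high} and used as a black box in the proof of Theorem~\ref{thm.bias}, so there is no ``paper's own proof'' to compare against. Your argument is precisely the classical derivation one finds in Wainwright's text (and in McDiarmid's original paper), so in that sense you have reproduced the proof the citation points to.
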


We also introduce the following definitions and lemmas, which will be used to characterize the complexity of the function class.

\begin{definition}[Covering number; {\cite[Definition 5.1]{wainwright2019high}}]\label{def.covering}
Let $\Theta$ denote a set and $\rho$ a metric on $\Theta$. For $t>0$, a \emph{$t$-cover} of $\Theta$ with respect to $\rho$ is a set $\{\theta_i\in\Theta:i=1,\ldots,N\}$ such that for each $\theta\in\Theta$, there exists some $i\in[N]$ such that $\rho(\theta,\theta_i)\le t$. The $t$-\emph{covering number} $N(t;\Theta,\rho)$ is the cardinality of the smallest $t$-cover.
\end{definition}

\begin{lemma}\label{lemma.covering_cartesian}
Let $\Theta_j$ be a set equipped with a metric $\rho_j$ for $j=1,2$, and define $\Theta=\Theta_1\times \Theta_2$. Given $\alpha_1,\alpha_2>0$, define the metric $\rho$ on $\Theta$ as $\rho(\theta,\theta')\triangleq\alpha_1\rho_1(\theta^1,\theta^{1'})+\alpha_2\rho_2(\theta^2,\theta^{2'})$ for any $\theta\triangleq(\theta^1,\theta^2)\in\Theta$ and $\theta'\triangleq(\theta^{1'},\theta^{2'})\in\Theta$. Then for $t>0$,
\begin{align*}
    N(t;\Theta,\rho)\le N(t/(2\alpha_1);\Theta_1,\rho_1)\times N(t/(2\alpha_2);\Theta_2,\rho_2).
\end{align*}
\end{lemma}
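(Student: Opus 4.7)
The plan is to construct an explicit cover of $\Theta$ by forming the Cartesian product of minimal covers of $\Theta_1$ and $\Theta_2$ at carefully chosen scales. The weights $\alpha_1,\alpha_2$ in the definition of $\rho$ suggest that the scales should be $t/(2\alpha_1)$ and $t/(2\alpha_2)$ respectively, so that the two factor contributions each account for at most $t/2$ in the $\rho$-distance.

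First, I would invoke Definition~\ref{def.covering} to obtain minimal covers. Let $\{\theta^1_i:i\in[N_1]\}\subset\Theta_1$ be a $(t/(2\alpha_1))$-cover of $\Theta_1$ with respect to $\rho_1$ of the smallest possible cardinality $N_1=N(t/(2\alpha_1);\Theta_1,\rho_1)$, and let $\{\theta^2_j:j\in[N_2]\}\subset\Theta_2$ be a $(t/(2\alpha_2))$-cover of $\Theta_2$ with respect to $\rho_2$ of smallest cardinality $N_2=N(t/(2\alpha_2);\Theta_2,\rho_2)$.

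Next, I would define the candidate cover $\mathcal{C}\triangleq\{(\theta^1_i,\theta^2_j):i\in[N_1],\ j\in[N_2]\}\subset\Theta$, which clearly has cardinality $N_1\cdot N_2$. To verify it is a $t$-cover of $\Theta$ with respect to $\rho$, take any $\theta=(\theta^1,\theta^2)\in\Theta$. By the covering property of each factor, there exist indices $i^\ast\in[N_1]$ and $j^\ast\in[N_2]$ such that $\rho_1(\theta^1,\theta^1_{i^\ast})\le t/(2\alpha_1)$ and $\rho_2(\theta^2,\theta^2_{j^\ast})\le t/(2\alpha_2)$. Combining these via the definition of $\rho$ gives
\begin{align*}
\rho\big(\theta,(\theta^1_{i^\ast},\theta^2_{j^\ast})\big)=\alpha_1\rho_1(\theta^1,\theta^1_{i^\ast})+\alpha_2\rho_2(\theta^2,\theta^2_{j^\ast})\le\tfrac{t}{2}+\tfrac{t}{2}=t.
\end{align*}
Thus $\mathcal{C}$ is a valid $t$-cover, and by the minimality clause in Definition~\ref{def.covering} we conclude $N(t;\Theta,\rho)\le|\mathcal{C}|=N_1\cdot N_2$, as claimed.

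There is no substantive obstacle here: the argument is purely a product-cover construction, and the only subtlety is calibrating the two factor radii so their $\rho$-weighted sum telescopes to exactly $t$. The factor $2$ in the denominators is dictated by this calibration and cannot be avoided without a finer decomposition of the error budget between $\alpha_1\rho_1$ and $\alpha_2\rho_2$.
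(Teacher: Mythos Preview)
Your proof is correct and follows essentially the same approach as the paper: construct minimal covers of each factor at radii $t/(2\alpha_j)$, form their Cartesian product, and verify via the weighted-sum structure of $\rho$ that this yields a $t$-cover of $\Theta$.
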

\begin{proof}
For $j=1,2$, let $\Bar{\Theta}_{j}\triangleq\{\theta^j_1,\ldots,\theta^j_{N_j}\}$ denote the smallest $t/(2\alpha_j)$-cover of $\Theta_j$ with respect to $\rho_j$. Then, by Definition \ref{def.covering}, $N(t/(2\alpha_j);\Theta_j,\rho_j)=N_j$ for $j=1,2$. For any $\theta=(\theta^1,\theta^2)\in\Theta$, by definition \ref{def.covering}, there exists $i_1\in[N_1]$ and $i_2\in[N_2]$ such that $\rho_1(\theta^1,\theta^1_{i_1})\le t/(2\alpha_1)$ and $\rho_2(\theta^2,\theta^2_{i_2})\le t/(2\alpha_2)$. Then, $\theta_i\triangleq(\theta^1_{i_1}, \theta^2_{i_2})\in\Bar{\Theta}_1\times \Bar{\Theta}_2\subset\Theta$, and 
\begin{align*}
    \rho(\theta, \theta_i)=\alpha_1\rho_1(\theta^1,\theta^1_{i_1})+\alpha_2\rho_2(\theta^2,\theta^2_{i_2})\le t.
\end{align*}
Hence, by Definition \ref{def.covering}, $\Bar{\Theta}_1\times \Bar{\Theta}_2$ is a $t$-cover of $\Theta$ with respect to $\rho$ and $N(t;\Theta,\rho)\le|\Bar{\Theta}_1\times \Bar{\Theta}_2|=N_1\times N_2=N(t/(2\alpha_1);\Theta_1,\rho_1)\times N(t/(2\alpha_2);\Theta_2,\rho_2)$, where $|\cdot|$ represents the cardinality of a set.
\end{proof}

\begin{lemma}\label{lemma.covering_interval}
Let $\mathcal{I}\triangleq[a,b]\subset\mathbb{R}$ denote a closed interval on $\mathbb{R}$ with $a< b$. Define the metric $\rho$ on $\mathcal{I}$ as $\rho(x, x')=|x-x'|$ for any $x,x'\in\mathcal{I}$. Then, for any $t>0$, $N(t;\mathcal{I},\rho)\le \frac{b-a}{2t}+1$ if $t<\frac{b-a}{2}$, and  $N(t;\mathcal{I},\rho)=1$ if $t\ge\frac{b-a}{2}$.
\end{lemma}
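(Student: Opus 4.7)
The plan is to handle the two cases separately and, in each, exhibit an explicit finite cover whose cardinality matches the stated bound. Since $N(t;\mathcal{I},\rho)$ is defined as the cardinality of the \emph{smallest} $t$-cover, it suffices to produce \emph{some} $t$-cover of the claimed size; this is the standard strategy for upper bounds on covering numbers.

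First I would treat the easy regime $t \ge (b-a)/2$. Here I claim that the singleton $\{(a+b)/2\}$ is already a $t$-cover of $\mathcal{I}$. Indeed, for any $x \in [a,b]$, $|x - (a+b)/2| \le (b-a)/2 \le t$, so the definition of a $t$-cover is satisfied with a single element. Because any nonempty $t$-cover must contain at least one point, this gives $N(t;\mathcal{I},\rho) = 1$, which is the second assertion.

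For the harder regime $t < (b-a)/2$, I would build an explicit cover by partitioning $\mathcal{I}$ into at most $N \triangleq \lfloor (b-a)/(2t) \rfloor + 1$ closed subintervals of length at most $2t$, and taking the centers as the cover points. Concretely, set $x_i \triangleq a + (2i-1)t$ for $i = 1, \ldots, N-1$, and set $x_N \triangleq b - t$ if $a + (2N-1)t > b$, or $x_N \triangleq a + (2N-1)t$ otherwise. A routine check confirms that every $x \in [a,b]$ lies within distance $t$ of some $x_i$: the points $x_1, \ldots, x_{N-1}$ cover $[a, a + 2(N-1)t]$, and the last center handles the residual segment, whose length is less than $2t$ by construction. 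Hence $\{x_1, \ldots, x_N\}$ is a valid $t$-cover, so
\begin{equation*}
    N(t;\mathcal{I},\rho) \;\le\; N \;=\; \left\lfloor \frac{b-a}{2t} \right\rfloor + 1 \;\le\; \frac{b-a}{2t} + 1,
\end{equation*}
which is the first assertion.

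The proof involves no real obstacle; the only subtlety is making sure the last subinterval actually reaches $b$ when $(b-a)/(2t)$ is not an integer, which is handled by shifting the final center inward as above. Because the statement only asks for an upper bound (not an exact formula), any slightly crude placement of cover points suffices, and the argument is essentially a one-dimensional packing counting.
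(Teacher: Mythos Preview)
Your proposal is correct and follows essentially the same approach as the paper: for $t \ge (b-a)/2$ you both use the midpoint singleton, and for $t < (b-a)/2$ you both place $\lfloor (b-a)/(2t)\rfloor + 1$ centers spaced $2t$ apart starting at $a+t$, with a final adjusted point to cover the residual segment near $b$. The only cosmetic difference is how the last center is positioned (you shift to $b-t$ when needed, the paper takes $\min(b,\,a+t+n_t\cdot 2t)$), but the counting and verification are identical.
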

\begin{proof}
Let $n_t=\lfloor\frac{b-a}{2}\rfloor$, where $\lfloor\cdot\rfloor$ represents the floor function. To prove the first result for $b-a\ge 2t$, we construct the following subset $\overline{\mathcal{I}}$ of $\mathcal{I}$:
\begin{align*}
    \overline{\mathcal{I}}\triangleq &\Big\{a+t, a+t+2t, a+t+4t,\ldots, a+t+(n_t-1)\cdot 2t,\min(b, a+t+n_t\cdot 2t)\Big\}.
\end{align*}
Clearly, $\mathcal{I}\subset\cup_{k\in[n_t]}[a+2t(k-1),a+2tk]\cup[a+2tn_t,b]$. Next, we verify that $\overline{\mathcal{I}}$ is a $t$-cover of $\mathcal{I}$ with respect to metric $\rho$. Specifically, for any $x\in\mathcal{I}$, there exists $k\in[n_t]$ such that $x\in[a+2t\cdot(k-1), a+2t\cdot k]$, or $x\in[a+2tn_t,b]$. For the former case,
\begin{align*}
    \rho(x, a+t+2t\cdot(K-1))\le t;
\end{align*}
and for the latter case,
\begin{align*}
    \rho(x,\min(b, a+t+n_t\cdot 2t))\le t.
\end{align*}
Hence, by Definition \ref{def.covering}, $\overline{\mathcal{I}}$ is a $t$-cover of $\mathcal{I}$, therefore $N(t;\mathcal{I},\rho)\le|\overline{\mathcal{I}}|=n_t+1\le \frac{b-a}{2t}+1$. The first result is then established.

The second result for $b-a\le 2t$ follows from the fact that $\{a+\frac{b-a}{2}\}$ is a $t$-cover of $\mathcal{I}$ since $\rho(x,a+\frac{b-a}{2})\le\frac{b-a}{2}\le t$ for any $x\in\mathcal{I}. $
\end{proof}

\begin{definition}[{\cite[Definition 5.16]{wainwright2019high}}]\label{def.sub_gaussian_pro}
    A collection of zero-mean random variables $\{S_\theta: \theta\in\Theta\}$ is a \emph{sub-Gaussian} process with respect to a metric $\rho$ on $\Theta$ if, for all $\theta_1,\theta_2\in\Theta$ and $t\in\mathbb{R}$,
    \begin{align*}
        \mathbb{E}\left[\exp\left\{t(S_{\theta_1}-S_{\theta_2})\right\}\right]\le\exp\left\{\frac{t^2\rho^2(\theta_1,\theta_2)}{2}\right\}.
    \end{align*}
\end{definition}

\begin{lemma}[Dudley’s Entropy Integral Bound; modified from Theorem 5.22 of \cite{wainwright2019high}]\label{lemma.entropy}
    Let $\{S_\theta: \theta\in\Theta\}$ be a zero-mean sub-Gaussian process with respect to a metric $\rho$ on $\Theta$. Then, 
    \begin{align*}
        \mathbb{E}\left(\sup_{\theta\in\Theta}S_\theta\right)\le 8\sqrt{2}\int_{0}^{\infty}\sqrt{\log N(t; \Theta, \rho)}dt,
    \end{align*}
    where $N(t; \Theta, \rho)$ represents the $t$-covering number of $\Theta$ with respect to $\rho$.
\end{lemma}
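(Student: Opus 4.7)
The plan is to prove this classical bound via the standard chaining argument. First, fix an arbitrary reference point $\theta_0\in\Theta$; since each $S_\theta$ is zero-mean, $\mathbb{E}[\sup_\theta S_\theta]=\mathbb{E}[\sup_\theta(S_\theta-S_{\theta_0})]$, so it suffices to bound the latter. Let $D=\sup_{\theta,\theta'\in\Theta}\rho(\theta,\theta')$ be the diameter (if $D=\infty$ the right-hand side is trivially $+\infty$, so assume $D<\infty$). For each integer $k\ge 0$, set $\delta_k=D\cdot 2^{-k}$ and pick a minimal $\delta_k$-cover $\mathcal{C}_k\subset\Theta$ so that $|\mathcal{C}_k|=N(\delta_k;\Theta,\rho)$, taking $\mathcal{C}_0=\{\theta_0\}$. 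Define the projection $\pi_k:\Theta\to\mathcal{C}_k$ by $\pi_k(\theta)\in\arg\min_{\theta'\in\mathcal{C}_k}\rho(\theta,\theta')$, so $\rho(\theta,\pi_k(\theta))\le\delta_k$.

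Second, I would introduce the telescoping chain, for each $\theta$ and each $K\ge 1$,
\begin{equation*}
S_{\pi_K(\theta)}-S_{\theta_0}=\sum_{k=1}^{K}\bigl(S_{\pi_k(\theta)}-S_{\pi_{k-1}(\theta)}\bigr),
\end{equation*}
and pass to the limit $K\to\infty$, using $\rho(\theta,\pi_K(\theta))\to 0$ together with the sub-Gaussian tail estimate from Definition \ref{def.sub_gaussian_pro} to recover $S_\theta$ in $L^1$ (or, if needed, first restrict to a countable $\rho$-dense subset of $\Theta$ to dodge measurability issues and then take a supremum).

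Third, I would apply the standard sub-Gaussian maximum inequality: for zero-mean sub-Gaussian $X_1,\dots,X_N$ with parameter at most $\sigma$, $\mathbb{E}[\max_i X_i]\le\sigma\sqrt{2\log N}$. By the triangle inequality, $\rho(\pi_k(\theta),\pi_{k-1}(\theta))\le\delta_k+\delta_{k-1}=3\delta_k$, so by Definition \ref{def.sub_gaussian_pro} each link $S_{\pi_k(\theta)}-S_{\pi_{k-1}(\theta)}$ is sub-Gaussian with parameter $3\delta_k$. As $\theta$ ranges over $\Theta$, the pair $(\pi_k(\theta),\pi_{k-1}(\theta))$ takes at most $|\mathcal{C}_k|\cdot|\mathcal{C}_{k-1}|\le N(\delta_k;\Theta,\rho)^2$ distinct values, which yields
\begin{equation*}
\mathbb{E}\!\left[\sup_{\theta\in\Theta}\bigl(S_{\pi_k(\theta)}-S_{\pi_{k-1}(\theta)}\bigr)\right]\le 3\delta_k\sqrt{4\log N(\delta_k;\Theta,\rho)}=6\delta_k\sqrt{\log N(\delta_k;\Theta,\rho)}.
\end{equation*}

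Finally, I would sum over $k\ge 1$ and convert to the integral form: since $\delta_k=2(\delta_k-\delta_{k+1})$ and $t\mapsto\sqrt{\log N(t;\Theta,\rho)}$ is non-increasing, $(\delta_k-\delta_{k+1})\sqrt{\log N(\delta_k;\Theta,\rho)}\le\int_{\delta_{k+1}}^{\delta_k}\sqrt{\log N(t;\Theta,\rho)}\,dt$, so telescoping delivers $\sum_{k\ge 1}\delta_k\sqrt{\log N(\delta_k;\Theta,\rho)}\le 2\int_0^{D}\sqrt{\log N(t;\Theta,\rho)}\,dt\le 2\int_0^\infty\sqrt{\log N(t;\Theta,\rho)}\,dt$, and combining everything gives the desired bound up to absolute constants. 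The main obstacles I foresee are (a) the measurability/separability finessing to make $\sup_\theta S_\theta$ well-defined and to justify the $L^1$ limit in the chaining, and (b) tightening the bookkeeping to recover exactly the constant $8\sqrt{2}$ stated — a crude application of the chain above gives a constant like $12$, so the sharper value is obtained by refining the choice of scales (e.g., using offsets or an optimized geometric ratio) or by appealing directly to the version of the maximum inequality used in the reference.
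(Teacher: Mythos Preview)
The paper does not actually prove this lemma; it is stated as a cited result from \cite{wainwright2019high} and used as a black box in the proofs of Theorem~\ref{thm.bias} and Corollary~\ref{thm.regret_empirical}. So there is no ``paper's own proof'' to compare your proposal against.

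That said, your chaining argument is the standard and correct route to Dudley's bound, and matches what one finds in Wainwright's text. Your own caveat is accurate: the bookkeeping as written yields a constant of $12$ rather than $8\sqrt{2}\approx 11.31$. Since the paper only ever uses this lemma to obtain bounds up to absolute constants (the constants $C_1$, $C_2$ in Theorem~\ref{thm.bias} and Corollary~\ref{thm.regret_empirical} are not tracked explicitly), the precise value $8\sqrt{2}$ is immaterial to any downstream result, and your argument would suffice for every application in the paper.
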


\subsection{Proof of Proposition \ref{thm.dual}}\label{appdx.pf_thm_dual}

Strong duality can be established using Theorem 1 in \cite{gao2023distributionally}, which applies to general cases. However, by capitalizing the discrete nature of the sample space $\mathcal{Y}$, we can present a more concise result. To see this, here we provide an alternative proof of strong duality using the duality principle in finite-dimensional linear programming, as detailed below.

For every fixed $\mathbf{x}\in\mathcal{X}$, $\widetilde{\mathbf{y}}\in\mathcal{Y}^R$, $\psi\in\Psi$, by re-writing the the constraint ${Q_{\mathrm{y}|\mathbf{x},\widetilde{\mathbf{y}}}}\in\Gamma_\epsilon(P_{\mathrm{y}|\mathbf{x},\widetilde{\mathbf{y}}})$ in (\ref{eq.minimax}) using Definition \ref{def.Wasserstein}, we re-express the primal problem (\ref{eq.minimax}) as:
\begin{align*}
    \overline{\mathsf{P}}_{\epsilon}\triangleq\sup_{\Pi\in\mathcal{P}(\mathcal{Y}^2)}\Big\{\int\ell(\psi(\mathbf{x}), \mathrm{y})d\Pi(\mathrm{y},\mathrm{y}'): \int c^p(\mathrm{y},\mathrm{y}')d\Pi(\mathrm{y},\mathrm{y}')\le\epsilon^p,\ \Pi(\mathcal{Y},\cdot)=P_{\mathrm{y}|\mathbf{x},\widetilde{\mathbf{y}}}(\cdot)\Big\}.
\end{align*}
Given the discrete nature of the sample space $\mathcal{Y}$, $\overline{\mathsf{P}}_{\epsilon}$ can be reformulated as the following finite-dimensional linear program: 
\begin{align*}
    \overline{\mathsf{P}}_{\epsilon}
    =& \left\{
    \begin{aligned}
        &\hspace{-0.7cm}\max_{\pi_{jk}\in\mathbb{R}\ \text{with}\ j,k\in[K]}\ \bigg\{\sum_{j,k\in[K]} \ell(\psi(\mathbf{x}),j)\pi_{jk}\bigg\},\\
        \ s.t.\ &\sum_{j,k\in[K]}c^{p}(j,k)\pi_{jk}\le \epsilon^p, \\
        \ &\hspace{0.2cm} \sum_{j=1}^{K}\pi_{jk}=P_{\mathrm{y}|\mathbf{x},\widetilde{\mathbf{y}}}(k)\ \text{for}\ k\in[K],\\
        \ &\hspace{0.2cm} \pi_{jk}\ge 0\ \text{for}\ j,k\in[K],
    \end{aligned}
    \right.
\end{align*}
where $\pi_{jk}=\Pi(\mathrm{Y}=j,\mathrm{Y}'=k)$ for $j,k\in[K]$. By Lemma \ref{lemma.dual_linear}, each primal constraint corresponds to a dual variable. Introducing the dual variables $\gamma$ and $\tau_k$ for $k\in[K]$, the dual linear program for $\overline{\mathsf{P}}_{\epsilon}$ is then expressed as 
\begin{align}\label{appdx.pf_prop_dual}
    \overline{\mathsf{D}}_{\epsilon}
    =& \left\{
    \begin{aligned}
        &\hspace{-0.8cm}\min_{\gamma,\tau_k\in\mathbb{R}\ \text{with}\ k\in[K]} \ \Big\{\gamma\epsilon^p+\sum_{k=1}^{K}P_{\mathrm{y}|\mathbf{x},\widetilde{\mathbf{y}}}(k)\tau_{k}\Big\}, \\
        \ s.t.\ &\gamma c^{p}(j,k)+\tau_k\ge \ell(\psi(\mathbf{x}), j)\ \text{for}\ j,k\in[K], \\
        \ & \gamma\ge 0,
    \end{aligned}
    \right.
\end{align}
where first constraint can be written as $\tau_k\ge\max_{j\in[K]}\big\{\ell(\psi(\mathbf{x}), j)-\gamma c^{p}(j,k)\big\}$. Therefore, to minimize the objective function in (\ref{appdx.pf_prop_dual}), the value of the dual variable $\tau_k$ should be taken as $\max_{j\in[K]}\big\{\ell(\psi(\mathbf{x}), j)-\gamma c^{p}(j,k)\big\}$ for each $\gamma\ge 0$ and  $k\in[K]$. Hence, the proof is established.

\subsection{Proof of Remark \ref{remark.dual_emp}}\label{appdx.pf_remark_dual_emp}
As in the proof of Proposition \ref{thm.dual}, the optimization problem in (\ref{eq.re_rob_risk_emp_prim}) can be written as
\begin{align*}
    \overline{\mathsf{P}}_{\epsilon}
    \triangleq&\sup\left\{\frac{1}{n}\sum_{i=1}^{n}\left[\mathbb{E}_{Q_{\mathrm{y}|\mathbf{x}_i,\widetilde{\mathbf{y}}_i}}\left\{\ell(\psi(\mathbf{x}_i), \mathrm{Y})\right\}\right]:\frac{1}{n}\sum_{i=1}^{n}\mathscr{d}(Q_{\mathrm{y}|\mathbf{x}_i,\widetilde{\mathbf{y}}_i},P_{\mathrm{y}|\mathbf{x}_i,\widetilde{\mathbf{y}}_i})\le\epsilon\right\}\\
    =& \sup_{\Pi^{(i)}\in\mathcal{P}(\mathcal{Y}^2),i\in[n]}\bigg\{\frac{1}{n}\sum_{i=1}^{n}\int\ell(\psi(\mathbf{x}_i), \mathrm{y})d\Pi^{(i)}(\mathrm{y},\mathrm{y}'): 
    \frac{1}{n}\sum_{i=1}^{n}\int c(\mathrm{y},\mathrm{y}')d\Pi^{(i)}(\mathrm{y},\mathrm{y}')\le\epsilon,\\
    \ &\ \ \ \ \ \ \ \ \ \ \ \ \ \ \ \ \ \ \ \ \Pi^{(i)}(\mathcal{Y},\cdot)=P_{\mathrm{y}|\mathbf{x}_i,\widetilde{\mathbf{y}}_i}(\cdot)\ \text{for}\ i\in[n]\bigg\},
\end{align*}
where the first step is due to the definition of the empirical distribution $P^{(n)}_{\mathbf{x},\widetilde{\mathbf{y}}}$, and the second step holds by re-writing the constraint $\frac{1}{n}\sum_{i=1}^{n}\mathscr{d}(Q_{\mathrm{y}|\mathbf{x}_i,\widetilde{\mathbf{y}}_i},P_{\mathrm{y}|\mathbf{x}_i,\widetilde{\mathbf{y}}_i})\le\epsilon$ using the definition of the the Wasserstein distance $\mathscr{d}(\cdot,\cdot)$ given in Definition \ref{def.Wasserstein}.

$\overline{\mathsf{P}}_{\epsilon}$ can be further expressed as the finite-
dimensional linear program:
\begin{align*}
    \overline{\mathsf{P}}_{\epsilon}
    =& \left\{
    \begin{aligned}
        &\hspace{-1.1cm}\max_{\pi^{(i)}_{jk}\in\mathbb{R}\ \text{with} \ i\in[n], j,k\in[K]}\ \bigg\{\frac{1}{n}\sum_{i=1}^{n}\sum_{j,k\in[K]} \ell(\psi(\mathbf{x}_i),j)\pi^{(i)}_{jk}\bigg\},\\
        \hspace{0.5cm} s.t.&\ \ \frac{1}{n}\sum_{i=1}^{n}\sum_{j,k\in[K]}c(j,k)\pi^{(i)}_{jk}\le \epsilon, \\
        \hspace{0.5cm} &\ \sum_{j=1}^{K}\pi^{(i)}_{jk}=P_{\mathrm{y}|\mathbf{x}_i,\widetilde{\mathbf{y}}_i}(k)\ \text{for}\ i\in[n]\ \text{and}\ k\in[K],\\
        \hspace{0.5cm} &\ \pi^{(i)}_{jk}\ge 0\ \text{for}\ i\in[n]\ \text{and}\ j,k\in[K],
    \end{aligned}
    \right.
\end{align*}
where $\pi^{(i)}_{jk}=\Pi^{(i)}(\mathrm{Y}=j,\mathrm{Y}'=k)$ for $i\in[n]$ and $j,k\in[K]$. 

By Lemma \ref{lemma.dual_linear} and introducing dual variables $\gamma$ and $\tau^{(i)}_k$ with $i\in[n]$ and $k\in[K]$, the dual linear program for $\overline{\mathsf{P}}_{\epsilon}$ is expressed as 
\begin{align*}
    \overline{\mathsf{D}}_{\epsilon}
    =& \left\{
    \begin{aligned}
        &\hspace{-1.2cm}\min_{\gamma,\tau^{(i)}_k\in\mathbb{R}\ \text{for}\ i\in[n]\ \text{and}\ k\in[K]}\ \bigg\{\gamma\epsilon+\sum_{i=1}^{n}\sum_{k=1}^{K}P_{\mathrm{y}|\mathbf{x}_i,\widetilde{\mathbf{y}}_i}(k)\tau^{(i)}_{k}\bigg\}, \\
        \hspace{0.6cm} s.t.&\ \ \frac{1}{n}\gamma c(j,k)+\tau^{(i)}_k\ge \frac{1}{n}\ell(\psi(\mathbf{x}_i), j)\ \text{for}\ i\in[n]\ \text{and}\ j,k\in[K], \\
        \hspace{0.6cm} &\ \ \gamma\ge 0,
    \end{aligned}
    \right.\\
    =& \left\{
    \begin{aligned}
        &\hspace{-1.2cm}\min_{\gamma,\tau^{(i)}_k\in\mathbb{R}\ \text{for}\ i\in[n]\ \text{and}\ k\in[K]}\ \bigg\{\gamma\epsilon+\frac{1}{n}\sum_{i=1}^{n}\sum_{k=1}^{K}P_{\mathrm{y}|\mathbf{x}_i,\widetilde{\mathbf{y}}_i}(k)\widetilde{\tau}^{(i)}_{k}\bigg\}, \\
        \hspace{0.6cm} s.t.&\ \ \gamma c(j,k)+\widetilde{\tau}^{(i)}_k\ge \ell(\psi(\mathbf{x}_i), j)\ \text{for}\ i\in[n]\ \text{and}\ j,k\in[K], \\
        \hspace{0.6cm} &\ \ \gamma\ge 0,
    \end{aligned}
    \right.
\end{align*}
where we let $\widetilde{\tau}^{(i)}_k=n\tau^{(i)}_k$ for $i\in[n]$ and $k\in[K]$ in the second step. Thus, the proof is completed.


\subsection{Proof of Theorem \ref{thm.bias}}\label{appdx.pf_thm_bias}

We begin by demonstrating that, for various choices of the reference distribution, the optimal value for $\gamma$ in the relaxed dual problem, as stated in (\ref{eq.re_rob_risk}), is constrained to a compact set. The proof techniques in \cite{lee2018minimax} are used.

Specifically, for a given $\epsilon>0$ and $\psi\in\Psi$, let 
\begin{align*}
    \gamma^*\in \arg\inf_{\gamma\ge 0}\mathbb{E}_{\mathbf{x},\widetilde{\mathbf{y}}}\bigg(\gamma\epsilon^p +\mathbb{E}_{P_{\mathrm{y}|\mathbf{x},\widetilde{\mathbf{y}}}}\Big[\sup_{y'\in\mathcal{Y}}\left\{\ell(\psi(\mathbf{X}),y')-\gamma c^p(y',\mathrm{Y})\Big]\right\}\bigg).
\end{align*}
Noting that for any $\mathbf{x}\in\mathcal{X}$ and $\mathrm{y}\in\mathcal{Y}$, $\sup_{y'\in\mathcal{Y}}\big\{\ell(\psi(\mathbf{x}),y')-\ell(\psi(\mathbf{x}),\mathrm{y})-\gamma^* c^p(y',\mathrm{y})\big\}\ge\big\{\ell(\psi(\mathbf{x}),y')-\ell(\psi(\mathbf{x}),\mathrm{y})-\gamma^* c^p(y',\mathrm{y})\big\}\big|_{y'=\mathrm{y}}=0$, we obtain that for any $\gamma\ge 0$,
\begin{align}\label{appdx.pf_bias_gamma*}
    \gamma^*\epsilon^p\le&\gamma^*\epsilon^p+\mathbb{E}_{\mathbf{x},\widetilde{\mathbf{y}}}\bigg(\mathbb{E}_{P_{\mathrm{y}|\mathbf{x},\widetilde{\mathbf{y}}}}\Big[\sup_{y'\in\mathcal{Y}}\big\{\ell(\psi(\mathbf{X}),y')-\ell(\psi(\mathbf{X}),\mathrm{Y})-\gamma^* c^p(y',\mathrm{Y})\big\}\Big]\bigg)\notag\\
    \le&\gamma\epsilon^p+\mathbb{E}_{\mathbf{x},\widetilde{\mathbf{y}}}\bigg(\mathbb{E}_{P_{\mathrm{y}|\mathbf{x},\widetilde{\mathbf{y}}}}\Big[\sup_{y'\in\mathcal{Y}}\big\{\ell(\psi({\mathbf{X}}),y')-\ell(\psi({\mathbf{X}}),\mathrm{Y})-\gamma c^p(y',\mathrm{Y})\big\}\Big]\bigg)\notag\\
    \le&\gamma\epsilon^p+\mathbb{E}_{\mathbf{x},\widetilde{\mathbf{y}}}\bigg(\mathbb{E}_{P_{\mathrm{y}|\mathbf{x},\widetilde{\mathbf{y}}}}\Big[\sup_{y'\in\mathcal{Y}}\big\{L\cdot c(y',\mathrm{Y})-\gamma c^p(y',\mathrm{Y})\big\}\Big]\bigg)\notag\\
    \le& \gamma\epsilon^p+\sup_{t\ge 0}\big\{Lt-\gamma t^p\big\},
\end{align}
where the second inequality is due to the definition of $\gamma^*$; the third inequality comes from the Lipschitz property in the assumption; and the last inequality holds because $c(y',\mathrm{Y})=\kappa\mathbf{1}(y'\ne\mathrm{Y})$ takes values in $\{0,\kappa\}$, leading to $\sup_{y'\in\mathcal{Y}}\big\{L\cdot c(y',\mathrm{Y})-\gamma \cdot c^p(y',\mathrm{Y})\big\}=\sup_{t\in\{0, \kappa\}}\big\{Lt-\gamma t^p\big\}\le \sup_{t\ge 0}\big\{Lt-\gamma t^p\big\}$, which is a constant.

Now, we show that 
\begin{align}\label{appdx.pf_gamma_upper}
    \gamma^*\le L\epsilon^{-(p-1)}\triangleq M^*.
\end{align}
Indeed, when $p=1$, then taking $\gamma=L$ in (\ref{appdx.pf_bias_gamma*}) shows (\ref{appdx.pf_gamma_upper}). When $p>1$, then $Lt-\gamma t^p$ in (\ref{appdx.pf_bias_gamma*}) takes its maximum value at $t^*=\{L/(p\gamma)\}^{1/(p-1)}$, and hence, (\ref{appdx.pf_bias_gamma*}) yields that for any $\gamma\ge 0$,
\begin{align*}
    \gamma^*\epsilon^p\le& \gamma\epsilon^p+ L\cdot \{L/(p\gamma)\}^{1/(p-1)}-\gamma \{L/(p\gamma)\}^{p/(p-1)}\\
    =&\gamma\epsilon^p + L^{p/(p-1)}\gamma^{-1/(p-1)}p^{-p/(p-1)}(p-1).
\end{align*}
Therefore, taking $\gamma=L/(p\epsilon^{p-1})$ leads to $\gamma^*\epsilon^p\le L\epsilon$, i.e., (\ref{appdx.pf_gamma_upper}) holds.

Next, let $\ell_{\gamma,\psi}(\mathbf{x},\widetilde{\mathbf{y}})\triangleq \mathbb{E}_{P_{\mathrm{y}|\mathbf{x},\widetilde{\mathbf{y}}}}\Big[\sup_{y'\in\mathcal{Y}}\left\{\ell(\psi(\mathbf{x}),y')-\gamma c^p(y',\mathrm{Y})\right\}\Big]$ for any $(\mathbf{x},\widetilde{\mathbf{y}})\in\mathcal{X}\times\mathcal{Y}^R$. For every $\psi\in\Psi$, we have that 
\begin{align}\label{appdx.pf_bias_upper}
    &\left|\mathfrak{R}_{\epsilon}(\psi;P_{\mathrm{y}|\mathbf{x},\widetilde{\mathbf{y}}})-\widehat{\mathfrak{R}}_{\epsilon}(\psi;P_{\mathrm{y}|\mathbf{x},\widetilde{\mathbf{y}}})\right|\notag\\
    =&\bigg|\inf_{\gamma\ge 0}\mathbb{E}_{\mathbf{x},\widetilde{\mathbf{y}}}\bigg(\gamma\epsilon^p +\mathbb{E}_{P_{\mathrm{y}|\mathbf{x},\widetilde{\mathbf{y}}}}\Big[\sup_{y'\in\mathcal{Y}}\left\{\ell(\psi(\mathbf{X}),y')-\gamma c^p(y',\mathrm{Y})\right\}\Big]\bigg)\notag\\
    &\ \ \ \ \ -\inf_{\gamma\ge 0}\mathbb{E}_{P^{(n)}_{\mathbf{x},\widetilde{\mathbf{y}}}}\bigg(\gamma\epsilon^p +\mathbb{E}_{P_{\mathrm{y}|\mathbf{x},\widetilde{\mathbf{y}}}}\Big[\sup_{y'\in\mathcal{Y}}\left\{\ell(\psi(\mathbf{X}),y')-\gamma c^p(y',\mathrm{Y})\right\}\Big]\bigg)\bigg|\notag\\
    =&\Big|\inf_{0\le \gamma\le M^*}\mathbb{E}_{\mathbf{x},\widetilde{\mathbf{y}}}\left\{\gamma\epsilon^p+\ell_{\gamma,\psi}(\mathbf{X},\widetilde{\mathbf{Y}})\right\}-\inf_{0\le \gamma\le M^*}\mathbb{E}_{P^{(n)}_{\mathbf{x},\widetilde{\mathbf{y}}}}\left\{\gamma\epsilon^p+\ell_{\gamma,\psi}(\mathbf{X},\widetilde{\mathbf{Y}})\right\}\Big|\notag\\
    \le&\sup_{0\le \gamma\le M^*}\Big|\mathbb{E}_{\mathbf{x},\widetilde{\mathbf{y}}}\left\{\ell_{\gamma,\psi}(\mathbf{X},\widetilde{\mathbf{Y}})\right\}-\mathbb{E}_{P^{(n)}_{\mathbf{x},\widetilde{\mathbf{y}}}}\left\{\ell_{\gamma,\psi}(\mathbf{X},\widetilde{\mathbf{Y}})\right\}\Big|\notag\\
    \triangleq& \Phi(\mathcal{D})
\end{align}
where the first equality comes from (\ref{eq.re_rob_risk}) and (\ref{eq.re_rob_risk_emp}), the second equality follows from (\ref{appdx.pf_gamma_upper}) and the definition of $\ell_{\gamma,\psi}$, and the third step is due to the fact that $|\inf_{v\in A} f(v) - \inf_{v\in A} g(v)|\le \sup_{v\in A} |f(v)-g(v)|$ for bounded functions $f,g: A\rightarrow \mathbb{R}$ \cite[Proposition 2.18]{huntsupremum}. 

In (\ref{appdx.pf_bias_upper}), we use $\mathcal{D}$ to stress the dependence of $\mathbb{E}_{P^{(n)}_{\mathbf{x},\widetilde{\mathbf{y}}}}\left\{\ell_{\gamma,\psi}(\mathbf{X},\widetilde{\mathbf{Y}})\right\}$ on the observed data of size $n$, as defined in Section \ref{sec.setup}, and let $\Phi$ represent the resulting function mapping from $\left(\mathcal{X}\times\mathcal{Y}^R\right)^n$ to $\mathbb{R}$, with $\Phi(\mathcal{D})$ being the value for data $\mathcal{D}$, where $\left(\mathcal{X}\times\mathcal{Y}^R\right)^n$ is the Cartesian product of multiplying $\mathcal{X}\times\mathcal{Y}^R$ $n$ times. The function $\Phi:\left(\mathcal{X}\times\mathcal{Y}^R\right)^n\rightarrow\mathbb{R}$ defined in (\ref{appdx.pf_bias_upper}) satisfies the bounded difference property (\ref{appdx.bounded_diff_def}) with parameters $\left\{\frac{M}{n},\ldots,\frac{M}{n}\right\}$, where $M$ represents the upper bound of the loss function $\ell$ in the assumption of Theorem \ref{thm.bias}. Indeed, for any $k\in[n]$,
\begin{align*}
    &\sup_{z_1,\ldots,z_n,z'_k\in\mathcal{X}\times\mathcal{Y}^R}\big|\Phi(z_1,\ldots,z_k,\ldots,z_n)-\Phi(z_1,\ldots,z'_k,\ldots,z_n)\big|\\
    =&\sup_{z_1,\ldots,z_n,z'_k\in\mathcal{X}\times\mathcal{Y}^R}\bigg|\sup_{0\le \gamma\le M^*}\Big|\mathbb{E}_{\mathbf{x},\widetilde{\mathbf{y}}}\left\{\ell_{\gamma,\psi}(\mathbf{X},\widetilde{\mathbf{Y}})\right\}-\frac{1}{n}\sum_{i=1}^{n}\ell_{\gamma,\psi}(z_i)\Big|\\
    &\hspace{2.2cm}-\sup_{0\le \gamma\le M^*}\Big|\mathbb{E}_{\mathbf{x},\widetilde{\mathbf{y}}}\left\{\ell_{\gamma,\psi}(\mathbf{X},\widetilde{\mathbf{Y}})\right\}-\frac{1}{n}\sum_{i=1}^{n}\ell_{\gamma,\psi}(z_i)-\frac{1}{n}\ell_{\gamma,\psi}(z'_k)+\frac{1}{n}\ell_{\gamma,\psi}(z_k)\Big|\bigg|\\
    \le& \sup_{\scriptstyle z_1,\ldots,z_n,z'_k\in\mathcal{X}\times\mathcal{Y}^R, 0\le \gamma\le M^*}\bigg|\Big|\mathbb{E}_{\mathbf{x},\widetilde{\mathbf{y}}}\left\{\ell_{\gamma,\psi}(\mathbf{X},\widetilde{\mathbf{Y}})\right\}-\frac{1}{n}\sum_{i=1}^{n}\ell_{\gamma,\psi}(z_i)\Big|\\ 
    &\hspace{3.5cm}-\Big|\mathbb{E}_{\mathbf{x},\widetilde{\mathbf{y}}}\left\{\ell_{\gamma,\psi}(\mathbf{X},\widetilde{\mathbf{Y}})\right\}-\frac{1}{n}\sum_{i=1}^{n}\ell_{\gamma,\psi}(z_i)-\frac{1}{n}\ell_{\gamma,\psi}(z'_k)+\frac{1}{n}\ell_{\gamma,\psi}(z_k)\Big|\bigg|\\
    \le& \sup_{\scriptstyle z_1,\ldots,z_n,z'_k\in\mathcal{X}\times\mathcal{Y}^R, 0\le \gamma\le M^*}\bigg|\frac{1}{n}\ell_{\gamma,\psi}(z'_k)-\frac{1}{n}\ell_{\gamma,\psi}(z_k)\bigg|\\
    \le& \frac{M}{n},
\end{align*}
where the second step holds since $|\sup_{v\in A} f(v) - \sup_{v\in A} g(v)|\le \sup_{v\in A} |f(v)-g(v)|$ for bounded functions $f,g: A\rightarrow \mathbb{R}$ \cite[Proposition 2.18]{huntsupremum}, the third step is due to the triangle inequality for absolute values, and the last step holds since $\ell_{\gamma,\psi}\in[0,M]$ by definition. 

Thus, by letting $t=M\sqrt{\frac{\log (1/\eta)}{2n}}$ in lemma \ref{lemma.concentration}, we have that, with probability at least $1-\eta$, 
\begin{align}\label{appdx.pf_bias_phi_D_diff}
    \Phi(\mathcal{D})\le& \mathbb{E}_{\mathbf{x},\widetilde{\mathbf{y}}}\left\{\Phi(\mathcal{D})\right\}+M\sqrt{\frac{\log (1/\eta)}{2n}}.
\end{align}
Similar to the derivations for (3.8)-(3.13) in the proof of Theorem 3.3 in \cite{mohri2018foundations}, we obtain that 
\begin{align}\label{appdx.pf_bias_rade}
    \mathbb{E}_{\mathbf{x},\widetilde{\mathbf{y}}}\left\{\Phi(\mathcal{D})\right\}\le 2 \mathbb{E}\left[\sup_{\gamma\in[0,M^*]}\frac{1}{n}\sum_{i=1}^{n}\sigma_i\ell_{\gamma,\psi}(\mathbf{X}_i,\widetilde{\mathbf{Y}}_i)\right],
\end{align}
where $\{\sigma_i\}_{i=1}^{n}$ are independent random variables chosen from $\{-1,+1\}$ with equal probability, and the expectation is taken with respect to all involved random variables. 

Applying (\ref{appdx.pf_bias_phi_D_diff}) and (\ref{appdx.pf_bias_rade}) to (\ref{appdx.pf_bias_upper}) gives that with probability at least $1-\eta$, 
\begin{align}\label{appdx.pf_bias_upper_all}
    \left|\mathfrak{R}_{\epsilon}(\psi;P_{\mathrm{y}|\mathbf{x},\widetilde{\mathbf{y}}})-\widehat{\mathfrak{R}}_{\epsilon}(\psi;P_{\mathrm{y}|\mathbf{x},\widetilde{\mathbf{y}}})\right|\le 
    2 \mathbb{E}\left[\sup_{\gamma\in[0,M^*]}\frac{1}{n}\sum_{i=1}^{n}\sigma_i\ell_{\gamma,\psi}(\mathbf{X}_i,\widetilde{\mathbf{Y}}_i)\right]+M\sqrt{\frac{\log (1/\eta)}{2n}}.
\end{align}

Now we identify an entropy based upper bound for the right-hand side of (\ref{appdx.pf_bias_rade}) using the proof techniques in \cite{lee2018minimax} and Example 5.24 of \cite{wainwright2019high}. Specifically, for a given $\psi$, we define the random process $\left\{S_{\gamma}\triangleq\frac{1}{\sqrt{n}}\sum_{i=1}^{n}\sigma_i\ell_{\gamma,\psi}(\mathbf{X}_i,\widetilde{\mathbf{Y}}_i):\gamma\in[0,M^*]\right\}$. By using $\mathbb{E}(\sigma_i)=0$ and the independence between $\sigma_i$ and $(\mathbf{X}_i,\widetilde{\mathbf{Y}}_i)$, we obtain that $\mathbb{E}(S_{\gamma})=\frac{1}{\sqrt{n}}\sum_{i=1}^{n}\mathbb{E}(\sigma_i)\mathbb{E}\big\{\ell_{\gamma,\psi}(\mathbf{X}_i,\widetilde{\mathbf{Y}}_i)\big\}=0$. For any $\gamma_1,\gamma_2\in[0,M^*]$, 
\begin{align}\label{appdx.pf_bias_lip}
    &\big|\ell_{\gamma_1,\psi}(\mathbf{x},\widetilde{\mathbf{y}})-\ell_{\gamma_2,\psi}(\mathbf{x},\widetilde{\mathbf{y}})\big|\notag\\
    =&\big|\mathbb{E}_{P_{\mathrm{y}|\mathbf{x},\widetilde{\mathbf{y}}}}\sup_{y'\in\mathcal{Y}}\left\{\ell(\psi(\mathbf{x}),y')-\gamma_1 c^p(y',\mathrm{Y})\right\}-\mathbb{E}_{P_{\mathrm{y}|\mathbf{x},\widetilde{\mathbf{y}}}}\sup_{y'\in\mathcal{Y}}\left\{\ell(\psi(\mathbf{x}),y')-\gamma_2 c^p(y',\mathrm{Y})\right\}\big|\notag\\
    \le&\mathbb{E}_{P_{\mathrm{y}|\mathbf{x},\widetilde{\mathbf{y}}}}\left|\sup_{y'\in\mathcal{Y}}\left\{\ell(\psi(\mathbf{x}),y')-\gamma_1 c^p(y',\mathrm{Y})\right\}-\sup_{y'\in\mathcal{Y}}\left\{\ell(\psi(\mathbf{x}),y')-\gamma_2 c^p(y',\mathrm{Y})\right\}\right|\notag\\
    \le& \mathbb{E}_{P_{\mathrm{y}|\mathbf{x},\widetilde{\mathbf{y}}}}\sup_{y'\in\mathcal{Y}}\big|\gamma_1 c^p(y',\mathrm{Y})-\gamma_2 c^p(y',\mathrm{Y})\big|\notag\\
    =&\kappa^p|\gamma_1-\gamma_2|,
\end{align}
where the second inequality is due to the fact that $|\sup_A f - \sup_A g|\le \sup_A |f-g|$ for bounded functions $f,g: A\rightarrow \mathbb{R}$ \cite[Proposition 2.18]{huntsupremum}, and the last step is due to the fact that $c(y',\mathrm{Y})$ can only take values in $\{0,\kappa\}$.

Hence, for $t\in\mathbb{R}$, we have that 
\begin{align}\label{appdx.pf_bias_sub_gaussian_verify}
    \mathbb{E}\left\{e^{t(S_{\gamma_1}-S_{\gamma_2})}\right\}=&\mathbb{E}\left\{\exp\left[\frac{t}{\sqrt{n}}\sum_{i=1}^{n}\sigma_i\left\{\ell_{\gamma_1,\psi}(\mathbf{X}_i,\widetilde{\mathbf{Y}}_i)-\ell_{\gamma_2,\psi}(\mathbf{X}_i,\widetilde{\mathbf{Y}}_i)\right\}\right]\right\}\notag\\
    =&\left\{\mathbb{E}\left(\exp\left[\frac{t}{\sqrt{n}}\sigma_1\left\{\ell_{\gamma_1,\psi}(\mathbf{X}_1,\widetilde{\mathbf{Y}}_1)-\ell_{\gamma_2,\psi}(\mathbf{X}_1,\widetilde{\mathbf{Y}}_1)\right\}\right]\right)\right\}^n\notag\\
    \le&\exp\left\{\frac{t^2(\kappa^p|\gamma_1-\gamma_2|)^2}{2}\right\},
\end{align}
where the inequality is due to Hoeffding's lemma and (\ref{appdx.pf_bias_lip}). Thus, $\{S_\gamma: \gamma\in[0,M^*]\}$ is a zero-mean sub-Gaussian process with respect to metric $\rho_\gamma$, defined as $\rho_\gamma(\gamma_1,\gamma_2)=\kappa^p |\gamma_1-\gamma_2|$ for any $\gamma_1,\gamma_2\in[0, M^*]$. 

Therefore, by Lemma \ref{lemma.entropy}, we obtain 
\begin{align}\label{appdx.pf_bias_upper_rade}
    &\mathbb{E}\left[\sup_{\gamma\in[0,M^*]}\frac{1}{n}\sum_{i=1}^{n}\sigma_i\ell_{\gamma,\psi}(\mathbf{X}_i,\widetilde{\mathbf{Y}}_i)\right]\notag\\
    =&\frac{1}{\sqrt{n}}\mathbb{E}\left(\sup_{\gamma\in[0,M^*]}S_\gamma\right)\notag\\
    \le& \frac{8\sqrt{2}}{\sqrt{n}}\int_{0}^{\infty}\sqrt{\log N(t; [0, M^*], \rho_\gamma)}dt\notag\\
    =& \frac{8\sqrt{2}}{\sqrt{n}}\int_{0}^{\infty}\sqrt{\log N(t/\kappa^p; [0, M^*], |\cdot|)}dt\notag\\
    =& \frac{8\sqrt{2}\kappa^p}{\sqrt{n}}\int_{0}^{\infty}\sqrt{\log N(s; [0, M^*], |\cdot|)}ds\notag\\
    \le& \frac{8\sqrt{2}\kappa^p}{\sqrt{n}}\int_{0}^{M^*/2}\sqrt{\log \left(\frac{M^*}{2s}+1\right)}ds\notag\\
    \le& \frac{8\sqrt{2}\kappa^p}{\sqrt{n}}\int_{0}^{M^*/2}\sqrt{\log \left(\frac{M^*}{s}\right)}ds\notag\\
    =&\frac{8\sqrt{2}}{\sqrt{n}} M^*\kappa^p \int_{0}^{1/2}\sqrt{\log(1/u)}du\notag\\
    =&\frac{4\sqrt{2}\{\sqrt{\log 2}+\sqrt{\pi}\text{erfc}(\sqrt{\log 2})\}}{\sqrt{n}} M^*\kappa^p,
\end{align}
where the third step comes from the fact that, by the definition of $\rho_\gamma$,  $\rho_\gamma(\gamma_1,\gamma_2)\le t$ if and only if $|\gamma_1-\gamma_2|\le t/\kappa^p$; the fourth step holds by letting $s=t/\kappa^p$; the fifth step follows from Lemma \ref{lemma.covering_interval}; the sixth step comes from the fact that $1<\frac{M^*}{2s}+1=\frac{M^*+2s}{2s}\le\frac{M^*}{s}$ for $s\in[0,M^*/2]$; the penultimate step holds by letting $u=s/M^*$; and the last step arises from the fact that 
\begin{align*}
    &\int_{0}^{1/2}\sqrt{\log(1/u)}du=\int_{0}^{1/2}\sqrt{-\log u}du=\int_{0}^{1/2}\int_{0}^{-\log (u)}\frac{1}{2\sqrt{s}}dsdu\\
    =&\int_{0}^{\log 2}\int_{0}^{1/2}\frac{1}{2\sqrt{s}}duds+\int_{\log 2}^{\infty}\int_{0}^{e^{-s}}\frac{1}{2\sqrt{s}}duds=\frac{\sqrt{\log 2}}{2}+\int_{\log 2}^{\infty}\frac{e^{-s}}{2\sqrt{s}}ds\\
    =&\frac{\sqrt{\log 2}}{2}+\int_{\sqrt{\log 2}}^{\infty}\frac{e^{-w^2}}{2w}2wdw=\frac{\sqrt{\log 2}}{2}+\frac{\sqrt{\pi}}{2}\text{erfc}(\sqrt{\log 2}),
\end{align*}
where $\text{erfc}(x)=\frac{2}{\sqrt{\pi}}\int_{x}^{\infty}e^{-w^2}dw$. 

Applying (\ref{appdx.pf_bias_upper_rade}) and (\ref{appdx.pf_gamma_upper}) to (\ref{appdx.pf_bias_upper_all}) gives that with probability at least $1-\eta$, 
\begin{align*}
    \left|\mathfrak{R}_{\epsilon}(\psi;P_{\mathrm{y}|\mathbf{x},\widetilde{\mathbf{y}}})-\widehat{\mathfrak{R}}_{\epsilon}(\psi;P_{\mathrm{y}|\mathbf{x},\widetilde{\mathbf{y}}})\right|\le& \frac{8\sqrt{2}(\sqrt{\log 2}+\sqrt{\pi}\text{erfc}(\sqrt{\log 2}))}{\sqrt{n}}\cdot \frac{L\kappa^p}{\epsilon^{p-1}}+M\sqrt{\frac{\log (1/\eta)}{2n}}\\
    <&\frac{15L\kappa^p}{\epsilon^{p-1}\sqrt{n}}+M\sqrt{\frac{\log (1/\eta)}{2n}},
\end{align*}
where the last step is due to the fact that $8\sqrt{2}(\sqrt{\log 2}+\sqrt{\pi}\text{erfc}(\sqrt{\log 2}))\approx 14.2<15$. Hence, the proof is completed.

\subsection{Proof of Corollary \ref{thm.regret_empirical}}\label{appdx.pf_thm_regret_empirical}

By the definition of $\inf_{\psi\in\Psi}\mathfrak{R}_{\epsilon}(\psi;P_{\mathrm{y}|\mathbf{x},\widetilde{\mathbf{y}}})$, for any $\zeta>0$, there exists $\psi_\zeta\in\Psi$ such that $\mathfrak{R}_{\epsilon}(\psi_\zeta;P_{\mathrm{y}|\mathbf{x},\widetilde{\mathbf{y}}})\le\inf_{\psi\in\Psi}\mathfrak{R}_{\epsilon}(\psi;P_{\mathrm{y}|\mathbf{x},\widetilde{\mathbf{y}}})+\zeta$. Therefore,
\begin{align*}
    &\mathfrak{R}_{\epsilon}(\widehat{\psi}_{\epsilon,n};P_{\mathrm{y}|\mathbf{x},\widetilde{\mathbf{y}}})- \inf_{\psi\in\Psi}\mathfrak{R}_{\epsilon}(\psi;P_{\mathrm{y}|\mathbf{x},\widetilde{\mathbf{y}}})\\
    \le&\mathfrak{R}_{\epsilon}(\widehat{\psi}_{\epsilon,n};P_{\mathrm{y}|\mathbf{x},\widetilde{\mathbf{y}}})- \mathfrak{R}_{\epsilon}(\psi_\zeta;P_{\mathrm{y}|\mathbf{x},\widetilde{\mathbf{y}}})+\zeta\\
    \le&\mathfrak{R}_{\epsilon}(\widehat{\psi}_{\epsilon,n};P_{\mathrm{y}|\mathbf{x},\widetilde{\mathbf{y}}})-\widehat{\mathfrak{R}}_{\epsilon}(\widehat{\psi}_{\epsilon,n};P_{\mathrm{y}|\mathbf{x},\widetilde{\mathbf{y}}})+\widehat{\mathfrak{R}}_{\epsilon}(\psi_\zeta;P_{\mathrm{y}|\mathbf{x},\widetilde{\mathbf{y}}})- \mathfrak{R}_{\epsilon}(\psi_\zeta;P_{\mathrm{y}|\mathbf{x},\widetilde{\mathbf{y}}})+\zeta\\
    \le& 2\sup_{\psi\in\Psi}\left|\mathfrak{R}_{\epsilon}(\psi;P_{\mathrm{y}|\mathbf{x},\widetilde{\mathbf{y}}})-\widehat{\mathfrak{R}}_{\epsilon}(\psi;P_{\mathrm{y}|\mathbf{x},\widetilde{\mathbf{y}}})\right|+\zeta.
\end{align*}
Since the inequality above is true for all $\zeta>0$, we have that 
\begin{align}\label{appdx.pf_regret_empirical_upper}
    &\mathfrak{R}_{\epsilon}(\widehat{\psi}_{\epsilon,n};P_{\mathrm{y}|\mathbf{x},\widetilde{\mathbf{y}}})- \inf_{\psi\in\Psi}\mathfrak{R}_{\epsilon}(\psi;P_{\mathrm{y}|\mathbf{x},\widetilde{\mathbf{y}}})\notag\\
    \le& 2\sup_{\psi\in\Psi}\left|\mathfrak{R}_{\epsilon}(\psi;P_{\mathrm{y}|\mathbf{x},\widetilde{\mathbf{y}}})-\widehat{\mathfrak{R}}_{\epsilon}(\psi;P_{\mathrm{y}|\mathbf{x},\widetilde{\mathbf{y}}})\right|\notag\\
    \le& 2\sup_{\psi\in\Psi,0\le \gamma\le M^*}\Big|\mathbb{E}_{\mathbf{x},\widetilde{\mathbf{y}}}\left\{\ell_{\gamma,\psi}(\mathbf{X},\widetilde{\mathbf{Y}})\right\}-\mathbb{E}_{P^{(n)}_{\mathbf{x},\widetilde{\mathbf{y}}}}\left\{\ell_{\gamma,\psi}(\mathbf{X},\widetilde{\mathbf{Y}})\right\}\Big|,
\end{align}
where the second inequality arises from (\ref{appdx.pf_bias_upper}).

Similar to the proof of Theorem \ref{thm.bias} in Appendix \ref{appdx.pf_thm_bias}, we can derive that 
\begin{align}\label{appdx.pf_regret_empirical_rade}
    &\sup_{\psi\in\Psi,0\le \gamma\le M^*}\Big|\mathbb{E}_{\mathbf{x},\widetilde{\mathbf{y}}}\left\{\ell_{\gamma,\psi}(\mathbf{X},\widetilde{\mathbf{Y}})\right\}-\mathbb{E}_{P^{(n)}_{\mathbf{x},\widetilde{\mathbf{y}}}}\left\{\ell_{\gamma,\psi}(\mathbf{X},\widetilde{\mathbf{Y}})\right\}\Big|\notag\\
    \le& 2 \mathbb{E}\left[\sup_{\gamma\in[0,M^*],\psi\in\Psi}\frac{1}{n}\sum_{i=1}^{n}\sigma_i\ell_{\gamma,\psi}(\mathbf{X}_i,\widetilde{\mathbf{Y}}_i)\right]+M\sqrt{\frac{\log (1/\eta)}{2n}}.
\end{align}

For any $\gamma_1,\gamma_2\in[0,M^*]$ and $\psi_1,\psi_2\in\Psi$, 
\begin{align*}
    &\big|\ell_{\gamma_1,\psi_1}(\mathbf{x},\widetilde{\mathbf{y}})-\ell_{\gamma_2,\psi_2}(\mathbf{x},\widetilde{\mathbf{y}})\big|\notag\\
    =&\big|\mathbb{E}_{P_{\mathrm{y}|\mathbf{x},\widetilde{\mathbf{y}}}}\sup_{y'\in\mathcal{Y}}\left\{\ell(\psi_1(\mathbf{x}),y')-\gamma_1 c^p(y',\mathrm{Y})\right\}-\mathbb{E}_{P_{\mathrm{y}|\mathbf{x},\widetilde{\mathbf{y}}}}\sup_{y'\in\mathcal{Y}}\left\{\ell(\psi_2(\mathbf{x}),y')-\gamma_2 c^p(y',\mathrm{Y})\right\}\big|\notag\\
    \le&\mathbb{E}_{P_{\mathrm{y}|\mathbf{x},\widetilde{\mathbf{y}}}}\left|\sup_{y'\in\mathcal{Y}}\left\{\ell(\psi_1(\mathbf{x}),y')-\gamma_1 c^p(y',\mathrm{Y})\right\}-\sup_{y'\in\mathcal{Y}}\left\{\ell(\psi_2(\mathbf{x}),y')-\gamma_2 c^p(y',\mathrm{Y})\right\}\right|\notag\\
    \le& \mathbb{E}_{P_{\mathrm{y}|\mathbf{x},\widetilde{\mathbf{y}}}}\sup_{y'\in\mathcal{Y}}\left\{\Big|\gamma_1 c^p(y',\mathrm{Y})-\gamma_2 c^p(y',\mathrm{Y})\Big|+\Big|\ell(\psi_1(\mathbf{x}),y')-\ell(\psi_2(\mathbf{x}),y')\Big|\right\}\notag\\
    \le&\kappa^p|\gamma_1-\gamma_2|+L'\|\psi_1-\psi_2\|_\infty,
\end{align*}
where the first step is due to the definition of $\ell_{\gamma,\psi}$ defined after (\ref{appdx.pf_gamma_upper}), the second step is due to Jensen's inequality, and the last step is due to the Lipschitz property with respect to the cost function $c(\cdot,\cdot)$ defined in Theorem \ref{thm.bias} in the assumption, and $\|\psi_1-\psi_2\|_\infty\triangleq\sup_{\mathbf{x}\in\mathcal{X}}\|\psi_1(\mathbf{x})-\psi_2(\mathbf{x})\|$ for some norm $\|\cdot\|$. 

Allowing $\psi$ to vary, we modify the discussion for the random process $\{S_{\gamma}:\gamma\in[0,M^*]\}$ in Appendix \ref{appdx.pf_thm_bias}, and consider the collection of random variables $\Big\{S_{\gamma,\psi}\triangleq\frac{1}{\sqrt{n}}\sum_{i=1}^{n}\sigma_i\ell_{\gamma,\psi}(\mathbf{X}_i,\widetilde{\mathbf{Y}}_i):\gamma\in[0,M^*],\psi\in\Psi\Big\}$. Clearly, $\mathbb{E}(S_{\gamma,\psi})=0$. Modifying the metric $\rho_{\gamma}(\gamma_1,\gamma_2)$ in Appendix \ref{appdx.pf_thm_bias}, we define the metric $\rho_{\gamma,\psi}((\gamma_1,\psi_1),(\gamma_2,\psi_2))\triangleq\kappa^p |\gamma_1-\gamma_2|+L'\|\psi_1-\psi_2\|_\infty$ for any $\gamma_1,\gamma_2\in[0, M^*]$ and $\psi_1,\psi_2\in\Psi$. Similar to deriving (\ref{appdx.pf_bias_sub_gaussian_verify}), we obtain that for $t\in\mathbb{R}$,
\begin{align*}
    \mathbb{E}\left\{e^{t\left(S_{\gamma_1,\psi_1}-S_{\gamma_2,\psi_2}\right)}\right\}\le\exp\left[\frac{t^2\left\{\rho_{\gamma,\psi}((\gamma_1,\psi_1),(\gamma_2,\psi_2))\right\}^2}{2}\right].
\end{align*}
Thus, $\Big\{S_{\gamma,\psi}:\gamma\in[0,M^*],\psi\in\Psi\Big\}$ is a zero-mean sub-Gaussian process with respect to metric $\rho_{\gamma,\psi}$. 

Let the Cartesian product $[0,M^*]\times\Psi$ denote the ``parameter'' space of $(\gamma,\psi)$. Then, by Lemma \ref{lemma.entropy}, we obtain
\begin{align}\label{appdx.pf_regret_empirical_upper_rade}
    &\mathbb{E}\left[\sup_{\gamma\in[0,M^*],\psi\in\Psi}\frac{1}{n}\sum_{i=1}^{n}\sigma_i\ell_{\gamma,\psi}(\mathbf{X}_i,\widetilde{\mathbf{Y}}_i)\right]\notag\\
    =&\frac{1}{\sqrt{n}}\mathbb{E}\left(\sup_{\gamma\in[0,M^*],\psi\in\Psi}S_{\gamma,\psi}\right)\notag\\
    \le& \frac{8\sqrt{2}}{\sqrt{n}}\int_{0}^{\infty}\sqrt{\log N(t; [0, M^*]\times\Psi, \rho_{\gamma,\psi})}dt\notag\\
    \le& \frac{8\sqrt{2}}{\sqrt{n}}\int_{0}^{\infty}\sqrt{\log \left\{N(t/(2\kappa^p); [0, M^*], |\cdot|)\times N(t/(2 L'); \Psi, \|\cdot\|_\infty)\right\}}dt\notag\\
    \le&\frac{8\sqrt{2}}{\sqrt{n}}\int_{0}^{\infty}\sqrt{\log N(t/(2\kappa^p); [0, M^*], |\cdot|)}dt+\frac{8\sqrt{2}}{\sqrt{n}}\int_{0}^{\infty}\sqrt{\log N(t/(2 L'); \Psi, \|\cdot\|_\infty)}dt\notag\\
    \le&\frac{16\sqrt{2}\kappa^p}{\sqrt{n}}\int_{0}^{\infty}\sqrt{\log N(s; [0, M^*], |\cdot|)}ds+\frac{16\sqrt{2}L'}{\sqrt{n}}\int_{0}^{\infty}\sqrt{\log N(s; \Psi, \|\cdot\|_\infty)}ds\notag\\
    \le&\frac{8\sqrt{2}(\sqrt{\log 2}+\sqrt{\pi}\text{erfc}(\sqrt{\log 2}))}{\sqrt{n}}\cdot M^*\kappa^p+\frac{16\sqrt{2}L'}{\sqrt{n}}\int_{0}^{\infty}\sqrt{\log N(s; \Psi, \|\cdot\|_\infty)}ds,
\end{align}
Here, for any set $\Omega$ and metric $\rho$ on $\Omega$, $N(t;\Omega,\rho)$ denotes the $t$-covering number for $t>0$ as defined in Definition \ref{def.covering}. In the derivation of (\ref{appdx.pf_regret_empirical_upper_rade}), the third step is due to Lemma \ref{lemma.covering_cartesian}; the fourth step is due to $\sqrt{\log(ab)}\le\sqrt{\log a}+\sqrt{\log b}$ for $a\ge 1$ and $b\ge 1$; the fifth step results from a change of variable; and the last line can be similarly proved as (\ref{appdx.pf_bias_upper_rade}).

By (\ref{appdx.pf_regret_empirical_upper})-(\ref{appdx.pf_regret_empirical_upper_rade}), we have that, with probability at least $1-\eta$,
\begin{align*}
    &\mathfrak{R}_{\epsilon}(\widehat{\psi}_{\epsilon,n};P_{\mathrm{y}|\mathbf{x},\widetilde{\mathbf{y}}})- \inf_{\psi\in\Psi}\mathfrak{R}_{\epsilon}(\psi;P_{\mathrm{y}|\mathbf{x},\widetilde{\mathbf{y}}})\\
    \le& \left\{57\frac{L\kappa^p}{\epsilon^{p-1}}+91L'\int_{0}^{\infty}\sqrt{\log N(s; \Psi, \|\cdot\|_\infty)}ds\right\}\cdot\frac{1}{\sqrt{n}}+2M\sqrt{\frac{\log (1/\eta)}{2n}}.
\end{align*}
Therefore, the proof is completed.


\subsection{Proof of Theorem \ref{thm.optimal_action}}\label{appdx.pf_thm_optimal_action}

For ease of presentation, in this proof we omit the dependence on $\mathbf{x}$ and $\widetilde{\mathbf{y}}$ in the notation. In particular, we let $P_0\triangleq P_0(\mathbf{x},\widetilde{\mathbf{y}})$, $P_1\triangleq P_1(\mathbf{x},\widetilde{\mathbf{y}})$, and $\psi\triangleq\psi(\mathbf{x})$. Let the objective function in (\ref{eq.binary_problem}) be denoted as 
\begin{align}\label{appdx.pf_binary_obj_fun}
    \mathdutchcal{g}(\gamma;\psi)\triangleq \gamma\epsilon^p+P_0\max\{\mathcal{T}(1-\psi), \mathcal{T}(\psi)-\gamma\kappa^p\}+P_1\max\{\mathcal{T}(1-\psi)-\gamma\kappa^p,\mathcal{T}(\psi)\}.
\end{align}

To complete the proof, we begin by investigating the \emph{inner optimization problem} in (\ref{eq.binary_problem}) by finding the optimal value of $\gamma$, $\gamma^{*}_{\psi}$, as defined in Section \ref{sec.close_em_risk}, that minimizes $\mathdutchcal{g}(\gamma;\psi)$ for each given $\psi$, and then address the \emph{outer optimization problem} in (\ref{eq.binary_problem}) by finding the optimal value $\psi^\star$ that minimizes $\mathdutchcal{g}(\gamma^*_{\psi};\psi)$. To this end, we eliminate the $\max$ operators in $\mathdutchcal{g}(\gamma;\psi)$ based on the values of $\psi$, and use the assumption that $\mathcal{T}$ is a decreasing function in (\ref{eq.loss_bianry}). As $\psi$ takes its value in $[0,1]$, we re-write the optimization problem (\ref{eq.binary_problem}) as 
\begin{align}\label{appdx.pf_binary_three_cases}
    &\inf_{\psi\in[0,\frac{1}{2}]\cup[\frac{1}{2},1]}\inf_{\gamma\ge 0}\mathdutchcal{g}(\gamma;\psi)\notag\\
    =&\min\left\{\min_{\psi\in[0,\frac{1}{2}]}\inf_{\gamma\ge 0}\mathdutchcal{g}(\gamma;\psi),\min_{\psi\in[\frac{1}{2},1]}\inf_{\gamma\ge 0}\mathdutchcal{g}(\gamma;\psi)\right\},
    \notag\\
    \triangleq&\min\left\{\mathdutchcal{g}(\gamma^*_{\psi^*_1};\psi^*_1), \mathdutchcal{g}(\gamma^*_{\psi^*_2};\psi^*_2)\right\},
\end{align}
where $(\gamma^*_{\psi^*_1},\psi^*_1)$ and $(\gamma^*_{\psi^*_2},\psi^*_2)$ are the arguments of $\min_{\psi_1\in[0,\frac{1}{2}]}\inf_{\gamma\ge 0}\mathdutchcal{g}(\gamma;\psi_1)$ and $\min_{\psi_2\in[\frac{1}{2},1]}\inf_{\gamma\ge 0}\mathdutchcal{g}(\gamma;\psi_2)$, respectively. We complete the proof by considering the following two cases.

\textbf{Case 1: $\psi_1\in[0,\frac{1}{2}]$.}

In this case, $\mathcal{T}(\psi_1)\ge\mathcal{T}(\frac{1}{2})\ge \mathcal{T}(1-\psi_1)\ge \mathcal{T}(1-\psi_1)-\gamma\kappa^p$. Let 
$$\gamma_0=\frac{\mathcal{T}(\psi_1)-\mathcal{T}(1-\psi_1)}{\kappa^p}.$$
Consequently, we have that 
\begin{align}\label{appdx.pf_binary_case2}
    \mathdutchcal{g}(\gamma;\psi_1)
    =& \left\{
    \begin{aligned}
        &\gamma\epsilon^p+P_0\{\mathcal{T}(\psi_1)-\gamma\kappa^p\}+P_1\mathcal{T}(\psi_1)=\mathcal{T}(\psi_1)+\gamma\kappa^p(\varrho(\epsilon)-P_0),\ \text{if}\ \gamma\le\gamma_0;\\
        &\gamma\epsilon^p+P_0\mathcal{T}(1-\psi_1)+P_1\mathcal{T}(\psi_1),\ \text{if}\ \gamma>\gamma_0.
    \end{aligned}
    \right.
\end{align}
For given $\psi_1$,
\begin{align*}
    \lim_{\gamma\rightarrow \gamma_0^+}\mathdutchcal{g}(\gamma;\psi_1)=\lim_{\gamma\rightarrow\gamma_0^-}\mathdutchcal{g}(\gamma;\psi_1)=\varrho(\epsilon)\left\{\mathcal{T}(\psi_1)-\mathcal{T}(1-\psi_1)\right\}+P_0\mathcal{T}(1-\psi_1)+P_1\mathcal{T}(\psi_1),
\end{align*}
showing that $\mathdutchcal{g}(\gamma;\psi_1)$ is continuous at $\gamma_0$. Therefore, for any given $\psi_1$, $\mathdutchcal{g}(\gamma;\psi_1)$ is continuous in $\gamma$ over $\mathbb{R}^+$. 

Then, for any given $\psi_1\in[0,\frac{1}{2}]$, corresponding to the first term in (\ref{appdx.pf_binary_three_cases}), we obtain that 
\begin{align}\label{appdx.pf_binary_case2_}
    \inf_{\gamma\ge 0}\mathdutchcal{g}(\gamma;\psi_1)=&\min\left\{\inf_{\gamma>\gamma_0}\mathdutchcal{g}(\gamma;\psi_1),\inf_{\gamma\in[0,\gamma_0]}\mathdutchcal{g}(\gamma;\psi_1)\right\}\notag\\
    =&\min\left\{\mathdutchcal{g}(\gamma_0;\psi_1),\min_{\gamma\in[0,\gamma_0]}\mathdutchcal{g}(\gamma;\psi_1)\right\}\notag\\
    =&\min_{\gamma\in[0,\gamma_0]}\mathdutchcal{g}(\gamma;\psi_1)\notag\\
    \triangleq&\mathdutchcal{g}(\gamma^*_{\psi_1};\psi_1),
\end{align}
where we use the continuity of $\mathdutchcal{g}(\gamma;\psi_1)$ in $\gamma$, the fact that $\mathdutchcal{g}(\gamma;\psi_1)$ is increasing in $\gamma$ when $\gamma>\gamma_0$, and the fact that a continuous function attains its infimum within any closed and bounded set in $\mathbb{R}$. Here, 
\begin{align}\label{appdx.pf_binary_gamma*_1}
    \gamma^*_{\psi_1}\triangleq\arg\min_{\gamma\in[0,\gamma_0]}\mathdutchcal{g}(\gamma;\psi_1)
\end{align}
for any $\psi_1\in[0,\frac{1}{2}]$.

We complete the proof by the following two steps to examine the range of $P_0$.

{\textbf{Step 1:}} If $P_0<\varrho(\epsilon)$, then, by (\ref{appdx.pf_binary_case2}), for any given $\psi_1\in[0,\frac{1}{2}]$, $\mathdutchcal{g}(\gamma;\psi_1)$ is increasing in $\gamma$ over $[0,\gamma_0]$, showing that the optimal value in (\ref{appdx.pf_binary_gamma*_1}) is $\gamma^{*}_{\psi_1}=0$. Furthermore, because $\mathdutchcal{g}(\gamma^{*}_{\psi_1};\psi_1)=\mathcal{T}(\psi_1)$ for any $\psi_1\in[0,\frac{1}{2}]$, we obtain that 
\begin{align*}
    &\min_{\psi_1\in[0,\frac{1}{2}]}\inf_{\gamma\ge 0}\mathdutchcal{g}(\gamma;\psi)\\
    =&\min_{\psi_1\in[0,\frac{1}{2}]}\mathdutchcal{g}(\gamma^*_{\psi_1};\psi_1)\\
    =&\min_{\psi_1\in[0,\frac{1}{2}]}\mathcal{T}(\psi_1)\\
    =&\mathcal{T}(\nicefrac{1}{2}).
\end{align*}
Consequently, $(0,\frac{1}{2})$ minimizes $\mathdutchcal{g}(\gamma;\psi)$ over $[0,\gamma_0]\times [0,\frac{1}{2}]$, i.e., $\psi_1^*=\frac{1}{2}$ and $\gamma^*_{\psi_1^*}=0$.

{\textbf{Step 2:}} If $P_0\ge\varrho(\epsilon)$, then, by (\ref{appdx.pf_binary_case2}), $\mathdutchcal{g}(\gamma;\psi_1)$ is decreasing in $\gamma$ when $\gamma\le\gamma_0$, showing that the optimal value in (\ref{appdx.pf_binary_gamma*_1}) is
\begin{align}\label{appdx.pf_binary_gamma*_step2}
    \gamma^{*}_{\psi_1}=\gamma_0,\ \text{with}\ \gamma_0=\frac{\mathcal{T}(\psi_1)-\mathcal{T}(1-\psi_1)}{\kappa^p},
\end{align}
and thus,
\begin{align*}
    \mathdutchcal{g}(\gamma^{*}_{\psi_1};\psi_1)=(P_1+\varrho(\epsilon))\mathcal{T}(\psi_1)+(P_0-\varrho(\epsilon))\mathcal{T}(1-\psi_1)\ \text{for any}\ \psi_1\in[0,\nicefrac{1}{2}].
\end{align*}
Consequently, the derivative of $\mathdutchcal{g}(\gamma^*_{\psi_1};\psi_1)$ with respect to $\psi_1$ is
\begin{align*}
    \mathdutchcal{g}'_{\psi_1}(\gamma^{*}_{\psi_1};\psi_1)=(P_1+\varrho(\epsilon))\mathcal{T}'(\psi_1)-(P_0-\varrho(\epsilon))\mathcal{T}'(1-\psi_1),
\end{align*}
leading to $\mathdutchcal{g}'_{\psi_1}(\gamma^{*}_{\psi_1};\psi_1)\big|_{\psi_1=0}=(1-P_0+\varrho(\epsilon))\mathcal{T}'(0)-(P_0-\varrho(\epsilon))\mathcal{T}'(1)$ and $\mathdutchcal{g}'_{\psi_1}(\gamma^{*}_{\psi_1};\psi_1)\big|_{\psi_1=1/2}=(1+2\varrho(\epsilon)-2P_0)\mathcal{T}'(\frac{1}{2})$. Solving 
\begin{align*}
    \mathdutchcal{g}'_{\psi_1}(\gamma^{*}_{\psi_1};\psi_1)\big|_{\psi_1=0}=0\ \text{and}\ \mathdutchcal{g}'_{\psi_1}(\gamma^{*}_{\psi_1};\psi_1)\big|_{\psi_1=1/2}=0
\end{align*}
for $P_0$ leads to solutions
\begin{align*}
    P^{(1)}_0\triangleq \varrho(\epsilon)+\frac{\mathcal{T}'(0)}{\mathcal{T}'(1)+\mathcal{T}'(0)}\ \text{and}\ P^{(2)}_{0}\triangleq \varrho(\epsilon)+\frac{1}{2},
\end{align*}
respectively. 

Next, we identify $\psi_1^*$ and $\gamma^*_{\psi_1^*}$ by examining $P_0$ relative to $P^{(1)}_0$ and $P^{(2)}_0$, in combination with the convexity or concavity of function $\mathcal{T}$ by the following two steps.

\textit{\textbf{Step 2.1:}} Assume $\mathcal{T}$ is concave. Then by twice differentiability of $\mathcal{T}$, $\mathcal{T}''(\psi_1)\le 0$ for $\psi_1\in[0,1]$, leading to $\mathcal{T}'(1)\le\mathcal{T}'(0)<0$, and hence $P^{(1)}_0\le P^{(2)}_0$. Additionally, $\mathdutchcal{g}''_{\psi_1}(\gamma^{*}_{\psi_1};\psi_1)=(P_1+\varrho(\epsilon))\mathcal{T}''(\psi_1)+(P_0-\varrho(\epsilon))\mathcal{T}''(1-\psi_1)\le0$, and thus, $\mathdutchcal{g}'_{\psi_1}(\gamma^{*}_{\psi_1};\psi_1)$ is non-increasing in $\psi_1$ for $\psi_1\in[0,\frac{1}{2}]$.
\begin{itemize}
    \item If $\varrho(\epsilon)\le P_0\le P^{(1)}_0$, then $\mathdutchcal{g}'_{\psi_1}(\gamma^{*}_{\psi_1};\psi_1)\le0$ for $\psi_1\in[0,\frac{1}{2}]$, and thus, $\mathdutchcal{g}(\gamma^{*}_{\psi_1};\psi_1)$ is non-increasing in $\psi_1$ for $\psi_1\in[0,\frac{1}{2}]$. Therefore, $\inf_{0\le\psi_1\le1/2}\mathdutchcal{g}(\gamma^{*}_{\psi_1};\psi_1)=\mathdutchcal{g}(\gamma^{*}_{1/2};\frac{1}{2})=\mathcal{T}(\frac{1}{2})$. Thus, $\psi_1^*=\frac{1}{2}$ and $\gamma^*_{\psi_1^*}=0$ {by (\ref{appdx.pf_binary_gamma*_step2})}.
    \item If $P_0\ge P^{(2)}_{0}$, then $\mathdutchcal{g}'_{\psi_1}(\gamma^{*}_{\psi_1};\psi_1)\ge0$, showing that $\mathdutchcal{g}(\gamma^{*}_{\psi_1};\psi_1)$ is non-decreasing in $\psi_1$ for $\psi_1\in[0,\frac{1}{2}]$. Therefore, $\inf_{0\le\psi_1\le1/2}\mathdutchcal{g}(\gamma^{*}_{\psi_1};\psi_1)=\mathdutchcal{g}(\gamma^{*}_{0};0)=(P_1+\varrho(\epsilon))\mathcal{T}(0)+(P_0-\varrho(\epsilon))\mathcal{T}(1)$. Thus, $\psi_1^*=0$ and $\gamma^*_{\psi_1^*}=\frac{\mathcal{T}(0)-\mathcal{T}(1)}{\kappa^p}$ {by (\ref{appdx.pf_binary_gamma*_step2})}.
    \item If $P^{(1)}_0<P_0<P^{(2)}_0$, then $\mathdutchcal{g}'_{\psi_1}(\gamma^{*}_{\psi_1};\psi_1)$ is non-increasing in $\psi_1$ for $\psi_1\in[0,\frac{1}{2}]$ with $\mathdutchcal{g}'_{\psi_1}(\gamma^{*}_{\psi_1};\psi_1)\big|_{\psi_1=0}>0$ and $\mathdutchcal{g}'_{\psi_1}(\gamma^{*}_{\psi_1};\psi_1)\big|_{\psi_1=1/2}<0$. Therefore, $\mathdutchcal{g}'_{\psi_1}(\gamma^{*}_{\psi_1};\psi_1)=0$ has a unique solution on $[0,\frac{1}{2}]$, denoted $\psi_1^{\scriptscriptstyle\diamond}$, and furthermore, $\mathdutchcal{g}(\gamma^{*}_{\psi_1};\psi_1)$ is increasing in $\psi_1$ for $\psi_1\in[0,\psi_1^{\scriptscriptstyle\diamond}]$ and decreasing on $[\psi_1^{\scriptscriptstyle\diamond},\frac{1}{2}]$. Therefore, the infimum of $\mathdutchcal{g}_{\psi_1}(\gamma^{*}_{\psi_1};\psi_1)$ on $\psi_1\in[0,\frac{1}{2}]$ is taken at $\psi_1=0$ or $\psi_1=\frac{1}{2}$. If $\mathdutchcal{g}(\gamma^{*}_{1/2};\frac{1}{2})\le \mathdutchcal{g}(\gamma^{*}_{0};0)$, i.e., $P_0\le\varrho(\epsilon)+\frac{\mathcal{T}(0)-\mathcal{T}(1/2)}{\mathcal{T}(0)-\mathcal{T}(1)}$, the optimal value for $\psi_1$ in Case 1 is $\psi_1^*=\frac{1}{2}$ with $\gamma^*_{\psi_1^*}=0$; otherwise, the optimal value is $\psi_1^*=0$ with $\gamma^*_{\psi_1^*}=\frac{\mathcal{T}(0)-\mathcal{T}(1)}{\kappa^p}$ {by (\ref{appdx.pf_binary_gamma*_step2})}.
\end{itemize}

Summarizing the discussion in Step 2.1, we obtain that when $\psi_1\in[0,\frac{1}{2}]$ and $\mathcal{T}$ is concave,
\begin{itemize}
    \item[(i)] if $P_0>\varrho(\epsilon)+\frac{\mathcal{T}(0)-\mathcal{T}(1/2)}{\mathcal{T}(0)-\mathcal{T}(1)}$, $(\psi^*_1,\gamma^*_{\psi^*_1})$ in (\ref{appdx.pf_binary_three_cases}) is given by $\psi^*_1=0$ and $\gamma^*_{\psi^*_1}=\frac{\mathcal{T}(0)-\mathcal{T}(1)}{\kappa^p}$, yielding $\mathdutchcal{g}(\gamma^*_{\psi^*_1};\psi^*_1)=(P_1+\varrho(\epsilon))\mathcal{T}(0)+(P_0-\varrho(\epsilon))\mathcal{T}(1)$;
    \item[(ii)] otherwise, $\psi^*_1=\frac{1}{2}$ and $\gamma^*_{\psi^*_1}=0$, yielding $\mathdutchcal{g}(\gamma^*_{\psi^*_1};\psi^*_1)=\mathcal{T}(\frac{1}{2})$.
\end{itemize}

\textit{\textbf{Step 2.2:}} Assume $\mathcal{T}$ is convex. Then by twice differentiability of $\mathcal{T}$, $\mathcal{T}''(\psi_1)\ge 0$ for $\psi_1\in[0,1]$, leading to $\mathcal{T}'(0)\le\mathcal{T}'(1)<0$, and hence $P^{(2)}_0\le P^{(1)}_0$. Additionally, $\mathdutchcal{g}''_{\psi_1}(\gamma^{*}_{\psi_1};\psi_1)=(P_1+\varrho(\epsilon))\mathcal{T}''(\psi_1)+(P_0-\varrho(\epsilon))\mathcal{T}''(1-\psi_1)\ge0$, and thus, $\mathdutchcal{g}'_{\psi_1}(\gamma^{*}_{\psi_1};\psi_1)$ is non-decreasing in $\psi_1$ for $\psi_1\in[0,\frac{1}{2}]$.
\begin{itemize}
    \item If $P_0\ge P^{(1)}_0$, then $\mathdutchcal{g}'_{\psi_1}(\gamma^{*}_{\psi_1};\psi_1)\ge0$ for $\psi_1\in[0,\frac{1}{2}]$, and thus, $\mathdutchcal{g}(\gamma^{*}_{\psi_1};\psi_1)$ is non-decreasing in $\psi_1$ for $\psi_1\in[0,\frac{1}{2}]$. Therefore, $\inf_{0\le\psi_1\le1/2}\mathdutchcal{g}(\gamma^{*}_{\psi_1};\psi_1)=\mathdutchcal{g}(\gamma^{*}_{0};0)=(P_1+\varrho(\epsilon))\mathcal{T}(0)+(P_0-\varrho(\epsilon))\mathcal{T}(1)$. Thus, $\psi_1^*=0$ and $\gamma^*_{\psi_1^*}=\frac{\mathcal{T}(0)-\mathcal{T}(1)}{\kappa^p}$ by (\ref{appdx.pf_binary_gamma*_step2}).
    \item If $\varrho(\epsilon)\le P_0\le P^{(2)}_0$, then $\mathdutchcal{g}'_{\psi_1}(\gamma^{*}_{\psi_1};\psi_1)\le0$ for $\psi_1\in[0,\frac{1}{2}]$, and thus, $\mathdutchcal{g}(\gamma^{*}_{\psi_1};\psi_1)$ is non-increasing in $\psi_1$ for $\psi_1\in[0,\frac{1}{2}]$. Therefore, $\inf_{0\le\psi_1\le\frac{1}{2}}\mathdutchcal{g}(\gamma^{*}_{\psi_1};\psi_1)=\mathdutchcal{g}(\gamma^{*}_{1/2};\frac{1}{2})=\mathcal{T}(\frac{1}{2})$. Thus, $\psi_1^*=\frac{1}{2}$ and $\gamma^*_{\psi_1^*}=0$ by (\ref{appdx.pf_binary_gamma*_step2}). 
    \item If $P^{(2)}_0<P_0<P^{(1)}_0$, then $\mathdutchcal{g}'_{\psi_1}(\gamma^{*}_{\psi_1};\psi_1)$ is non-decreasing in $\psi_1$ for $\psi_1\in[0,\frac{1}{2}]$ with $\mathdutchcal{g}'_{\psi_1}(\gamma^{*}_{\psi_1};\psi_1)\big|_{\psi_1=0}<0$ and $\mathdutchcal{g}'_{\psi_1}(\gamma^{*}_{\psi_1};\psi_1)\big|_{\psi_1=1/2}>0$. Therefore, $\mathdutchcal{g}'_{\psi_1}(\gamma^{*}_{\psi_1};\psi_1)=0$ has a unique solution on $[0,\frac{1}{2}]$, denoted $\psi_1^{\scriptscriptstyle\diamond}$, and furthermore, $\mathdutchcal{g}(\gamma^{*}_{\psi_1};\psi_1)$ is decreasing in $\psi_1$ for $\psi_1\in[0,\psi_1^{\scriptscriptstyle\diamond}]$ and increasing on $[\psi_1^{\scriptscriptstyle\diamond},1/2]$. Then, the infimum of $\mathdutchcal{g}_{\psi_1}(\gamma^{*}_{\psi_1};\psi_1)$ on $\psi_1\in[0,\frac{1}{2}]$ is taken at $\psi_1=\psi_1^{\scriptscriptstyle\diamond}$, that is, $\inf_{0\le\psi_1\le1/2}\mathdutchcal{g}(\gamma^{*}_{\psi_1};\psi_1)=\mathdutchcal{g}(\gamma^{*}_{\psi_1^{\scriptscriptstyle\diamond}};\psi_1^{\scriptscriptstyle\diamond})=(P_1+\varrho(\epsilon))\mathcal{T}(\psi_1^{\scriptscriptstyle\diamond})+(P_0-\varrho(\epsilon))\mathcal{T}(1-\psi_1^{\scriptscriptstyle\diamond})$. Thus, $\psi_1^*=\psi_1^{\scriptscriptstyle\diamond}$ with $\gamma^*_{\psi_1^*}=\frac{\mathcal{T}(\psi_1^{\scriptscriptstyle\diamond})-\mathcal{T}(1-\psi_1^{\scriptscriptstyle\diamond})}{\kappa^p}$ by (\ref{appdx.pf_binary_gamma*_step2}).
\end{itemize}

\textbf{Case 2: $\psi_2\in[\frac{1}{2},1]$.} 

In this case, we set $\overline{\psi}_2\triangleq1-\psi_2$, yielding $\overline{\psi}_2\in[0,\frac{1}{2}]$, and the objective function $\mathdutchcal{g}(\gamma;\psi_2)$ defined in (\ref{appdx.pf_binary_obj_fun}) can be written as 
\begin{align*}
    \mathdutchcal{g}(\gamma;\psi_2)=&\gamma\epsilon^p+P_0\max\{\mathcal{T}(1-\psi_2), \mathcal{T}(\psi_2)-\gamma\kappa^p\}+P_1\max\{\mathcal{T}(1-\psi_2)-\gamma\kappa^p,\mathcal{T}(\psi_2)\}\\
    =&\gamma\epsilon^p+P_1\max\{\mathcal{T}(1-\overline{\psi}_2),\mathcal{T}(\overline{\psi}_2)-\gamma\kappa^p\}+P_0\max\{\mathcal{T}(1-\overline{\psi}_2)-\gamma\kappa^p,\mathcal{T}(\overline{\psi}_2)\}.
\end{align*}
Hence, the derivation in Case 1 for any $\psi_1$ in $[0,\frac{1}{2}]$ can be applied to $\overline{\psi}_2$ by modifying the derivations based on the range of $P_0$ to be that for $P_1$, as outlined below.
\begin{itemize}
    \item {\textbf{Step 1:}} If $P_1<\varrho(\epsilon)$, then following the results for $\psi^*_1$ and $\gamma^*_{\psi^*_1}$ in Case 1, with only $\psi^*_1$, $P_1$, and $P_0$ there replaced by $\overline{\psi}_2^*$, $P_0$, and $P_1$, respectively, we obtain that $\overline{\psi}_2^*=\frac{1}{2}$ and $\gamma^*_{\overline{\psi}^*_2}=0$. Hence, $\psi_2^*$ is taken as $1-\overline{\psi}_2^*=\frac{1}{2}$ and $\gamma^*_{\psi^*_2}=0$, yielding $\mathdutchcal{g}(\gamma^*_{\psi^*_2};\psi^*_2)=\mathdutchcal{g}(0;\frac{1}{2})=\mathcal{T}(\frac{1}{2})$. 
    \item {\textbf{Step 2:}} If $P_1\ge\varrho(\epsilon)$, then, by (\ref{appdx.pf_binary_gamma*_step2}), $\gamma^*_{\psi^*_2}$ is set as $\gamma^*_{\overline{\psi}^*_2}=\frac{\mathcal{T}(\overline{\psi}_2^*)-\mathcal{T}(1-\overline{\psi}_2^*)}{\kappa^p}=\frac{\mathcal{T}(1-\psi_2^*)-\mathcal{T}(\psi_2^*)}{\kappa^p}$.
    \item \textit{\textbf{Step 2.1:}} Assume $\mathcal{T}$ is concave. We can directly derive the following result from the summary in Step 2.1 of Case 1.
    \begin{itemize}
        \item If $P_1\le\varrho(\epsilon)+\frac{\mathcal{T}(0)-\mathcal{T}(1/2)}{\mathcal{T}(0)-\mathcal{T}(1)}$, then $\overline{\psi}_2^*=\frac{1}{2}$ and $\gamma^*_{\psi_2^*}=0$. Hence, $\psi_2^*=1-\overline{\psi}_2^*=\frac{1}{2}$, and $ \mathdutchcal{g}(\gamma^*_{\psi^*_2};\psi^*_2)=\mathdutchcal{g}(0;\frac{1}{2})=\mathcal{T}(\frac{1}{2})$.
        \item If $P_1>\varrho(\epsilon)+\frac{\mathcal{T}(0)-\mathcal{T}(1/2)}{\mathcal{T}(0)-\mathcal{T}(1)}$, then $\overline{\psi}_2^*=0$ and $\gamma^*_{\psi^*_2}=\frac{\mathcal{T}(0)-\mathcal{T}(1)}{\kappa^p}$. Hence, $\psi_2^*=1-\overline{\psi}_2^*=1$, and $\mathdutchcal{g}(\gamma^*_{\psi^*_2};\psi^*_2)=\mathdutchcal{g}(\frac{\mathcal{T}(0)-\mathcal{T}(1)}{\kappa^p};1)=(P_0+\varrho(\epsilon))\mathcal{T}(0)+(P_1-\varrho(\epsilon))\mathcal{T}(1)$.
    \end{itemize}
    \item \textit{\textbf{Step 2.2:}} Assume $\mathcal{T}$ is convex. From the results on $\psi^*_1$ and $\gamma^*_{\psi^*_1}$ in Step 2.2 of Case 1, we obtain the following conclusion.
    \begin{itemize}
        \item If $P_1\ge \varrho(\epsilon)+\frac{\mathcal{T}'(0)}{\mathcal{T}'(1)+\mathcal{T}'(0)}$, then $\overline{\psi}_2^*=0$ and $\gamma^*_{\psi_2^*}=\frac{\mathcal{T}(0)-\mathcal{T}(1)}{\kappa^p}$. Hence, $\psi_2^*=1-\overline{\psi}_2^*=1$, and $ \mathdutchcal{g}(\gamma^*_{\psi^*_2};\psi^*_2)=\mathdutchcal{g}(\frac{\mathcal{T}(0)-\mathcal{T}(1)}{\kappa^p};1)=(P_0+\varrho(\epsilon))\mathcal{T}(0)+(P_1-\varrho(\epsilon))\mathcal{T}(1)$.
        \item If $\varrho(\epsilon)\le P_1\le \varrho(\epsilon)+\frac{1}{2}$, then $\overline{\psi}_2^*=\frac{1}{2}$ and $\gamma^*_{\psi_2^*}=0$. Hence, $\psi_2^*=1-\overline{\psi}_2^*=\frac{1}{2}$, and $\mathdutchcal{g}(\gamma^*_{\psi^*_2};\psi^*_2)=\mathdutchcal{g}(0;\frac{1}{2})=\mathcal{T}(\frac{1}{2})$.
        \item If $\varrho(\epsilon)+\frac{1}{2}<P_1<\varrho(\epsilon)+\frac{\mathcal{T}'(0)}{\mathcal{T}'(1)+\mathcal{T}'(0)}$, then $\overline{\psi}_2^*=\overline{\psi}_2^{\scriptscriptstyle\diamond}$ and $\gamma^*_{\psi_2^*}=\frac{\mathcal{T}(\overline{\psi}_2^{\scriptscriptstyle\diamond})-\mathcal{T}(1-\overline{\psi}_2^{\scriptscriptstyle\diamond})}{\kappa^p}$, where $\overline{\psi}_2^{\scriptscriptstyle\diamond}$ is the unique solution to $(P_0+\varrho(\epsilon))\mathcal{T}'(\overline{\psi}_2)-(P_1-\varrho(\epsilon))\mathcal{T}'(1-\overline{\psi}_2)=0$ on $[0,\frac{1}{2}]$. Hence,  ${\psi}_2^*={\psi}_2^{\scriptscriptstyle\diamond}$ and $\gamma^*_{\psi_2^*}=\frac{\mathcal{T}(1-{\psi}_2^{\scriptscriptstyle\diamond})-\mathcal{T}({\psi}_2^{\scriptscriptstyle\diamond})}{\kappa^p}$, where ${\psi}_2^{\scriptscriptstyle\diamond}=1-\overline{\psi}_2^{\scriptscriptstyle\diamond}$ is the unique solution to $-(P_0+\varrho(\epsilon))\mathcal{T}'(1-{\psi}_2)+(P_1-\varrho(\epsilon))\mathcal{T}'({\psi}_2)=0$ on $[\frac{1}{2},1]$. Then $\mathdutchcal{g}(\gamma^*_{\psi^*_2};\psi^*_2)=(P_0+\varrho(\epsilon))\mathcal{T}(1-\psi^{\scriptscriptstyle\diamond}_2)+(P_1-\varrho(\epsilon))\mathcal{T}(\psi^{\scriptscriptstyle\diamond}_2)$.
    \end{itemize}
\end{itemize}

In summary, we present the derived results in Tables \ref{table.robust_risk_concave} and \ref{table.robust_risk_convex} for the scenarios where $\mathcal{T}$ is concave and convex, respectively.

\begin{table}[h]
\scriptsize
    \centering
    \begin{tabular}{cc|ccc}
         \hline
         &  & $\psi^*_j$ & $\gamma^*_{\psi^*_j}$ & robust risk $\mathdutchcal{g}(\gamma^*_{\psi^*_j};\psi^*_j)$\\
         \hline
         Case 1 & $P_0\ge\varrho(\epsilon)+\frac{\mathcal{T}(0)-\mathcal{T}(1/2)}{\mathcal{T}(0)-\mathcal{T}(1)}$ & $\psi^*_1=0$ & $\gamma^*_{\psi^*_1}=\frac{\mathcal{T}(0)-\mathcal{T}(1)}{\kappa^p}$ & $(P_1+\varrho(\epsilon))\mathcal{T}(0)+(P_0-\varrho(\epsilon))\mathcal{T}(1)$  \\
         & $P_0<\varrho(\epsilon)+\frac{\mathcal{T}(0)-\mathcal{T}(1/2)}{\mathcal{T}(0)-\mathcal{T}(1)}$ & $\psi^*_1=\frac{1}{2}$ & $\gamma^*_{\psi^*_1}=0$ & $\mathcal{T}(\frac{1}{2})$ \\
         \hline
         Case 2 & $P_1\ge\varrho(\epsilon)+\frac{\mathcal{T}(0)-\mathcal{T}(1/2)}{\mathcal{T}(0)-\mathcal{T}(1)}$ & $\psi^*_2=1$ & $\gamma^*_{\psi^*_2}=\frac{\mathcal{T}(0)-\mathcal{T}(1)}{\kappa^p}$ & $(P_0+\varrho(\epsilon))\mathcal{T}(0)+(P_1-\varrho(\epsilon))\mathcal{T}(1)$  \\
         & $P_1<\varrho(\epsilon)+\frac{\mathcal{T}(0)-\mathcal{T}(1/2)}{\mathcal{T}(0)-\mathcal{T}(1)}$ & $\psi^*_2=\frac{1}{2}$ & $\gamma^*_{\psi^*_2}=0$ & $\mathcal{T}(\frac{1}{2})$ \\
         \hline
    \end{tabular}
    \caption{Summarized results in two cases when $\mathcal{T}$ is concave.}
    \label{table.robust_risk_concave}
\end{table}

\begin{table}[h]
\scriptsize
\setlength{\tabcolsep}{2pt}
    \centering
    \begin{tabular}{cc|ccc}
         \hline
         &  & $\psi^*_j$ & $\gamma^*_{\psi^*_j}$ & robust risk $\mathdutchcal{g}(\gamma^*_{\psi^*_j};\psi^*_j)$\\
         \hline
         Case 1 & $P_0\ge \varrho(\epsilon)+\frac{\mathcal{T}'(0)}{\mathcal{T}'(0)+\mathcal{T}'(1)}$ & $\psi^*_1=0$ & $\gamma^*_{\psi^*_1}=\frac{\mathcal{T}(0)-\mathcal{T}(1)}{\kappa^p}$ & $(P_1+\varrho(\epsilon))\mathcal{T}(0)+(P_0-\varrho(\epsilon))\mathcal{T}(1)$  \\
         & $P_0\le\varrho(\epsilon)+\frac{1}{2}$ & $\psi^*_1=\frac{1}{2}$ & $\gamma^*_{\psi^*_1}=0$ & $\mathcal{T}(\frac{1}{2})$ \\
         & $\varrho(\epsilon)+\frac{1}{2}<P_0<\varrho(\epsilon)+\frac{\mathcal{T}'(0)}{\mathcal{T}'(0)+\mathcal{T}'(1)}$ & $\psi^*_1=\psi^{\scriptscriptstyle\diamond}_1$ & $\gamma^*_{\psi^*_1}=\frac{\mathcal{T}(\psi^{\scriptscriptstyle\diamond}_1)-\mathcal{T}(1-\psi^{\scriptscriptstyle\diamond}_1)}{\kappa^p}$ & $(P_1+\varrho(\epsilon))\mathcal{T}(\psi^{\scriptscriptstyle\diamond}_1)+(P_0-\varrho(\epsilon))\mathcal{T}(1-\psi^{\scriptscriptstyle\diamond}_1)$ \\
         \hline
         Case 2 & $P_1\ge \varrho(\epsilon)+\frac{\mathcal{T}'(0)}{\mathcal{T}'(0)+\mathcal{T}'(1)}$ & $\psi^*_2=1$ & $\gamma^*_{\psi^*_2}=\frac{\mathcal{T}(0)-\mathcal{T}(1)}{\kappa^p}$ & $(P_0+\varrho(\epsilon))\mathcal{T}(0)+(P_1-\varrho(\epsilon))\mathcal{T}(1)$  \\
         & $P_1\le\varrho(\epsilon)+\frac{1}{2}$ & $\psi^*_2=\frac{1}{2}$ & $\gamma^*_{\psi^*_2}=0$ & $\mathcal{T}(\frac{1}{2})$ \\
         & $\varrho(\epsilon)+\frac{1}{2}<P_1<\varrho(\epsilon)+\frac{\mathcal{T}'(0)}{\mathcal{T}'(0)+\mathcal{T}'(1)}$ & $\psi^*_2=\psi^{\scriptscriptstyle\diamond}_2$ & $\gamma^*_{\psi^*_2}=\frac{\mathcal{T}(1-\psi^{\scriptscriptstyle\diamond}_2)-\mathcal{T}(\psi^{\scriptscriptstyle\diamond}_2)}{\kappa^p}$ & $(P_0+\varrho(\epsilon))\mathcal{T}(\psi^{\scriptscriptstyle\diamond}_2)+(P_1-\varrho(\epsilon))\mathcal{T}(1-\psi^{\scriptscriptstyle\diamond}_2)$ \\
         \hline
    \end{tabular}
    \caption{Summarized results in two cases when $\mathcal{T}$ is convex.}
    \label{table.robust_risk_convex}
\end{table}

Finally, for any given input $\mathbf{x}$, applying the preceding results to the optimal solution $(\gamma^*_{\psi^*},\psi^*)=\arg\inf_{\psi\in[0,1]}\inf_{\gamma\ge 0}\mathdutchcal{g}(\gamma,\psi)$ in (\ref{appdx.pf_binary_three_cases}), we obtain that the optimal action on the given instance $\mathbf{x}$ is given as below: for concave $\mathcal{T}$,
\begin{align*}
    \psi^{\star}(\mathbf{x})
    =& \left\{
    \begin{aligned}
        &\ 0,\ \text{if}\ P_0\ge\varrho(\epsilon)+\frac{\mathcal{T}(0)-\mathcal{T}(1/2)}{\mathcal{T}(0)-\mathcal{T}(1)};\\
        &\ 1,\ \text{if}\ P_1\ge\varrho(\epsilon)+\frac{\mathcal{T}(0)-\mathcal{T}(1/2)}{\mathcal{T}(0)-\mathcal{T}(1)};\\
        &\ 1/2, \ \text{otherwise};
    \end{aligned}
    \right.
\end{align*}
and for convex $\mathcal{T}$,
\begin{align*}
    \psi^{\star}(\mathbf{x})
    =& \left\{
    \begin{aligned}
        &\ 0,\ \text{if}\ P_0\ge \varrho(\epsilon)+\frac{\mathcal{T}'(0)}{\mathcal{T}'(0)+\mathcal{T}'(1)};\\
        &\ \mathdutchcal{t}^*_0,\ \text{if}\ \varrho(\epsilon)+1/2< P_0< \varrho(\epsilon)+\frac{\mathcal{T}'(0)}{\mathcal{T}'(0)+\mathcal{T}'(1)};\\
        &\ 1,\ \text{if}\ P_1\ge\ \varrho(\epsilon)+\frac{\mathcal{T}'(0)}{\mathcal{T}'(0)+\mathcal{T}'(1)};\\
        &\ \mathdutchcal{t}^*_1,\ \text{if}\ \varrho(\epsilon)+1/2< P_1< \varrho(\epsilon)+\frac{\mathcal{T}'(0)}{\mathcal{T}'(0)+\mathcal{T}'(1)};\\
        &\ 1/2, \ \text{otherwise},
    \end{aligned}\right.
\end{align*}
where $\mathdutchcal{t}^*_0$ is the unique solution of $(P_0-\varrho(\epsilon))\mathcal{T}'(1-\mathdutchcal{t})=(P_1+\varrho(\epsilon))\mathcal{T}'(\mathdutchcal{t})$ on $\mathdutchcal{t}\in(0,\frac{1}{2})$, and $\mathdutchcal{t}^*_1$ is the unique solution of $(P_0+\varrho(\epsilon))\mathcal{T}'(1-\mathdutchcal{t})=(P_1-\varrho(\epsilon))\mathcal{T}'(\mathdutchcal{t})$ on $\mathdutchcal{t}\in(\frac{1}{2}, 1)$. Hence, the proof is established.

\subsection{Proof of Theorem \ref{thm.optimal_action_multi}}\label{appdx.pf_thm_optimal_action_multi}

For ease of presentation, we omit the dependence on $\mathbf{x}$ and $\widetilde{\mathbf{y}}$ in the notation for now. Specifically, for $j\in[K]$, we let $P_j\triangleq P_j(\mathbf{x}, \widetilde{\mathbf{y}})\triangleq P(\mathrm{Y}=j|\mathbf{x}, \widetilde{\mathbf{y}})$ and $\psi_j\triangleq\psi(\mathbf{x})_j$. Let the objective function in (\ref{eq.multi_problem}) be denoted as
\begin{align}\label{appdx.pf_optimal_multi_g}
    \mathdutchcal{g}(\gamma;\psi)\triangleq \gamma\epsilon^p+\sum_{j=1}^{K}P_j&\max\{1-\psi_1-\gamma\kappa^p,\ldots,1-\psi_{j-1}-\gamma\kappa^p,\notag\\
    &1-\psi_j,1-\psi_{j+1}-\gamma\kappa^p,\ldots,1-\psi_K-\gamma\kappa^p\}.
\end{align}

We complete the proof in four steps. In Step 1, for each given $\psi$, we investigate the \emph{inner optimization problem} in (\ref{eq.multi_problem}) by finding the optimal value of $\gamma$, defined as $\gamma^{\star}_{\psi}\triangleq\arg\min_{\gamma\ge0}\mathdutchcal{g}(\gamma;\psi)$. Then, in Step 2, by substituting $\gamma^{\star}_{\psi}$ into $\mathdutchcal{g}(\gamma;\psi)$, we find that the \emph{outer optimization problem} in (\ref{eq.multi_problem}) can be written in a linear programming format under certain transformations. Next, in Step 3, we find the extreme points of the associated linear programming, and finally, in Step 4, we obtain the solution format of the optimal action $\psi^\star$.

\paragraph{Step 1: For any $\psi\in\Psi$, finding the optimal value of $\gamma$, defined as $\gamma^{\star}_{\psi}\triangleq\arg\min_{\gamma\ge0}\mathdutchcal{g}(\gamma;\psi)$.} \mbox{}\\
Given $\psi$ and $\mathbf{x}$, we sort $\{\psi_1,\ldots,\psi_K\}$ in an decreasing order, denoted $\psi^{(1)}\ge\ldots\ge\psi^{(K)}$, and hence, $1-\psi^{(1)}\le\ldots\le1-\psi^{(K)}$. Assume that $\{\psi^{(1)},\ldots,\psi^{(K)}\}$ corresponds to $\{\psi_1,\ldots,\psi_K\}$ via a  permutation $\chi$, that is, $\psi^{(j)}=\psi_{\chi(j)}$ for $j\in[K]$. Correspondingly, the $P_j$'s with the associated indexes are denoted $P^{(j)}\triangleq P_{\chi(j)}$ for $j\in[K]$. Then, for the $\chi(j)$-th element in the summation of (\ref{appdx.pf_optimal_multi_g}), the maximum is taken between $1-\psi^{(K)}-\gamma\kappa^p$ and $1-\psi^{(j)}$.

First, for given $\psi$, we examine the continuity of $\mathdutchcal{g}(\gamma;\psi)$ in $\gamma$ by eliminating the $\max$ operators in (\ref{appdx.pf_optimal_multi_g}), which is conducted by comparing $1-\psi^{(K)}-\gamma\kappa^p$ and $1-\psi^{(j)}$ for $j\in[K]$ as follows.

If $1-\psi^{(1)}\ge1-\psi^{(K)}-\gamma\kappa^p$, i.e., $\gamma\ge\frac{\psi^{(1)}-\psi^{(K)}}{\kappa^p}$, then $1-\psi_j\ge 1-\psi^{(1)}\ge1-\psi^{(K)}-\gamma\kappa^p\ge 1-\psi^{(j')}-\gamma\kappa^p$ for $j,j'\in[K]$, and hence, (\ref{appdx.pf_optimal_multi_g}) becomes
\begin{align}\label{appdx.pf_optimal_multi_g1}
    \mathdutchcal{g}(\gamma;\psi)=\gamma\epsilon^p+\sum_{j=1}^{K}P_j (1-\psi_j),
\end{align}
which is continuous in $\gamma$ for $\gamma\ge\frac{\psi^{(1)}-\psi^{(K)}}{\kappa^p}$. 

On the other hand, if $1-\psi^{(1)}<1-\psi^{(K)}-\gamma\kappa^p$, i.e., $0\le\gamma<\frac{\psi^{(1)}-\psi^{(K)}}{\kappa^p}$, then we express the range of $\gamma$ as: 
\begin{align*}
    \bigg[0, \frac{\psi^{(1)}-\psi^{(K)}}{\kappa^p}\bigg)=\cup_{s\in[K-1]}\bigg[\frac{\psi^{(s+1)}-\psi^{(K)}}{\kappa^p},\frac{\psi^{(s)}-\psi^{(K)}}{\kappa^p}\bigg).
\end{align*}
Then we consider $\gamma$ in each interval $\Big[\frac{\psi^{(s+1)}-\psi^{(K)}}{\kappa^p},\frac{\psi^{(s)}-\psi^{(K)}}{\kappa^p}\Big)$ for $s\in[K-1]$. In this case, $1-\psi^{(s)}<1-\psi^{(K)}-\gamma\kappa^p\le 1-\psi^{(s+1)}$, and (\ref{appdx.pf_optimal_multi_g}) becomes
\begin{align}\label{appdx.pf_optimal_multi_g2}
    \mathdutchcal{g}(\gamma;\psi)&=\gamma\epsilon^p+\sum_{j=1}^{s}P^{(j)} (1-\psi^{(K)}-\gamma\kappa^p)+\sum_{j=s+1}^{K}P^{(j)}(1-\psi^{(j)})\notag\\
    &=\sum_{j=1}^{s}P^{(j)} (1-\psi^{(K)})+\sum_{j=s+1}^{K}P^{(j)}(1-\psi^{(j)})+\gamma\kappa^p\Big\{\varrho(\epsilon)-\sum_{j=1}^{s}P^{(j)}\Big\},
\end{align}
where the last step holds by re-arranging the arguments and using the definition of $\varrho(\epsilon)$ given after (\ref{eq.binary_problem}). Consequently,
\begin{align*}
    &\lim_{\gamma\rightarrow((\psi^{(s)}-\psi^{(K)})/\kappa^p)-}\mathdutchcal{g}(\gamma;\psi)\\
    =&\sum_{j=1}^{s}P^{(j)} (1-\psi^{(K)})+\sum_{j=s+1}^{K}P^{(j)}(1-\psi^{(j)})+\big(\psi^{(s)}-\psi^{(K)}\big)\Big\{\varrho(\epsilon)-\sum_{j=1}^{s}P^{(j)}\Big\}\\
    =&\sum_{j=1}^{s}P^{(j)} (1-\psi^{(K)})+\sum_{j=s+1}^{K}P^{(j)}(1-\psi^{(j)})+\big(\psi^{(s)}-\psi^{(K)}\big)\Big\{\varrho(\epsilon)-\sum_{j=1}^{s-1}P^{(j)}\Big\}\\
    &-P^{(s)}\big\{(1-\psi^{(K)})-(1-\psi^{(s)})\big\}\\
    =&\sum_{j=1}^{s-1}P^{(j)} (1-\psi^{(K)})+\sum_{j=s}^{K}P^{(j)}(1-\psi^{(j)})+\big(\psi^{(s)}-\psi^{(K)}\big)\Big\{\varrho(\epsilon)-\sum_{j=1}^{s-1}P^{(j)}\Big\}\\
    =&\mathdutchcal{g}((\psi^{(s)}-\psi^{(K)})/\kappa^p;\psi),
\end{align*}
where the last step comes from the expression (\ref{appdx.pf_optimal_multi_g2}) for $\mathdutchcal{g}(\gamma;\psi)$ when $\frac{\psi^{(s)}-\psi^{(K)}}{\kappa^p}\le\gamma<\frac{\psi^{(s-1)}-\psi^{(K)}}{\kappa^p}$. Thus, $\mathdutchcal{g}(\gamma;\psi)$ is continuous in $\gamma$ for $\gamma\in\left[\frac{\psi^{(s+1)}-\psi^{(K)}}{\kappa^p},\frac{\psi^{(s)}-\psi^{(K)}}{\kappa^p}\right]$ with $s\in[K-1]$. Consequently, $\mathdutchcal{g}(\gamma;\psi)$ is continuous in $\gamma$ for  $0\le\gamma\le \frac{\psi^{(1)}-\psi^{(K)}}{\kappa^p}$.

Therefore, combining the discussion regrading (\ref{appdx.pf_optimal_multi_g1}) and(\ref{appdx.pf_optimal_multi_g2}), we obtain that given $\psi$, $\mathdutchcal{g}(\gamma;\psi)$ is continuous in $\gamma$ for $\gamma\ge 0$.

Next, for each given $\psi$, we examine the monotonicity of $\mathdutchcal{g}(\gamma;\psi)$ in $\gamma$ to find the $\gamma$ that minimizes $\mathdutchcal{g}(\gamma;\psi)$. To this end, we consider the following three cases by the values of $\varrho(\epsilon)$.

{{\textbf{Case 1:}}} If $P^{(1)}<\varrho(\epsilon)<\sum_{j=1}^{K}P^{(j)}$, 

then there exists an $s^*\in\{2,\ldots, K\}$ such that $\sum_{j=1}^{s^*-1}P^{(j)}\le\varrho(\epsilon)\le\sum_{j=1}^{s^*}P^{(j)}$. Then, by (\ref{appdx.pf_optimal_multi_g2}), $\mathdutchcal{g}(\gamma;\psi)$ is decreasing in $\gamma$ for $\gamma\in[0,\frac{\psi^{(s^*)}-\psi^{(K)}}{\kappa^p}]$ and increasing for $\gamma\in[\frac{\psi^{(s^*)}-\psi^{(K)}}{\kappa^p},\frac{\psi^{(1)}-\psi^{(K)}}{\kappa^p}]$; and by (\ref{appdx.pf_optimal_multi_g1}), $\mathdutchcal{g}(\gamma;\psi)$ is increasing in $\gamma$ for $\gamma\ge \frac{\psi^{(1)}-\psi^{(K)}}{\kappa^p}$. Therefore, $\gamma^\star_\psi=\frac{\psi^{(s^*)}-\psi^{(K)}}{\kappa^p}$.

{{\textbf{Case 2:}}} If $\varrho(\epsilon)\le P^{(1)}$, 

then by (\ref{appdx.pf_optimal_multi_g2}), $\mathdutchcal{g}(\gamma;\psi)$ is decreasing in $\gamma$ for $\gamma\in[0,\frac{\psi^{(1)}-\psi^{(K)}}{\kappa^p}]$; and by (\ref{appdx.pf_optimal_multi_g1}), increasing in $\gamma$ for $\gamma\ge \frac{\psi^{(1)}-\psi^{(K)}}{\kappa^p}$. Therefore, $\gamma^\star_\psi=\frac{\psi^{(1)}-\psi^{(K)}}{\kappa^p}$.

{{\textbf{Case 3:}}} If $\varrho(\epsilon)\ge\sum_{j=1}^{K}P^{(j)}$, 

then by (\ref{appdx.pf_optimal_multi_g2}), $\mathdutchcal{g}(\gamma;\psi)$ is increasing in $\gamma$ for $\gamma\in[0,\frac{\psi^{(1)}-\psi^{(K)}}{\kappa^p}]$; and by (\ref{appdx.pf_optimal_multi_g1}), increasing in $\gamma$ for $\gamma\ge \frac{\psi^{(1)}-\psi^{(K)}}{\kappa^p}$. Therefore, $\gamma^\star_\psi=0$.

Therefore, we conclude that 
\begin{align*}
    \gamma^\star_\psi=\left\{
    \begin{aligned}
        &\frac{\psi^{(1)}-\psi^{(K)}}{\kappa^p}\ \ \text{if}\ \varrho(\epsilon)\le P^{(1)}\\
        &\frac{\psi^{(s^*)}-\psi^{(K)}}{\kappa^p}\ \ \text{if}\ \sum_{j=1}^{s^*-1}P^{(j)}\le\varrho(\epsilon)\le\sum_{j=1}^{s^*}P^{(j)}\ \text{with}\ s^*\in\{2,\ldots,K\}\\
        &0\ \ \text{if}\ \varrho(\epsilon)\ge\sum_{j=1}^{K}P^{(j)}.
    \end{aligned}
    \right.
\end{align*}


\paragraph{Step 2: Linear programming format.} \mbox{}\\
For \emph{each fixed permutation $\chi$}, we now find the optimal $\psi$ that minimizes $\mathdutchcal{g}(\gamma^{\star}_{\psi};\psi)$ by examining the three cases in Step 1.

In Case 3 of Step 1, $\mathdutchcal{g}(\gamma^{\star}_{\psi};\psi)=\mathdutchcal{g}(0;\psi)=1-\psi^{(K)}\ge1-1/K$. Then the corresponding optimal action is $\psi^{(1)}=\ldots=\psi^{(K)}=1/K$.

In Case 1 of Step 1, for a single data point $(\mathbf{x},\widetilde{\mathbf{y}})$, by substituting $\gamma^\star_{\psi}=\frac{\psi^{(s^*)}-\psi^{(K)}}{\kappa^p}$ with $s^*\in\{2,\ldots,K\}$ into (\ref{appdx.pf_optimal_multi_g2}), we obtain that
\begin{align}\label{appdx.pf_optimal_multi_form2}
    \mathdutchcal{g}(\gamma^{\star}_{\psi};\psi)=&\sum_{j=1}^{s^*-1}P^{(j)} (1-\psi^{(K)})+\sum_{j=s^*}^{K}P^{(j)}(1-\psi^{(j)})+(\psi^{(s^*)}-\psi^{(K)})\Big\{\varrho(\epsilon)-\sum_{j=1}^{s^*-1}P^{(j)}\Big\}\notag\\
    =&(1-\psi^{(K)})\sum_{j=1}^{s^*-1}P^{(j)}+P^{(s^*)}(1-\psi^{(s^*)})+P^{(K)}(1-\psi^{(K)})+\sum_{j=s^*+1}^{K-1}P^{(j)}(1-\psi^{(j)})\mathbf{1}(s^*<K-1)\notag\\
    &+(1-\psi^{(K)})\Big\{\varrho(\epsilon)-\sum_{j=1}^{s^*-1}P^{(j)}\Big\}-(1-\psi^{(s^*)})\Big\{\varrho(\epsilon)-\sum_{j=1}^{s^*-1}P^{(j)}\Big\}\notag\\
    =&\Big\{\sum_{j=1}^{s^*}P^{(j)}-\varrho(\epsilon)\Big\}(1-\psi^{(s^*)})+\sum_{j=s^*+1}^{K-1}P^{(j)}(1-\psi^{(j)})\mathbf{1}(s^*<K-1)\notag\\
    &+\Big\{P^{(K)}+\varrho(\epsilon)\Big\}(1-\psi^{(K)}).
\end{align}


To find the optimal value that minimizes $\mathdutchcal{g}(\gamma^{\star}_{\psi};\psi)$ in (\ref{appdx.pf_optimal_multi_form2}), we link it with a linear programming problem. Specifically, for $j\in[K]$, let $z_j\triangleq 1-\psi^{(j)}$ and $\mathbf{z}\triangleq(z_1,\ldots,z_K)^\top$. Define 
\begin{align*}
    a_j=\left\{
    \begin{aligned}
        &\sum_{j=1}^{s^*}P^{(j)}-\varrho(\epsilon)\ \ \text{if}\ j=s^*\\
        &P^{(K)}+\varrho(\epsilon)\hspace{0.7cm} \text{if}\ j=K\\
        &P^{(j)}\hspace{1.86cm} \text{if}\ s^*<j<K\ \text{when}\ s^*\ne K-1.
    \end{aligned}
    \right.
\end{align*}
When $s^*=K-1$, only the entries for $j=s^*=K-1$ and $j=K$ need to be considered. Let $\mathsf{V}(\mathbf{z})=\sum_{j=s^*}^{K}a_j z_j$. Then, the optimal $\psi$ that minimizes $\mathdutchcal{g}(\gamma^*_\psi;\psi)$ in (\ref{appdx.pf_optimal_multi_form2}) can be derived by solving the linear programming problem:
\begin{equation}\label{appdx.pf_optimal_multi_LPP_1}
    \left\{
    \begin{aligned}
        \min_{z_1,\ldots,z_K}&\ \mathsf{V}(\mathbf{z}),\\
        s.t.\ \ &\sum_{j=1}^{K}(1-z_j)=1,\\
        &\ 0\le z_1\le\ldots\le z_K\le 1,
    \end{aligned}
    \right.
\end{equation}
where the constraint $\sum_{j=1}^{K}(1-z_j)=1$ is due to $\sum_{j=1}^{K}(1-z_j)=\sum_{j=1}^{K}\psi^{(j)}=1$ by the definitions of $\mathbf{z}$ and $\psi$, and the constraint $0\le z_1\le\ldots\le z_K\le 1$ reflects the definition of $\psi^{(j)}$ for $j\in[K]$.

Similarly, in Case 2 of Step 1, by substituting $\gamma^\star_{\psi}=\frac{\psi^{(1)}-\psi^{(K)}}{\kappa^p}$ into (\ref{appdx.pf_optimal_multi_g1}), we obtain that
\begin{align*}
    \mathdutchcal{g}(\gamma^\star_\psi;\psi)=&\varrho(\epsilon)\{\psi^{(1)}-\psi^{(K)}\}+\sum_{j=1}^{K}P_j (1-\psi_j)\notag\\
    =&\varrho(\epsilon)\left[\{1-\psi^{(K)}\}-\{1-\psi^{(1)}\}\right]+\sum_{j=1}^{K}P_j (1-\psi_j)\notag\\
    =&\{P^{(1)}-\varrho(\epsilon)\}(1-\psi^{(1)})+\sum_{j=2}^{K-1}P^{(j)}(1-\psi^{(j)})+\{P^{(K)}+\varrho(\epsilon)\}(1-\psi^{(K)}),
\end{align*}
which is a form similar to (\ref{appdx.pf_optimal_multi_form2}) if letting $s^*$ in (\ref{appdx.pf_optimal_multi_form2}) equal 1. Hence, its optimal minimizer can be found through a linear programming problem similar to (\ref{appdx.pf_optimal_multi_LPP_1}). Consequently, in the next step, our discussion focuses on (\ref{appdx.pf_optimal_multi_form2}) only.

\paragraph{Step 3: Extreme points.} \mbox{}\\
The feasible region of (\ref{appdx.pf_optimal_multi_LPP_1}), denoted ${\Xi}$, can be expressed as follows:
\begin{align}\label{appdx.pf_optimal_multi_feasible}
    {\Xi}\triangleq&\Big\{\mathbf{z}:\sum_{j=1}^{K}(1-z_j)=1,\  0\le z_{1}\le\ldots\le z_K\le 1\Big\}\notag\\
    =&\Big\{\mathbf{z}:\sum_{j=1}^{K}(1-z_j)=1,\  0\le z_{1}\le\ldots\le z_K\le 1, 1-z_j\le\frac{1}{j}\ \text{for}\ j\in[K]\Big\}\notag\\
    =&\Big\{\mathbf{z}: \sum_{j=1}^{K}z_j=K-1,\  0\le z_{1}\le\ldots\le z_K\le 1, z_j\ge1-\frac{1}{j}\ \text{for}\ j\in[K]\Big\},
\end{align}
where the second step holds since, for $j\in[K]$, $j(1-z_j)=\sum_{t=1}^{j}(1-z_j)\le\sum_{t=1}^{j}(1-z_t)\le1$ as $z_t\le z_j$ for $t\in[j]$, and the last step holds by rearranging the equality $\sum_{j=1}^{K}(1-z_j)=1$.

We next prove that the following $K$ feasible solutions are the only extreme points of (\ref{appdx.pf_optimal_multi_LPP_1}): 
\begin{align*}
    &\mathbf{z}_1\triangleq\left(0,1,1,\ldots,1,1\right)^\top, \\
    &\mathbf{z}_2\triangleq\left(1-\frac{1}{2}, 1-\frac{1}{2}, 1,\ldots,1,1\right)^\top,\\
    &\ldots,\\
    &\mathbf{z}_j\triangleq\bigg(\underbrace{1-\frac{1}{j},\ldots,1-\frac{1}{j}}_{j\ \text{elements}},\underbrace{1,\ldots,1}_{K-j\ \text{elements}}\bigg)^\top\\
    &\ldots,\\
    &\mathbf{z}_{K-1}\triangleq\left(1-\frac{1}{K-1}, 1-\frac{1}{K-1}, 1-\frac{1}{K-1},\ldots,1-\frac{1}{K-1},1\right)^\top,\\
    &\mathbf{z}_{K}\triangleq\left(1-\frac{1}{K}, 1-\frac{1}{K}, 1-\frac{1}{K},\ldots,1-\frac{1}{K},1-\frac{1}{K}\right)^\top.
\end{align*}
We denote $\Xi_0\triangleq\{\mathbf{z}_1,\ldots,\mathbf{z}_{K}\}$.  

Firstly, we prove that each data point in $\Xi_0$ is an extreme point of (\ref{appdx.pf_optimal_multi_LPP_1}). To this end, consider any $\mathbf{z}_j\in\Xi_0$. If there exist $\nu\in(0,1)$, $\mathbf{z}'=(z'_{1},\ldots,z'_K)^\top\in{\Xi}$, and $\mathbf{z}''=(z''_{1},\ldots,z''_K)^\top\in{\Xi}$, such that $\mathbf{z}_j=\nu\mathbf{z}'+(1-\nu)\mathbf{z}''$, then $\mathbf{z}'=\mathbf{z}''{=\mathbf{z}_j}$, as shown below. Let $z_{j,t}$, $z'_t$, and $z''_t$ represent the $t$th element of $\mathbf{z}_j$, $\mathbf{z}'$, and $\mathbf{z}''$, respectively.
\begin{itemize}
    \item If $t=j+1,\ldots,K$: then by $\nu z'_t+(1-\nu)z''_t=z_{j,t}$, $z_{j,t}=1$, and $z'_t,z''_t\le 1$, we have that $z'_t=z''_t=z_{j,t}=1$;
    \item If $t=j$: then $\nu z'_{j}+(1-\nu)z''_{j}=z_{j,j}=1-\frac{1}{j}$, and $z'_{j},z''_{j}\ge 1-\frac{1}{j}$ by (\ref{appdx.pf_optimal_multi_feasible}). Thus, we obtain that $z'_{j}=z''_{j}=z_{j,j}$.
    \item If $t=1,\ldots,j-1$: then $z'_t\le z'_{j}=1-\frac{1}{j}$, $z''_t\le z''_{j}=1-\frac{1}{j}$, and $\nu z'_t+(1-\nu)z''_t=z_{j,t}=1-\frac{1}{j}$. Thus, we can also obtain that $z'_t=z''_t=z_{j,t}$.
\end{itemize}
Therefore, $\mathbf{z}'=\mathbf{z}''=\mathbf{z}_j$, and hence, $\mathbf{z}_j$ is an extreme point of (\ref{appdx.pf_optimal_multi_LPP_1}) by Definition \ref{def.extreme}. 

Next, for any point $\widetilde{\mathbf{z}}\triangleq(\widetilde{z}_{1},\ldots,\widetilde{z}_K)^\top\in\Xi\backslash\Xi_0$, we prove that $\widetilde{\mathbf{z}}$ is not an extreme point of (\ref{appdx.pf_optimal_multi_LPP_1}) by construction. Specifically, we have the following claims for $\widetilde{\mathbf{z}}$.
\begin{itemize}
    \item \textit{Claim 1: $\widetilde{z}_{t}>1-\frac{1}{t}$ for $t\in[K]$}:\\
    This claim can be proved by contradiction. Assume there exists $t_0\in[K]$ such that $\widetilde{z}_{t_0}\le1-\frac{1}{t_0}$. As $\widetilde{\mathbf{z}}\in\Xi$, by (\ref{appdx.pf_optimal_multi_feasible}) and the assumption, we have $\widetilde{z}_{t_0}=1-\frac{1}{t_0}$. Since $\widetilde{\mathbf{z}}\notin\Xi_0$, one of the following statements must hold: (1) there exists $j<t_0$ such that $\widetilde{z}_j<1-\frac{1}{t_0}$; or (2) there exists $j>t_0$ such that $\widetilde{z}_j<1$ for some $j>t_0$. Therefore, $\sum_{j=1}^{K}\widetilde{z}_j{=\sum_{j=1}^{t_0}\widetilde{z}_j+\sum_{j=t_0+1}^{K}\widetilde{z}_j<\sum_{j=1}^{t_0}\widetilde{z}_{t_0}+\sum_{j=t_0+1}^{K}1=}t_0\cdot \widetilde{z}_{t_0}+(K-t_0)\cdot 1=K-1$ since $\widetilde{z}_1\le\ldots\le\widetilde{z}_{K}$ and $\widetilde{z}_t\le 1$ for $t\in[K]$ by (\ref{appdx.pf_optimal_multi_feasible}), where the strict inequality arises from the fact that either statement (1) or (2) holds. This conclusion contradicts the condition that $\widetilde{\mathbf{z}}\in\Xi$ by (\ref{appdx.pf_optimal_multi_feasible}).
    \item \textit{Claim 2: There exists $t_1\in[K]$ such that $\widetilde{z}_{t_1-1}<\widetilde{z}_{t_1}<1$:} \\
    This claim can be proved by contradiction:
    \begin{itemize}
        \item On one hand, if there exists $t'\in[K]$ such that $\widetilde{z}_{t'-1}<\widetilde{z}_{t'}$, then we must have $\widetilde{z}_t=\widetilde{z}_{t'-1}$ for $t\le t'-1$; otherwise, by letting $t''=\arg\max\{t:\widetilde{z}_t<\widetilde{z}_{t'-1},t<t'-1\}$, we obtain that $\widetilde{z}_{t''}<\widetilde{z}_{t''+1}=\widetilde{z}_{t'}<1$ and hence, $t_1$ can be set as $t''+1$, which contradicts the assumption. Additionally, we have $\widetilde{z}_t=1$ for $t\ge t'$; otherwise, $\widetilde{z}_{t'-1}<\widetilde{z}_{t'}<1$ and $t_1$ can be set as $t'$, which contradicts the assumption. Summarizing the discussion for $t\le t'-1$ and $t\ge t'$, we have $\widetilde{\mathbf{z}}\in\Xi_0$. 
        \item On the other hand, if $\widetilde{z}_{t-1}=\widetilde{z}_{t}$ for all $t\in[K]$, then $\widetilde{\mathbf{z}}=\mathbf{z}_{K}\in\Xi_0$. 
    \end{itemize}
    In both cases, $\widetilde{\mathbf{z}}\in\Xi_0$, contradicting the fact that $\widetilde{\mathbf{z}}\notin\Xi_0$. Hence, Claim 2 holds.
\end{itemize}
Let $t_2\triangleq\max\{t\in[K]:\widetilde{z}_{t}<1\}$. Then, $t_2\ge t_1$. Let 
\begin{align*}
    &\mathscr{c}_1\triangleq\min\{\frac{\widetilde{z}_{t_1}-\widetilde{z}_{t_1-1}}{2}, \widetilde{z}_t-(1-\frac{1}{t})\ \text{for}\ t\le t_1-1\}\ \text{and}\\
    &\mathscr{c}_2\triangleq\min\{\frac{\widetilde{z}_{t_1}-\widetilde{z}_{t_1-1}}{2}, \widetilde{z}_{t_1}-(1-\frac{1}{t_1}), 1-\widetilde{z}_t\ \text{for}\ t_1\le t\le t_2\}.
\end{align*}
By Claims 1 and 2, we have that $\mathscr{c}_1>0$ and $\mathscr{c}_2>0$. Let $\overline{\mathscr{c}}\triangleq\min\{(t_1-1)\mathscr{c}_1,(t_2-t_1+1)\mathscr{c}_2\}$, $\overline{\mathscr{c}}_1\triangleq\overline{\mathscr{c}}/(t_1-1)$, and $\overline{\mathscr{c}}_2\triangleq\overline{\mathscr{c}}/(t_2-t_1+1)$. Then we construct two points in $\Xi$: 
\begin{align*}
    &\mathbf{z}'\triangleq(\widetilde{z}_{1}+\overline{\mathscr{c}}_1,\ldots,\widetilde{z}_{t_1-1}+\overline{\mathscr{c}}_1,\widetilde{z}_{t_1}-\overline{\mathscr{c}}_2,\ldots,\widetilde{z}_{t_2}-\overline{\mathscr{c}}_2,\ldots,\widetilde{z}_{K})^\top\ \text{and}\\
    &\mathbf{z}''\triangleq(\widetilde{z}_{1}-\overline{\mathscr{c}}_1,\ldots,\widetilde{z}_{t_1-1}-\overline{\mathscr{c}}_1,\widetilde{z}_{t_1}+\overline{\mathscr{c}}_2,\ldots,\widetilde{z}_{t_2}+\overline{\mathscr{c}}_2,\ldots,\widetilde{z}_{K})^\top.
\end{align*}
Therefore, $\widetilde{\mathbf{z}}=\frac{1}{2}\mathbf{z}'+\frac{1}{2}\mathbf{z}''$, and hence, $\widetilde{\mathbf{z}}$ is not an extreme point of (\ref{appdx.pf_optimal_multi_LPP_1}).

\paragraph{Step 4: Solution format and optimal action.} \mbox{}\\
By Steps 2 and 3, we obtain that \emph{for each fixed $\chi$ and $s^*$}, the extreme points of the linear programming problem are given in $\Xi_0$. By Lemma \ref{lemma.extreme}, every linear program has an extreme point that is an optimal solution. Hence, {by the format of the $K$ extreme points in $\Xi_0$,} we obtain that at least one optimal action of $\psi$ can be found in the format: 
\begin{align}
    \psi^{(j)}=\frac{1}{k^*}\ \text{for}\ j\le k^*\ \text{and}\ \psi^{(j)}=0\ \text{for}\ j\ge k^*
\end{align}
for some $k^*\in[K]$. 

If $k^*=K$, by (\ref{appdx.pf_optimal_multi_g}), we have that $\mathdutchcal{g}(\gamma;\psi)=\gamma\epsilon^p+\sum_{j=1}^{K}P_j\cdot(1-\frac{1}{K})$, and hence, the robust risk is $\mathdutchcal{g}(\gamma^{\star}_{\psi};\psi)=1-\frac{1}{K}$ by taking $\gamma^{\star}_{\psi}=0$. 

If $k^*<K$, we obtain that 
\begin{align*}
    \mathdutchcal{g}(\gamma;\psi)=& \gamma\epsilon^p+\sum_{j=1}^{k^*}P^{(j)}\max\Big(1-\gamma\kappa^p,1-\frac{1}{k^*}\Big)+\sum_{j=k^*+1}^{K}P^{(j)}\cdot 1\\
    =&\left\{
    \begin{aligned}
    &1+\gamma\kappa^{p}\Big\{\varrho(\epsilon)-\sum_{j=1}^{k^*}P^{(j)}\Big\},\ \text{if}\ 0\le\gamma\le\frac{1}{k^*\kappa^p};\\
    &\gamma\epsilon^p+1-\frac{1}{k^*}\sum_{j=1}^{k^*}P^{(j)},\ \text{if}\ \gamma\ge\frac{1}{k^*\kappa^p}.\\
    \end{aligned}
    \right.
\end{align*}
Hence, for $k^*<K$, the robust risk is the minimum of $\mathdutchcal{g}(\gamma^{\star}_{\psi};\psi)=1$ by taking $\gamma^{\star}_{\psi}=0$ and $\mathdutchcal{g}(\gamma^{\star}_{\psi};\psi)=1+\frac{1}{k^*}\Big\{\varrho(\epsilon)-\sum_{j=1}^{k^*}P^{(j)}\Big\}$ by taking $\gamma^{\star}_{\psi}=\frac{1}{k^*\kappa^p}$. Additionally, we observe that we should take the highest $k^*$ values of $\{P_1,\ldots,P_K\}$ as $P^{(1)},\ldots,P^{(k^*)}$ to minimize {$\mathdutchcal{g}(\gamma^{\star}_{\psi};\psi)$}. Hence, we take the permutation $\chi$ such that $P^{(1)}\ge\ldots\ge P^{(K)}$.

In summary, the optimal action $\psi^\star$ that minimizes {$\mathdutchcal{g}(\gamma^{\star}_{\psi};\psi)$} is given as below.
\begin{itemize}
    \item If $\frac{1}{K}\ge \frac{1}{k^*}\sum_{j=1}^{k^*}P^{(j)}-\frac{1}{k^*}\varrho(\epsilon)$ for all $k^*\in[K-1]$, then $\psi^\star_j=\frac{1}{K}$ for $j\in[K]$.
    \item If there exists some $k_0\in[K-1]$, $\frac{1}{k_0}\sum_{j=1}^{k_0}P^{(j)}-\frac{1}{k_0}\varrho(\epsilon)>\frac{1}{K}$, and $\frac{1}{k_0}\sum_{j=1}^{k_0}P^{(j)}-\frac{1}{k_0}\varrho(\epsilon)\ge\frac{1}{k^*}\sum_{j=1}^{k^*}P^{(j)}-\frac{1}{k^*}\varrho(\epsilon)$ for all $k^*\in[K-1]$, then $\psi^{\star(j)}=\frac{1}{k_0}$ for $j\in[k_0]$ and $\psi^{\star(j)}=0$ for $j=k_0+1,\ldots,K$.
\end{itemize}
In particular, if $P^{(1)}\ge\max\{\frac{1}{K}+\varrho(\epsilon),P^{(2)}+\varrho(\epsilon)\}$, then the optimal action is given as: $\psi^{\star(1)}=1$ and $\psi^{\star(j)}=0$ for $j=2,\ldots, K$. Thus, the proof is complete.

\subsection{Proof of Theorem \ref{thm.robust_risk_form}}\label{appdx.pf_thm_robust_risk_form}

For ease of presentation, we omit the dependence on $\mathbf{x}_i$ and $\widetilde{\mathbf{y}}_i$ in the notation for now. Specifically, for $i\in[K]$ and $j\in[K]$, let $P_{i,j}\triangleq P_j(\mathbf{x}_i, \widetilde{\mathbf{y}}_i)\triangleq P(\mathrm{Y}=j|\mathbf{x}_i, \widetilde{\mathbf{y}}_i)$ and $\psi_{i,j}\triangleq\psi(\mathbf{x}_i)_j$. For given $\mathbf{x}_i$, we sort the $K$ elements of $\psi(\mathbf{x}_i)$, $\{\psi_{i,1},\ldots,\psi_{i,K}\}$, in a decreasing order, denoted $\psi^{(1)}_i\ge\ldots\ge\psi^{(K)}_i$.

We first consider the difference between the Wasserstein robust loss (\ref{eq.w_robust_loss}) and the nominal loss (\ref{eq.nominal_loss}):
\begin{align*}
    &\widehat{\mathfrak{R}}_\epsilon-\widehat{\mathfrak{R}}\\
    =&\inf_{\gamma\ge 0}\Big[\gamma\epsilon^p+\frac{1}{n}\sum_{i=1}^{n}\sum_{j=1}^{K}P_{i,j}\max\big\{\mathcal{T}(\psi_{i,1})-\mathcal{T}(\psi_{i,j})-\gamma\kappa^p,\ldots,\mathcal{T}(\psi_{i,j-1})-\mathcal{T}(\psi_{i,j})-\gamma\kappa^p,0,\\
    &\hspace{1.8cm}\mathcal{T}(\psi_{i,j+1})-\mathcal{T}(\psi_{i,j})-\gamma\kappa^p,\ldots,\mathcal{T}(\psi_{i,K})-\mathcal{T}(\psi_{i,j})-\gamma\kappa^p\big\}\Big]\\
    =&\inf_{\gamma\ge 0}\Big[\gamma\epsilon^p+\frac{1}{n}\sum_{i=1}^{n}\sum_{j=1}^{K}P_{i,j}\max\big\{\mathcal{T}(\psi_{i}^{(K)})-\mathcal{T}(\psi_{i,j})-\gamma\kappa^p,0\big\}\Big]\\
    \triangleq&\inf_{\gamma\ge 0}\mathdutchcal{h}(\gamma),
\end{align*}
where the second equality is due to the fact that $\psi^{(1)}_i\ge\ldots\ge\psi^{(K)}_i$ and that $\mathcal{T}$ is decreasing.

By definition, $\left\{\alpha_{i,j}\triangleq\mathcal{T}(\psi_{i}^{(K)})-\mathcal{T}(\psi_{i,j}):i\in[n],j\in[K]\right\}$ are ordered as $\alpha^{(1)}\ge\ldots\ge\alpha^{(nK)}$, and correspondingly, the $P_{i,j}$'s with the associated indexes are denoted $P^{(1)},\ldots,P^{(nK)}$. Consequently, we obtain that
\begin{align}\label{appdx.pf_robust_form_h_gamma_1_multi}
    \mathdutchcal{h}(\gamma)=\gamma\epsilon^p+\frac{1}{n}\sum_{t=1}^{nK}P^{(t)}(\alpha^{(t)}-\gamma\kappa^p)\mathbf{1}(\alpha^{(t)}>\gamma\kappa^p).
\end{align}

Define $\alpha^{(nK+1)}=0$. Then, $\mathdutchcal{h}(\gamma)$ in (\ref{appdx.pf_robust_form_h_gamma_1_multi}) can be expressed as 
\begin{align}\label{appdx.pf_robust_form_h_gamma_3_multi}
    \mathdutchcal{h}(\gamma)
    =& \left\{
    \begin{aligned}
        &\ \frac{1}{n}\sum_{t=1}^{s}P^{(t)}\alpha^{(t)}+\gamma\kappa^p\Big\{\varrho(\epsilon)-\frac{1}{n}\sum_{t=1}^{s}P^{(t)}\Big\},\ \text{if}\ \alpha^{(s+1)}/\kappa^p\le \gamma<\alpha^{(s)}/\kappa^p\ \text{for}\ s\in[nK];\\
        &\ \gamma\epsilon^p\ \text{if}\ \gamma\ge\alpha^{(1)}/\kappa^p.
    \end{aligned}\right.
\end{align}
Since 
\begin{align*}
    \lim_{\gamma\rightarrow(|\alpha^{(s)}|/\kappa^p)-}\mathdutchcal{h}(\gamma)=&\frac{1}{n}\sum_{t=1}^{s}P^{(t)}\alpha^{(t)}+|\alpha^{(s)}|\big\{\varrho(\epsilon)-\frac{1}{n}\sum_{t=1}^{s}P^{(t)}\big\}\\
    =&|\alpha^{(s)}|\varrho(\epsilon)+\frac{1}{n}\sum_{t=1}^{s}P^{(t)}(|\alpha^{(t)}|-|\alpha^{(s)}|)\\
    =&|\alpha^{(s)}|\varrho(\epsilon)+\frac{1}{n}\sum_{t=1}^{s-1}P^{(t)}(|\alpha^{(t)}|-|\alpha^{(s)}|)\\
    =&\frac{1}{n}\sum_{t=1}^{s-1}P^{(t)}|\alpha^{(t)}|+|\alpha^{(s)}|\Big\{\varrho(\epsilon)-\frac{1}{n}\sum_{t=1}^{s-1}P^{(t)}\Big\}\\
    =&\mathdutchcal{h}(|\alpha^{(s)}|/\kappa^p),
\end{align*}
and $\mathdutchcal{h}(\gamma)$ is right-continuous at $\gamma=|\alpha^{(s)}|/\kappa^p$ by definition (\ref{appdx.pf_robust_form_h_gamma_3_multi}), so we conclude that $\mathdutchcal{h}(\gamma)$ is continuous at $\gamma=|\alpha^{(s)}|/\kappa^p$ for $s\in[nK]$. Hence, by (\ref{appdx.pf_robust_form_h_gamma_3_multi}), $\mathdutchcal{h}(\gamma)$ is continuous for $\gamma\ge 0$.

By (\ref{appdx.pf_robust_form_h_gamma_3_multi}), $\mathdutchcal{h}(\gamma)$ is increasing in $\gamma$ on $\gamma\in[\alpha^{(s+1)}/\kappa^p,\alpha^{(s)}/\kappa^p)$ if $\varrho(\epsilon)-\frac{1}{n}\sum_{t=1}^{s}P^{(t)}>0$, and decreasing if $\varrho(\epsilon)-\frac{1}{n}\sum_{t=1}^{s}P^{(t)}<0$. Hence, to examine the monotonicity of $\mathdutchcal{h}(\gamma)$ and find the $\gamma$ that minimizes $\mathdutchcal{h}(\gamma)$, we consider the following three cases by the values of $\varrho(\epsilon)$.

\textbf{Case 1.} If $\frac{1}{n}P^{(1)}<\varrho(\epsilon)<\frac{1}{n}\sum_{t=1}^{nK}P^{(t)}$: then there exists $s^*\in\{2,\ldots,nK\}$ such that {$\varrho(\epsilon)>\frac{1}{n}\sum_{t=1}^{s}P^{(t)}$} for $s< s^*$, and $\varrho(\epsilon)\le \frac{1}{n}\sum_{t=1}^{s}P^{(t)}$ for $s\ge s^*$. Hence, by (\ref{appdx.pf_robust_form_h_gamma_3_multi}), $\mathdutchcal{h}(\gamma)$ is decreasing in $\gamma$ for $\gamma\in[0,\alpha^{(s^*)}/\kappa^p]$ and increasing for $\gamma\ge\alpha^{(s^*)}/\kappa^p$. Consequently, $\gamma^{\star}_{\psi}=\alpha^{(s^*)}/\kappa^p$, and
\begin{align*}
    \inf_{\gamma\ge 0}\mathdutchcal{h}(\gamma)=\mathdutchcal{h}(\alpha^{(s^*)}/\kappa^p)=&\frac{1}{n}\sum_{t=1}^{s^*-1}P^{(t)}|\alpha^{(t)}|+|\alpha^{(s^*)}|\Big\{\varrho(\epsilon)-\frac{1}{n}\sum_{t=1}^{s^*-1}P^{(t)}\Big\}\\
    =& \frac{1}{n}\sum_{t=1}^{s^*-1}P^{(t)}|\alpha^{(t)}|+O\left(\frac{1}{n}\right)|\alpha^{(s^*)}|.
\end{align*}
Here the last step holds because by the definition of $s^*$, $0<\varrho(\epsilon)-\frac{1}{n}\sum_{t=1}^{s^*-1}P^{(t)}\le\frac{1}{n}\sum_{t=1}^{s^*}P^{(t)}-\frac{1}{n}\sum_{t=1}^{s^*-1}P^{(t)}=\frac{1}{n}P^{(s^*)}\le\frac{1}{n}$, where the last inequality holds since $P^{(s^*)}\in[0,1]$.

\textbf{Case 2.} If $\varrho(\epsilon)\le\frac{1}{n}P^{(1)}$: then, by (\ref{appdx.pf_robust_form_h_gamma_3_multi}), $\mathdutchcal{h}(\gamma)$ is decreasing in $\gamma$ for $\gamma\in[0,\alpha^{(1)}/\kappa^p]$ and increasing for $\gamma\ge \alpha^{(1)}/\kappa^p$. Therefore, $\gamma^{\star}_{\psi}=\alpha^{(1)}/\kappa^p$, and
\begin{align*}
    \inf_{\gamma\ge 0}\mathdutchcal{h}(\gamma)=\mathdutchcal{h}(\alpha^{(1)}/\kappa^p)=\varrho(\epsilon)|\alpha^{(1)}|.
\end{align*}

\textbf{Case 3.} If $\varrho(\epsilon)\ge\frac{1}{n}\sum_{t=1}^{nK}P^{(t)}$: then, by (\ref{appdx.pf_robust_form_h_gamma_3_multi}), $\mathdutchcal{h}(\gamma)$ is increasing in $\gamma$ for $\gamma\ge0$. Therefore, $\gamma^{\star}_{\psi}=0=\alpha^{(nK+1)}/\kappa^p$, and
\begin{align*}
    \inf_{\gamma\ge 0}\mathdutchcal{h}(\gamma)=\mathdutchcal{h}(0)=\frac{1}{n}\sum_{t=1}^{nK}P^{(t)}|\alpha^{(t)}|.
\end{align*}

Hence, summarizing the discussion in the three cases above, we have that $\gamma^{\star}_{\psi}=\alpha^{(s^*)}/\kappa^p$, and the robust risk (\ref{eq.w_robust_loss}) is expressed as 
\begin{align*}
    \widehat{\mathfrak{R}}_\epsilon
    =&\widehat{\mathfrak{R}}+\frac{1}{n}\sum_{t=1}^{s^*-1}P^{(t)}\alpha^{(t)}\mathbf{1}(s^*>1)+O\left(\frac{1}{n}\right)\alpha^{(s^*)},
\end{align*}
where, to provide a unified expression, we define $s^*\triangleq 1$ and $s^*\triangleq nK+1$ in Case 2 and Case 3, respectively. Thus, the proof is completed.



\section{Experimental Details}\label{appdx.experiment}

\subsection{Implementation Details}\label{appdx.exp_setup}

\paragraph{Datasets.} We evaluate the effectiveness of the proposed AdaptCDRP on CIFAR-10 and CIFAR-100 \cite{krizhevsky2009learning} with synthetic annotations, and on four real-world datasets with human annotations: CIFAR-10N,  CIFAR-100N \cite[]{wei2022learning}, LabelMe \cite[]{rodrigues2017learning, torralba2010labelme}, and  Animal-10N \cite{song2019selfie}. CIFAR-10 has 10 classes of $32\times 32\times 3$ color images, with 50,000 training images and 10,000 test images;  CIFAR-10N provides three independent human annotated noisy labels per instance, with a majority vote yielding a 9.03\% noise rate. CIFAR-100, with the same number and size of training and test images as CIFAR-10, features 100 fine-grained classes; for each instance in CIFAR-100, CIFAR-100N provides one human annotated noisy label, with a noise rate of 40.20\%. 
LabelMe is an image classification dataset comprising 10,000 training images, 500 validation images, and 1,188 test images. The training set includes noisy and incomplete labels provided by 59 annotators, with each image being labeled an average of 2.547 times.
The Animal-10N dataset contains 10 classes of $64\times 64\times 3$ color animal images; it includes 5 pairs of similar-looking animals, where each pair consists of two animals that are visually alike. The training dataset contains 50,000 images and the test dataset contains 5,000 images. The noise rate (mislabeling ratio) of the dataset is about 8\%. For all the datasets except LabelMe, we allocate 10\% of the training data as validation data used for model selection, where we choose the model with the lowest validation accuracy during training. The test data is reserved for final evaluation of the model's performance on unseen data.

\paragraph{Noise generation.} We generate synthetic instance-dependent label noise on the CIFAR-10 and CIFAR-100 datasets using Algorihtm 2 in \cite{xia2020part}. Each annotator is classified as an IDN-$\tau$ annotator if their mislabeling ratio is upper bounded by $\tau$. We simulate $R$ annotators independently, with $R$ taking values from the set $\{5, 10, 30, 50, 100\}$. For each instance, one annotation is randomly selected from those provided by the $R$ annotators' contributions, which evaluates methods under incomplete annotator labeling conditions. Additionally, for each $R$, we consider three groups of annotators with varying expertise levels, characterized by average mislabeling ratios of approximately 20\%, 35\%, and 50\%. These groups are referred to as IDN-LOW, IDN-MID, and IDN-HIGH, indicating low, medium, and high error rates, respectively. We manually corrupt the datasets according to the following annotator groups:
\begin{align*}
    &\textbf{R=5:}\\
    &\textbf{IDN-LOW. }\textit{2 IDN-10\% annotators, 2 IDN-20\% annotators, 1 IDN-30\% annotator;}\\
    &\textbf{IDN-MID. }\textit{2 IDN-30\% annotators, 2 IDN-40\% annotators, 1 IDN-50\% annotator;}\\
    &\textbf{IDN-HIGH. }\textit{2 IDN-50\% annotators, 2 IDN-60\% annotators, 1 IDN-70\% annotator;}\\
    &\textbf{R=10:}\\
    &\textbf{IDN-LOW. }\textit{4 IDN-10\% annotators, 4 IDN-20\% annotators, 2 IDN-30\% annotators;}\\
    &\textbf{IDN-MID. }\textit{4 IDN-30\% annotators, 4 IDN-40\% annotators, 2 IDN-50\% annotators;}\\
    &\textbf{IDN-HIGH. }\textit{4 IDN-50\% annotators, 4 IDN-60\% annotators, 2 IDN-70\% annotators;}\\
    &\textbf{R=30:}\\
    &\textbf{IDN-LOW. }\textit{11 IDN-10\% annotators, 11 IDN-20\% annotators, 8 IDN-30\% annotators;}\\
    &\textbf{IDN-MID. }\textit{11 IDN-30\% annotators, 11 IDN-40\% annotators, 8 IDN-50\% annotators;}\\
    &\textbf{IDN-HIGH. }\textit{11 IDN-50\% annotators, 11 IDN-60\% annotators, 8 IDN-70\% annotators;}\\
    &\textbf{R=50:}\\
    &\textbf{IDN-LOW. }\textit{18 IDN-10\% annotators, 18 IDN-20\% annotators, 14 IDN-30\% annotators;}\\
    &\textbf{IDN-MID. }\textit{18 IDN-30\% annotators, 18 IDN-40\% annotators, 14 IDN-50\% annotators;}\\
    &\textbf{IDN-HIGH. }\textit{18 IDN-50\% annotators, 18 IDN-60\% annotators, 14 IDN-70\% annotators;}\\
    &\textbf{R=100:}\\
    &\textbf{IDN-LOW. }\textit{35 IDN-10\% annotators, 35 IDN-20\% annotators, 30 IDN-30\% annotators;}\\
    &\textbf{IDN-MID. }\textit{35 IDN-30\% annotators, 35 IDN-40\% annotators, 30 IDN-50\% annotators;}\\
    &\textbf{IDN-HIGH. }\textit{35 IDN-50\% annotators, 35 IDN-60\% annotators, 30 IDN-70\% annotators.}
\end{align*}

\paragraph{Experiment setup.}  We employ the ResNet-18 architecture for CIFAR-10 and CIFAR-10N, and the ResNet-34 architecture for CIFAR-100 and CIFAR-100N datasets. Following \cite{rodrigues2018deep}, we use a pretrained VGG-16 model with a 50\% dropout rate as the backbone for the LabelMe dataset. For the Animal-10N dataset, in line with \cite{song2019selfie}, we use the VGG19-BN architecture \cite{simonyan2014very} as the backbone. A batch size of 128 is maintained across all datasets. We use the Adam optimizer \cite{kingma2014adam} with a weight decay of $5\times 10^{-4}$ for CIFAR-10, CIFAR-100, CIFAR-10N, CIFAR-100N, and LabelMe datasets. The initial learning rate for CIFAR-10, CIFAR-100, CIFAR-10N, and CIFAR-100N is set to $10^{-3}$, with the networks trained for 120, 150, 120, and 150 epochs respectively. The first 30 epochs serve as a warm-up. For the LabelMe dataset, the model is trained for 100 epochs with an initial learning rate of $10^{-2}$ and a 20-epoch warm-up. For the Animal-10N dataset, the network is trained for 100 epochs with an initial learning rate of $10^{-1}$ and a weight decay of $10^{-3}$. The learning rate is reduced by a factor of 0.1 at the 50th and 75th epochs, with the first 40 epochs designated as the warm-up stage. Training times are approximately 3 hours on CIFAR-10 and 5.5 hours on CIFAR-100 using an NVIDIA V100 GPU.

\paragraph{Baselines.} Our method addresses learning from noisy annotations, particularly when estimated true label posteriors may be misspecified. Thus, we select baselines that either use estimated transition matrices or true label posteriors (MBEM \cite{khetan2017learning}, CrowdLayer \cite{rodrigues2018deep}, TraceReg \cite{tanno2019learning}, Max-MIG \cite{cao2019max}, CoNAL \cite{chu2021learning}). We also include baselines that aggregate labels differently (CE (MV), CE (EM) \cite{dawid1979maximum}, DoctorNet \cite{guan2018said}, CCC \cite{zhang2024coupled}). Since our theoretical framework applies to both single-annotator and multiple-annotator scenarios, we also include baselines designed for single noisy labels (LogitClip \cite{wei2023mitigating}), particularly those employing two networks (Co-teaching \cite{han2018co}, Co-teaching+ \cite{yu2019does}, CoDis \cite{xia2023combating}), as our method similarly uses two networks that act as priors for each other. Details of the baselines are given as follows. \vspace{-0.2cm}
\begin{itemize}
    \item[(1)] CE (Clean): Trains the network using the standard cross-entropy loss on clean datasets;
    \item[(2)] CE (MV): Trains the network using majority voting labels;
    \item[(3)] CE (EM) \cite{dawid1979maximum}: Aggregate labels using the EM algorithm;
    \item[(4)] Co-teaching \cite{han2018co}: Trains two networks and cross-trains on instances with small loss values;
    \item[(5)] Co-teaching+ \cite{yu2019does}: Combines the "Update by Disagreement" with the Co-teaching method;
    \item[(6)] CoDis \cite{xia2023combating}: Selects possibly clean data that have high-discrepancy prediction probabilities between two networks;
    \item[(7)] LogitClip \cite{wei2023mitigating}: Clamps the norm of the logit vector to ensure it is upper bounded by a constant;
    \item[(8)] DoctorNet \cite{guan2018said}: Models individual annotators and learns averaging weights by combining them; 
    \item[(9)] MBEM \cite{khetan2017learning}: Alternates between estimating annotator quality from disagreements with the current model and updating the model by optimizing a loss function that accounts for the current estimate of worker quality;
    \item[(10)] CrowdLayer \cite{rodrigues2018deep}: Concatenates the classifier with multiple annotator-specific layers and learns the parameters simultaneously;
    \item[(11)] TraceReg \cite{tanno2019learning}: Uses a loss function similar to CrowdLayer but adds regularization to establish identifiability of the confusion matrices and the classifier; 
    \item[(12)] Max-MIG \cite{cao2019max}: Jointly aggregates noisy crowdsourced labels and trains the classifier;
    \item[(13)] CoNAL \cite{chu2021learning}: Decomposes the annotation noise into common and individual confusions;
    \item[(14)] CCC \cite{zhang2024coupled}: Simultaneously trains two models to correct the confusion matrices learned by each other via bi-level optimization. 
\end{itemize}\vspace{-0.2cm}
Among these methods, Co-teaching, Co-teaching+, CoDis, and LogitClip are strong baselines for handling single noisy labels. We adapt them to the multiple annotations setting by using majority vote labels for loss computation. The results demonstrate the effectiveness of the proposed pseudo-label generation method across various scenarios. Results for CE (Clean), CE (MV), CE (EM), DoctorNet, MBEM, CrowdLayer, Max-MIG, and CoNAL in Table \ref{table_cifar} are sourced from \cite{guo2023label}. Baselines (1)-(3) are implemented according to their respective algorithms, while for the remaining baseline methods, we adapted the code from the GitHub repositories provided in their original papers, with further modifications to fit our setup.

\paragraph{Pseudo code for the algorithm.} The training process described in Section \ref{sec.algorithm} is presented in Algorithm \ref{algorithm_AdatpGamma}.

{\scriptsize
\newcommand\mycommfont[1]{\small\ttfamily\textcolor{blue}{#1}}
\SetCommentSty{mycommfont}

\begin{algorithm}[t]
    \SetKwInput{Input}{Input~}
    \SetKwInOut{KwOut}{Output}

    \caption{Learning from Noisy Labels via \textbf{C}onditional \textbf{D}istributionally \textbf{R}obust True Label \textbf{P}osterior with an \textbf{Adapt}ive  Lagrange multiplier  (AdaptCDRP)}\label{algorithm_AdatpGamma}

    \KwIn{$\mathcal{D}=\{\mathbf{x}_i,\Tilde{\mathbf{y}}_i\}_{i=1}^{n}$, $\epsilon\in(0, \frac{1}{K})$, $\kappa>0$, $\mathdutchcal{C}>1$, $\lambda>0$}

    Warm up classifiers $\psi^{(1)}$ and $\psi^{(2)}$; Approximate noise transition probabilities $\widehat{\tau}_j(\widetilde{\mathbf{y}})$ for $j\in[K]$ using small-loss data\;

    
    \For{$epoch\ t=1,...,T$}{
        \tcp{Update the classifiers with pseudo-empirical distribution (Theorem \ref{thm.optimal_action})}
        Update approximated true label posteriors: $\widehat{P}^{(\iota)}_j(\mathbf{x},\widetilde{\mathbf{y}})\propto \psi^{(\iota)}_j(\mathbf{x})\cdot \widehat{\tau}_j(\widetilde{\mathbf{y}})$ for $j\in[K]$\;

        \For{each instance $\mathbf{x}_i$}{
        if $\widehat{P}^{(\iota)}_{k^\star}(\mathbf{x}_i,\widetilde{\mathbf{y}}_i)/\max_{j\ne k^\star}\widehat{P}^{(\iota)}_j(\mathbf{x},\widetilde{\mathbf{y}})\ge\mathdutchcal{C}$, let $\mathrm{y}^\star_i=k^\star$ and collect $(\mathbf{x}_i,\widetilde{\mathbf{y}}_i, \mathrm{y}^\star_i)$ into $\mathcal{D}^{\star}_{t,\iota}$\;
        } 
        Update the pseudo-empirical distribution $P^\star_{t,\iota}$ based on $\mathcal{D}^{\star}_{t,\iota}$\;
        Update $\psi^{(\iota)}$ by minimizing the empirical robust risk (\ref{eq.re_rob_risk_emp}) with the reference distribution $P^\star_{t,\backslash\iota}$ and the Lagrange multiplier $\gamma^{(\iota)}_{t-1}$\;
        \tcp{Update the Lagrange multiplier (Theorem \ref{thm.robust_risk_form})}
        Compute $\alpha_i's$ for $i\in[nK]$ and $s^*$ by Theorem \ref{thm.robust_risk_form}\;
        Compute the reference value for the Lagrange multiplier:  $\gamma_{0,t}=|\alpha^{(s^*)}|/\kappa^p$\;
        Update the Lagrange multiplier: $\gamma^{(\iota)}_{t}=\gamma_{0,t}-\frac{1}{\lambda}\{\epsilon^p -\mathbb{E}_{P^\star_{t,\backslash\iota}}c^p(y',\mathrm{Y})\}$
    }
    \KwOut{\ $\psi_1$ and $\psi_2$.}
\end{algorithm}

}

\subsection{Additional Experimental Results}\label{appdx.exp_result}

\paragraph{Performance on the CIFAR-100 dataset with varying numbers of annotators.} We conduct additional experiments on the CIFAR-100 dataset, varying the number of annotators from 5 to 100, with each instance labeled only once. Figure \ref{Fig.ACC_cifar100} presents the average accuracy across different annotator counts, highlighting the advantages of the proposed method across various settings. As the total number of annotators increases, labeling sparsity becomes more pronounced, which may lead to a performance collapse in methods that do not account for this sparsity, especially in datasets with a large number of classes, such as CIFAR-100.

\begin{figure}[H]
\centering 
\subfigure[IDN-LOW]{
\label{Fig.ACC_cifar100_sub.1}
\includegraphics[width=0.323\textwidth]{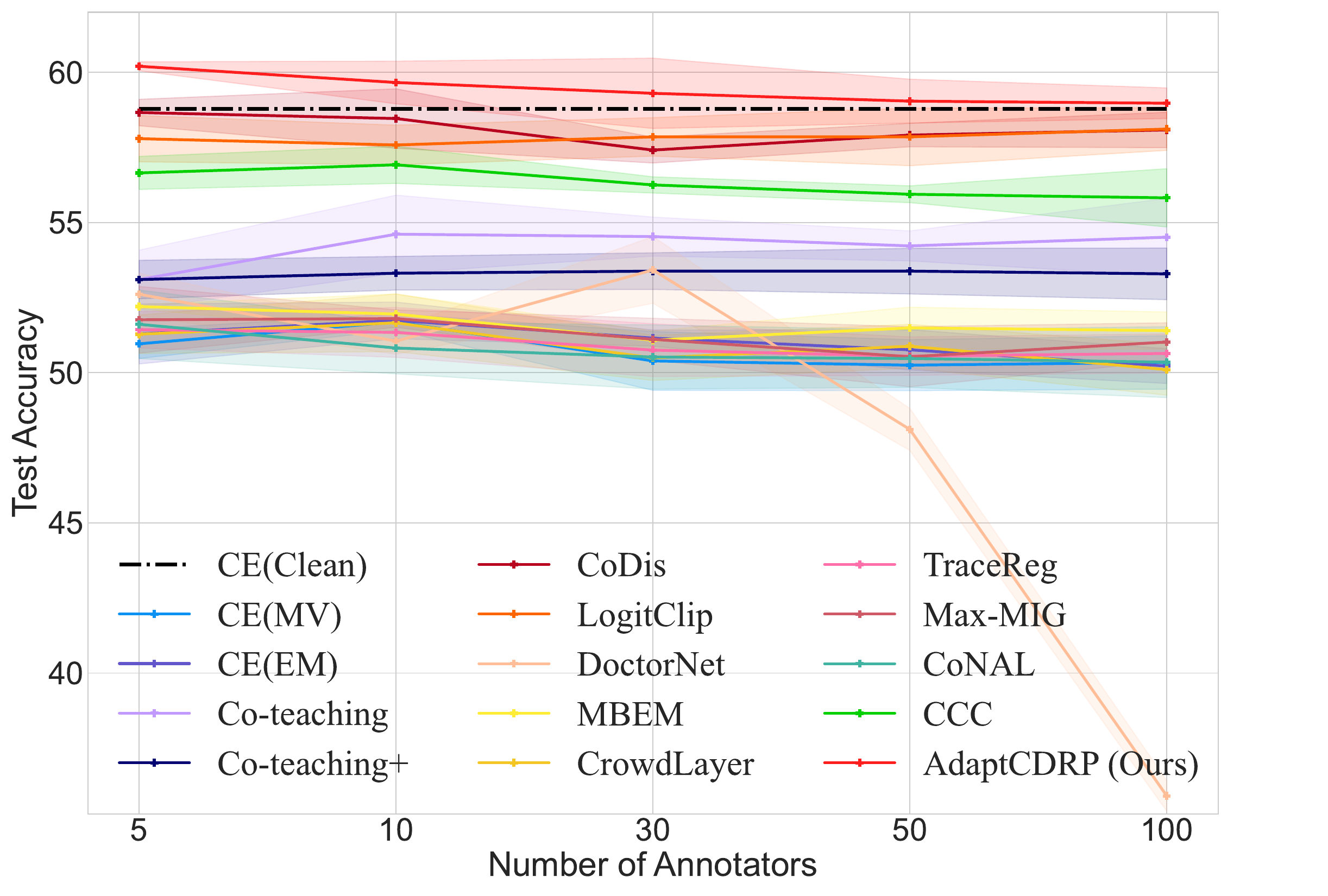}}
\subfigure[IDN-MID]{
\label{Fig.ACC_cifar100_sub.2}
\includegraphics[width=0.323\textwidth]{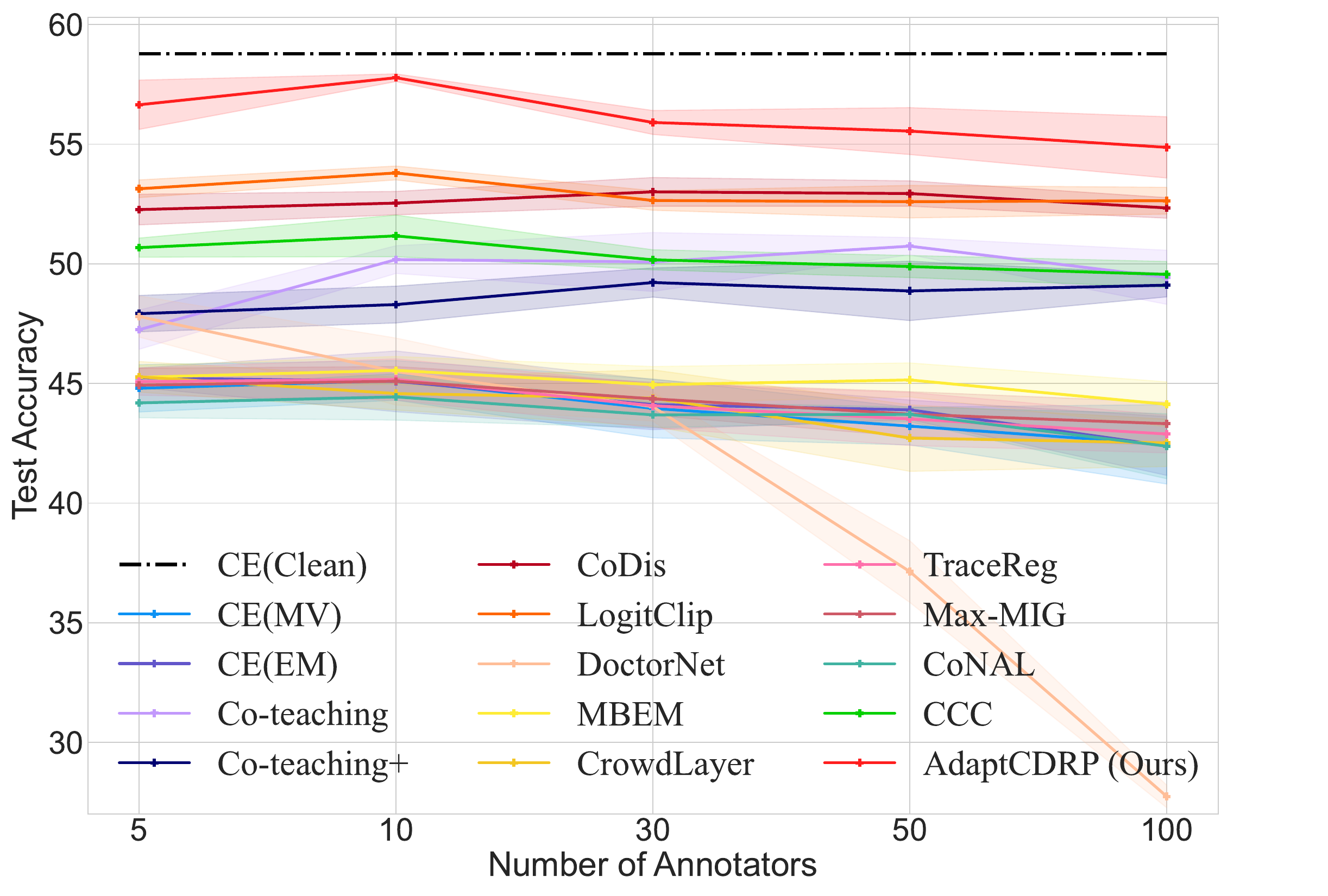}}
\subfigure[IDN-HIGH]{
\label{Fig.ACC_cifar100_sub.3}
\includegraphics[width=0.323\textwidth]{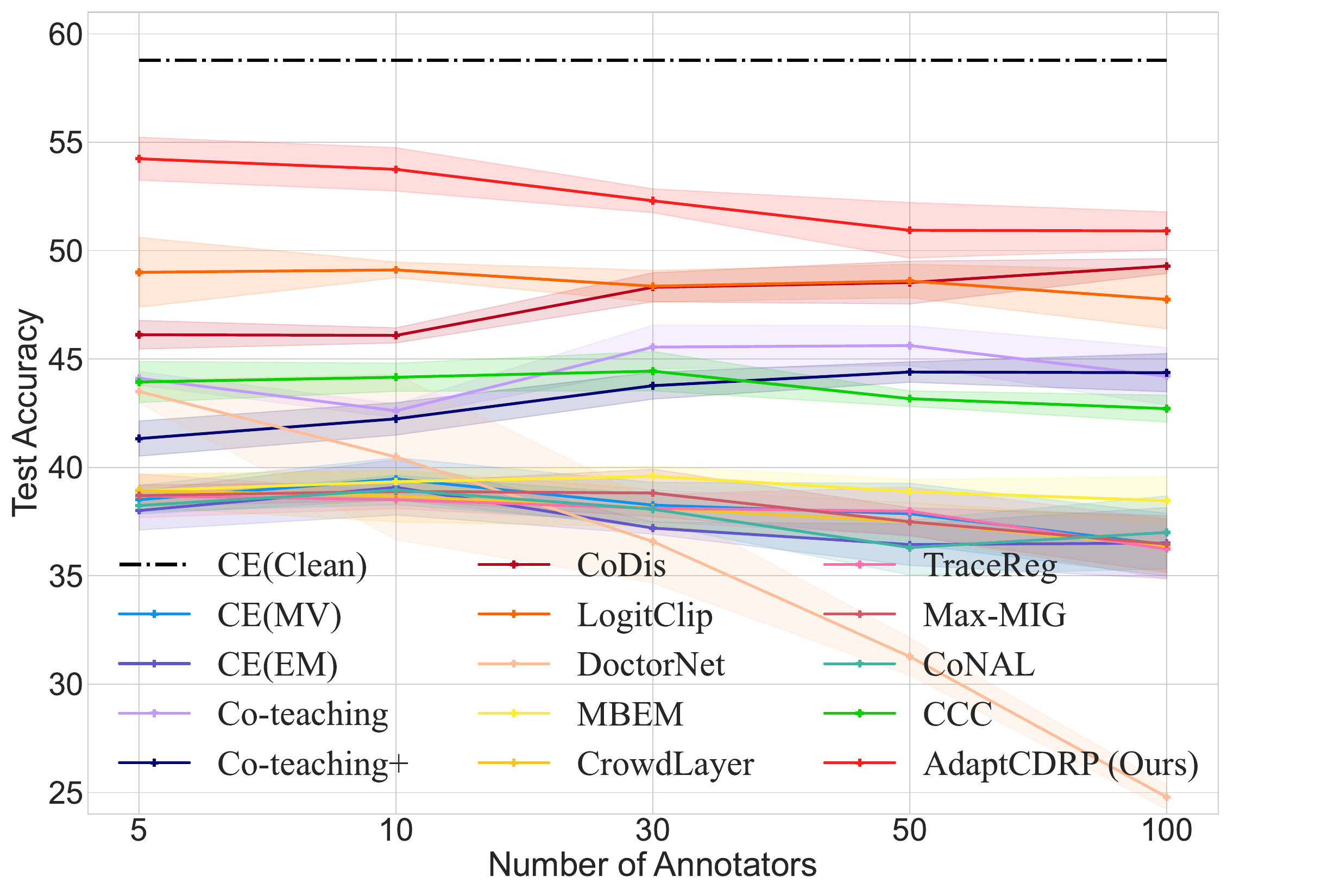}}
\caption{Average test accuracy on the CIFAR-100 dataset with varying numbers of annotators. The error bars representing standard deviations are shaded.}
\label{Fig.ACC_cifar100}
\end{figure}


\paragraph{Performance with varying numbers of annotations per instance.} To further evaluate model performance with varying numbers of annotations per instance, we use $R = 30$ annotators and randomly select $l = 1, 3, 5, 7, 9$ labels from these $R$ annotators for each instance. The test accuracies of the proposed method and other annotation aggregation methods are shown in Table \ref{table_increase_R}.

\begin{table}[H] \footnotesize
  \renewcommand\arraystretch{1.0}
  \setlength\tabcolsep{3.0pt}
  \caption{Average test accuracies (with associated standard errors expressed after the $\pm$ signs) for learning the CIFAR-10 dataset with varying numbers of annotations (denoted $l$) from $R=30$ annotators.}
  \label{table_increase_R}
  \centering
  {
  \begin{tabular}{cc|ccccc}
    \toprule 
    & & {Ours (AdaptCDRP)} & CE(MV) & CE(EM) \cite{dawid1979maximum} & IWMV \cite{li2014error} & IAA \cite{bucarelli2023leveraging} \\
    \midrule
    \multirow{5}{*}{\makecell{IDN-\\LOW}}
    & $l=1$\ \ & $88.09_{\pm 0.37}$ & $80.90_{\pm 0.88}$ & $81.15_{\pm 0.74}$ & $-$ & $-$ \\
    & $l=3$\ \ & $88.76_{\pm 0.30}$ & $83.00_{\pm 0.40}$ & $83.22_{\pm 0.43}$ & $83.09_{\pm 0.35}$ & $82.99_{\pm 0.40}$ \\
    & $l=5$\ \ & $89.05_{\pm 0.36}$ & $85.78_{\pm 0.53}$ & $85.59_{\pm 0.46}$ & $86.63_{\pm 0.29}$ & $84.53_{\pm 0.52}$ \\
    & $l=7$\ \ & $89.06_{\pm 0.30}$ & $87.46_{\pm 0.32}$ & $87.93_{\pm 0.53}$ & $87.83_{\pm 0.24}$ & $85.05_{\pm 0.55}$ \\
    & $l=9$\ \ & $89.24_{\pm 0.51}$ & $88.30_{\pm 0.25}$ & $88.38_{\pm 0.32}$ & $88.24_{\pm 0.12}$ & $85.55_{\pm 0.46}$ \\
    \cmidrule(lr){1-7}
    \multirow{5}{*}{\makecell{IDN-\\MID}}
    & $l=1$\ \ & $87.37_{\pm 0.29}$ & $76.05_{\pm 0.70}$ & $75.84_{\pm 0.97}$ & $-$ & $-$ \\
    & $l=3$\ \ & $88.47_{\pm 0.19}$ & $79.12_{\pm 0.66}$ & $79.11_{\pm 0.71}$ & $79.72_{\pm 0.44}$ & $79.43_{\pm 0.54}$ \\
    & $l=5$\ \ & $88.68_{\pm 0.17}$ & $81.58_{\pm 0.20}$ & $81.90_{\pm 0.67}$ & $82.05_{\pm 0.59}$ & $81.56_{\pm 0.31}$ \\
    & $l=7$\ \ & $88.71_{\pm 0.24}$ & $83.24_{\pm 0.34}$ & $82.84_{\pm 0.43}$ & $83.06_{\pm 0.47}$ & $83.28_{\pm 0.22}$ \\
    & $l=9$\ \ & $88.89_{\pm 0.29}$ & $84.04_{\pm 0.13}$ & $83.95_{\pm 0.43}$ & $84.28_{\pm 0.22}$ & $83.94_{\pm 0.23}$ \\
    \cmidrule(lr){1-7}
    \multirow{5}{*}{\makecell{IDN-\\HIGH}}
    & $l=1$\ \ & $86.62_{\pm 0.45}$ & $69.65_{\pm 1.73}$ & $69.85_{\pm 1.43}$ & $-$ & $-$ \\
    & $l=3$\ \ & $88.37_{\pm 0.19}$ & $74.32_{\pm 0.40}$ & $74.02_{\pm 1.01}$ & $75.00_{\pm 0.68}$ & $74.80_{\pm 0.38}$ \\
    & $l=5$\ \ & $88.63_{\pm 0.48}$ & $77.91_{\pm 1.30}$ & $78.42_{\pm 0.42}$ & $77.90_{\pm 0.74}$ & $78.04_{\pm 0.50}$ \\
    & $l=7$\ \ & $88.64_{\pm 0.37}$ & $79.75_{\pm 0.81}$ & $80.00_{\pm 0.55}$ & $79.36_{\pm 0.61}$ & $79.72_{\pm 0.69}$ \\
    & $l=9$\ \ & $88.78_{\pm 0.26}$ & $80.90_{\pm 0.65}$ & $81.11_{\pm 0.44}$ & $80.53_{\pm 0.88}$ & $80.74_{\pm 0.69}$ \\
    \bottomrule
  \end{tabular}}
\end{table}

\paragraph{Accuracy of robust pseudo-labels.} To enhance the assessment of the effectiveness of the proposed robust pseudo-label generation method, we present the average accuracy of the robust pseudo-labels on the CIFAR-10 and CIFAR-100 datasets during the training process over 5 random trials, as shown in Figure \ref{Fig.pseudo_ACC}. Additionally, the average accuracy of the robust pseudo-labels with varying numbers of annotators on the CIFAR-10 dataset is shown in Figure \ref{Fig.pseudo_ACC_sparse}.

\begin{figure}[H]
\centering 
\subfigure[CIFAR-10]{
\label{Fig.pseudo_ACC_cifar10}
\includegraphics[width=0.49\textwidth]{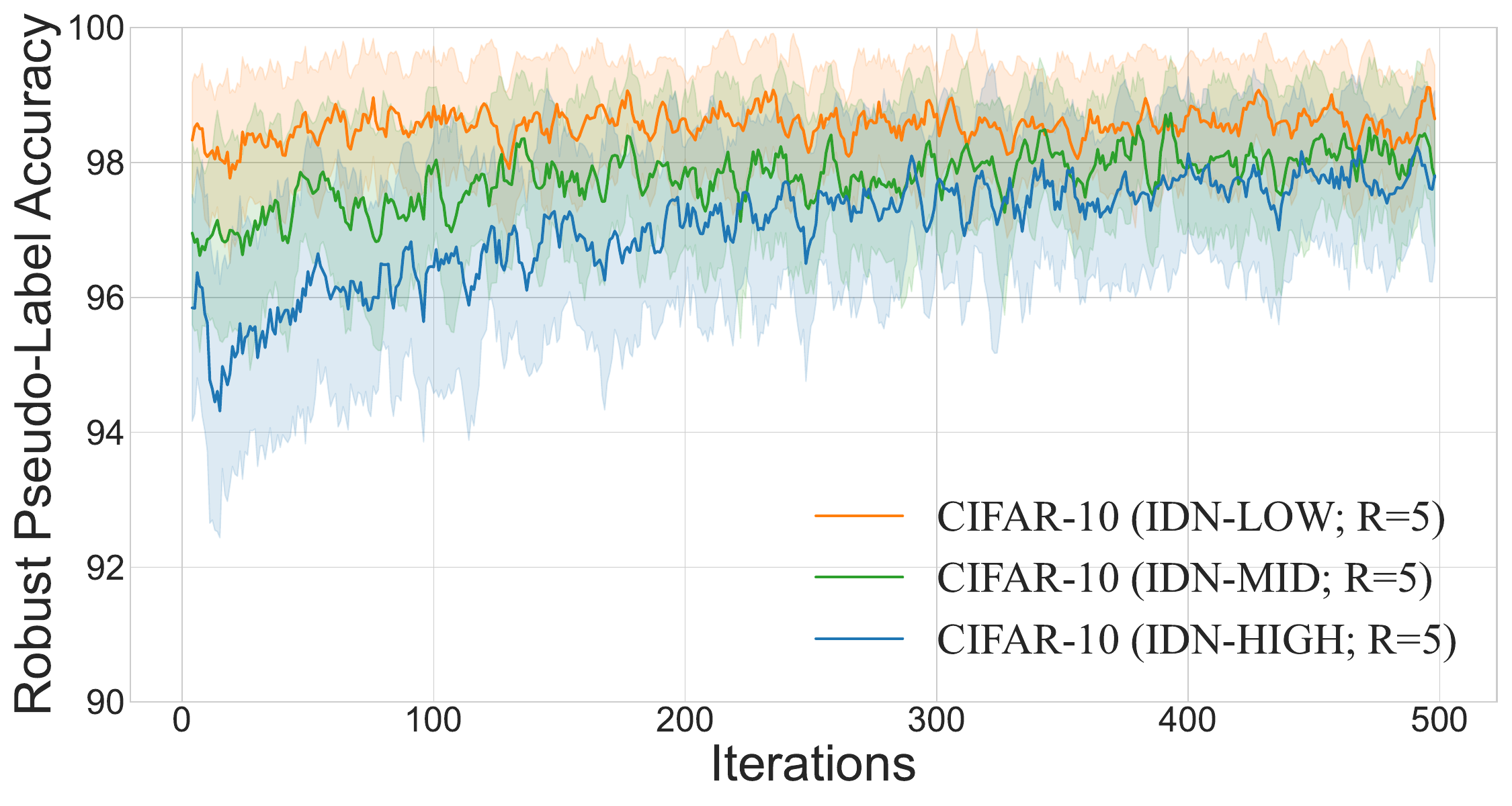}}
\subfigure[CIFAR-100]{
\label{Fig.pseudo_ACC_cifar100}
\includegraphics[width=0.49\textwidth]{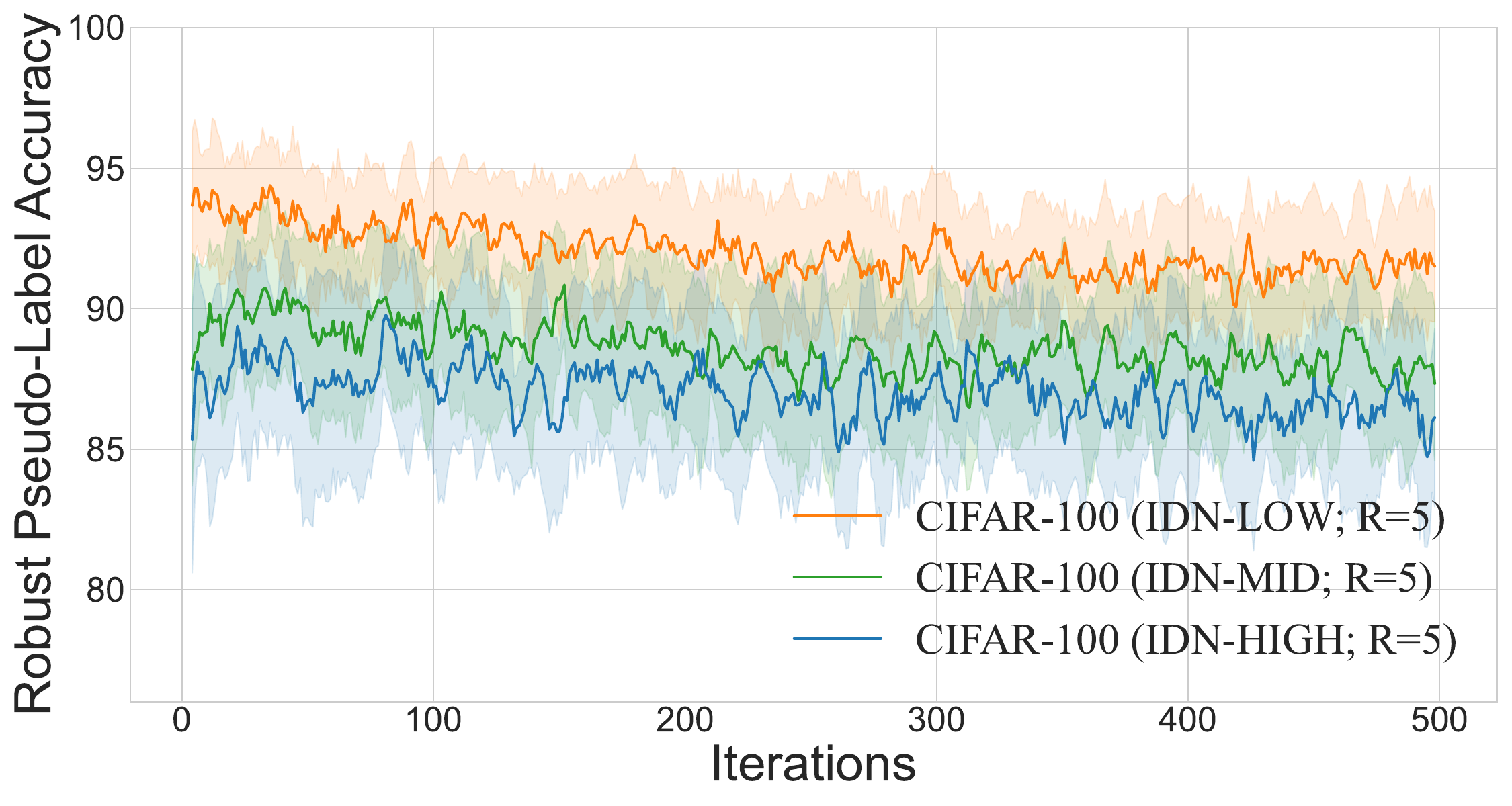}}
\caption{Average accuracy of robust pseudo-labels on the CIFAR-10 and CIFAR-100 datasets {($R=5$)} during the training process.}
\label{Fig.pseudo_ACC}
\end{figure}

\begin{figure}[H]
\centering 
\subfigure[CIFAR-10 ($R=5$)]{
\label{Fig.pseudo_ACC_cifar10_5}
\includegraphics[width=0.49\textwidth]{Pseudo_ACC_cifar10.pdf}}
\subfigure[CIFAR-10 ($R=10$)]{
\label{Fig.pseudo_ACC_cifar10_10}
\includegraphics[width=0.49\textwidth]{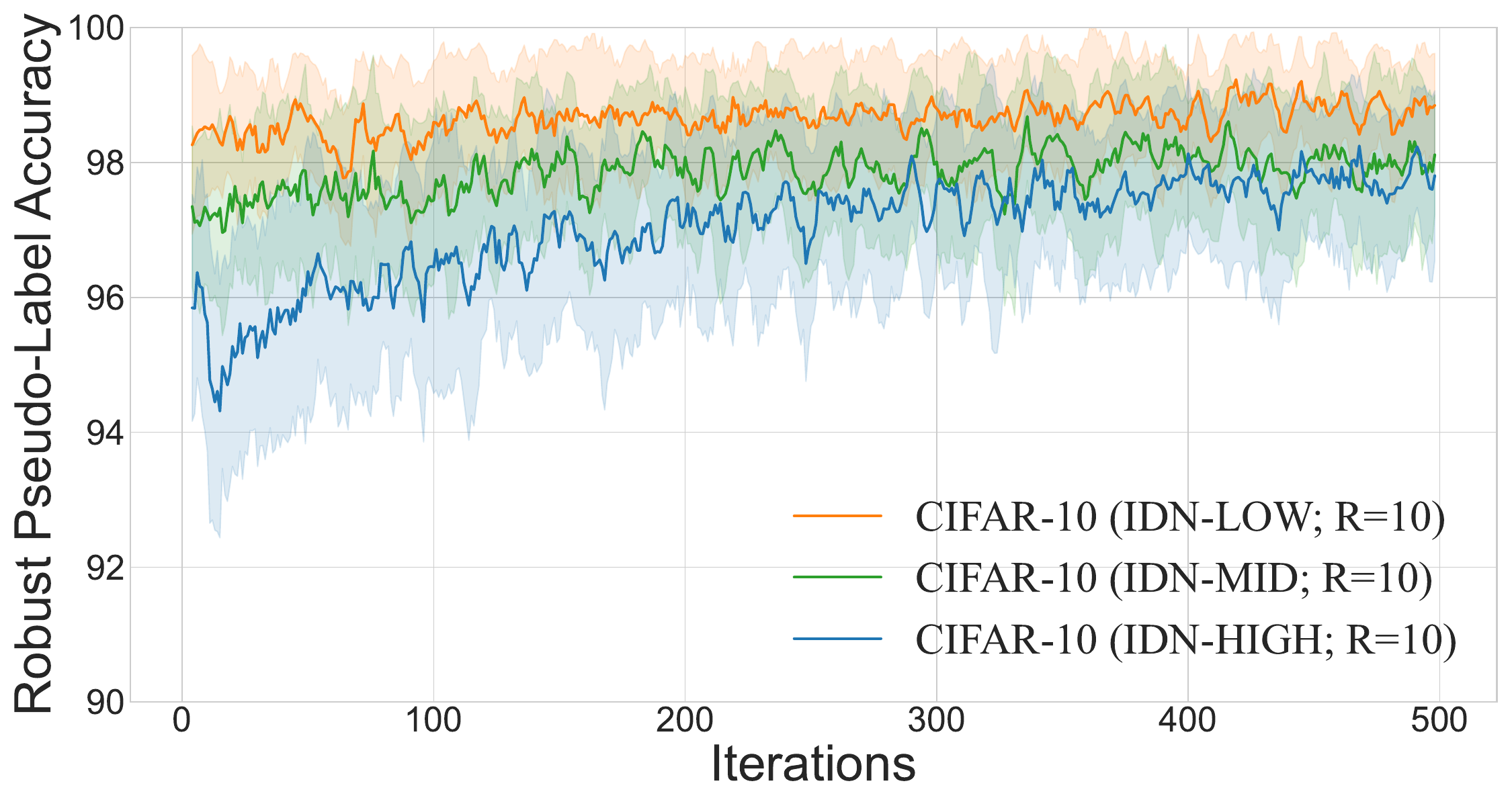}}
\subfigure[CIFAR-10 ($R=30$)]{
\label{Fig.pseudo_ACC_cifar10_30}
\includegraphics[width=0.49\textwidth]{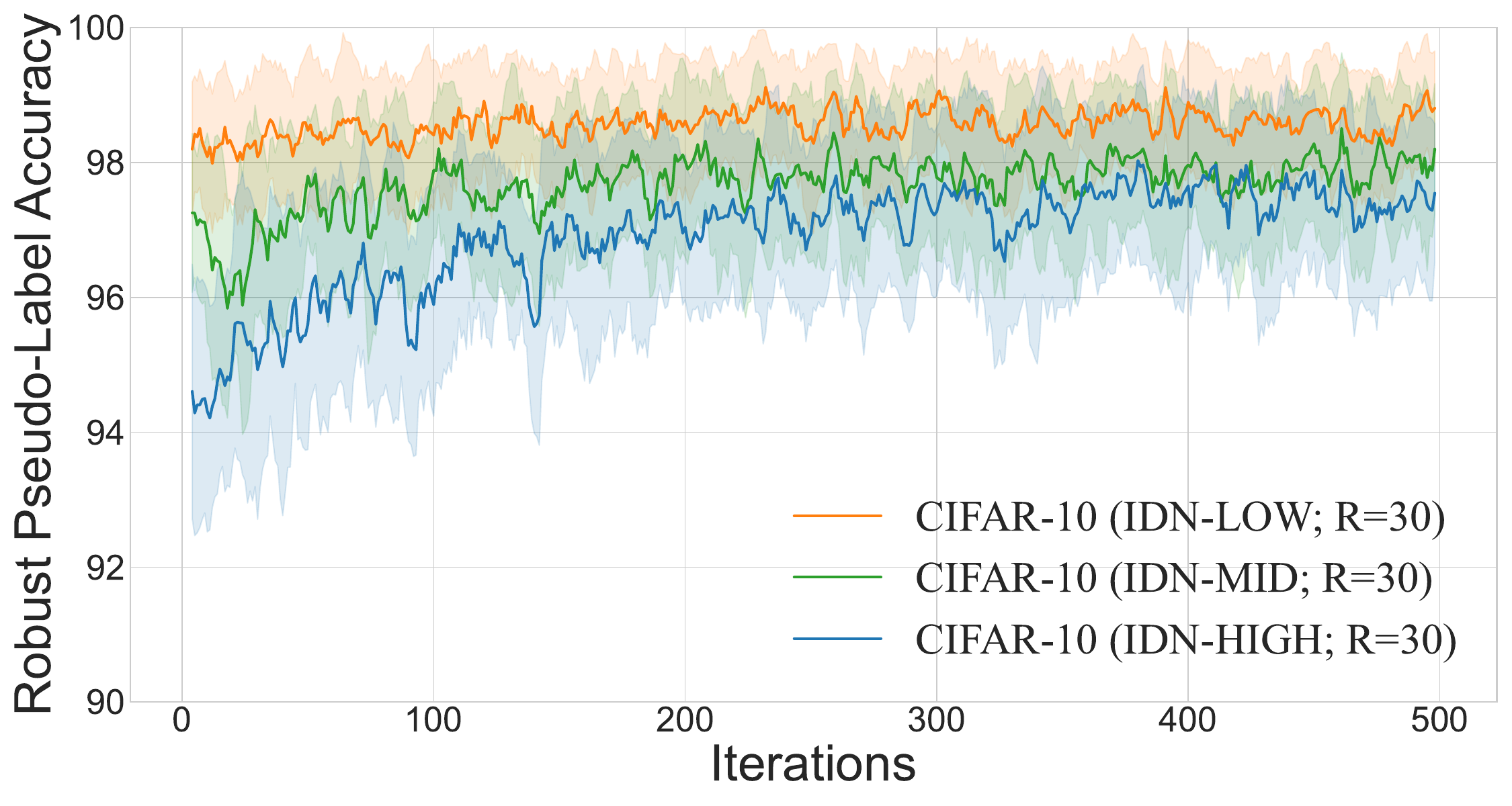}}
\subfigure[CIFAR-10 ($R=50$)]{
\label{Fig.pseudo_ACC_cifar10_50}
\includegraphics[width=0.49\textwidth]{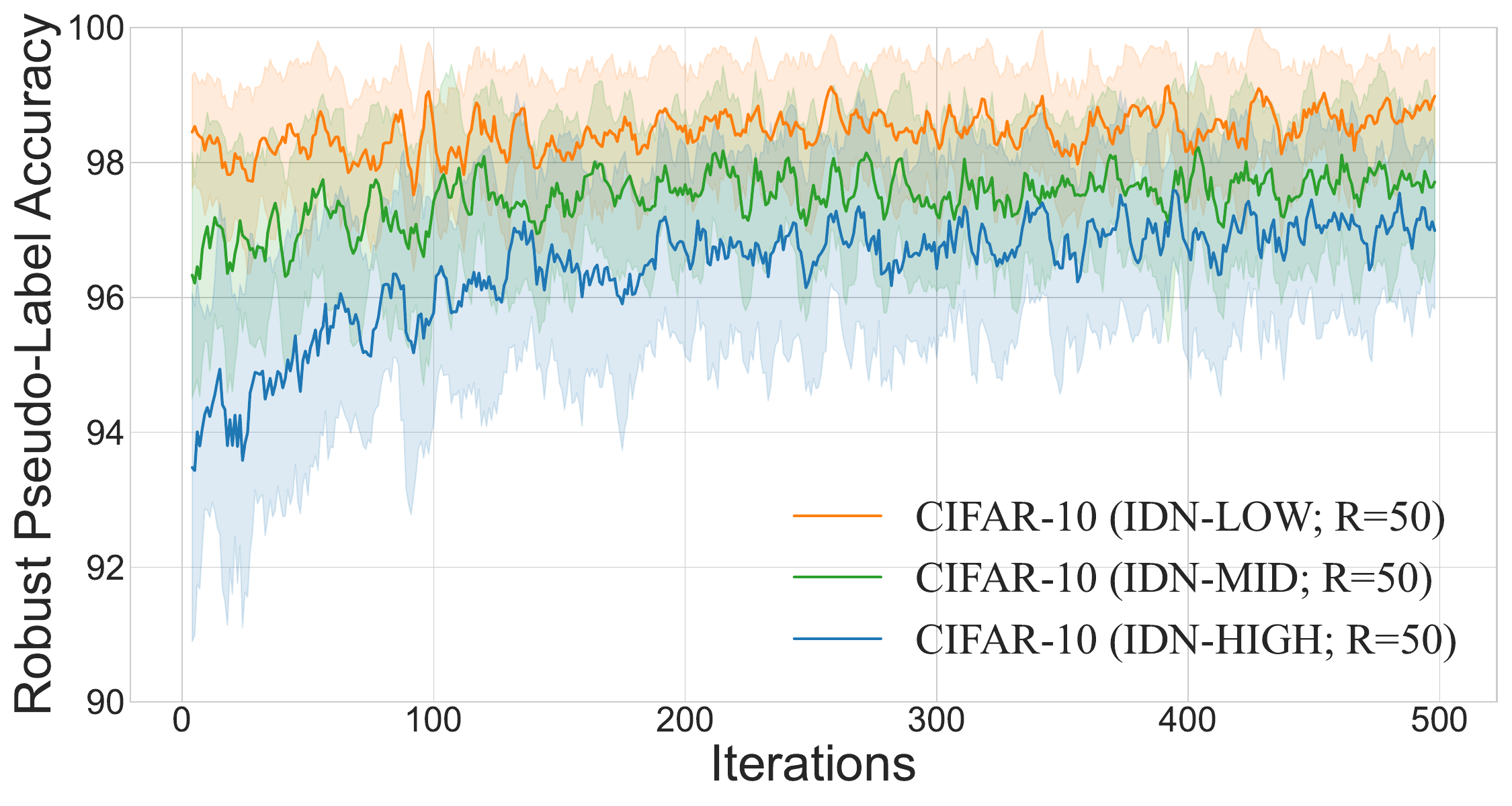}}
\subfigure[CIFAR-10 ($R=100$)]{
\label{Fig.pseudo_ACC_cifar10_100}
\includegraphics[width=0.49\textwidth]{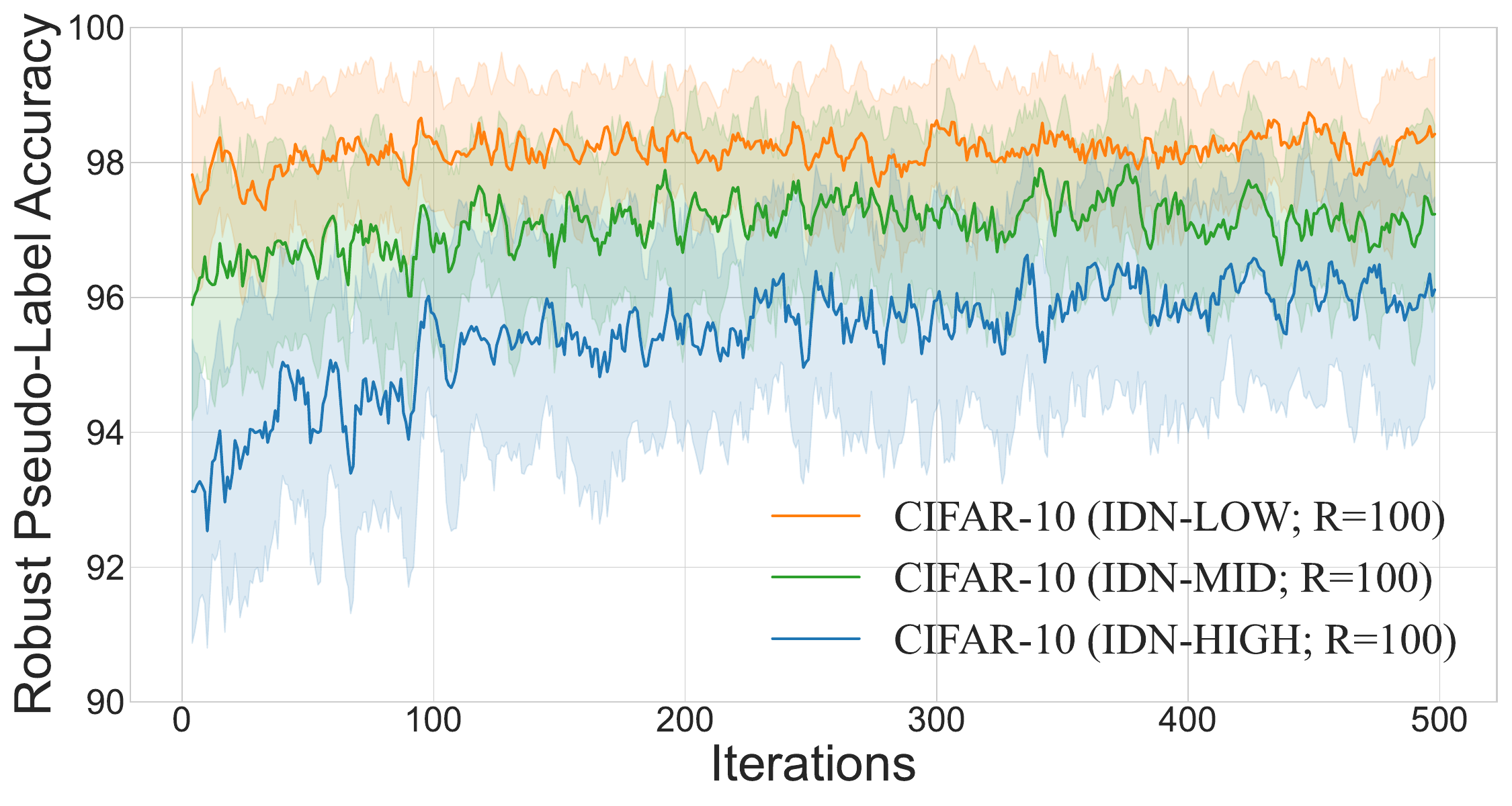}}
\caption{Average accuracy of robust pseudo-labels on the CIFAR-10 dataset with varying number of annotators in the training process.}
\label{Fig.pseudo_ACC_sparse}
\end{figure}

\paragraph{Test accuracy during the training process.} To further assess the effectiveness of the proposed method, we present the average test accuracy for the CIFAR-10 and CIFAR-100 datasets during the training process, as shown in Figures \ref{Fig.epoch_ACC_cifar_10} and \ref{Fig.epoch_ACC_cifar_100}, respectively. The results indicate that the model tends to overfit during the warm-up stage, particularly under higher noise rates. This suggests that the results in Table \ref{table_cifar} are not obtained with the optimal number of warm-up epochs. However, following the warm-up phase, the test accuracy of our method steadily improves, outperforming baseline methods across various scenarios.

\begin{figure}[H]
\centering 
\subfigure[IDN-LOW]{
\label{Fig.epoch_ACC_cifar10_low}
\includegraphics[width=0.32\textwidth]{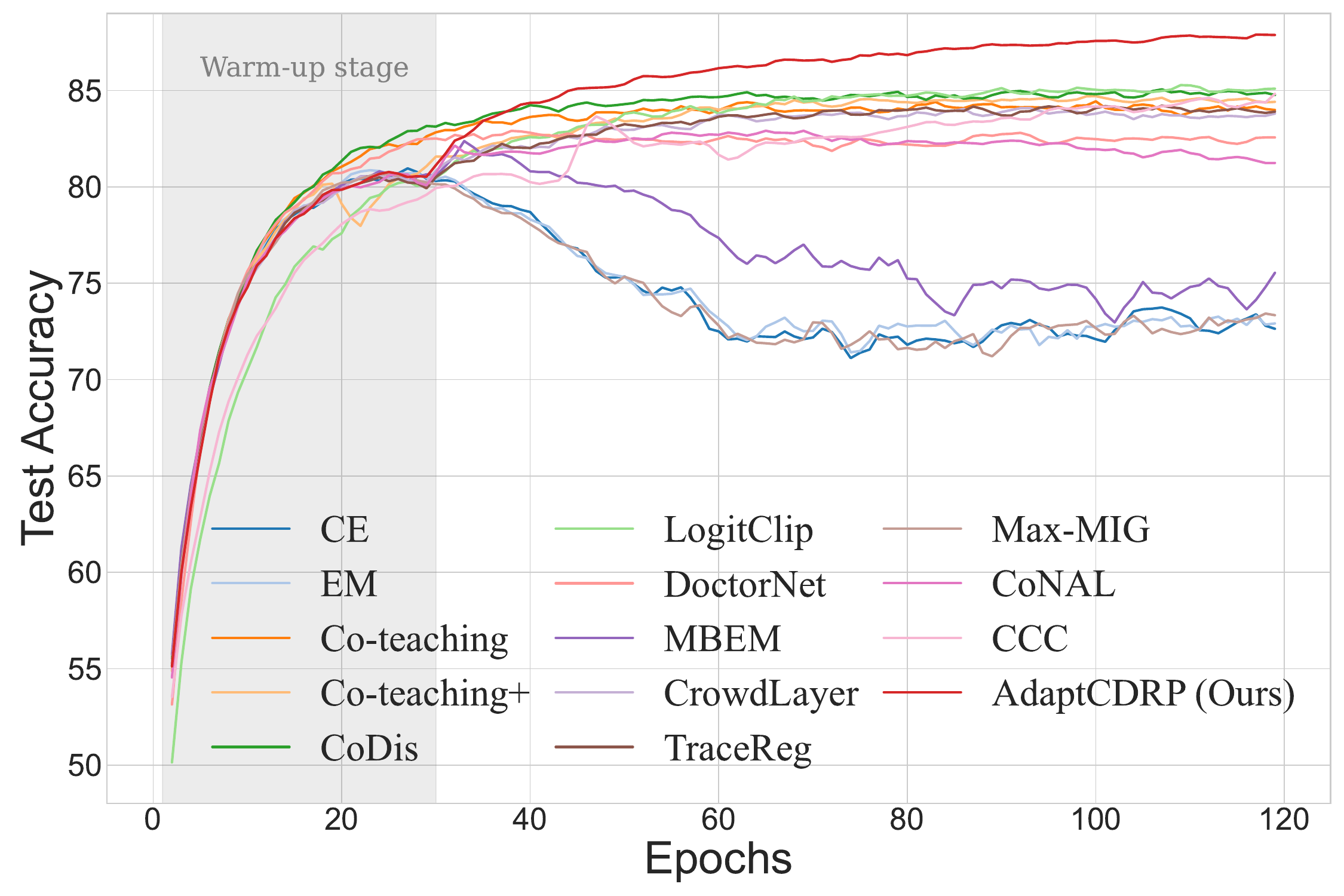}}
\subfigure[IDN-MID]{
\label{Fig.epoch_ACC_cifar10_mid}
\includegraphics[width=0.32\textwidth]{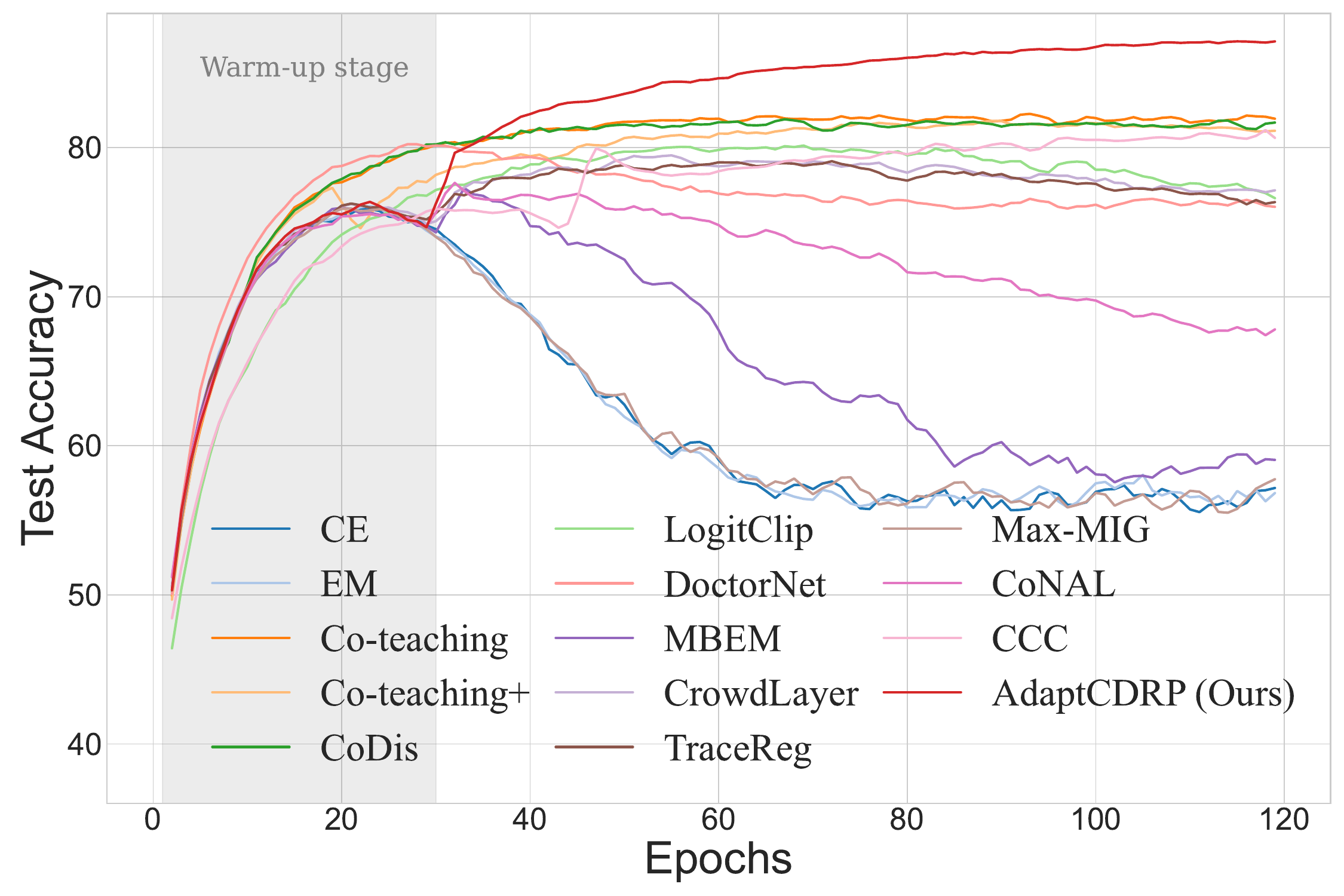}}
\subfigure[IDN-HIGH]{
\label{Fig.epoch_ACC_cifar10_high}
\includegraphics[width=0.32\textwidth]{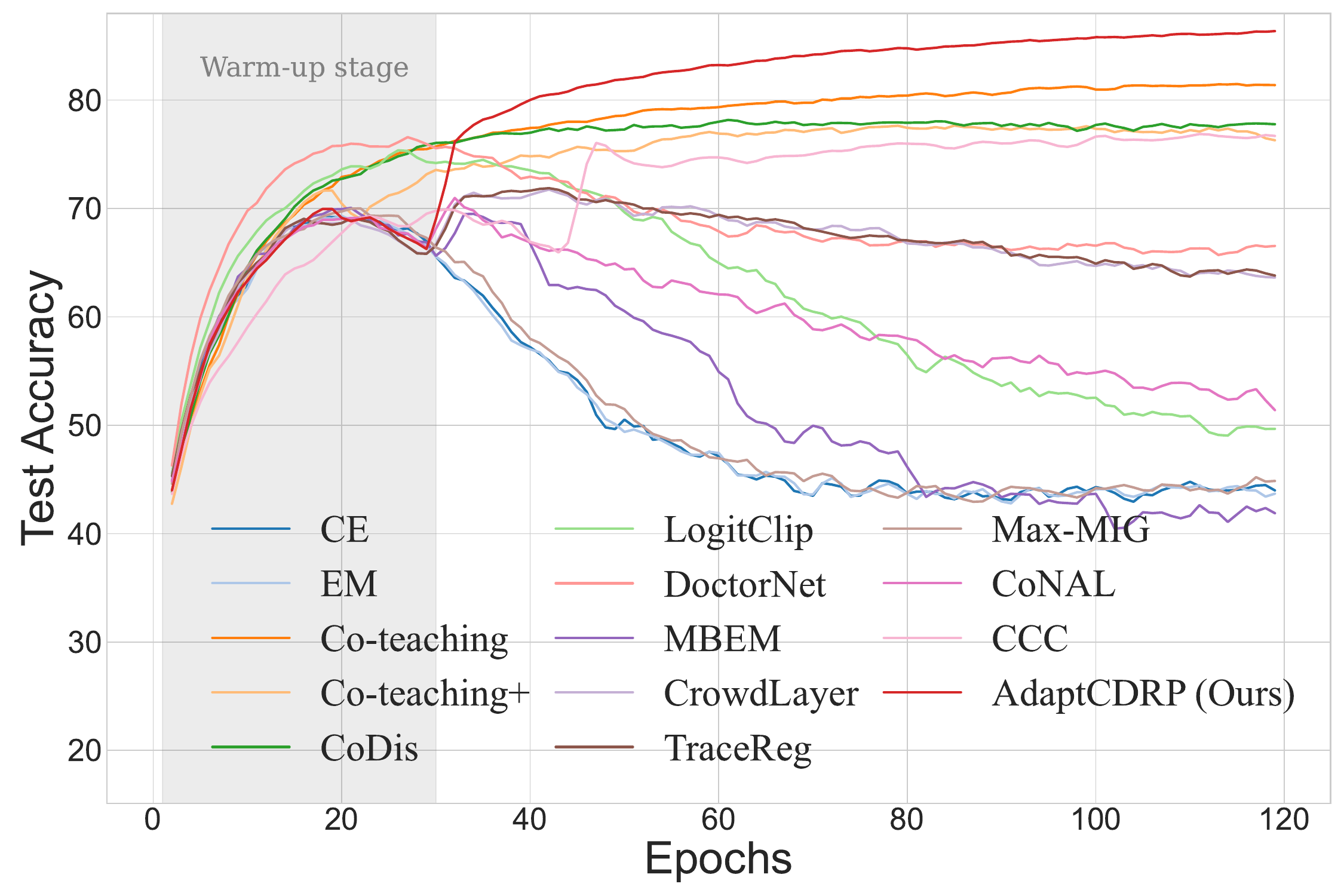}}
\caption{Average test accuracy on learning the CIFAR-10 dataset {($R=5$)} during the training process.}
\label{Fig.epoch_ACC_cifar_10}
\end{figure}

\begin{figure}[H]
\centering 
\subfigure[IDN-LOW]{
\label{Fig.epoch_ACC_cifar100_low}
\includegraphics[width=0.32\textwidth]{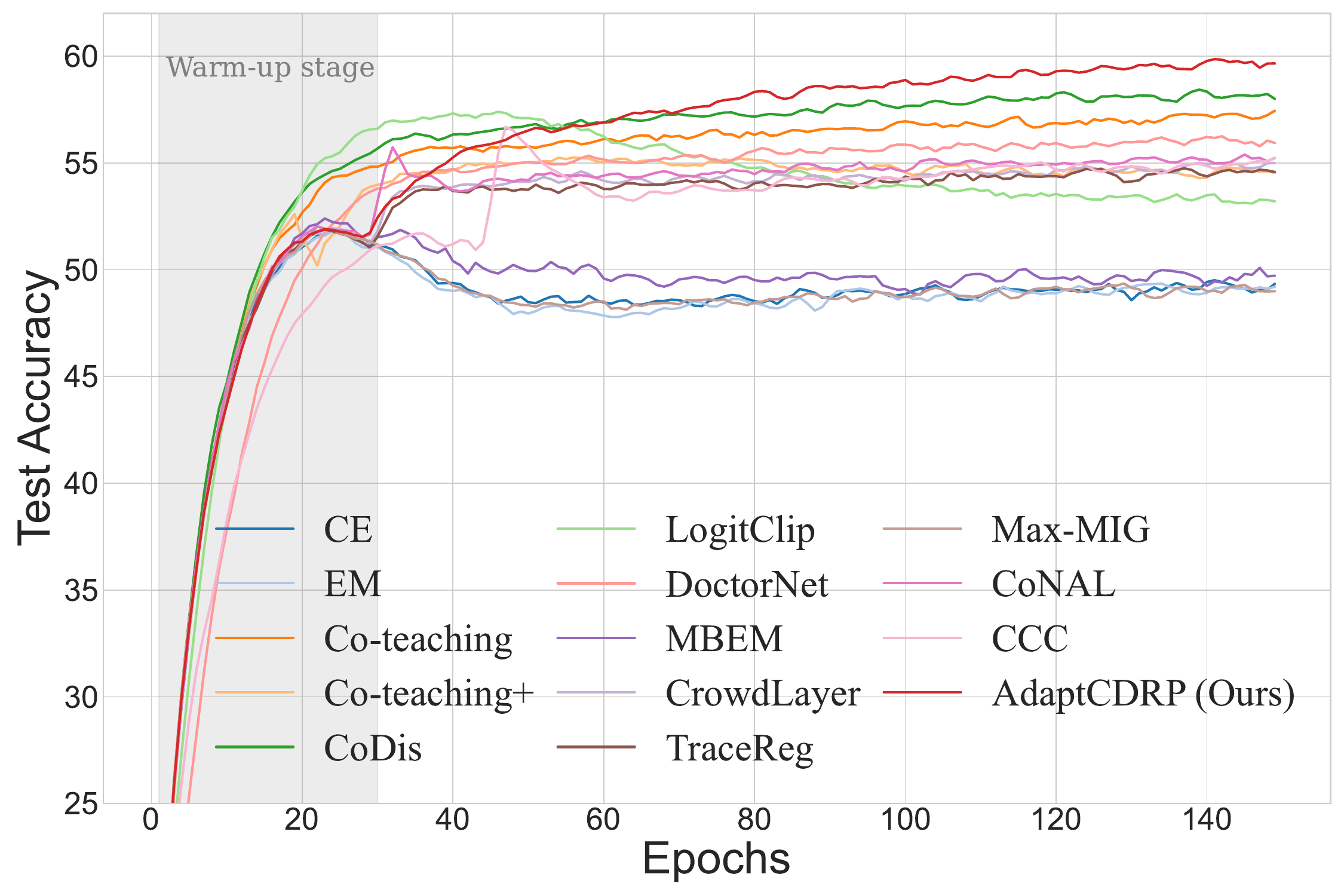}}
\subfigure[IDN-MID]{
\label{Fig.epoch_ACC_cifar100_mid}
\includegraphics[width=0.32\textwidth]{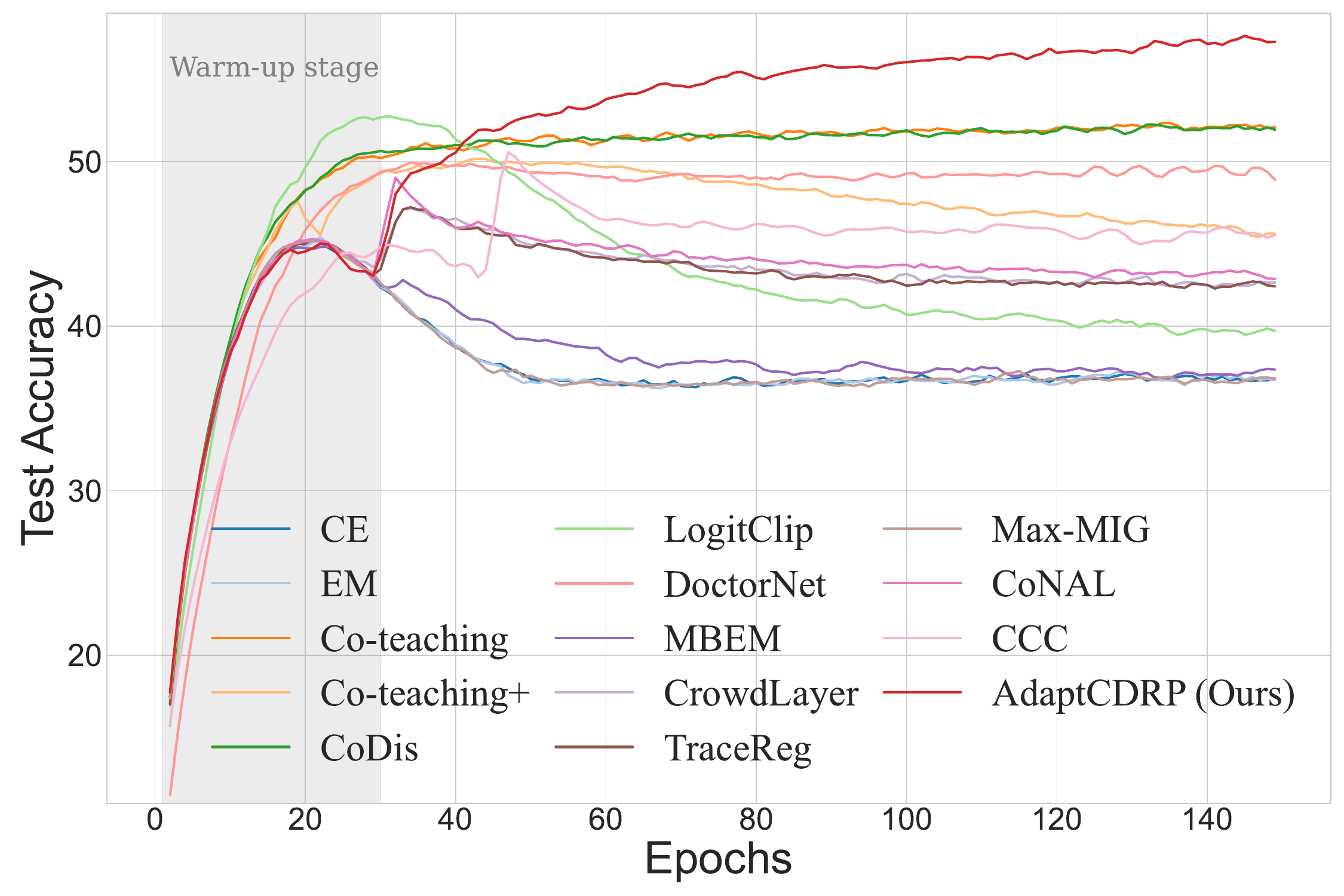}}
\subfigure[IDN-HIGH]{
\label{Fig.epoch_ACC_cifar100_high}
\includegraphics[width=0.32\textwidth]{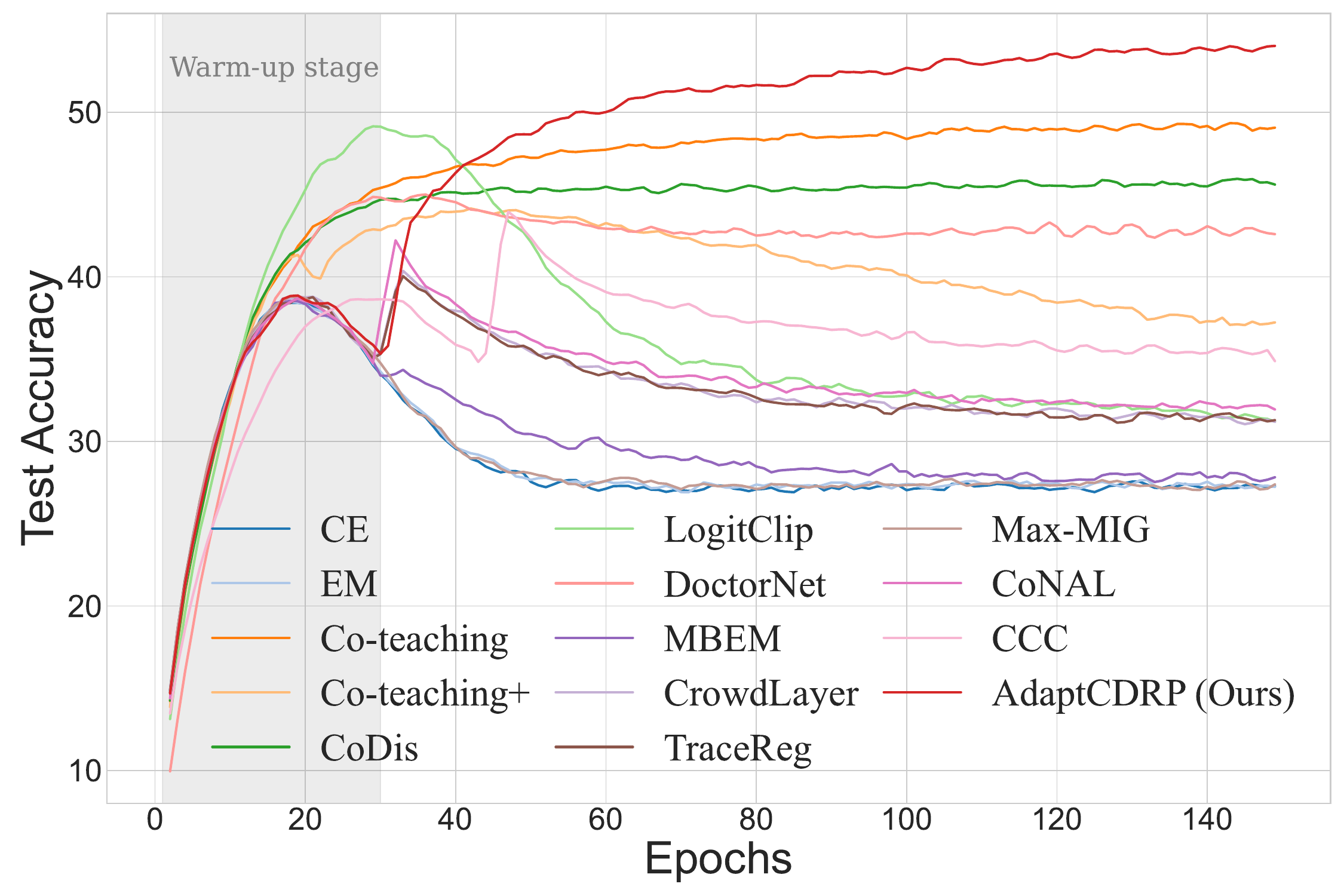}}
\caption{Average test accuracy on learning the CIFAR-100 dataset {($R=5$)} during the training process.}
\label{Fig.epoch_ACC_cifar_100}
\end{figure}

\paragraph{Impact of the number of warm-up epochs.} Following \cite{guo2023label, zheng2020error}, we use 30 warm-up epochs for the CIFAR-10 and CIFAR-100 datasets in our experiments. To rigorously assess the impact of the warm-up stage, we conduct additional experiments with varying numbers of warm-up epochs $(10, 20, 30, 40)$ on both our method and baseline approaches that also incorporate warm-up. The results, presented in Figure \ref{Fig.ACC_warmup}, illustrate how different warm-up durations affect performance.

\begin{figure}[H]
\centering  
\subfigure[CIFAR-10]{
\includegraphics[width=0.8\textwidth]{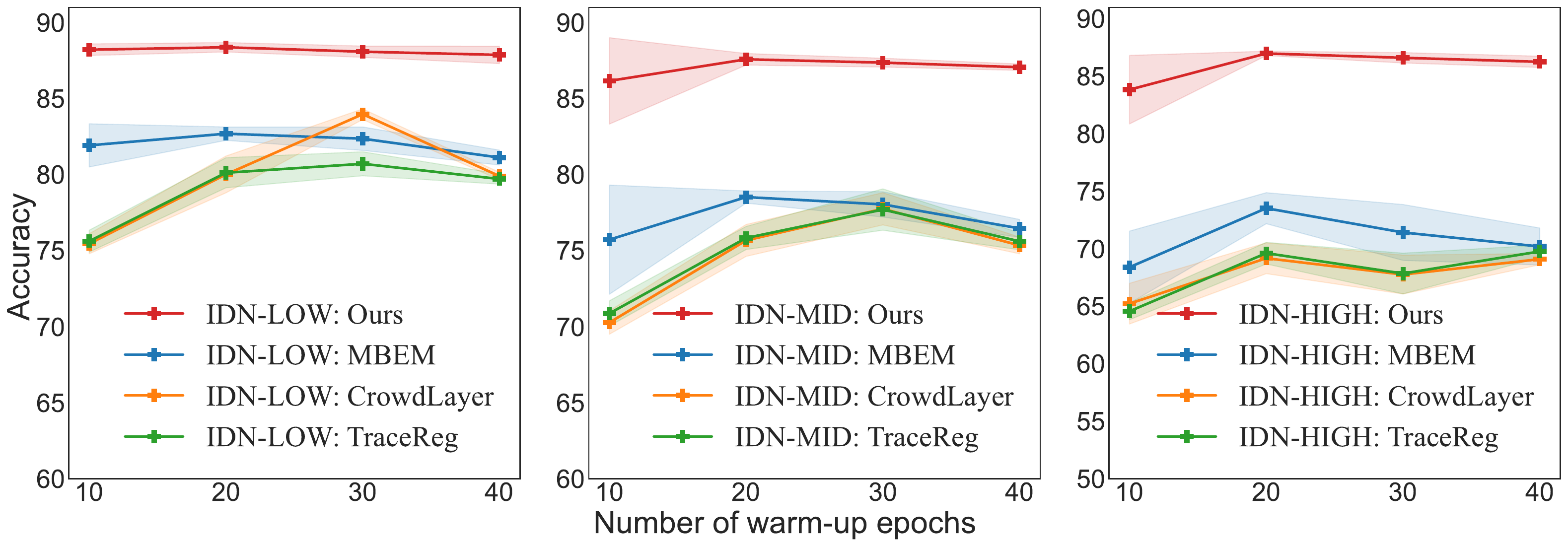}}
\subfigure[CIFAR-100]{
\includegraphics[width=0.8\textwidth]{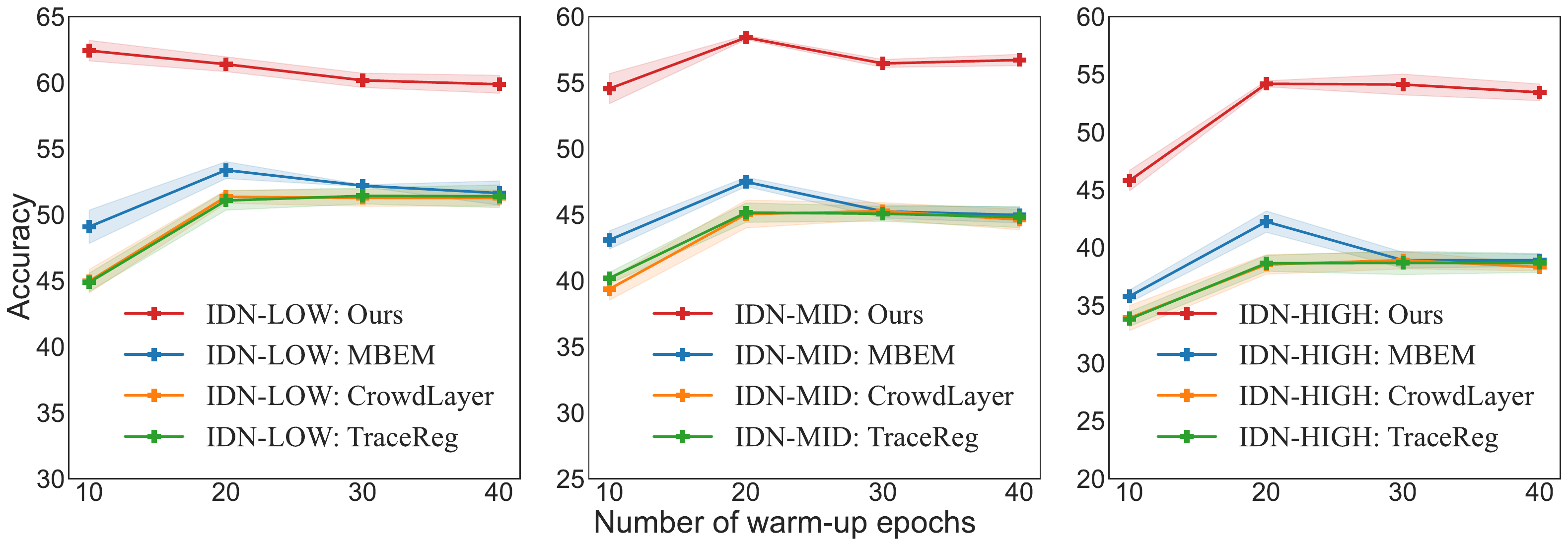}}
\caption{Average test accuracies for learning the CIFAR-10 and CIFAR-100 datasets with varying numbers of warm-up epochs. The error bars representing standard deviation are shaded.}
\label{Fig.ACC_warmup}
\end{figure}

\paragraph{Different transition matrix estimation methods.} Our work does not focus on precise estimation of the noise transition matrix; instead, we use a simple frequency-counting method for noise transition estimation in our experiments. Nevertheless, our approach is versatile and can be integrated with various methods for estimating the noise transition matrix or the true label posterior. Additional experiments using advanced transition matrix estimation methods are presented in Table \ref{table_transition}. As demonstrated, integrating these methods with AdaptCDRP significantly improves test accuracies compared to directly using the estimated noise transition matrices. Furthermore, applying advanced noise transition estimation methods enhances the performance of our method on real datasets. These results highlight the robustness and adaptability of our method.

\begin{table}[H] \footnotesize
  \renewcommand\arraystretch{1.0}
  \setlength\tabcolsep{1.0pt}
  \caption{Average test accuracies (with associated standard errors expressed after the $\pm$ signs) of learning the CIFAR-10 and real datasets with different transition matrix estimation methods.}
  \label{table_transition}
  \centering
  {
  \begin{tabular}{cccccccc}
    \toprule 
    \multirow{2}{*}{Method} & \multicolumn{3}{c}{CIFAR-10} & \multicolumn{4}{c}{Real datasets}\\
    \cmidrule(lr){2-4} \cmidrule(lr){5-8}
    & IDN-LOW & IDN-MID & IDN-HIGH & CIFAR-10N & CIFAR-100N & Animal10N & LabelMe\\
    \midrule
    TraceReg \cite{tanno2019learning} & $80.72_{\pm 0.79}$ & $77.71_{\pm 1.36}$ & $67.86_{\pm 1.77}$ & $82.94_{\pm 0.27}$ & $47.71_{\pm 0.70}$ & $80.34_{\pm 0.66}$ & $83.10_{\pm 0.15}$\\
    TraceReg+Ours & $87.74_{\pm 0.26}$ & $86.76_{\pm 0.07}$ & $85.83_{\pm 0.37}$ & $88.38_{\pm 0.35}$ & $52.16_{\pm 1.02}$ & $83.05_{\pm 0.26}$ & $83.80_{\pm 0.44}$ \\
    \cmidrule(lr){1-8}
    GeoCrowdNet (F) \cite{ibrahimdeep} & $84.73_{\pm 0.39}$ & $81.44_{\pm 1.00 }$ & $77.29_{\pm 1.23}$ & $87.49_{\pm 0.45}$ & $47.74_{\pm 1.17}$ & $81.07_{\pm 0.45}$ & $84.59_{\pm 0.19}$ \\
    GeoCrowdNet (F) + Ours & $88.06_{\pm 0.33}$ & $87.43_{\pm 0.29}$ & $86.69_{\pm 0.13}$ & $88.30_{\pm 0.13}$ & $51.07_{\pm 0.57}$ & $83.12_{\pm 0.42}$ & $86.20_{\pm 0.48}$ \\
    \cmidrule(lr){1-8}
    GeoCrowdNet (W) \cite{ibrahimdeep} & $83.82_{\pm 0.53}$ & $75.72_{\pm 1.10}$ & $64.64_{\pm 2.23}$ & $87.36_{\pm 0.24}$ & $47.49_{\pm 0.91}$ & $80.19_{\pm 0.33}$ &  $81.63_{\pm 1.49}$\\
    GeoCrowdNet (W) + Ours & $87.94_{\pm 0.35}$ & $87.21_{\pm 0.33}$ & $83.48_{\pm 5.69}$ & $87.81_{\pm 0.12}$ & $52.03_{\pm 0.45}$ & $82.41_{\pm 0.04}$ & $83.32_{\pm 0.51}$\\
    \cmidrule(lr){1-8}
    BayesianIDNT \cite{guo2023label} & $86.46_{\pm 1.07}$ & $85.14_{\pm 0.96}$ & $82.49_{\pm 2.86}$  & $87.83_{\pm 0.53}$ & $51.06_{\pm 0.72}$ & $81.22_{\pm 0.59}$ &  $83.01_{\pm 0.32}$ \\
    BayesianIDNT + Ours & $87.66_{\pm 0.85}$ & $86.44_{\pm 0.57}$ & $84.38_{\pm 0.10}$ & $88.31_{\pm 0.20}$ & $53.13_{\pm 0.74}$ & $83.80_{\pm 0.44}$ & $84.09_{\pm 0.53}$\\
    \bottomrule
  \end{tabular}}
\end{table}

\paragraph{Impact of sparse annotation.} To further address the issue of annotation sparsity, we increase the total number of annotators, $R$, to 200, and manually corrupt the datasets according to the
following annotator groups:
\begin{align*}
    &\textbf{R=200:}\\
    &\textbf{IDN-LOW. }\textit{70 IDN-10\% annotators, 70 IDN-20\% annotators, 60 IDN-30\% annotators;}\\
    &\textbf{IDN-MID. }\textit{70 IDN-30\% annotators, 70 IDN-40\% annotators, 60 IDN-50\% annotators;}\\
    &\textbf{IDN-HIGH. }\textit{70 IDN-50\% annotators, 70 IDN-60\% annotators, 60 IDN-70\% annotators.}
\end{align*}
The three groups of annotators, labeled as IDN-LOW, IDN-MID, and IDN-HIGH, have average labeling error rates of approximately 26\%, 34\%, and 42\%, respectively. In this setup, we incorporate regularization techniques - specifically, GeoCrowdNet (F) and GeoCrowdNet (W) penalties \cite{ibrahimdeep} - into our method. We then compare the results against those obtained using the traditional frequency-counting approach for estimating the noise transition matrices. Table \ref{table_R200} presents the performance of our proposed method on the CIFAR10 ($R=200$) dataset, where different
approaches are used to estimate the noise transition matrices. In addition, Figure \ref{Fig.PACC_cifar10_200} displays the average accuracies of the robust pseudo-labels generated by our method during the training process. These pseudo-labels play a crucial role in constructing the pseudo-empirical distribution.

\begin{table}[H] \footnotesize
  \renewcommand\arraystretch{1.0}
  \setlength\tabcolsep{1.0pt}
  \caption{Average test accuracies (with associated standard errors expressed after the $\pm$ signs) of learning the CIFAR-10 dataset ($R=200$) with different transition matrix estimation methods.}
  \label{table_R200}
  \centering
  {
  \begin{tabular}{cccc}
    \toprule 
    Mthod & IDN-LOW & IDN-MID & IDN-HIGH \\
    \midrule
    Ours + frequency-counting & $86.01_{\pm 0.67}$ & $85.48_{\pm 0.58}$ & $85.07_{\pm 0.59}$ \\
    Ours + GeoCrowdNet (F) penalty \cite{ibrahimdeep} & $90.89_{\pm 0.21}$ & $90.27_{\pm 0.46}$ & $89.25_{\pm 0.63}$ \\
    Ours + GeoCrowdNet (W) penalty \cite{ibrahimdeep} & $90.99_{\pm 0.42}$ & $90.23_{\pm 0.27}$ & $89.42_{\pm 0.29}$  \\
    \bottomrule
  \end{tabular}}
\end{table}

\begin{figure}[H]
\centering 
\subfigure[Ours + Frequency Counting]{
\label{Fig.PACC_cifar10_200_sub.1}
\includegraphics[width=0.323\textwidth]{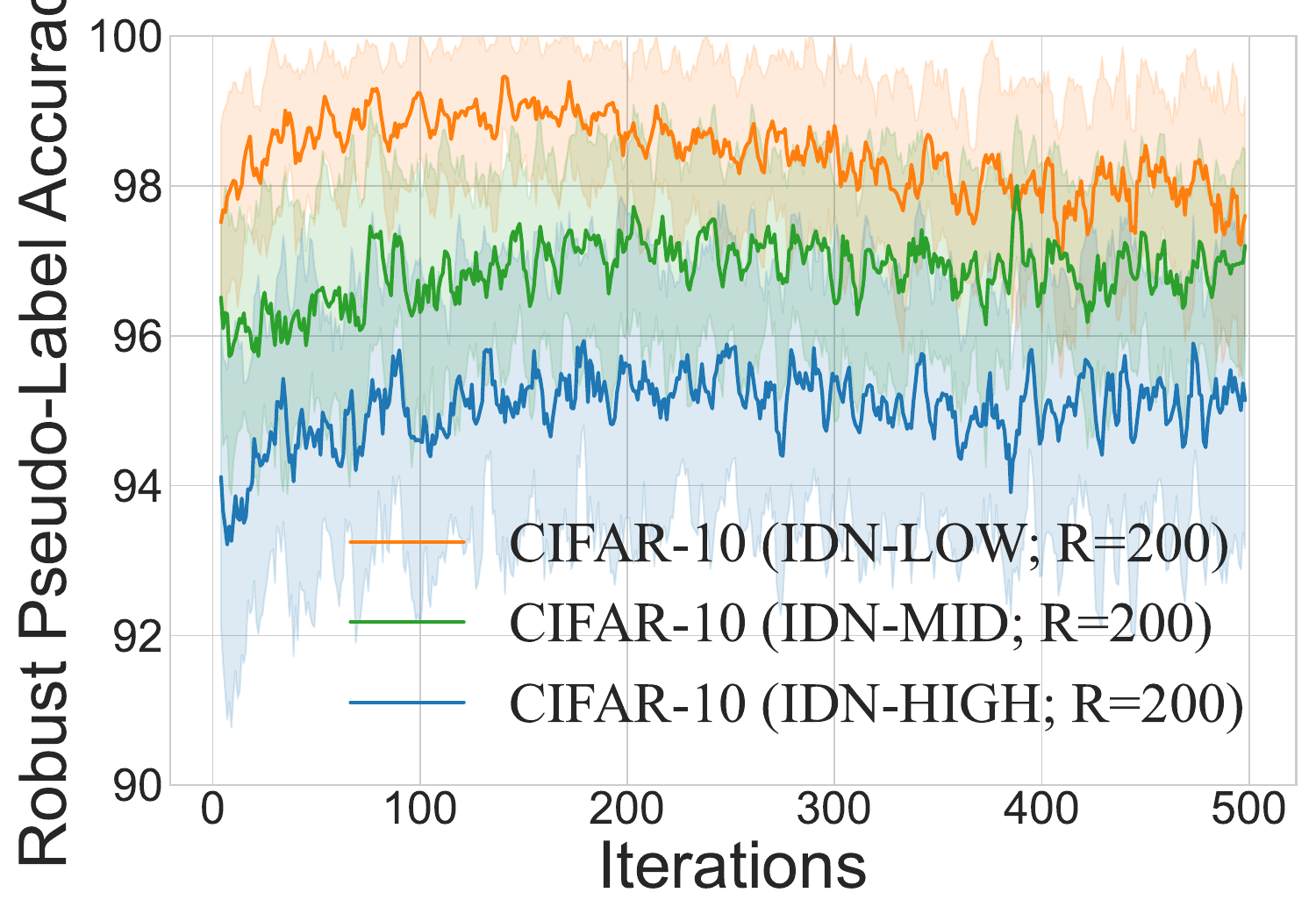}}
\subfigure[Ours + GeoCrowdNet (F)]{
\label{Fig.PACC_cifar10_200_sub.2}
\includegraphics[width=0.323\textwidth]{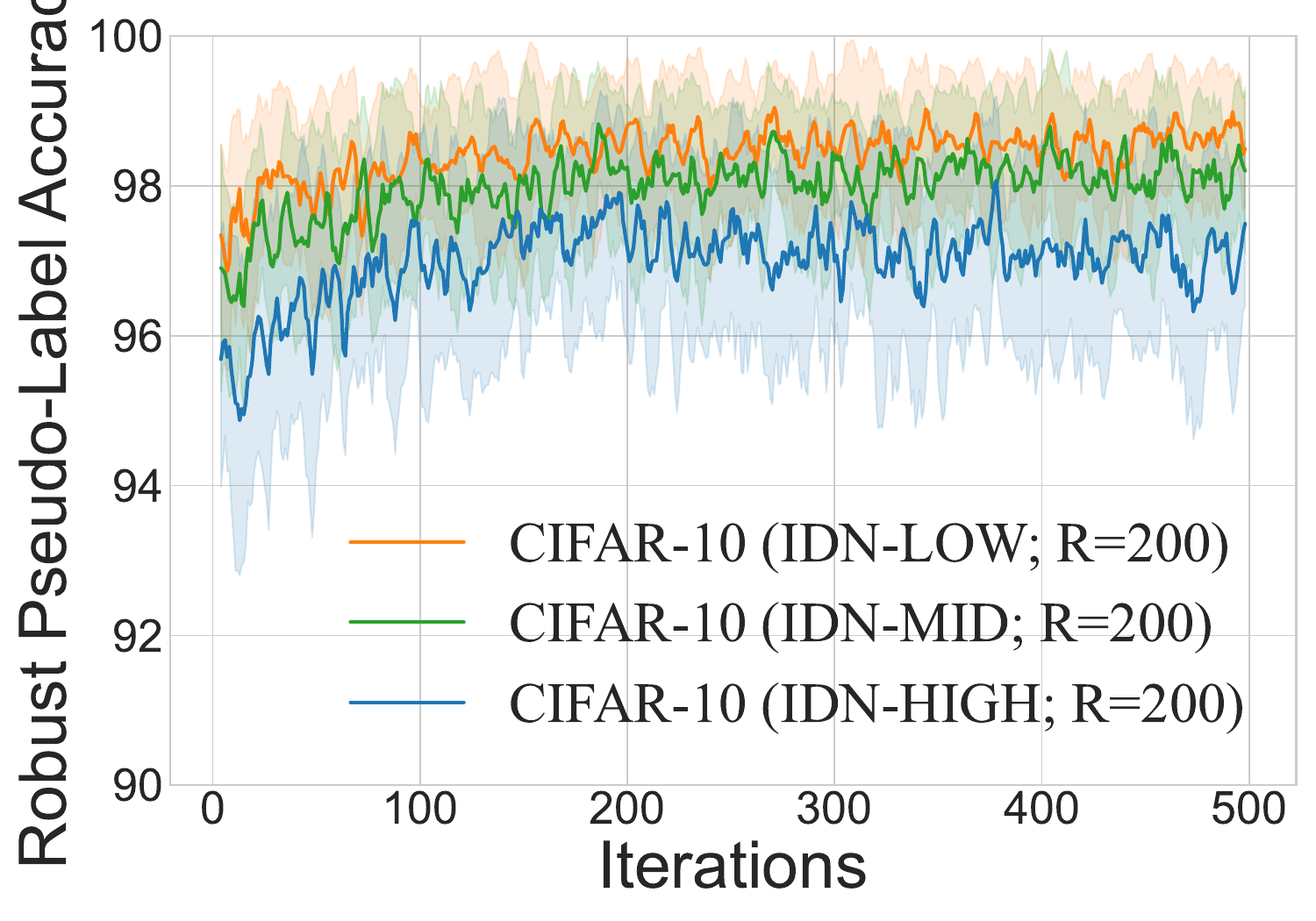}}
\subfigure[Ours + GeoCrowdNet (W)]{
\label{Fig.PACC_cifar10_200_sub.3}
\includegraphics[width=0.323\textwidth]{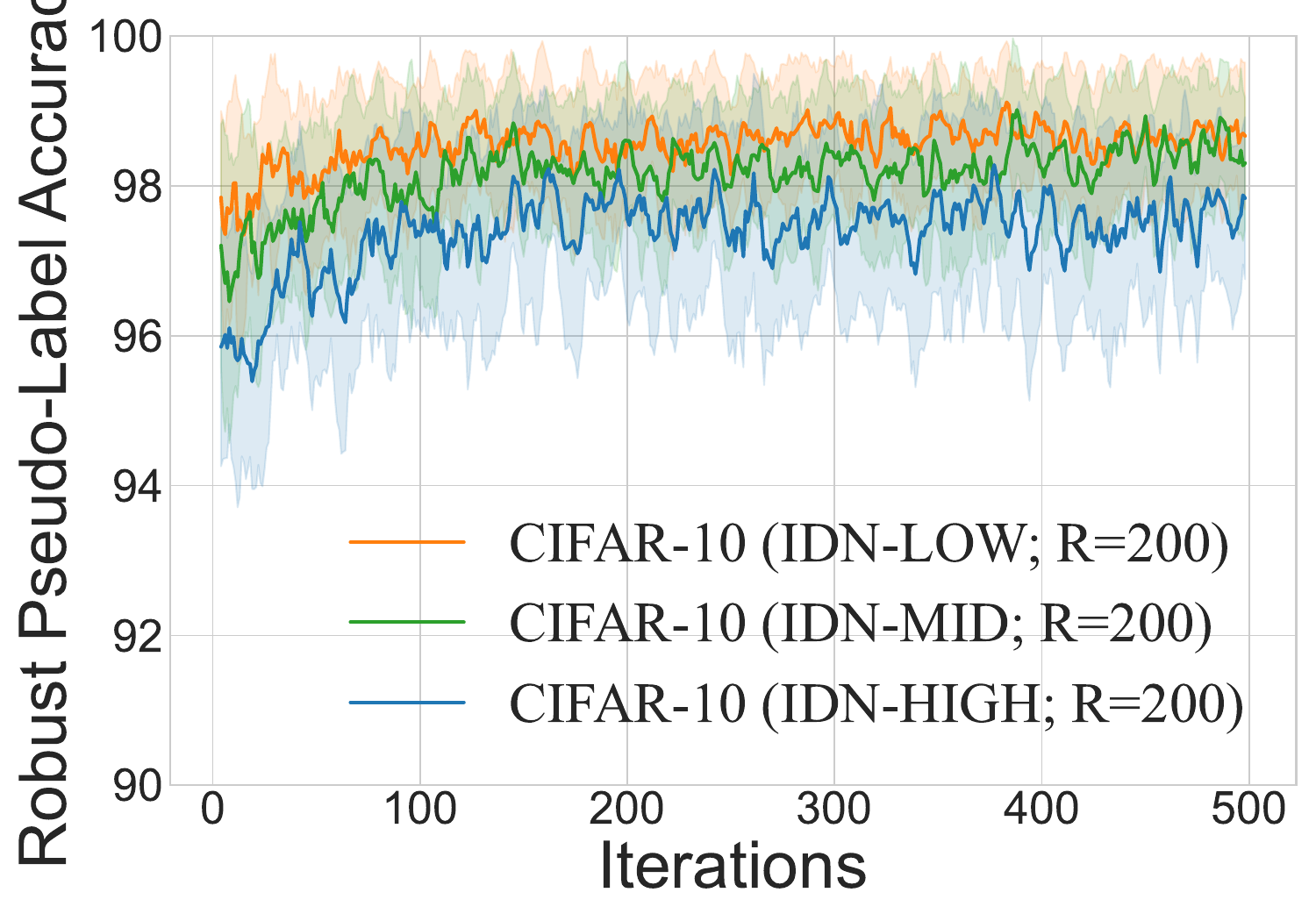}}
\caption{Average accuracy of robust pseudo-labels on the CIFAR-10 dataset ($R=200$) using different transition matrix estimation methods during the training process.}
\label{Fig.PACC_cifar10_200}
\end{figure}

}



\newpage
\section*{NeurIPS Paper Checklist}

\begin{enumerate}

\item {\bf Claims}
    \item[] Question: Do the main claims made in the abstract and introduction accurately reflect the paper's contributions and scope?
    \item[] Answer: \answerYes{} 
    \item[] Justification:  The main claims made in the abstract and introduction accurately reflect our contributions and scope. 
    \item[] Guidelines:
    \begin{itemize}
        \item The answer NA means that the abstract and introduction do not include the claims made in the paper.
        \item The abstract and/or introduction should clearly state the claims made, including the contributions made in the paper and important assumptions and limitations. A No or NA answer to this question will not be perceived well by the reviewers. 
        \item The claims made should match theoretical and experimental results, and reflect how much the results can be expected to generalize to other settings. 
        \item It is fine to include aspirational goals as motivation as long as it is clear that these goals are not attained by the paper. 
    \end{itemize}

\item {\bf Limitations}
    \item[] Question: Does the paper discuss the limitations of the work performed by the authors?
    \item[] Answer: \answerYes{} 
    \item[] Justification: We discuss the limitations of the work performed by the authors in the conclusion.
    \item[] Guidelines:
    \begin{itemize}
        \item The answer NA means that the paper has no limitation while the answer No means that the paper has limitations, but those are not discussed in the paper. 
        \item The authors are encouraged to create a separate "Limitations" section in their paper.
        \item The paper should point out any strong assumptions and how robust the results are to violations of these assumptions (e.g., independence assumptions, noiseless settings, model well-specification, asymptotic approximations only holding locally). The authors should reflect on how these assumptions might be violated in practice and what the implications would be.
        \item The authors should reflect on the scope of the claims made, e.g., if the approach was only tested on a few datasets or with a few runs. In general, empirical results often depend on implicit assumptions, which should be articulated.
        \item The authors should reflect on the factors that influence the performance of the approach. For example, a facial recognition algorithm may perform poorly when image resolution is low or images are taken in low lighting. Or a speech-to-text system might not be used reliably to provide closed captions for online lectures because it fails to handle technical jargon.
        \item The authors should discuss the computational efficiency of the proposed algorithms and how they scale with dataset size.
        \item If applicable, the authors should discuss possible limitations of their approach to address problems of privacy and fairness.
        \item While the authors might fear that complete honesty about limitations might be used by reviewers as grounds for rejection, a worse outcome might be that reviewers discover limitations that aren't acknowledged in the paper. The authors should use their best judgment and recognize that individual actions in favor of transparency play an important role in developing norms that preserve the integrity of the community. Reviewers will be specifically instructed to not penalize honesty concerning limitations.
    \end{itemize}

\item {\bf Theory Assumptions and Proofs}
    \item[] Question: For each theoretical result, does the paper provide the full set of assumptions and a complete (and correct) proof?
    \item[] Answer: \answerYes{} 
    \item[] Justification: We provide the assumptions in the main text, and the proofs can be found in the appendix.
    \item[] Guidelines:
    \begin{itemize}
        \item The answer NA means that the paper does not include theoretical results. 
        \item All the theorems, formulas, and proofs in the paper should be numbered and cross-referenced.
        \item All assumptions should be clearly stated or referenced in the statement of any theorems.
        \item The proofs can either appear in the main paper or the supplemental material, but if they appear in the supplemental material, the authors are encouraged to provide a short proof sketch to provide intuition. 
        \item Inversely, any informal proof provided in the core of the paper should be complemented by formal proofs provided in appendix or supplemental material.
        \item Theorems and Lemmas that the proof relies upon should be properly referenced. 
    \end{itemize}

    \item {\bf Experimental Result Reproducibility}
    \item[] Question: Does the paper fully disclose all the information needed to reproduce the main experimental results of the paper to the extent that it affects the main claims and/or conclusions of the paper (regardless of whether the code and data are provided or not)?
    \item[] Answer: \answerYes{} 
    \item[] Justification: We provide all the experimental details.
    \item[] Guidelines:
    \begin{itemize}
        \item The answer NA means that the paper does not include experiments.
        \item If the paper includes experiments, a No answer to this question will not be perceived well by the reviewers: Making the paper reproducible is important, regardless of whether the code and data are provided or not.
        \item If the contribution is a dataset and/or model, the authors should describe the steps taken to make their results reproducible or verifiable. 
        \item Depending on the contribution, reproducibility can be accomplished in various ways. For example, if the contribution is a novel architecture, describing the architecture fully might suffice, or if the contribution is a specific model and empirical evaluation, it may be necessary to either make it possible for others to replicate the model with the same dataset, or provide access to the model. In general. releasing code and data is often one good way to accomplish this, but reproducibility can also be provided via detailed instructions for how to replicate the results, access to a hosted model (e.g., in the case of a large language model), releasing of a model checkpoint, or other means that are appropriate to the research performed.
        \item While NeurIPS does not require releasing code, the conference does require all submissions to provide some reasonable avenue for reproducibility, which may depend on the nature of the contribution. For example
        \begin{enumerate}
            \item If the contribution is primarily a new algorithm, the paper should make it clear how to reproduce that algorithm.
            \item If the contribution is primarily a new model architecture, the paper should describe the architecture clearly and fully.
            \item If the contribution is a new model (e.g., a large language model), then there should either be a way to access this model for reproducing the results or a way to reproduce the model (e.g., with an open-source dataset or instructions for how to construct the dataset).
            \item We recognize that reproducibility may be tricky in some cases, in which case authors are welcome to describe the particular way they provide for reproducibility. In the case of closed-source models, it may be that access to the model is limited in some way (e.g., to registered users), but it should be possible for other researchers to have some path to reproducing or verifying the results.
        \end{enumerate}
    \end{itemize}

\item {\bf Open access to data and code}
    \item[] Question: Does the paper provide open access to the data and code, with sufficient instructions to faithfully reproduce the main experimental results, as described in supplemental material?
    \item[] Answer: \answerYes{} 
    \item[] Justification: We upload our code and use public datasets.
    \item[] Guidelines:
    \begin{itemize}
        \item The answer NA means that paper does not include experiments requiring code.
        \item Please see the NeurIPS code and data submission guidelines (\url{https://nips.cc/public/guides/CodeSubmissionPolicy}) for more details.
        \item While we encourage the release of code and data, we understand that this might not be possible, so “No” is an acceptable answer. Papers cannot be rejected simply for not including code, unless this is central to the contribution (e.g., for a new open-source benchmark).
        \item The instructions should contain the exact command and environment needed to run to reproduce the results. See the NeurIPS code and data submission guidelines (\url{https://nips.cc/public/guides/CodeSubmissionPolicy}) for more details.
        \item The authors should provide instructions on data access and preparation, including how to access the raw data, preprocessed data, intermediate data, and generated data, etc.
        \item The authors should provide scripts to reproduce all experimental results for the new proposed method and baselines. If only a subset of experiments are reproducible, they should state which ones are omitted from the script and why.
        \item At submission time, to preserve anonymity, the authors should release anonymized versions (if applicable).
        \item Providing as much information as possible in supplemental material (appended to the paper) is recommended, but including URLs to data and code is permitted.
    \end{itemize}

\item {\bf Experimental Setting/Details}
    \item[] Question: Does the paper specify all the training and test details (e.g., data splits, hyperparameters, how they were chosen, type of optimizer, etc.) necessary to understand the results?
    \item[] Answer: \answerYes{} 
    \item[] Justification: We provide all the details and the code.
    \item[] Guidelines:
    \begin{itemize}
        \item The answer NA means that the paper does not include experiments.
        \item The experimental setting should be presented in the core of the paper to a level of detail that is necessary to appreciate the results and make sense of them.
        \item The full details can be provided either with the code, in appendix, or as supplemental material.
    \end{itemize}

\item {\bf Experiment Statistical Significance}
    \item[] Question: Does the paper report error bars suitably and correctly defined or other appropriate information about the statistical significance of the experiments?
    \item[] Answer: \answerYes{} 
    \item[] Justification: We provide the error bars.
    \item[] Guidelines:
    \begin{itemize}
        \item The answer NA means that the paper does not include experiments.
        \item The authors should answer "Yes" if the results are accompanied by error bars, confidence intervals, or statistical significance tests, at least for the experiments that support the main claims of the paper.
        \item The factors of variability that the error bars are capturing should be clearly stated (for example, train/test split, initialization, random drawing of some parameter, or overall run with given experimental conditions).
        \item The method for calculating the error bars should be explained (closed form formula, call to a library function, bootstrap, etc.)
        \item The assumptions made should be given (e.g., Normally distributed errors).
        \item It should be clear whether the error bar is the standard deviation or the standard error of the mean.
        \item It is OK to report 1-sigma error bars, but one should state it. The authors should preferably report a 2-sigma error bar than state that they have a 96\% CI, if the hypothesis of Normality of errors is not verified.
        \item For asymmetric distributions, the authors should be careful not to show in tables or figures symmetric error bars that would yield results that are out of range (e.g. negative error rates).
        \item If error bars are reported in tables or plots, The authors should explain in the text how they were calculated and reference the corresponding figures or tables in the text.
    \end{itemize}

\item {\bf Experiments Compute Resources}
    \item[] Question: For each experiment, does the paper provide sufficient information on the computer resources (type of compute workers, memory, time of execution) needed to reproduce the experiments?
    \item[] Answer: \answerYes{} 
    \item[] Justification: We provide the training time on different datasets.
    \item[] Guidelines:
    \begin{itemize}
        \item The answer NA means that the paper does not include experiments.
        \item The paper should indicate the type of compute workers CPU or GPU, internal cluster, or cloud provider, including relevant memory and storage.
        \item The paper should provide the amount of compute required for each of the individual experimental runs as well as estimate the total compute. 
        \item The paper should disclose whether the full research project required more compute than the experiments reported in the paper (e.g., preliminary or failed experiments that didn't make it into the paper). 
    \end{itemize}
    
\item {\bf Code Of Ethics}
    \item[] Question: Does the research conducted in the paper conform, in every respect, with the NeurIPS Code of Ethics \url{https://neurips.cc/public/EthicsGuidelines}?
    \item[] Answer: \answerYes{} 
    \item[] Justification: The research conducted in the paper conform, in every respect, with the NeurIPS Code of Ethics.
    \item[] Guidelines:
    \begin{itemize}
        \item The answer NA means that the authors have not reviewed the NeurIPS Code of Ethics.
        \item If the authors answer No, they should explain the special circumstances that require a deviation from the Code of Ethics.
        \item The authors should make sure to preserve anonymity (e.g., if there is a special consideration due to laws or regulations in their jurisdiction).
    \end{itemize}

\item {\bf Broader Impacts}
    \item[] Question: Does the paper discuss both potential positive societal impacts and negative societal impacts of the work performed?
    \item[] Answer: \answerNA{} 
    \item[] Justification: It's not appropriate for the scope and focus of our paper, and we don't see any direct negative social impacts of our paper.
    \item[] Guidelines:
    \begin{itemize}
        \item The answer NA means that there is no societal impact of the work performed.
        \item If the authors answer NA or No, they should explain why their work has no societal impact or why the paper does not address societal impact.
        \item Examples of negative societal impacts include potential malicious or unintended uses (e.g., disinformation, generating fake profiles, surveillance), fairness considerations (e.g., deployment of technologies that could make decisions that unfairly impact specific groups), privacy considerations, and security considerations.
        \item The conference expects that many papers will be foundational research and not tied to particular applications, let alone deployments. However, if there is a direct path to any negative applications, the authors should point it out. For example, it is legitimate to point out that an improvement in the quality of generative models could be used to generate deepfakes for disinformation. On the other hand, it is not needed to point out that a generic algorithm for optimizing neural networks could enable people to train models that generate Deepfakes faster.
        \item The authors should consider possible harms that could arise when the technology is being used as intended and functioning correctly, harms that could arise when the technology is being used as intended but gives incorrect results, and harms following from (intentional or unintentional) misuse of the technology.
        \item If there are negative societal impacts, the authors could also discuss possible mitigation strategies (e.g., gated release of models, providing defenses in addition to attacks, mechanisms for monitoring misuse, mechanisms to monitor how a system learns from feedback over time, improving the efficiency and accessibility of ML).
    \end{itemize}
    
\item {\bf Safeguards}
    \item[] Question: Does the paper describe safeguards that have been put in place for responsible release of data or models that have a high risk for misuse (e.g., pretrained language models, image generators, or scraped datasets)?
    \item[] Answer: \answerNA{} 
    \item[] Justification: Our model doesn't have a high risk for misuse or dual-use.
    \item[] Guidelines:
    \begin{itemize}
        \item The answer NA means that the paper poses no such risks.
        \item Released models that have a high risk for misuse or dual-use should be released with necessary safeguards to allow for controlled use of the model, for example by requiring that users adhere to usage guidelines or restrictions to access the model or implementing safety filters. 
        \item Datasets that have been scraped from the Internet could pose safety risks. The authors should describe how they avoided releasing unsafe images.
        \item We recognize that providing effective safeguards is challenging, and many papers do not require this, but we encourage authors to take this into account and make a best faith effort.
    \end{itemize}

\item {\bf Licenses for existing assets}
    \item[] Question: Are the creators or original owners of assets (e.g., code, data, models), used in the paper, properly credited and are the license and terms of use explicitly mentioned and properly respected?
    \item[] Answer: \answerYes{} 
    \item[] Justification: We cite the original papers.
    \item[] Guidelines:
    \begin{itemize}
        \item The answer NA means that the paper does not use existing assets.
        \item The authors should cite the original paper that produced the code package or dataset.
        \item The authors should state which version of the asset is used and, if possible, include a URL.
        \item The name of the license (e.g., CC-BY 4.0) should be included for each asset.
        \item For scraped data from a particular source (e.g., website), the copyright and terms of service of that source should be provided.
        \item If assets are released, the license, copyright information, and terms of use in the package should be provided. For popular datasets, \url{paperswithcode.com/datasets} has curated licenses for some datasets. Their licensing guide can help determine the license of a dataset.
        \item For existing datasets that are re-packaged, both the original license and the license of the derived asset (if it has changed) should be provided.
        \item If this information is not available online, the authors are encouraged to reach out to the asset's creators.
    \end{itemize}

\item {\bf New Assets}
    \item[] Question: Are new assets introduced in the paper well documented and is the documentation provided alongside the assets?
    \item[] Answer: \answerYes{} 
    \item[] Justification: We provide the data generation details.
    \item[] Guidelines:
    \begin{itemize}
        \item The answer NA means that the paper does not release new assets.
        \item Researchers should communicate the details of the dataset/code/model as part of their submissions via structured templates. This includes details about training, license, limitations, etc. 
        \item The paper should discuss whether and how consent was obtained from people whose asset is used.
        \item At submission time, remember to anonymize your assets (if applicable). You can either create an anonymized URL or include an anonymized zip file.
    \end{itemize}

\item {\bf Crowdsourcing and Research with Human Subjects}
    \item[] Question: For crowdsourcing experiments and research with human subjects, does the paper include the full text of instructions given to participants and screenshots, if applicable, as well as details about compensation (if any)? 
    \item[] Answer: \answerNA{} 
    \item[] Justification: We do not involve human subjects.
    \item[] Guidelines:
    \begin{itemize}
        \item The answer NA means that the paper does not involve crowdsourcing nor research with human subjects.
        \item Including this information in the supplemental material is fine, but if the main contribution of the paper involves human subjects, then as much detail as possible should be included in the main paper. 
        \item According to the NeurIPS Code of Ethics, workers involved in data collection, curation, or other labor should be paid at least the minimum wage in the country of the data collector. 
    \end{itemize}

\item {\bf Institutional Review Board (IRB) Approvals or Equivalent for Research with Human Subjects}
    \item[] Question: Does the paper describe potential risks incurred by study participants, whether such risks were disclosed to the subjects, and whether Institutional Review Board (IRB) approvals (or an equivalent approval/review based on the requirements of your country or institution) were obtained?
    \item[] Answer: \answerNA{} 
    \item[] Justification: We do not involve human subjects.
    \item[] Guidelines:
    \begin{itemize}
        \item The answer NA means that the paper does not involve crowdsourcing nor research with human subjects.
        \item Depending on the country in which research is conducted, IRB approval (or equivalent) may be required for any human subjects research. If you obtained IRB approval, you should clearly state this in the paper. 
        \item We recognize that the procedures for this may vary significantly between institutions and locations, and we expect authors to adhere to the NeurIPS Code of Ethics and the guidelines for their institution. 
        \item For initial submissions, do not include any information that would break anonymity (if applicable), such as the institution conducting the review.
    \end{itemize}

\end{enumerate}

\end{document}